\documentclass[11pt]{article}
 
\usepackage[sort,numbers]{natbib}

\usepackage[bookmarks=true,bookmarksnumbered=true,colorlinks=true,allcolors=black]{hyperref}
\usepackage{amsmath,amssymb,amsfonts,amsthm}
\usepackage{lscape}
\usepackage{arydshln}
\renewcommand*{\baselinestretch}{1.25}

\usepackage[capitalize]{cleveref}

\usepackage{geometry}
\usepackage{indentfirst}

\usepackage{enumitem}
\usepackage{color}

\newtheorem{theorem}{Theorem}[section]
\newtheorem{lemma}{Lemma}[section]
\newtheorem{proposition}{Proposition}[section]
\newtheorem{corollary}{Corollary}[section]
\theoremstyle{definition}
\newtheorem{definition}{Definition}[section]
\newtheorem*{rmk*}{Remark}
\newtheorem{rmk}{Remark}[section]

\newtheorem{example}{Example}[section]

\DeclareMathOperator{\Var}{Var}
\DeclareMathOperator{\Cov}{Cov}

\DeclareMathOperator{\trace}{Tr}

\DeclareMathOperator{\E}{E}

\DeclareMathOperator{\kl}{KL}

\DeclareMathOperator{\wass}{W}
\DeclareMathOperator{\diver}{div}

\allowdisplaybreaks

\numberwithin{equation}{section}

\makeatletter
    \renewcommand*{\section}{\@startsection{section}{1}{\z@}%
    {6pt}{3pt}{\reset@font\normalsize\bfseries}}
\makeatother
    
\makeatletter
    \renewcommand*{\subsection}{\@startsection{subsection}{2}{\z@}%
    {3pt}{3pt}{\reset@font\normalsize\mdseries\itshape}}
\makeatother

\makeatletter
    \renewcommand*{\subsubsection}{\@startsection{subsubsection}{3}{\z@}%
    {3pt}{3pt}{\reset@font\normalsize\mdseries\itshape}}
\makeatother

\makeatletter
\def\@listi{\leftmargin\leftmargini
  \topsep=.5\baselineskip 
  \partopsep=0pt \parsep=0pt \itemsep=0pt}
\let\@listI\@listi
\@listi
\def\@listii{\leftmargin\leftmarginii
  \labelwidth\leftmarginii \advance\labelwidth-\labelsep
  \topsep=0pt \partopsep=0pt \parsep=0pt \itemsep=0pt}
\def\@listiii{\leftmargin\leftmarginiii
  \labelwidth\leftmarginiii \advance\labelwidth-\labelsep
  \topsep=0pt \partopsep=0pt \parsep=0pt \itemsep=0pt}
\def\@listiv{\leftmargin\leftmarginiv
  \labelwidth\leftmarginiv \advance\labelwidth-\labelsep
  \topsep=0pt \partopsep=0pt \parsep=0pt \itemsep=0pt}
\makeatother

\makeatletter
\newcommand{\opnorm}{\@ifstar\@opnorms\@opnorm}
\newcommand{\@opnorms}[1]{%
  \left|\mkern-1.5mu\left|\mkern-1.5mu\left|
   #1
  \right|\mkern-1.5mu\right|\mkern-1.5mu\right|
}
\newcommand{\@opnorm}[2][]{%
  \mathopen{#1|\mkern-1.5mu#1|\mkern-1.5mu#1|}
  #2
  \mathclose{#1|\mkern-1.5mu#1|\mkern-1.5mu#1|}
}
\makeatother

\def\be#1{\begin{equation*}#1\end{equation*}}
\def\ben#1{\begin{equation}#1\end{equation}}

\def\besn#1{\begin{equation}\begin{split}#1\end{split}\end{equation}}

\def\bm#1{\begin{multline*}#1\end{multline*}}
\def\bmn#1{\begin{multline}#1\end{multline}}
\def\ba#1{\begin{align*}#1\end{align*}}
\def\ban#1{\begin{align}#1\end{align}}

\newcommand{\eps}{\varepsilon}

\newcommand{\scored}{\mathsf{s}}

\newcommand{\slepian}{\mathcal{S}}
\newcommand{\euler}{\hat{X}}
\newcommand{\eul}{X^N}

\newcommand{\ddpm}{\mathsf{Y}}
\newcommand{\ddpmy}{\tilde{\ddpm}}
\newcommand{\ddpmt}{\ddpm^*}

\newcommand{\inte}{\,\mathrm{d}}
\newcommand{\mean}{\hat{\mu}}
\newcommand{\target}{P^*}
\newcommand{\indices}{\mathcal{I}_N}

\newcommand{\mcl}[1]{\mathcal{#1}}
\newcommand{\norm}[1]{\left\|#1\right\|}
\newcommand{\bra}[1]{\left(#1\right)}
\newcommand{\cbra}[1]{\left\{#1\right\}}
\newcommand{\sbra}[1]{\left[#1\right]}

\newcommand{\abs}[1]{\left|#1\right|}

\makeatletter
\newcommand{\pushright}[1]{\ifmeasuring@#1\else\omit\hfill$\displaystyle#1$\fi\ignorespaces}
\newcommand{\pushleft}[1]{\ifmeasuring@#1\else\omit$\displaystyle#1$\hfill\fi\ignorespaces}
\makeatother

\title{Wasserstein bounds for denoising diffusion probabilistic models via the F\"ollmer process}
\author{Yuta Koike
\thanks{Graduate School of Mathematical Sciences, University of Tokyo}
\thanks{CREST, Japan Science and Technology Agency}}

\begin{document}

\maketitle

\begin{abstract} 
This paper studies sampling error bounds for denoising diffusion probabilistic models (DDPMs) in the 2-Wasserstein distance. 
Our contributions are threefold. 
(i) Under general Lipschitz-type conditions on the score function and for a broad class of variance schedules, including the cosine schedule, we establish sharp upper bounds that are optimal in both the dimension and the number of steps, and recover several sharp error bounds previously obtained in the literature. 
(ii) We prove that the same Lipschitz-type conditions, which encompass those commonly imposed on the (learned) score, imply a logarithmic Sobolev inequality and hence a quadratic transportation cost inequality for the DDPM. 
As a consequence, in settings covered by existing work, an optimal Wasserstein bound, up to a logarithmic factor, follows from the recently obtained sharp error bound in the Kullback--Leibler divergence under geometric-type variance schedules. 
(iii) We show that for general log-concave target distributions, the optimal Wasserstein error bound remains attainable even without a quadratic transportation cost inequality for the target. 
Our analysis is based on viewing the DDPM sampler as a discretization of the F\"ollmer process rather than the conventional reverse Ornstein--Uhlenbeck process. 
\vspace{2mm}

\noindent \textit{Keywords}: 
cosine schedule, 
initialization, 
log-concave distribution, 
quadratic transportation cost inequality,
sampling error, 
stochastic localization.

\end{abstract}

\section{Introduction}

Over the last decade, we have witnessed dramatic progress in generative artificial intelligence (AI) and its successful applications across a wide range of fields. 
This paper focuses on generative models, the mathematical foundation of generative AI. 
Mathematically speaking, generative models aim to construct a sampling procedure for an unknown probability distribution $\target$ given training data drawn from $\target$. 
\emph{Score-based generative models} (SGMs), also known as \emph{diffusion models}, construct such a sampler by estimating the \emph{score function} of $\target$. 
Owing to their high sample quality and diversity, SGMs are now widely regarded as a leading class of generative models for various tasks, particularly in image generation \cite{cao2024survey,yang2023diffusion,croitoru2023diffusion}. 
At the same time, sampling from a trained SGM is computationally more expensive than competing approaches such as generative adversarial networks (GANs) and variational autoencoders (VAEs), since it requires iterative evaluations of the learned score function \cite{XKV22}. 
This computational bottleneck naturally motivates the study of sampling errors for SGMs. 

In this paper, we develop sampling error bounds for the \emph{denoising diffusion probabilistic model} (DDPM) proposed in \cite{SDWMG15,HJA20}, which is one of the most widely used SGMs. 
Previous works have mainly analyzed sampling errors for the DDPM in terms of total variation (TV) distance and Kullback--Leibler (KL) divergence \cite{BDDD24,CDGS24,CLL23,LWCC24,HWC26,JaZh26,JZL25,Ko26ddpm,LiYa24,LiYa25,LHC25}, yielding sharp bounds under fairly general assumptions on $\target$. 
By contrast, bounds in the 2-Wasserstein distance have recently attracted increasing attention due to their statistical relevance (see \cite[Section 3.4]{AVD25} and \cite[Section 1.2]{BeBa25}). 
As discussed in \cite[Section 3.5]{AVD25}, analyzing sampling errors in Wasserstein distance is inherently more challenging than in TV distance or KL divergence. 
Although there is a rapidly growing body of work in this direction \cite{BrSa25,GNZ25,GSO25,SOBLCL25,YuYu25,AVD25,St26,WaWa25,BeBa25,PTW26}, the problem remains far from being fully understood. We refer to Table 1 in \cite{St26} for a comprehensive summary of existing results. 

The aim of this paper is to advance this line of research by providing new theoretical insights into sampling error bounds in Wasserstein distance for the DDPM. 
Specifically, our contributions are threefold.  
First, we establish sharp upper bounds for the 2-Wasserstein distance between the DDPM sample and $\target$ under general Lipschitz-type conditions on the score function, and for a wide class of variance schedules, including the cosine schedule of \cite{NiDh21} (Theorems \ref{thm:main1}--\ref{thm:main3}). 
As a benchmark case, we illustrate our bound when $\target$ is weakly log-concave in the sense of \cite{GSO25,KITL25} (Corollaries \ref{coro1} and \ref{coro2}), demonstrating that it is optimal in both the dimension and the number of steps, and is tight in the sense that it vanishes for the standard normal distribution. 
Our results also recover several sharp error bounds recently obtained in the literature \cite{AVD25,WaWa25,St26}. 
Second, we show that the DDPM satisfies a logarithmic Sobolev inequality when the learned score function satisfies a Lipschitz-type condition (\cref{lem:LS}). 
An important consequence of this result is that a 2-Wasserstein bound for the DDPM can be derived from the corresponding KL bound under a Lipschitz-type condition on the (learned) score, via the celebrated quadratic transportation cost inequality of \citet{OtVi00}. 
In particular, existing optimal 2-Wasserstein bounds for the DDPM can essentially be deduced from the recent sharp KL bound of \cite{JZL25} under geometric-type variance schedules. 
This raises the question of whether one can still achieve the optimal Wasserstein error bound even when $\target$ does not satisfy a quadratic transportation cost inequality. 
Third, we provide an affirmative answer to this question in the general log-concave setting; see Theorems \ref{prop:lc} and \ref{thm:lc}.  

Unlike the conventional approach, our analysis is based on interpreting the DDPM sampler as a discretization of the \emph{F\"ollmer process}, rather than the reverse Ornstein--Uhlenbeck (OU) process. 
See \cref{sec:follmer} for the definition and key properties of the F\"ollmer process, and \cite{Ko26ddpm} for a more detailed account. 
This perspective offers several technical advantages. 
First, the drift process of the F\"ollmer process is a martingale, which simplifies the bias-variance decomposition of the sampling error compared to existing approaches; in particular, the bias term vanishes automatically. 
Second, the associated discretization step sizes are effectively smaller than those in the reverse OU formulation, which allows us to consider variance schedules beyond the scope of standard theoretical analyses. 
Third, discretizating the F\"ollmer process enables a more precise analysis of the initialization error in the DDPM. 
In particular, our results clarify that the performance of the DDPM sampler is improved by appropriately choosing its initial mean.

The rest of the paper is organized as follows. 
\cref{sec:setting} provides a formal description of the DDPM and related concepts. 
\cref{sec:main} presents the main results of the paper. 
\cref{sec:proof} demonstrates proofs of the main results, while \cref{sec:appendix} contains additional proofs for auxiliary results. 

\paragraph{Notation}

For an integer $N\geq1$, we write $\indices=\{0,1,\dots,N-1\}$ for short. 
For a vector $x\in\mathbb R^d$, we denote its Euclidean norm by $|x|$. 
For a matrix $A$, its operator norm and Frobenius norm are denoted by $\|A\|_{op}$ and $\|A\|_F$, respectively. 
For two matrices $A$ and $B$ of the same size, we write $A\preceq B$ or $B\succeq A$ if $B-A$ is symmetric and positive semidefinite. 

For $r\in\mathbb N$, $(\mathbb R^d)^{\otimes r}$ denotes the set of real-valued $d$-dimensional $r$-arrays $V=(V_{j_1,\dots,j_r})_{1\leq j_1,\dots,j_r\leq d}$. 
In particular, $(\mathbb R^d)^{\otimes1}=\mathbb R^d$ and $(\mathbb R^d)^{\otimes2}$ is the set of $d\times d$ matrices. 
For $U\in(\mathbb R^d)^{\otimes q}$ and $V\in(\mathbb R^d)^{\otimes r}$, we set $U\otimes V:=(U_{i_1,\dots,i_q}V_{j_1,\dots,j_r})_{1\leq i_1,\dots,i_q,j_1,\dots,j_r\leq d}\in(\mathbb R^{d})^{\otimes(q+r)}$, and write $U^{\otimes2}=U\otimes U$ for short. 
When $q=r$, we also set $\langle U,V\rangle:=\sum_{j_1,\dots,j_r=1}^dU_{j_1,\dots,j_r}V_{j_1,\dots,j_r}$. In particular, when $q=r=1$, $\langle U,V\rangle$ is the Euclidean inner product of $U$ and $V$, which we also write as $U\cdot V$. 

If $\xi$ is a random vector in $\mathbb R^d$, we write $P^\xi$ for its law. 
For $p\geq1$, we set $\|\xi\|_p:=(\E[|\xi|^p])^{1/p}$. 
For a stochastic process $Y=(Y_t)_{t\in I}$ in $\mathbb R^d$, we denote by $Y^i_t$ the $i$-th component of $Y_t$ for $t\in I$ and $i=1,\dots,d$. 
We write $\phi_d$ for the $d$-dimensional standard normal density. 


For two probability distributions $P$ and $Q$ on $\mathbb R^d$, the 2-Wasserstein distance is defined by
\[
\wass_2(P,Q):=\inf_{\pi\in\Pi(P,Q)}\bra{\int_{\mathbb R^d}|x-y|^2\pi(\mathrm{d}x\mathrm{d} y)}^{1/2},
\]
where $\Pi(P,Q)$ denotes the set of all couplings of $P$ and $Q$, i.e., probability distributions $\pi$ on $\mathbb R^d\times\mathbb R^d$ such that $P(A)=\pi(A\times\mathbb R^d)$ and $Q(A)=\pi(\mathbb R^d\times A)$ for all Borel sets $A\subset\mathbb R^d$. 
When $\xi$ is a random vector in $\mathbb R^d$, we write $\wass_2(\xi,Q)=\wass_2(P^\xi,Q)$ for short. 
Also, the KL divergence of $P$ with respect to $Q$ is defined by
\[
\kl(P\mid Q)=\int_{\mathbb R^d}\frac{\inte P}{\inte Q}\log\bra{\frac{\inte P}{\inte Q}}\inte Q\quad\text{if }P\ll Q
\]
and $\kl(P\mid Q)=\infty$ otherwise.

\section{Setting}\label{sec:setting}

Let $\target$ be a probability distribution on $\mathbb R^d$. 
Throughout the paper, we assume that $\target$ has at least a finite second moment, and we denote its mean and covariance matrix by $\mu$ and $\Sigma$, respectively:
\[
\mu:=\int_{\mathbb R^d}x\target(\mathrm{d}x),
\qquad
\Sigma:=\int_{\mathbb R^d}(x-\mu)^{\otimes2}\target(\mathrm{d}x).
\] 
For $t\in[0,1)$, we denote by $\slepian_t\target$ the law of the random vector $\sqrt t\xi+\sqrt{1-t}Z$, where $\xi\sim\target$ and $Z\sim N(0,I_d)$ are independent. 
$\slepian_t\target$ admits a smooth and strictly positive density $\pi_t$, given by
\ba{
\pi_t(y)=\frac{1}{(1-t)^{d/2}}\int\phi_d\bra{\frac{y-\sqrt t x}{\sqrt{1-t}}}\target(\mathrm{d} x),\quad y\in\mathbb R^d.
}
The function $\nabla\log\pi_t$ is called the \emph{(Stein) score function} of $\target$ at time $t$. 

Given constants $0<\beta_i<1$ for $i\in\indices$, suppose that we have a proxy $\scored_i(x)$ of $\nabla\log\pi_{t_i}(x)$, where 
\ben{\label{def:time}
t_i:=(1-\delta)\prod_{j=i}^{N-1}(1-\beta_j)\quad\text{for }i=0,1,\dots,N-1,
}
and $\delta\in[0,1)$ is a constant. 
The case $\delta>0$ corresponds to performing \emph{early stopping}. 
We set $t_N:=1-\delta$ by convention. 
The DDPM sampler generates a sequence $(\ddpm_i)_{i=0}^N$ of random vectors in $\mathbb R^d$ by the following recursion relation:
\ben{\label{eq:ddpm}
\begin{cases}
\ddpm_0\sim N(\sqrt{t_0}\mean,I_d),\\
\ddpm_{i+1}=\frac{1}{\sqrt{1-\beta_{i}}}\bra{\ddpm_{i}+\beta_i\scored_i(\ddpm_i)}+\sqrt{\beta_i}Z_i\quad(i=0,1,\dots,N-1),
\end{cases}
}
where $\mean\in\mathbb R^d$ is a prescribed initialization vector and $Z_i\overset{i.i.d.}{\sim}N(0,I_d)$. 
$\mean$ is usually set to 0, but we will later see that choosing $\mean$ close to $\mu$ reduces initialization errors. 
We also note that our indexing convention is the reverse of the usual one. 


When $\mean=0$, the sampler \eqref{eq:ddpm} coincides with the original sampling method of \cite{SDWMG15,HJA20} with the hyper-parameters optimized for the case $\target=N(0,I_d)$. 
\citet{SSKKEP21} refer to this procedure as \emph{ancestral sampling}. 
They also propose an alternative method, called \emph{reverse diffusion sampling}, which constructs a sampler via discretization of the reverse stochastic differential equation (SDE) in their SDE formulation of SGMs. 
For the DDPM, these two samplers become asymptotically equivalent when $\max_{i\in\indices}\beta_i\to0$ (see \cite[Appendix E]{SSKKEP21} and \cite[Appendix B]{DeB22}). 
In this interpretation, $\beta_i$ approximately corresponds to the $i$-th step size of the discretization.  
%
By contrast, in our formulation the quantities
\[
h_i:=t_{i+1}-t_i,\quad i=0,1,\dots,N-1,
\] 
serve as the effective step sizes. 
In particular, we do not necessarily require $\max_{i\in\indices}\beta_i$ to be small, which allows us to consider variance schedules that lie beyond the scope of standard theoretical analyses in the literature. See Remarks \ref{rmk:sch1} and \ref{rmk:sch2} for further discussions. 

\subsection{Variance schedule}

The sequence $(\beta_i)_{i=0}^{N-1}$ is called the \emph{variance schedule} or the \emph{noise schedule} of the DDPM. 
On the one hand, it determines the variances of the Gaussian noise added to the data in the forward process.  
On the other hand, as discussed above, in the context of reverse diffusion sampling, it corresponds approximately to the step-size sequence of the discretization. 
The following two variance schedules are commonly used in theoretical analyses of the DDPM. 
\begin{example}[Constant schedule]\label{ex:const}
We set $\beta_i=\beta_0$ for all $i\in\indices$. 
This variance schedule is used when the target distribution is sufficiently smooth and early stopping is not required; see, e.g., \cite{CLL23,CDGS24}.
\end{example}

\begin{example}[Geometric schedule]\label{ex:geo}
Given positive constants $c_0$ and $c_1$, we set
\ben{\label{eq:geo}
\beta_0=\frac{1}{N^{c_0}},\qquad
\beta_i=\frac{c_1\log N}{N}\min\cbra{\beta_0\bra{1+\frac{c_1\log N}{N}}^{N-i},\,1}\quad(i=1,\dots,N-1).
}
This type of variance schedule is commonly used when deriving TV and KL error bounds in general settings with early stopping; see, e.g., \cite{BDDD24,HWC26,CLL23,LWCC24,JZL25,LiYa24,LiYa25,LHC25}. 
\end{example}


Given a variance schedule $(\beta_i)_{i=0}^{N-1}$, the sequence $(t_i)_{i=0}^{N-1}$ in \eqref{def:time} is completely determined by the following relation (recall $t_N=1-\delta$ by convention):
\ben{\label{eq:beta-time}
\beta_i=1-t_i/t_{i+1},\qquad i=0,1,\dots,N-1.
}
Conversely, given an increasing sequence $0<t_0<t_1<\cdots<t_N=1-\delta$, the variance schedule defined by \eqref{eq:beta-time} gives the relation \eqref{def:time}. 
The next example is introduced in this way. 


\begin{example}[Cosine schedule \cite{NiDh21}]\label{ex:cos}
Given a constant $s\geq0$, let\footnote{Our definition uses the sine function rather than the cosine function simply because our indexing convention is the reverse of that in \cite{NiDh21}.} 
\[
t_i:=\sin^2(i\eta_N)\quad\text{for }i=1,\dots,N,
\quad\text{where }\eta_N:=\frac{\pi}{2N(1+s)}.
\]
Also, let $t_0\in(0,t_1)$. 
We then define $(\beta_i)_{i=0}^{N-1}$ by \eqref{eq:beta-time}. 
\end{example}

We refer to \cite{GLHGD25} for other examples of variance schedules.

\subsection{Assumptions on the learned score}\label{sec:score}

To control the approximation error of the learned score, we mainly impose the following assumption:  
\begin{enumerate}[label={\normalfont[H]}]

\item\label{ass:score}
There exists a non-decreasing function $\eps_\text{score}:(0,1)\to[0,\infty)$ such that 
\ben{\label{score-bound}
\norm{\nabla\log \pi_{t_i}(\ddpm_{i})-\scored_i(\ddpm_{i})}_2\leq\eps_\text{score}(t_i)\quad\text{for all }i\in\indices.
}

\end{enumerate}
%
Note that the $L^2$-error on the left-hand side of \eqref{score-bound} is evaluated with respect to the law of $\ddpm_i$, which itself depends on $\scored_i$. 
While this type of assumption is commonly used in the analysis of 2-Wasserstein bounds (e.g., \cite{GNZ25,GSO25,SOBLCL25,YuYu25,KITL25,AVD25}), it has been pointed out that it does not directly reflect the objective minimized during score training; see \cite[Section 4]{BeBa25} for instance. 
Indeed, the standard score-learning procedure implicitly minimizes $\norm{\nabla\log \pi_{t_i}(\xi)-\scored_i(\xi)}_2$ with $\xi\sim\slepian_{t_i}\target$ for every $i\in\indices$; see, e.g., \cite{OAS23}. 
For this reason, some works instead impose the following alternative assumption \cite{BeBa25,PTW26,St26,WaWa25}: 
\begin{enumerate}[label={\normalfont[H']}]

\item\label{ass:score-p}
There exists a non-decreasing function $\eps_\text{score}:(0,1)\to[0,\infty)$ such that 
\ben{\label{score-error-p}
\norm{\nabla\log \pi_{t_i}(\xi)-\scored_i(\xi)}_{2}\leq\eps_\text{score}(t_i)
\text{ with }\xi\sim\slepian_{t_i}\target
\quad\text{for all }i\in\indices.
} 

\end{enumerate}
When working with \ref{ass:score-p} instead of \ref{ass:score}, one typically needs to impose additional regularity assumptions on the learned score $\scored_i$; see the references cited above. 
For illustration, we will give a result of this type in \cref{thm:main3}.

We also note that $\eps_\text{score}$ is often assumed to be constant. 
Following \cite{St26}, we allow it to depend on time. 
This is motivated by the fact that $\eps_{\mathrm{score}}(t)$ is typically expected to scale as $(1-t)^{-1/2}$ according to \cite[Theorem 3.1]{OAS23} (see also \cite[Corollaries 4 and 5]{St25}).

\section{Main results}\label{sec:main}


For $t\in[0,1)$, we denote by $f_t$ the density of $\slepian_t\target$ with respect to $N(0,I_d)$, i.e., $f_t(y)=\phi_d(y)^{-1}\pi_t(y)$ for $y\in\mathbb R^d$. 
We first present a bound under a (two-sided) Lipschitz condition on $\nabla\log f_t$ without early stopping. 
\begin{theorem}\label{thm:main1}
Assume \ref{ass:score}, $\delta=0$ and $\max_{i\in\indices}\beta_i\leq3/4$. 
Suppose that there exists a non-decreasing function $L:(0,1)\to[0,\infty)$ such that 
\ben{\label{score-lip}
-tL(t)I_d\preceq\nabla^2\log f_t(x)\preceq tL(t)I_d
\quad\text{for all }0<t<1\text{ and }x\in\mathbb R^d. 
}
Suppose also that there exists a constant $\eta>0$ such that $h_i\leq\eta$ for all $i\in\indices$. 
Then,
\ba{
&\wass_2(\ddpm_N,\target)\\
&\leq e^{\int_{t_0}^{t_N}L(s)\inte s}\cbra{t_0\bra{|\mu-\mean|+\sqrt{d\int_0^{t_0}L(s)^2\inte s}}+\sqrt d\eta\sqrt{\int_{t_0}^{t_N}L(s)^2\inte s}+2\int_{t_0}^{t_N}\frac{\eps_\mathrm{score}(s)}{\sqrt s}\inte s}.
}
\end{theorem}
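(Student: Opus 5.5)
The plan is to realize $\target$ as the terminal law of a Föllmer-type process, to show that \eqref{eq:ddpm} is an exact-in-the-noise, frozen-drift discretization of that process, and to compare the two by a synchronous coupling. I have not checked the scalings below in detail; the proof hinges on getting them right.

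\emph{Setup.} Let $\xi\sim\target$ be independent of a Brownian motion $W$. Introduce a process $(X_t)_{t\in[0,1]}$, a suitably time-rescaled version of $t\xi+W_t$, normalized so that $X_t\sim\slepian_t\target$ and $X_1=\xi$. Its drift is expressed through the posterior mean $m_t(y):=\E[\xi\mid X_t=y]$, and Tweedie's formula gives
\[
\nabla\log\pi_t(y)=\frac{\sqrt t\,m_t(y)-y}{1-t}.
\]
Two structural facts are needed.
\begin{enumerate}[label=(\roman*)]
\item $M_t:=m_t(X_t)$ is a martingale and $X_1=\xi$, so $X$ interpolates between $N(0,I_d)$ and $\target$.
\item $\nabla m_t$ is a multiple of $\Cov(\xi\mid X_t)$, which is an affine function of $\nabla^2\log f_t$. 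Hence \eqref{score-lip} yields, after the appropriate time normalization, an operator-norm bound $\|\nabla m_t\|_{op}\lesssim L(t)$. It also yields $\E\|\nabla m_t(X_t)\|_F^2\lesssim dL(t)^2$, which controls the quadratic variation of $M$.
\end{enumerate}

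\emph{Identifying the DDPM step.} With $\beta_i=1-t_i/t_{i+1}$ from \eqref{eq:beta-time}, one solves the linear part of the SDE for $X$ on $[t_i,t_{i+1}]$ exactly. The recursion \eqref{eq:ddpm} should then coincide with this exact integration, except that the drift $m_s(X_s)$ is frozen at $m_{t_i}(\ddpm_i)$, with $\ddpm_i$ in place of $X_{t_i}$, and the true score is replaced by $\scored_i$. The Gaussian noise $\sqrt{\beta_i}Z_i$ is matched to the Brownian increment. The restriction $\beta_i\le3/4$ should enter only to keep the rescaling factors $1/\sqrt{1-\beta_i}$ and $t_{i+1}/t_i$ bounded by absolute constants, which produces the harmless factor $2$ in front of the score term.

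\emph{Error recursion.} Couple $\ddpm_0$ with $X_{t_0}$ so that the difference has second moment at most $t_0^2\bigl(|\mu-\mean|+\|m_{t_0}(X_{t_0})-\mu\|_2\bigr)^2$. Since $\E[M_{t_0}]=\mu$ and $M_0=\mu$, the martingale property and fact (ii) give
\[
\|M_{t_0}-\mu\|_2^2=\int_0^{t_0}\E\|\nabla m_s\|_F^2\cdots\inte s\lesssim d\int_0^{t_0}L(s)^2\inte s,
\]
which accounts for the initialization term. Write $e_i:=\ddpm_i-(\text{rescaled }X_{t_i})$, so the noise cancels in $e_{i+1}-e_i$. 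The drift difference over $[t_i,t_{i+1}]$ splits into three pieces.
\begin{enumerate}[label=(\alph*)]
\item A Lipschitz piece $m_{t_i}(\ddpm_i)-m_{t_i}(X_{t_i})$, bounded by $L(t_i)|e_i|$.
\item A discretization piece $\int_{t_i}^{t_{i+1}}(M_s-M_{t_i})\inte s$. There is no bias here because $M$ is a martingale. Its $L^2$ norm is at most $h_i\bigl(\int_{t_i}^{t_{i+1}}dL(s)^2\inte s\bigr)^{1/2}\le\eta\sqrt d\bigl(\int_{t_i}^{t_{i+1}}L^2\bigr)^{1/2}$.
\item A score piece, whose $L^2$ norm is at most $\beta_i\,\eps_{\rm score}(t_i)$ times the rescaling factor. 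Using $\beta_i\approx h_i/t_{i+1}$ together with the normalization, this becomes $\lesssim\int_{t_i}^{t_{i+1}}\eps_{\rm score}(s)s^{-1/2}\inte s$, since $\eps_{\rm score}$ is monotone.
\end{enumerate}

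\emph{Closing the argument.} Taking $L^2$ norms and applying Minkowski gives $\|e_{i+1}\|_2\le(1+h_iL(t_{i+1}))\|e_i\|_2+(\text{error}_i)$. A discrete Grönwall argument, with $\prod(1+h_iL)\le e^{\int_{t_0}^{t_N}L}$ (using monotonicity of $L$), then bounds $\|e_N\|_2$. Summing the discretization pieces over $i$ is where some care is needed: the crude sum gives $\sum_i(\int_{t_i}^{t_{i+1}}L^2)^{1/2}$, not the square root of the total. The intended fix is to exploit orthogonality of martingale increments across steps, so that these errors accumulate in $L^2$ rather than in $L^1$ and give $\sqrt d\,\eta\bigl(\int_{t_0}^{t_N}L^2\bigr)^{1/2}$. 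This requires carrying the propagation factors through, for example by comparing with the ideal (unpropagated) sum first.

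\emph{Main obstacle.} I expect two difficulties. The first is precisely this orthogonality bookkeeping in the Grönwall step. The second is verifying that \eqref{score-lip} indeed gives $\|\nabla m_t\|\le L(t)$ with exactly the normalization in the statement, with no extra factors of $t$ or $(1-t)$.
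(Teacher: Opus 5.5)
\textbf{There is a genuine gap, and it is not a matter of normalization: the posterior mean $M_t=m_t(X_t)$ is the wrong martingale.}

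$\Cov(\xi\mid X_t)$ is an \emph{affine}, not linear, function of $\nabla^2\log f_t$. So \eqref{score-lip} does not make $\nabla m_t$ small; it makes it close to a multiple of the identity.

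Work in the F\"ollmer scale, $X_t=t\xi+B_t$ with $B$ a Brownian bridge independent of $\xi$, so that $X_t/\sqrt t\sim\slepian_t\target$. There $\nabla m(t,\cdot)=\Gamma_t=\Cov(\xi\mid X_t)/(1-t)$, and \eqref{score-lip} gives only $\|\Gamma_t-I_d\|_{op}\le(1-t)L(t)$. Hence $\|M_t-M_s\|_2^2=\int_s^t\E\|\Gamma_r\|_F^2\inte r$ stays of order $d(t-s)$ as $L\to0$.

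A concrete counterexample is $\target=N(\mu,I_d)$. There $\nabla^2\log f_t\equiv0$, so $L\equiv0$ is admissible, and the claimed bound is $0$ when $\mean=\mu$ and $\eps_\mathrm{score}\equiv0$. Yet $\Gamma_t=I_d$ and $M_t=\mu+W_t$, so your piece (b) has size $\sqrt d\,h_i^{3/2}$ and your initialization term has size $\sqrt d\,t_0^{3/2}$.

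The same error appears in your identification of the DDPM step. Integrating the linear part exactly and freezing only $m$ does reproduce the mean of \eqref{eq:ddpm}. However, the resulting noise $(1-t_{i+1})\int_{t_i}^{t_{i+1}}(1-s)^{-1}\inte W_s$ has variance $h_i(1-t_{i+1})/(1-t_i)$ per coordinate. The DDPM noise instead corresponds to $W_{t_{i+1}}-W_{t_i}$, of variance $h_i$. So under a synchronous coupling the noise does not cancel.

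The mismatch $-\int_{t_i}^{t_{i+1}}\frac{t_{i+1}-r}{1-r}\inte W_r$ is exactly what cancels the identity part of $\Gamma$. The true one-step error is
\[
\Delta_i=\int_{t_i}^{t_{i+1}}(v_r-v_{t_i})\inte r=\int_{t_i}^{t_{i+1}}\frac{t_{i+1}-r}{1-r}\,(\Gamma_r-I_d)\inte W_r ,
\]
with $\|\Delta_i\|_2\le h_i\bigl(d\int_{t_i}^{t_{i+1}}L(s)^2\inte s\bigr)^{1/2}$. Splitting this into an $m$-increment plus a noise mismatch, as your plan does, destroys the cancellation and loses all $L$-dependence.

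\textbf{The missing idea is to use the F\"ollmer drift itself as the martingale.} Take
\[
v_t=v(t,X_t)=\frac{m_t-X_t}{1-t}=t^{-1/2}\nabla\log f_t(X_t/\sqrt t),
\]
and read \eqref{eq:ddpm}, via $\ddpm_i\overset{d}{=}t_i^{-1/2}\euler_{t_i}$, as plain Euler--Maruyama for $\inte X_t=v(t,X_t)\inte t+\inte W_t$ with the \emph{whole} drift frozen. Then:
\begin{itemize}
\item $\nabla v(t,x)=t^{-1}\nabla^2\log f_t(x/\sqrt t)$, so \eqref{score-lip} gives $\|\nabla v(t,\cdot)\|_{op}\le L(t)$ with no extra factors.
\item $v_t=\mu+\int_0^t\nabla v(s,X_s)\inte W_s$.
\item The initialization error is $t_0(\mu-\mean)+\int_0^{t_0}(v_r-\mu)\inte r$, of norm at most $t_0|\mu-\mean|+t_0\bigl(d\int_0^{t_0}L(s)^2\inte s\bigr)^{1/2}$.
\end{itemize}

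Your orthogonality concern is resolved by Pythagoras at each step rather than Minkowski. Write $X_{t_{i+1}}-\eul_{t_{i+1}}=R_i+\Delta_i$, where $R_i=(X_{t_i}-\eul_{t_i})+h_i\bigl(v(t_i,X_{t_i})-v(t_i,\eul_{t_i})\bigr)$ is $\mathcal F_{t_i}$-measurable and $\E[\Delta_i\mid\mathcal F_{t_i}]=0$. This gives $x_{i+1}^2\le(1+h_iL(t_i))^2x_i^2+\|\Delta_i\|_2^2$. Via \cref{lem:recursion} this unrolls to
\[
x_N\le e^{\int_{t_0}^{t_N}L(s)\inte s}\Bigl(x_0+\bigl(\textstyle\sum_i\|\Delta_i\|_2^2\bigr)^{1/2}\Bigr).
\]
The score error is then added by the triangle inequality through the true-score chain $\eul$. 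Its error is measured under the law of $\ddpm_i$, as \ref{ass:score} requires (cf.\ \cref{lem:score-error}).

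Finally, the Lipschitz factor is $1+h_iL(t_i)$, not $1+h_iL(t_{i+1})$. Only the former satisfies $h_iL(t_i)\le\int_{t_i}^{t_{i+1}}L(s)\inte s$ by monotonicity of $L$.
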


\begin{proof}
See \cref{pr:main1}.
\end{proof}

\begin{rmk}
In what follows, we do not attempt to optimize numeric constants. 
\end{rmk}

\begin{rmk}[Variance schedule]\label{rmk:sch1}
Since $h_i=t_{i+1}\beta_i\leq\beta_i$ for all $i\in\indices$, we can take $\eta=\max_{i\in\indices}\beta_i$ in \cref{thm:main1}. 
Note that we need $t_0$ to be small. 
Since $\log(1-x)\geq-4x$ for any $x\in[0,3/4]$, we have $\log t_0\geq-4\sum_{i=0}^{N-1}\beta_i
\geq-4N\max_{i\in\indices}\beta_i$ under the assumptions of \cref{thm:main1}. 
Hence, to ensure $t_0=O(N^{-c})$ for some $c>0$, $\max_{i\in\indices}\beta_i$ must be at least of order $N^{-1}\log N$. 
A simple way to remove this $\log N$ factor while keeping $t_0=O(N^{-1})$ is to set $t_i=(i+1)/(N+1)$ for $i=0,1,\dots,N$ and define $\beta_i$ by \eqref{eq:beta-time}, which yields $\beta_i=1/(i+2)$ for $i\in\indices$. 
This variance schedule is used in \cite{SDWMG15} for a binomial diffusion model (see also \cite{SaLi23}). 
Alternatively, the cosine schedule in \cref{ex:cos} satisfies the assumptions of \cref{thm:main1} with $\eta=O(N^{-1})$ and $t_0=O(N^{-1})$; see \cref{prop:schedule}. 
\end{rmk}

\begin{rmk}[Score approximation error]
When $\eps_\mathrm{score}(t)=\eps(1-t)^{-1/2}$ for some constant $\eps>0$ as suggested by \cite[Theorem 3.1]{OAS23}, we have
\[
\int_{t_0}^{t_N}\frac{\eps_\mathrm{score}(s)}{\sqrt s}\inte s\leq4\eps.
\]
\end{rmk}

Let us discuss typical examples satisfying \eqref{score-lip}. 
To verify the lower bound in \eqref{score-lip}, we introduce the following definition. 
\begin{definition}[Semi-log-convexity]
Let $\beta>0$ be constant. 
A function $f:\mathbb R^d\to[0,\infty)$ is said to be \emph{$\beta$-semi-log-convex} if the function $x\mapsto e^{\frac{\beta}{2}|x|^2}f(x)$ is log-convex. 
We say that $\target$ is $\beta$-semi-log-convex if $\target$ has a $\beta$-semi-log-convex density. 
\end{definition}

We refer to \cite[p.197]{Si11} for the precise definition of log-convexity. 
In particular, if a function $f:\mathbb R^d\to[0,\infty)$ is $\beta$-semi-log-convex for some $\beta>0$, then $f(x)>0$ for all $x\in\mathbb R^d$ by \cite[Proposition 13.5(iv)]{Si11}, unless $f$ is identically zero. 
Also, if $f$ is differentiable, the semi-log-convexity of $f$ is equivalent to the following one-sided gradient Lipschitz property of $\log f$.
\begin{lemma}
Let $\beta>0$ be constant and $f:\mathbb R^d\to[0,\infty)$ a differentiable function that is not identically zero. 
Then, $f$ is $\beta$-semi-log-convex if and only if $f(x)>0$ for all $x\in\mathbb R^d$ and 
\ben{\label{eq:one-sided}
\langle\nabla\log f(x)-\nabla\log f(y),\,x-y\rangle\geq \beta|x-y|^2\quad\text{for all }x,y\in \mathbb R^d. 
}
\end{lemma}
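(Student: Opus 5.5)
The plan is to reduce everything to the standard first-order characterization of convexity for the differentiable function
\[
g(x):=\log f(x)+\frac{\beta}{2}|x|^2,
\]
which is defined once we know $f>0$. We have $\nabla g(x)=\nabla\log f(x)+\beta x$, so
\[
\langle \nabla g(x)-\nabla g(y),x-y\rangle=\langle\nabla\log f(x)-\nabla\log f(y),x-y\rangle+\beta|x-y|^2 .
\]
The lemma then amounts to: $f$ is $\beta$-semi-log-convex if and only if $f>0$ and $\nabla g$ is monotone, which is what \eqref{eq:one-sided} should encode. I flag at the outset that this computation gives the natural condition with $\beta|x-y|^2$ on the left, i.e.\ $\langle\nabla\log f(x)-\nabla\log f(y),x-y\rangle\geq-\beta|x-y|^2$. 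The standard Gaussian $f=\phi_d$ with $\beta\geq1$ is $\beta$-semi-log-convex, yet $\langle\nabla\log\phi_d(x)-\nabla\log\phi_d(y),x-y\rangle=-|x-y|^2$. So the inequality with the sign as printed cannot hold in general; I would read \eqref{eq:one-sided} as $\langle\nabla\log f(x)+\beta x-\nabla\log f(y)-\beta y,\,x-y\rangle\geq0$ and prove the equivalence in that form.

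For ``only if'': if $e^{\frac\beta2|x|^2}f$ is log-convex and $f\not\equiv0$, then by \cite[Proposition 13.5(iv)]{Si11} (as recalled just before the lemma) $f$ is strictly positive everywhere. Hence $g=\log\bigl(e^{\frac\beta2|\cdot|^2}f\bigr)$ is a finite convex function, differentiable because $f$ is. A differentiable convex function satisfies $g(y)\geq g(x)+\langle\nabla g(x),y-x\rangle$. Adding this to the same inequality with $x$ and $y$ interchanged gives $\langle\nabla g(x)-\nabla g(y),x-y\rangle\geq0$, and the identity above turns this into \eqref{eq:one-sided}.

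For ``if'': assume $f>0$ and the monotonicity of $\nabla g$. Fix $x,y$ and set $\varphi(s):=g(x+s(y-x))$ for $s\in[0,1]$; this is differentiable with $\varphi'(s)=\langle\nabla g(x+s(y-x)),y-x\rangle$. Applying the monotonicity at the points $x+s(y-x)$ and $x+r(y-x)$ gives $(s-r)(\varphi'(s)-\varphi'(r))\geq0$, so $\varphi'$ is nondecreasing. A differentiable function of one variable with nondecreasing derivative is convex, by the mean value theorem, which gives $\varphi(\lambda)\leq(1-\lambda)\varphi(0)+\lambda\varphi(1)$. Hence $g$ is convex and $e^{g}=e^{\frac\beta2|\cdot|^2}f$ is log-convex, i.e.\ $f$ is $\beta$-semi-log-convex.

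The main obstacle is not analytic. It is pinning down the definition of log-convexity in \cite{Si11}, which allows the value $0$ (i.e.\ $\log f=-\infty$). Positivity of $f$ is exactly what makes $\log f$ a finite differentiable function, so that the first-order characterization applies. In the ``only if'' direction this is handled by the cited Proposition 13.5(iv); in the ``if'' direction positivity is part of the hypothesis. The remaining steps are the routine monotone-gradient characterization of convexity for differentiable functions, without assuming $C^1$.
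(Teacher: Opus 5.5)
Your proof is correct and follows the paper's route: the paper's one-line proof is exactly the monotone-gradient (first-order) characterization of convexity applied to $\log f+\frac{\beta}{2}|\cdot|^2$, with positivity of $f$ coming from Simon's Proposition 13.5(iv), and you simply write out those details. You are also right that the printed inequality has a sign typo: the definition (and \cref{lem:lb}) corresponds to $\langle\nabla\log f(x)-\nabla\log f(y),x-y\rangle\geq-\beta|x-y|^2$, and your Gaussian example correctly shows that the printed version fails in the ``only if'' direction.
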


\begin{proof}
The claim is a direct consequence of the well-known characterization of convex functions; see, e.g., \cite[Theorem 3.8.3]{NiPe18}.
\end{proof}

The next result provides a lower bound for $\nabla^2\log f_t$ under the semi-log-convexity. 
\begin{lemma}[\cite{MiSh23}, Lemma 13]\label{lem:lb}
If $\target$ is $\beta$-semi-log-convex for some $\beta>0$, then for all $0<t<1$,
\[
\nabla^2\log f_t(x)\succeq\frac{t(1-\beta)}{(1-t)(\beta-1)+1}I_d.
\]
\end{lemma}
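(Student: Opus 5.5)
Write $\target(\mathrm dx)=e^{-U(x)}\mathrm dx$, where $\beta$-semi-log-convexity means that $U-\frac\beta2|x|^2$ is concave; formally $\nabla^2U\preceq\beta I_d$. I will argue with smooth $U$ and remove this at the end by approximation. Log-convex functions can be approximated, for instance by convolving $e^{\frac\beta2|x|^2}\times$density with Gaussians, and the resulting Hessian lower bound for $\log f_t$ passes to the limit because $\nabla^2\log f_t$ depends continuously on $\target$ under weak convergence with Gaussian tails.

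The plan is to express $\nabla^2\log f_t$ through a posterior covariance and then bound that covariance from below by a Brascamp--Lieb-type argument. Since $\pi_t(y)=\int \phi_d\bigl((y-\sqrt t x)/\sqrt{1-t}\bigr)(1-t)^{-d/2}\target(\mathrm dx)$, Tweedie's formula gives
\[
\nabla^2\log\pi_t(y)=-\frac{1}{1-t}I_d+\frac{t}{(1-t)^2}\Cov(\xi\mid Y=y),
\]
where the conditional law $\nu_y$ of $\xi$ has density proportional to $\exp\bigl(-U(x)-\frac{|y-\sqrt t x|^2}{2(1-t)}\bigr)$. Because $\log f_t=\log\pi_t+|y|^2/2+\mathrm{const}$, it follows that
\[
\nabla^2\log f_t(y)=\Bigl(1-\frac1{1-t}\Bigr)I_d+\frac{t}{(1-t)^2}\Cov(\nu_y).
\]

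The potential of $\nu_y$ is $V(x)=U(x)+\frac{t|x|^2}{2(1-t)}+\text{linear}$, so $\nabla^2V\preceq\bigl(\beta+\frac{t}{1-t}\bigr)I_d=:\kappa I_d$. For such semi-log-convex measures the key step is the reverse Brascamp--Lieb bound $\Cov(\nu)\succeq\kappa^{-1}I_d$. My first attempt at proving it is this: for a unit vector $e$, integrate by parts to get $\E_\nu[\langle e,\nabla V\rangle\langle e,X\rangle]=1$. This gives $1\le \Var(\langle e,X\rangle)^{1/2}\E[\langle e,\nabla V\rangle^2]^{1/2}$, and then $\E[\langle e,\nabla V\rangle^2]=\E[\langle e,\nabla^2V e\rangle]\le\kappa$, again by integration by parts. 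This is the Cramér--Rao inequality, and it is the main obstacle: one needs enough integrability to justify the boundary terms. These hold for the Gaussian-tilted density, since $V$ grows at least quadratically in the regularized setting. Alternatively, I would cite the known fact that the posterior covariance is bounded below by $(\text{upper Hessian bound})^{-1}$, as is standard in stochastic localization.

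Plugging in gives
\[
\nabla^2\log f_t\succeq\Bigl(\frac{-t}{1-t}+\frac{t}{(1-t)^2}\cdot\frac{1-t}{\beta(1-t)+t}\Bigr)I_d=\frac{t}{1-t}\cdot\frac{1-\beta(1-t)-t}{\beta(1-t)+t}I_d=\frac{t(1-\beta)}{\beta(1-t)+t}I_d.
\]
Here I used $1-\beta(1-t)-t=(1-t)(1-\beta)$. Finally, $\beta(1-t)+t=(1-t)(\beta-1)+1$, which is exactly the claimed bound.
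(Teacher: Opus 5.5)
Your argument is correct in substance, and it is a self-contained proof where the paper gives none of its own. The paper only rewrites $f_{e^{-2s}}$ as the Ornstein--Uhlenbeck semigroup applied to the density of $\target$ with respect to $N(0,I_d)$, see \eqref{eq:ou}, and then imports \cite[Lemma 13]{MiSh23}.

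You instead work directly in the $t$-parametrization. Your identity $\nabla^2\log f_t(y)=-\frac{t}{1-t}I_d+\frac{t}{(1-t)^2}\Cov(\nu_y)$ is exactly $t\,\nabla v(t,\sqrt t\,y)$ from \eqref{eq:v-deriv}, because $\nu_y=\target_{t,\sqrt t y}$. So in the paper's own language the lemma is just \eqref{eq:v-deriv} plus the Cram\'er--Rao bound $V(t,x)\succeq(\beta+\frac{t}{1-t})^{-1}I_d$ for the $(\beta+\frac{t}{1-t})$-semi-log-convex measure $\target_{t,x}$. This mirrors how the paper later pairs \eqref{eq:v-deriv} with the Brascamp--Lieb inequality for $\target_{t,x}$ in the log-concave case. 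Your route is transparent and fits the F\"ollmer-process machinery; the citation is shorter and delegates the regularity issues. Your final algebra is correct.

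One technical step fails as written. The lemma allows heavy-tailed targets: in $d=1$, $p\propto(1+x^2)^{-2}$ has finite variance and is $4$-semi-log-convex. For such $p$, your smoothing is
\[
e^{-\frac\beta2|x|^2}\bigl((e^{\frac\beta2|\cdot|^2}p)*\gamma_\epsilon\bigr)(x)=\int e^{-\beta\langle x,z\rangle+\frac\beta2|z|^2}p(x-z)\,\gamma_\epsilon(\mathrm dz).
\]
This grows like $e^{c_\epsilon|x|^2}$ up to polynomial factors, so it is not integrable for any $\epsilon>0$. Likewise, you cannot appeal to ``Gaussian tails'' of $\target$.

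There are two easy repairs.
\begin{itemize}
\item Mollify $\log p$ with a symmetric mollifier $\rho_\epsilon$. Then $(\log p)*\rho_\epsilon+\frac\beta2|x|^2$ is still convex, and by Jensen $p_\epsilon:=\exp((\log p)*\rho_\epsilon)\le p*\rho_\epsilon$. Also $p_\epsilon\to p$ pointwise, since $\log p$ is continuous. The smooth case gives convexity of $\log\pi^\epsilon_t(y)+\frac{1-c}{2}|y|^2$, where $c$ is the claimed lower bound. This is a closed condition under pointwise convergence. $\pi_t^\epsilon\to\pi_t$ holds by generalized dominated convergence, using $p_\epsilon\le p*\rho_\epsilon\to p$ in $L^1$; normalization constants do not affect Hessians of logarithms.
\item Skip smoothing altogether. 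The posterior potential is semiconcave and grows quadratically thanks to the Gaussian tilt. Hence both integrations by parts in your Cram\'er--Rao step hold with the distributional Hessian, whose singular part is $\preceq 0$.
\end{itemize}
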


\begin{proof}
To apply \cite[Lemma 13]{MiSh23}, we rewrite $f_t$ in terms of the OU semigroup. 
By assumption, $\target$ has a density $f$ with respect to $N(0,I_d)$. 
A straightforward computation shows
\ben{\label{eq:ou}
f_{e^{-2s}}(x)
=\int_{\mathbb R^d}f(e^{-s}x+\sqrt{1-e^{-2s}}z)\phi_d(z)\inte z
}
for all $s>0$ and $x\in\mathbb R^d$ (this also follows from the fact that the OU semigroup is symmetric with respect to $N(0,I_d)$). 
Given this expression, we infer the desired result from \cite[Lemma 13]{MiSh23}.
\end{proof}

Next we turn to the upper bound for $\nabla^2\log f_t$. 
Following \cite{CLP25,GSO25,KITL25}, we introduce the next definition:
\begin{definition}[Weak convexity]
For a $C^1$ function $g:\mathbb R^d\to\mathbb R$, its \emph{weak convexity profile} is defined by
\[
\kappa_g(r):=\inf\cbra{\frac{\langle\nabla g(x)-\nabla g(y),\,x-y\rangle}{|x-y|^2}:x,y\in\mathbb R^d,\,|x-y|=r},\quad r>0.
\]
For constants $\alpha>0$ and $M\geq0$, we say that $g$ is \emph{$(\alpha,M)$-weakly convex} if
\[
\kappa_g(r)\geq \alpha-\frac{2\sqrt M}{r}\tanh\bra{\frac{\sqrt Mr}{2}}\quad\text{for all }r>0.
\]
We say that $g$ is \emph{$(\alpha,M)$-weakly concave} if $-g$ is $(\alpha,M)$-weakly convex. 
Also, we say that $\target$ is \emph{$(\alpha,M)$-weakly log-concave} if $\target$ has a  density $\pi$ such that $\log\pi$ is $(\alpha,M)$-weakly concave. 
\end{definition}
%
Since $(\alpha,0)$-weak concavity is equivalent to $\alpha$-strong concavity, all strongly log-concave distributions are weakly log-concave. 
However, a general log-concave distribution need not be weakly log-concave. 
Indeed, we will later see that weak log-concavity implies sub-Gaussianity; see \cref{prop:lip-trans}. 
We note that this fact was already pointed out in \cite{St26}. 

The next result provides an upper bound for $\nabla^2\log f_t$ under weak log-concavity. 
\begin{lemma}[\cite{CLP25}, Lemma 5.9]\label{lem:ub-wlc}
Suppose that $\target$ is $(\alpha,M)$-weakly log-concave for some constants $\alpha>0$ and $M\geq0$. Then, for any $0<t<1$ and $x\in\mathbb R^d$,
\[
\nabla^2\log f_t(x)\preceq-\frac{t}{\alpha+(1-\alpha)t}\bra{\alpha-1-\frac{2M}{\alpha+(1-\alpha)t}}I_d.
\]
\end{lemma}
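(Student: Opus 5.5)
The plan is to reduce the Hessian bound to a covariance bound for a conditional law, and then bound that covariance using weak log-concavity.

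\textbf{Step 1 (Hessian as a conditional covariance).} Let $Y=\sqrt t\xi+\sqrt{1-t}Z$. Differentiating the density $\pi_t$ twice under the integral sign gives the standard identity
\[
\nabla^2\log\pi_t(y)=-\frac{1}{1-t}I_d+\frac{t}{(1-t)^2}\Cov(\xi\mid Y=y).
\]
Since $\nabla^2\log f_t=\nabla^2\log\pi_t+I_d$, it suffices to show
\[
\Cov(\xi\mid Y=y)\preceq\frac1a\bra{1+\frac{2M}{a}}I_d,
\qquad a:=\alpha+\frac{t}{1-t}=\frac{D}{1-t},\quad D:=\alpha+(1-\alpha)t.
\]
Assuming this bound, substitute it and use $1-D=(1-\alpha)(1-t)$. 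Then
\[
\nabla^2\log f_t\preceq\frac{t(1-\alpha)}{D}+\frac{2Mt}{D^2},
\]
which is exactly the claimed bound $-\frac{t}{D}\bra{\alpha-1-\frac{2M}{D}}I_d$. So the proof reduces to the covariance inequality.

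\textbf{Step 2 (weak log-concavity of the conditional law).} The conditional law of $\xi$ given $Y=y$ has density proportional to $\pi(x)\exp\bra{-|y-\sqrt tx|^2/(2(1-t))}$. The added term is the gradient of a $\frac{t}{1-t}$-strongly convex quadratic, so it adds exactly $\frac{t}{1-t}$ to the weak convexity profile of $-\log\pi$. Hence the conditional law is $(a,M)$-weakly log-concave with $a$ as in Step 1, uniformly in $y$.

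\textbf{Step 3 (covariance bound for $(a,M)$-weakly log-concave measures; main obstacle).} I need: if $\nu$ is $(a,M)$-weakly log-concave, then $\Cov_\nu\preceq a^{-1}(1+2M/a)I_d$. For $M=0$ this is the Brascamp--Lieb/Poincaré bound for $a$-strongly log-concave measures.

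For general $M$, I would obtain a Poincaré inequality with constant $C_P\le a^{-1}(1+2M/a)$ and apply it to linear functions $x\mapsto v\cdot x$. The route I would try first is a contraction argument for the overdamped Langevin diffusion with invariant law $\nu$, using reflection coupling. The tanh-shaped lower bound on the weak convexity profile is what makes an explicit concave distance function $F$ available: one solves the ODE $F''(r)-\tfrac{r}{2}\kappa(r)F'(r)\le -cF(r)$ with this profile. This should yield $W_1$-contraction at a rate $c$ satisfying $c^{-1}\le a^{-1}(1+2M/a)$, after comparing $F$ with the identity through $F'(\infty)/F'(0)$.

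Contraction of the semigroup in $W_1$, together with the fact that $P_t$ commutes with the generator, gives $\Var_\nu(P_sg)$ decaying at rate $2c$. This implies the Poincaré inequality with $C_P\le 1/c$. Tracking the constants so that exactly $1+2M/a$ appears is the delicate part.

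If this is too lossy, a fallback is a Lipschitz transport argument in the spirit of \cref{prop:lip-trans}, which is stated later in the paper and is the result where the paper shows weak log-concavity implies sub-Gaussianity. The idea is to exhibit $\nu$ as a Lipschitz pushforward of $N(0,a^{-1}I_d)$ with Lipschitz constant squared at most $1+2M/a$, and then use the Gaussian Poincaré inequality. Either way, once the covariance bound holds uniformly in $y$, Steps 1 and 2 complete the proof.
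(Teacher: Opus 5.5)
Your Steps 1 and 2 are correct. With $D=\alpha+(1-\alpha)t$ and $a=D/(1-t)$, the lemma is equivalent to $\Cov(\xi\mid Y=y)\preceq a^{-1}(1+2M/a)I_d$, and the conditional law is $(a,M)$-weakly log-concave. This covariance bound in fact holds for every $(a,M)$-weakly log-concave measure: let $t\downarrow0$ at $y=0$ in the lemma.

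The gap is Step 3. Both of your routes pass through a Poincar\'e inequality with constant $a^{-1}(1+2M/a)$, and no such inequality holds on this class. Take $d=1$ and $\nu=\frac12N(-m,\sigma^2)+\frac12N(m,\sigma^2)$ with $m\ge2\sigma$. Its potential is $\frac{x^2}{2\sigma^2}-\log\cosh(mx/\sigma^2)$ up to a constant. Since $\tanh u-\tanh v\le2\tanh(\frac{u-v}{2})$ for $u\ge v$, $\nu$ is $(a,M)$-weakly log-concave with $a=\sigma^{-2}$ and $M=m^2\sigma^{-4}$, with equality in the profile at pairs $x=-y$.

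Its variance $\sigma^2+m^2=a^{-1}(1+M/a)$ respects your target. Now test the Poincar\'e inequality with $g(x)=\max\{-1,\min\{1,x/\sigma\}\}$. The density on $[-\sigma,\sigma]$ is at most $(2\pi)^{-1/2}\sigma^{-1}e^{-(m-\sigma)^2/(2\sigma^2)}$, so $C_P\ge\frac{\sqrt{2\pi}}{4a}e^{(\sqrt{M/a}-1)^2/2}$. Once $M/a$ is large, this exceeds $a^{-1}(1+2M/a)$ by an exponential factor. Restricting to your conditional laws does not help, since at $y=0$ they are again symmetric two-component mixtures of this type with the same $M$.

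Consequences for your two routes:
\begin{itemize}
\item No choice of the distance function $F$ in the reflection coupling can give a rate $c\ge a^2/(a+2M)$. By your own chain of implications, such a rate would force $C_P\le1/c$.
\item No $\sqrt{1+2M/a}$-Lipschitz map pushes $N(0,a^{-1}I_d)$ forward to $\nu$, since such a pushforward inherits the Poincar\'e constant $(1+2M/a)/a$.
\item The fallback is also circular within the paper. \cref{prop:lip-trans} presupposes \eqref{score-lip-one}, which for weakly log-concave targets is obtained precisely from \cref{lem:ub-wlc}. Its Lipschitz constant $e^{\bar L}$ is exponential anyway.
\end{itemize}

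What is missing is an argument that bounds $\nabla^2\log f_t$ (equivalently, second moments of linear statistics of the conditional law) without going through a spectral gap. The paper does not reprove this. It writes $f_t$ via \eqref{eq:ou} as the Ornstein--Uhlenbeck semigroup applied to $f=\mathrm{d}\target/\mathrm{d}N(0,I_d)$. It then observes that $\log f$ is $(\alpha-1,M)$-weakly concave and invokes \cite[Lemma 5.9]{CLP25}, which yields the stated Hessian bound.

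If you want a self-contained proof, a natural route is to propagate the weak-concavity profile itself along that semigroup, then let $r\downarrow0$ in the propagated profile. Note that $\varphi(r)=2\sqrt M\tanh(\sqrt Mr/2)$ solves $2\varphi''+\varphi\varphi'=0$. This is the structural reason the class is stable under the associated Hamilton--Jacobi flow. Any argument via a Poincar\'e or log-Sobolev inequality is ruled out by the example above.
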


\begin{proof}
Let $f$ be the density of $\target$ with respect to $N(0,I_d)$ and observe that $\log f$ is $(\alpha-1,M)$-weakly concave by assumption. Then, the asserted claim follows from \eqref{eq:ou} and \cite[Lemma 5.9]{CLP25}.
\end{proof}

Combining Lemmas \ref{lem:lb} and \ref{lem:ub-wlc} with \cref{thm:main1}, we obtain the following bound.
\begin{corollary}[Weakly log-concave and semi-log-convex distributions]\label{coro1}
Assume \ref{ass:score}, $\delta=0$ and $\max_{i\in\indices}\beta_i\leq3/4$. 
Suppose that $\target$ is $\beta$-semi-log-convex for some $\beta>0$ and $(\alpha,M)$-weakly log-concave for some $\alpha>0$ and $M\geq0$.
Then,
\ben{\label{eq:wlc-slc}
\wass_2(\ddpm_N,\target)
\leq e^{\lambda}\cbra{t_0\bra{|\mu-\mean|+\lambda\sqrt{dt_0}}+\lambda\sqrt{d}\eta+2\int_{t_0}^{t_N}\frac{\eps_\mathrm{score}(s)}{\sqrt s}\inte s},
}
where
\be{
\lambda:=\max\cbra{\beta-1,\,\frac{(1-\alpha)\vee0}{\alpha}+\frac{2M}{\alpha^2\wedge1}}.
}
\end{corollary}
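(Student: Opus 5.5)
The plan is to apply \cref{thm:main1} with the constant function $L(t)\equiv\lambda$. The work is in checking the two-sided Hessian bound \eqref{score-lip} from Lemmas \ref{lem:lb} and \ref{lem:ub-wlc}, and then simplifying the integrals. Once \eqref{score-lip} holds with $L\equiv\lambda$, we have
\[
\int_{t_0}^{t_N}L(s)\inte s=\lambda(1-t_0)\le\lambda,\qquad
\int_0^{t_0}L(s)^2\inte s=\lambda^2t_0,\qquad
\int_{t_0}^{t_N}L(s)^2\inte s\le\lambda^2 .
\]
Substituting these into \cref{thm:main1} gives exactly \eqref{eq:wlc-slc}. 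Here $\eta$ is any uniform bound on $h_i$, as in the theorem; we may take $\eta=\max_i\beta_i$ by \cref{rmk:sch1}.

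\emph{Lower bound.} By \cref{lem:lb},
\[
\nabla^2\log f_t\succeq -t\,\frac{\beta-1}{(1-t)(\beta-1)+1}\,I_d .
\]
The denominator equals $1-t+t\beta$, a convex combination of $1$ and $\beta$, so it is positive.
\begin{itemize}
\item If $\beta\le1$, the lower bound is $\succeq 0\succeq -t\lambda I_d$, since $\lambda\ge0$ (the second entry of the max is nonnegative).
\item If $\beta>1$, the denominator is at least $1$, so the bound is $\succeq -t(\beta-1)I_d\succeq -t\lambda I_d$.
\end{itemize}

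\emph{Upper bound.} By \cref{lem:ub-wlc}, with $a(t):=\alpha+(1-\alpha)t=(1-t)\alpha+t\ge \alpha\wedge1>0$,
\[
\nabla^2\log f_t\preceq \frac{t}{a(t)}\Bigl(1-\alpha+\frac{2M}{a(t)}\Bigr)I_d .
\]
We need the coefficient of $t$ to be at most $\frac{(1-\alpha)\vee0}{\alpha}+\frac{2M}{\alpha^2\wedge1}$.
\begin{itemize}
\item For the second term, $\frac{2M}{a(t)^2}\le\frac{2M}{(\alpha\wedge1)^2}=\frac{2M}{\alpha^2\wedge1}$.
\item For the first term, if $\alpha\ge1$ then $\frac{1-\alpha}{a(t)}\le0$. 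If $\alpha<1$ then $a(t)\ge\alpha$, so $\frac{1-\alpha}{a(t)}\le\frac{1-\alpha}{\alpha}$.
\end{itemize}
Hence $\nabla^2\log f_t\preceq t\lambda I_d$.

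These are elementary monotonicity checks, so no step is a real obstacle. The only point requiring care is the case distinction in signs ($\beta\lessgtr1$, $\alpha\lessgtr1$), which ensures that $L\equiv\lambda$ is a valid nonnegative non-decreasing function as \cref{thm:main1} requires.
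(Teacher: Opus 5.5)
Your proposal is correct and follows the paper's route exactly: apply Theorem~\ref{thm:main1} with $L\equiv\lambda$, getting the two-sided bound \eqref{score-lip} from Lemmas~\ref{lem:lb} and~\ref{lem:ub-wlc}. One harmless algebra slip: the denominator in Lemma~\ref{lem:lb} is $(1-t)(\beta-1)+1=(1-t)\beta+t$, not $1-t+t\beta$. It is still a convex combination of $1$ and $\beta$, so your case analysis goes through unchanged.
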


A prominent example of weakly log-concave and semi-log-convex distributions is a finite mixture of multivariate normal distributions; see \cite[Proposition 4.1]{GSO25}. 

When $\alpha=\beta=1$, $M=0$, $\mu=\mean$ and $\eps_\mathrm{score}\equiv0$, the right-hand side of \eqref{eq:wlc-slc} vanishes. This is expected because in this case $\target=N(\mean,I_d)$ and $\scored_i(\ddpm_i)=\nabla\log\pi_{t_i}(\ddpm_i)$ for all $i\in\indices$, which yields $\ddpm_i\sim\target$ for all $i=0,1,\dots,N$. 
More generally, when $\target$ is Gaussian with a spherical covariance matrix, we have the following lower bound for $\wass_2(\ddpm_N,\target)$.

\begin{proposition}\label{prop:lb}
Suppose that $\target=N(\mu,\sigma^2I_d)$ for some $\mu\in\mathbb R^d$ and $\sigma>0$. 
Suppose also that $\delta=0$, $\eps_\mathrm{score}\equiv0$, and $\min_{i\in\indices}\beta_i\geq\eta$ for some $0<\eta<1$. 
Then, 
\ben{\label{eq:lb}
\wass_2(\ddpm_N,\target)
\geq\frac{\sigma^2}{\sigma^2\vee1}\min\cbra{t_0|\mean-\mu|,\,\frac{\sigma^2|\sigma^2-1|}{2(\sigma^3\vee1)}\bra{\sqrt dt_0^2+\frac{1-t_0^2}{2(\sigma^2\vee1)}\sqrt d\eta}}.
}
\end{proposition}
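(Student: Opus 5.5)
Since $\target=N(\mu,\sigma^2I_d)$ and $\eps_\mathrm{score}\equiv0$, everything is explicit. Here $\pi_t=N(\sqrt t\mu,a_tI_d)$ with $a_t:=1+t(\sigma^2-1)$, and $\scored_i(y)=-(y-\sqrt{t_i}\mu)/a_i$, where $a_i:=a_{t_i}$. The recursion \eqref{eq:ddpm} is therefore affine with isotropic Gaussian noise, so $\ddpm_N\sim N(\sqrt{t_N}\mu+m_N,\,v_NI_d)$ for deterministic $m_N\in\mathbb R^d$ and $v_N>0$. The Gaussian formula gives
\[
\wass_2(\ddpm_N,\target)^2=|m_N|^2+d(\sqrt{v_N}-\sigma)^2 .
\]
Note that $\sqrt{t_N}=1$ since $\delta=0$. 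It then suffices to show that $|m_N|$ is at least the first entry of the minimum in \eqref{eq:lb}, or that $\sqrt d|\sqrt{v_N}-\sigma|$ is at least the second (times the prefactor), because $\wass_2$ dominates each of these two terms.

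\emph{Mean.} Set $m_i:=\E[\ddpm_i]-\sqrt{t_i}\mu$. Using $\sqrt{t_{i+1}}=\sqrt{t_i}/\sqrt{1-\beta_i}$ from \eqref{eq:beta-time}, the recursion gives
\[
m_{i+1}=c_im_i,\qquad c_i:=\frac{1-\beta_i/a_i}{\sqrt{1-\beta_i}},\qquad m_0=\sqrt{t_0}(\mean-\mu).
\]
Hence $|m_N|=\sqrt{t_0}|\mean-\mu|\prod_ic_i$. For $\sigma=1$ we have $\prod_ic_i=\sqrt{t_0}$, which yields exactly $t_0|\mean-\mu|$. In general I would bound $c_i$ from below by comparing it with the continuous-time factor $\exp\bigl(\int\frac{\mathrm d}{\mathrm dt}\log(\sqrt t/a_t)\bigr)$. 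Concretely, I would use the elementary inequalities $1-\beta/a\geq(1-\beta)^{1/a}$ for $a\geq1$ and the analogous one for $a<1$. This should give $\prod_ic_i\gtrsim \sqrt{t_0}\,a_0/a_N\cdot(\text{const})$, and so produce the prefactor $\sigma^2/(\sigma^2\vee1)$ via $a_N=\sigma^2$ and $a_0\geq\sigma^2\wedge1$.

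\emph{Variance.} The variance satisfies $v_{i+1}=c_i^2v_i+\beta_i$. Let $e_i:=v_i-a_i$, so that $e_0=t_0(1-\sigma^2)$. A direct computation using $a_{i+1}=1+t_i(\sigma^2-1)/(1-\beta_i)$ gives
\[
e_{i+1}=c_i^2e_i+r_i,\qquad r_i=-\frac{\beta_i^2t_i(\sigma^2-1)}{a_i(1-\beta_i)}.
\]
The key observation is that $e_0$ and every $r_i$ have the same sign as $1-\sigma^2$, and $c_i^2>0$, so there is no cancellation:
\[
|e_N|=|\sigma^2-1|\Bigl(t_0\prod_{j}c_j^2+\sum_i\frac{\beta_i^2t_i}{a_i(1-\beta_i)}\prod_{j>i}c_j^2\Bigr).
\]
Then $|\sqrt{v_N}-\sigma|=|e_N|/(\sqrt{v_N}+\sigma)$, and I bound $\sqrt{v_N}+\sigma$ above by a multiple of $\sigma\vee1$. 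For the sum I use $\beta_i\geq\eta$ to write $\beta_i^2\geq\eta\beta_i$. The products $\prod_{j>i}c_j^2$ are bounded below as in the mean step, by roughly $t_i/t_N$ times powers of $a$-ratios. The remaining sum $\sum_i\beta_it_i\cdot t_i$ then behaves like a Riemann sum for $\int_{t_0}^1t\,\mathrm dt=(1-t_0^2)/2$. The first term similarly gives $t_0\cdot t_0=t_0^2$ once $\prod_jc_j^2\gtrsim t_0$.

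\emph{Main obstacle.} The main difficulty is the uniform lower bound on the products $\prod_{j>i}c_j$ when $\sigma\neq1$ and the $\beta_j$ are not small. The factor $1-\beta/a$ can be much smaller than $\sqrt{1-\beta}$ when $a<1$. I would first try the convexity bound $1-\beta/a\geq(1-\beta)^{1/a}$, or else $1-x\geq e^{-x/(1-x)}$, and telescope using $\prod_{j>i}(1-\beta_j)=t_{i+1}/t_N$. I expect the factors $\sigma^2/(\sigma^2\vee1)$ and $1/(\sigma^3\vee1)$ in \eqref{eq:lb} to come from exactly this step, combined with $\sqrt{v_N}+\sigma\lesssim\sigma\vee1$. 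The Riemann-sum comparison, giving $\sum_i\beta_it_i^2\geq(1-t_0^2)/2$ up to the constant, uses $\beta_it_{i+1}=h_i$ and monotonicity of $t$, and should be routine.
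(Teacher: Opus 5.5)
Your framework is sound and matches the paper's: explicit affine Gaussian recursion, the Gaussian formula for $\wass_2$, and the sign observation for $e_0$ and $r_i$ (your formula for $r_i$ is correct). The gap is the step you call the main obstacle. The proposition has explicit constants, but your plan only gives $\prod_j c_j\gtrsim\cdots$ up to an unspecified constant.

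For $\sigma<1$ the proposed tools fail outright. When $a<1$ Bernoulli's inequality reverses, $(1-\beta)^{1/a}\geq1-\beta/a$, so it gives an upper bound. Only $\beta_i\geq\eta$ is assumed, with no smallness, so no continuous-time comparison is available. Then $\beta_i/a_i$ can be close to $1$, and $1-x\geq e^{-x/(1-x)}$ loses far more than a constant.

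Your predicted size $\sqrt{t_0}\,a_0/a_N$ is also inverted. For $\sigma<1$ each $a_i<1$, so $c_i<\sqrt{1-\beta_i}$ and $\prod_ic_i<\sqrt{t_0}$. Yet $a_0/a_N=a_0/\sigma^2$ exceeds $1$ and is unbounded as $\sigma\downarrow0$. Moreover, $a_0\geq\sigma^2\wedge1$ yields the factor $1/(\sigma^2\vee1)$, not $\sigma^2/(\sigma^2\vee1)$. For $\sigma\geq1$ the trivial bound $c_i\geq\sqrt{1-\beta_i}$ does reproduce the stated constants, so the real hole is $\sigma<1$.

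Separately, the weight in your variance sum must be $t_{i+1}^2$, not $t_i^2$. Combining $t_i/(1-\beta_i)=t_{i+1}$ with $\prod_{j>i}(1-\beta_j)=t_{i+1}$ gives $\beta_i^2t_{i+1}^2$. With $\beta_i$ possibly near $1$, the left-endpoint sum $\sum_i\beta_it_i^2$ is not comparable to $(1-t_0^2)/2$.

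The missing observation is that nothing has to be estimated. Since $t_i=(1-\beta_i)t_{i+1}$,
\[
a_{i+1}(1-\beta_i)=(1-\beta_i)+t_i(\sigma^2-1)=a_i-\beta_i,
\qquad\text{hence}\qquad
c_i=\frac{a_{i+1}}{a_i}\sqrt{1-\beta_i},\quad
\prod_{j=i}^{N-1}c_j=\frac{\sigma^2}{a_i}\sqrt{t_i}.
\]
Thus, exactly,
\[
|m_N|=\frac{\sigma^2t_0}{a_0}|\mean-\mu|,
\qquad
|e_N|=\sigma^4|\sigma^2-1|\Bigl(\frac{t_0^2}{a_0^2}+\sum_i\frac{h_i^2}{a_ia_{i+1}^2}\Bigr).
\]
The stated constants then follow from four facts:
\begin{itemize}
\item $a_t\leq\sigma^2\vee1$;
\item $h_i=\beta_it_{i+1}\geq\eta t_{i+1}$;
\item $\sum_ih_it_{i+1}\geq\int_{t_0}^1t\,\mathrm{d}t=(1-t_0^2)/2$;
\item $\sqrt{v_N}+\sigma\leq2(\sigma\vee1)$.
\end{itemize}
The last holds because $e_N\leq0$ when $\sigma\geq1$, and because $v_N=\sigma^4\bigl(t_0/a_0^2+\sum_ih_i/a_{i+1}^2\bigr)\leq1$ when $\sigma<1$, since then $a_t\geq\sigma^2$. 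This is precisely the paper's computation, done there for $\sqrt{t_i}\ddpm_i$, whose recursion has multiplier $g(t_{i+1})/g(t_i)$ with $g=a$.
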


\begin{proof}
See \cref{pr-prop:lb}.
\end{proof}

We note that Proposition 6 in \cite{GNZ25} gives a similar lower bound when $\mu=\mean=0$ for reverse diffusion samplers. 
\cref{prop:lb} shows that the appearance of the terms $t_0|\mu-\mean|$ and $\sqrt d\eta$ on the right-hand side of \eqref{eq:wlc-slc} is optimal. 
On the other hand, the term $\sqrt dt_0^{3/2}$ in \eqref{eq:wlc-slc} may be suboptimal, i.e., the lower bound \eqref{eq:lb} suggests that it might be replaced by $\sqrt dt_0^2$. 
We leave the optimality of this term for future work. 
We note that the term $\sqrt dt_0^{3/2}$ appears essentially when bounding $\wass_2(t_0(\xi-\mu)+\sqrt{t_0(1-t_0)}Z,N(0,t_0I_d))$, where $\xi\sim\target$ and $Z\sim N(0,I_d)$ are independent; see \eqref{eq:initialize}. 
For this purpose, the $O(\sqrt dt_0^{3/2})$ bound is sharp in view of \cite[Proposition 7]{BeBa25} (see also \cite[Theorem 2.1]{ChNW22}).

Below we focus mainly on the dependence on the ``maximal step size'' $\eta$, as it determines the iteration complexity. 
We have already seen that the optimal dependence is of order $\sqrt d\eta$. 
To the best of the author's knowledge, there are three studies providing 2-Wasserstein bounds of order $\sqrt d\eta$ for the DDPM: \citet[Theorems 2 and 3]{AVD25}, \citet[Theorem 3.4]{WaWa25} and \citet[Corollary 12]{St26}. 
\citet{WaWa25} assume that $\target$ is a normal distribution perturbed by a $C^2$ function with bounded derivatives. In this case, $\target$ is weakly log-concave and semi-log-convex, so \cref{coro1} is applicable.\footnote{Note that \citet{WaWa25} assume \ref{ass:score-p} instead of \ref{ass:score}, so their result is not directly covered by \cref{coro1}. See also \cref{thm:main3}.}
On the other hand, \citet{AVD25} have shown that the lower bound restriction in \eqref{score-lip} can be dropped if the variance schedule is appropriately designed. 
The key observation of \cite{AVD25} is that one \emph{always} has 
\ben{\label{eq:hess-lb}
\nabla^2\log f_t(x)\succeq-\frac{t}{1-t}I_d\quad\text{for all }0<t<1\text{ and }x\in\mathbb R^d
}
(cf.~\cite[Lemma 1.3]{ElLe18} or \eqref{eq:v-deriv}). 
Building on this idea, we obtain the following result.
\begin{theorem}\label{thm:main2}
Assume \ref{ass:score} and $\max_{i\in\indices}\beta_i\leq3/4$. 
Suppose that there exists a non-decreasing function $L:(0,1)\to[0,\infty)$ such that 
\ben{\label{score-lip-one}
\nabla^2\log f_t(x)\preceq tL(t)I_d
\quad\text{for all }0<t<1\text{ and }x\in\mathbb R^d. 
}
Suppose also that there exists a constant $\eta>0$ such that
\ben{\label{eq:sampling}
h_i\leq\eta\sqrt{1-t_{i+1}}\quad\text{for all }i\in\indices.
}
Further, suppose $t_0\vee\delta\leq1/2$. 
Then,
\bm{
\wass_2(\ddpm_N,\target)
\leq \sqrt 2e^{\int_{t_0}^{t_N}L(s)\inte s}\left\{t_0|\mu-\mean|+\sqrt dt_0\bra{2\sqrt{t_0}\vee\sqrt{\int_{0}^{t_{0}}L(s)^2\inte s}}\right.\\
\left.+\sqrt{d\log(1/\delta)+\trace(\Sigma)\vee d}\,\eta+2\int_{t_0}^{t_N}\frac{\eps_\mathrm{score}(s)}{\sqrt s}\inte s\right\}
+\sqrt{\delta^2\trace(\Sigma)+d\delta}.
}
\end{theorem}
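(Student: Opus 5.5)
We will reuse the Föllmer-process coupling behind \cref{thm:main1}, with two modifications: the lower Hessian bound comes from the universal estimate \eqref{eq:hess-lb} instead of \eqref{score-lip}, and the early stopping at $t_N=1-\delta$ has to be handled.

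\textbf{Setup and final step.} Let $X$ be the Föllmer process on $[0,1]$ with $X_t\sim\slepian_t\target$ after the usual time rescaling. Its drift $v_t=\nabla\log f_t(X_t)$ is a martingale. Writing the DDPM recursion \eqref{eq:ddpm} in these coordinates, $\ddpm_N$ is a discretization of $X$ on the grid $t_0<\dots<t_N$. The target is reached only at time $1$, so we split
\[
\wass_2(\ddpm_N,\target)\le \wass_2(\ddpm_N,\slepian_{1-\delta}\target)+\wass_2(\slepian_{1-\delta}\target,\target).
\]
The last term is bounded by the synchronous coupling $\sqrt{1-\delta}\xi+\sqrt\delta Z$ versus $\xi$. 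This gives $\|(1-\sqrt{1-\delta})(\xi-\mu)\|_2+\sqrt{d\delta}$ plus a mean term, and hence $\sqrt{\delta^2\trace(\Sigma)+d\delta}$ after minor care. The mean $\mu$ can be handled by centering.

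\textbf{Recursive error bound.} For the first term we couple $\ddpm_i$ with $X_{t_i}$ using the same Gaussian increments. The one-step error $e_{i+1}$ is then
\begin{itemize}
\item $e_i$ propagated through the map $x\mapsto x+h_i\nabla\log f_{t_i}(x)$ (suitably rescaled),
\item plus a local discretization term $\int_{t_i}^{t_{i+1}}(v_s-v_{t_i})\,\mathrm{d}s$,
\item plus a score error term.
\end{itemize}
We would estimate the Lipschitz constant of the propagation map by combining \eqref{score-lip-one} with \eqref{eq:hess-lb}:
\[
-\tfrac{t}{1-t}I_d\preceq\nabla^2\log f_t\preceq tL(t)I_d .
\]
The operator norm of $I+h_i\nabla^2\log f_{t_i}$ (with the $\sqrt{t}$-type rescaling) is then at most $1+h_iL(t_i)$ provided $h_i t_i/(1-t_i)\le 2$. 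This holds by \eqref{eq:sampling} once $\eta$ is small; if $\eta$ is large the claimed bound is trivial, which we would check separately. Multiplying these factors produces $e^{\int L}$.

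Because the drift is a martingale, the local term has a variance part only. Its $L^2$ norm is of order $h_i\bigl(\E|v_{t_{i+1}}-v_{t_i}|^2\bigr)^{1/2}$, and these terms sum in $\ell^2$ across steps because the martingale increments are orthogonal, rather than in $\ell^1$. The identity $\E|v_{t_{i+1}}|^2-\E|v_{t_i}|^2=\int_{t_i}^{t_{i+1}}\E\|\nabla v\|_F^2$ relates them to Hessians. Here the one-sided bound is insufficient, so we would instead control $\E\|\nabla^2\log f_s\|_F^2$ through the covariance of the posterior: $\E\trace\Cov(\xi\mid X_s)$-type identities. Summing with the weights $h_i^2\le\eta^2(1-t_{i+1})$ then telescopes to roughly
\[
\eta^2\int_{t_0}^{1-\delta}\frac{d}{1-s}\,\mathrm{d}s+\eta^2\bigl(\trace(\Sigma)\vee d\bigr),
\]
which yields the $\sqrt{d\log(1/\delta)+\trace(\Sigma)\vee d}\,\eta$ term. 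The score errors accumulate linearly under the propagation factor, as in \cref{thm:main1}, giving the $2\int\eps_{\mathrm{score}}(s)/\sqrt s\,\mathrm{d}s$ term. The initialization error $t_0|\mu-\mean|$ together with the $\sqrt d\,t_0(\cdots)$ term follows from \eqref{eq:initialize} as in \cref{thm:main1}. The alternative $2\sqrt{t_0}$ comes from \eqref{eq:hess-lb} with $t_0\le1/2$. The constant $\sqrt2$ absorbs factors of $(1-t)^{-1/2}\le\sqrt2$.

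\textbf{Main obstacle.} The main difficulty is the local error near $t=1$, where $\nabla^2\log f_t$ may blow up like $1/(1-t)$ without a lower Lipschitz assumption. We expect the argument to rely on the weighted step condition \eqref{eq:sampling} and on a precise martingale/Frobenius-norm identity, namely $\frac{\mathrm{d}}{\mathrm{d}t}\E|v_t|^2$ expressed via $\E\|\Cov(\xi\mid X_t)\|_F^2/(1-t)^2$, so that the sum produces only a $\log(1/\delta)$ factor.
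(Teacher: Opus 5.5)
Your plan is essentially the paper's proof. The shared ingredients are: the F\"ollmer coupling; the Pythagorean recursion from $\E[\Delta_i\mid\mathcal F_{t_i}]=0$; the propagation factor $1+h_iL(t_i)$; summation by parts of $h_i^2(\E|v_{t_{i+1}}|^2-\E|v_{t_i}|^2)$ against $h_i^2\le\eta^2(1-t_{i+1})$; and the initialization bound via It\^o's isometry with $-\frac{1}{1-s}I_d\preceq\nabla v(s,X_s)\preceq L(s)I_d$. The paper splits off the score error through $\eul$ and the triangle inequality (\cref{lem:master}). Your single recursion with $\euler$, as in the proof of \cref{thm:main3}, works equally well since $\euler$ is adapted.

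Two of your worries are unnecessary. First, the Lipschitz factor needs no smallness of $\eta$ and no separate large-$\eta$ case, whose ``triviality'' you did not justify anyway. By \eqref{eq:v-deriv}, $I_d+h_i\nabla v(t_i,x)\succeq(1-\frac{h_i}{1-t_i})I_d\succeq0$ simply because $h_i=t_{i+1}-t_i\le1-t_i$; see \eqref{eq:psd}. Second, the discretization sum requires no bound on $\E\|\nabla v\|_F^2$ or on $\E\|\Cov(\xi\mid X_t)\|_F^2$, and the latter is not available for a general $\target$. After summation by parts, only $\E|v_t-\mu|^2=\frac{d-\E\trace(\Gamma_t)}{1-t}+\trace(\Sigma)-d\le\frac{d}{1-t}+\trace(\Sigma)-d$ enters. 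This is \cref{lem:v2}, read for the centered drift; increments of $\E|v_t|^2$ are unaffected by centering. Summing against $h_i$ then gives \cref{delta-sum-bound}. Finally, the factor $\sqrt2$ is $t_N^{-1/2}$ from rescaling $\euler_{t_N}$ back to $\ddpm_N$.

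One step genuinely fails, and it fails in the paper too: centering cannot remove the mean in the early-stopping term. Since $\slepian_{1-\delta}\target$ has mean $\sqrt{1-\delta}\,\mu$, one has $\wass_2(\slepian_{1-\delta}\target,\target)\ge(1-\sqrt{1-\delta})|\mu|$. Your synchronous coupling gives exactly $\sqrt{(1-\sqrt{1-\delta})^2(\trace(\Sigma)+|\mu|^2)+d\delta}$.

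The statement as written is in fact false. Take $\target=N(\mu,I_d)$, so $\nabla^2\log f_t\equiv0$ and $L\equiv0$ is allowed. With exact scores and $\mean=\mu$, the recursion gives $\sqrt{t_{i+1}}\ddpm_{i+1}=\sqrt{t_i}\ddpm_i+h_i\mu+\sqrt{h_i}Z_i$, hence $\ddpm_N\sim N(\sqrt{1-\delta}\,\mu,I_d)$ and $\wass_2(\ddpm_N,\target)=(1-\sqrt{1-\delta})|\mu|$. The right-hand side does not depend on $\mu$, so taking $\delta=1/2$ and $|\mu|$ large violates the bound. The proof of \cref{lem:es} silently replaces $\E|X_1|^2=\trace(\Sigma)+|\mu|^2$ by $\trace(\Sigma)$.

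With the last term corrected to $\sqrt{\delta^2(\trace(\Sigma)+|\mu|^2)+d\delta}$, or under $\mu=0$, your argument goes through exactly as the paper's does.
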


\begin{proof}
See \cref{pr:main2}.
\end{proof}

\begin{rmk}[Variance schedule]\label{rmk:sch2}
For the analysis of TV and KL bounds, it is commonly assumed that there exists a constant $0<\eta<1$ such that 
\ben{\label{eq:sampling2}
h_i\leq\eta\min\{t_i,1-t_{i+1}\}\quad\text{for all }i\in\indices.
}
See \cite[Eq.(10)]{JZL25} for example.  
\eqref{eq:sampling} is weaker than this condition. 
In particular, the geometric schedule \eqref{eq:geo}, which is essentially the variance schedule adopted in \cite{AVD25}, satisfies \eqref{eq:sampling2} with $\eta=O(N^{-1}\log N)$ by \cite[Eq.(39b)]{LWCC24}; hence it also satisfies \eqref{eq:sampling} with the same $\eta$. 
Moreover, the cosine schedule satisfies \eqref{eq:sampling} with $\eta=O(N^{-1})$ when $s\geq(2N)^{-1}$, as shown in the following proposition. 
\begin{proposition}\label{prop:schedule}
Let $(t_i)_{i=0}^{N}$ be given as in \cref{ex:cos}. 
Then, the following assertions hold.
\begin{enumerate}[label={\normalfont(\alph*)}]

\item If $t_1\leq4t_0$, then $t_0\geq\eta_N^2/16$ and $\max_{i\in\indices}\beta_i\leq3/4$.

\item $h_i\leq2\eta_N\sqrt{t_{i+1}}$ for all $i\in\indices$. 

\item If $s\geq(2N)^{-1}$, then $h_i\leq4\eta_N\sqrt{1-t_{i+1}}$ for all $i\in\indices$. 

\end{enumerate}
\end{proposition}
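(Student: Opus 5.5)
All three assertions follow from elementary trigonometric identities for $t_i=\sin^2(i\eta_N)$, $i\ge1$, with $\eta_N=\pi/(2N(1+s))\le\pi/(2N)$ and $N\eta_N\le\pi/2$. The key identity is
\[
\sin^2 b-\sin^2 a=\sin(b-a)\sin(b+a).
\]
With $a=i\eta_N$ and $b=(i+1)\eta_N$ it gives, for $i\ge1$,
\[
h_i=\sin(\eta_N)\sin((2i+1)\eta_N)\le \eta_N\sin((2i+1)\eta_N).
\]
The case $i=0$ needs separate treatment, since $t_0\in(0,t_1)$ is arbitrary there; for it I use $h_0\le t_1$.

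For (a), I use $\sin x\ge 2x/\pi$ on $[0,\pi/2]$ to get $t_1=\sin^2\eta_N\ge 4\eta_N^2/\pi^2$. Then $t_0\ge t_1/4\ge \eta_N^2/\pi^2$. Since $\pi^2<16$, this gives $t_0\ge\eta_N^2/16$.

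For the bound on $\beta_i=1-t_i/t_{i+1}$, the case $i=0$ is immediate: $\beta_0\le 3/4$ is exactly $t_1\le4t_0$. For $i\ge1$ I need $t_i/t_{i+1}\ge1/4$, i.e. $\sin(i\eta_N)\ge\tfrac12\sin((i+1)\eta_N)$. This follows from
\[
\sin((i+1)\eta_N)\le \sin(i\eta_N)+\sin\eta_N\le 2\sin(i\eta_N),
\]
where the first step is the addition formula and the second uses $i\ge1$ together with monotonicity of $\sin$ on $[0,\pi/2]$.

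For (b), take $i\ge1$ and write $\sin((2i+1)\eta_N)=2\sin(\theta)\cos(\theta)$ with $\theta=(i+\tfrac12)\eta_N$. Then
\[
h_i\le 2\eta_N\sin\theta\le 2\eta_N\sin((i+1)\eta_N)=2\eta_N\sqrt{t_{i+1}},
\]
because $\theta\le(i+1)\eta_N\le\pi/2$. For $i=0$,
\[
h_0\le t_1=\sin^2\eta_N\le \eta_N\sin\eta_N=\eta_N\sqrt{t_1}.
\]

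For (c), by the same factorization, for $i\ge1$,
\[
h_i\le 2\eta_N\cos\theta,\qquad \sqrt{1-t_{i+1}}=\cos((i+1)\eta_N),
\]
so it suffices to show $\cos((i+\tfrac12)\eta_N)\le 2\cos((i+1)\eta_N)$. This is the main obstacle, since it fails near $\pi/2$ unless $s>0$. Here the hypothesis $s\ge(2N)^{-1}$ enters. Put $\phi=\pi/2-(i+1)\eta_N$. Then
\[
\phi\ge \frac{\pi}{2}-N\eta_N=\frac{\pi}{2}\cdot\frac{s}{1+s}.
\]
With $s\ge1/(2N)$ this is $\ge \pi/(2(2N+1))$. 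On the other hand $\eta_N/2\le\pi/(4N)$, so $\eta_N/2\le \tfrac{2N+1}{2N}\phi\le\tfrac32\phi$.

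The needed inequality then reads $\sin(\phi+\eta_N/2)\le 2\sin\phi$. Subadditivity gives
\[
\sin(\phi+\eta_N/2)\le \sin\phi+\sin(\eta_N/2)\le\sin\phi+\sin(3\phi/2).
\]
The term $\sin(3\phi/2)$ is only $\le\tfrac32\sin\phi$ times a constant, which gives a factor $2\cdot\tfrac52$ rather than $2\cdot 2$. I expect this to be where the constant 4 is delicate, and I will refine the estimate using $\sin(3\phi/2)\le \tfrac32\phi$ together with $\sin\phi\ge 2\phi/\pi$. This yields a ratio of roughly $1+\tfrac{3\pi}{4}\cdot\tfrac12$-type constants that should fit within the target after careful bookkeeping. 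If it does not, I will instead bound $\cos\theta-\cos((i+1)\eta_N)\le \eta_N/2\cdot\sin$, using the mean value theorem, directly against $\cos((i+1)\eta_N)\ge 2\phi/\pi$.

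For $i=0$ the claim $h_0\le4\eta_N\sqrt{1-t_1}$ follows from $h_0\le\sin^2\eta_N$ and $\cos\eta_N\ge\cos(\pi/2\cdot 1/N\cdot\ldots)$. Concretely, $\sin\eta_N\le\eta_N$ and $\sin\eta_N\le 4\cos\eta_N$ because $\eta_N\le\pi/4$ whenever $N\ge2$; the case $N=1$ is handled through $s$.
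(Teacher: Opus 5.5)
Parts (a), (b), and the case $i=0$ of (c) are correct and essentially match the paper. The paper avoids splitting off $i=0$ by writing $h_0\le t_1-\sin^2(0)$, but your separate treatment also works, using $\eta_1\le\pi/3$ when $N=1$.

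The gap is in (c) for $i\ge1$. As written, your argument only gives the constant $5$, and neither proposed repair can reach $4$. The loss happens when you compare $\eta_N/2$ with $\phi=\pi/2-(i+1)\eta_N$. You bound $\eta_N\le\pi/(2N)$, discarding the factor $1/(1+s)$, while evaluating the lower bound on $\phi$ at the extreme value $s=(2N)^{-1}$. This leaves you with only $\eta_N/2\le\frac32\phi$. From that information alone, the factor $2$ in $\sin(\phi+\eta_N/2)\le2\sin\phi$ is unattainable, since $\sin(5\phi/2)/\sin\phi\to5/2$ as $\phi\to0$.

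Your repairs also go in the wrong direction. The chain $\sin(3\phi/2)\le\frac32\phi\le\frac{3\pi}{4}\sin\phi$ is weaker than the bound $\frac32\sin\phi$ you already had. The mean-value route would need $\frac32\phi\le\frac{2}{\pi}\phi$, which is false.

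The missing observation is that the hypothesis on $s$ gives the sharp comparison directly:
\[
\frac{\eta_N}{2}=\frac{\pi}{4N(1+s)}\le\frac{\pi s}{2(1+s)}=\frac{\pi}{2}-N\eta_N\le\phi .
\]
The first inequality is exactly $s\ge(2N)^{-1}$. Both inequalities are equalities when $s=(2N)^{-1}$ and $i=N-1$, so there is no room to lose a constant factor in this comparison. This is the paper's statement $\sin(\eta_N/2)\le\cos(N\eta_N)=\sqrt{1-t_N}\le\sqrt{1-t_{i+1}}$.

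With it, subadditivity and monotonicity of $\sin$ on $[0,\pi/2]$ give
\[
\cos\bigl((i+\tfrac12)\eta_N\bigr)=\sin(\phi+\eta_N/2)\le\sin\phi+\sin(\eta_N/2)\le2\sin\phi=2\sqrt{1-t_{i+1}}.
\]
Your factorization $h_i\le2\eta_N\cos\bigl((i+\tfrac12)\eta_N\bigr)$ then yields $h_i\le4\eta_N\sqrt{1-t_{i+1}}$.
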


\begin{proof}
See \cref{pr-prop:schedule}. 
\end{proof}
\end{rmk}

\cref{thm:main2} and \cref{lem:ub-wlc} immediately yield the following bound. 
\begin{corollary}[Weakly log-concave distributions]\label{coro2}
Assume \ref{ass:score}, $\max_{i\in\indices}\beta_i\leq3/4$, \eqref{eq:sampling}, and $t_0\leq1/2$. 
Suppose that $\target$ is $(\alpha,M)$-weakly log-concave for some $\alpha>0$ and $M\geq0$.
Then,
\bmn{\label{main2-applied}
\wass_2(\ddpm_N,\target)
\leq \sqrt2e^{\lambda}\left\{t_0|\mu-\mean|+(2\vee\lambda)\sqrt{d}t_0^{3/2}\right.\\
\left.+\sqrt{d\log(1/\delta)+\trace(\Sigma)\vee d}\,\eta
+2\int_{t_0}^{t_N}\frac{\eps_\mathrm{score}(s)}{\sqrt s}\inte s\right\}
+\sqrt{\delta^2\trace(\Sigma)+d\delta},
}
where 
\[
\lambda:=\frac{(1-\alpha)\vee0}{\alpha}+\frac{2M}{\alpha^2\wedge1}.
\]
\end{corollary}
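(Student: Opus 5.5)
The plan is to apply \cref{thm:main2} with a constant Lipschitz profile $L(t)\equiv\lambda$, using \cref{lem:ub-wlc} to verify the one-sided condition \eqref{score-lip-one}. A constant choice is deliberate: it is trivially non-decreasing, as \cref{thm:main2} requires. The natural time-dependent profile read off from \cref{lem:ub-wlc} is decreasing in $t$ when $\alpha<1$, so it cannot be used directly.

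\emph{Step 1: verifying \eqref{score-lip-one}.} By \cref{lem:ub-wlc}, for $0<t<1$ and $x\in\mathbb R^d$,
\[
\nabla^2\log f_t(x)\preceq \frac{t}{\alpha+(1-\alpha)t}\bra{1-\alpha+\frac{2M}{\alpha+(1-\alpha)t}}I_d
\preceq \frac{t}{D_t}\bra{(1-\alpha)\vee0+\frac{2M}{D_t}}I_d ,
\]
where $D_t:=\alpha+(1-\alpha)t$. Since $D_t$ is a convex combination of $\alpha$ and $1$, we have $D_t\geq\alpha\wedge1$. This gives two bounds.
\begin{itemize}
\item If $\alpha<1$, then $(1-\alpha)/D_t\leq(1-\alpha)/\alpha$. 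If $\alpha\geq1$, the first term vanishes. In both cases the first term is at most $((1-\alpha)\vee0)/\alpha$.
\item The second term is at most $2M/(\alpha^2\wedge1)$, because $(\alpha\wedge1)^2=\alpha^2\wedge1$.
\end{itemize}
Hence $\nabla^2\log f_t(x)\preceq t\lambda I_d$, so \eqref{score-lip-one} holds with $L\equiv\lambda$.

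\emph{Step 2: substituting into \cref{thm:main2}.} The remaining hypotheses (\ref{ass:score}, $\max_i\beta_i\leq3/4$, \eqref{eq:sampling}, $t_0\leq1/2$) are assumed in the corollary. With $L\equiv\lambda$ the integrals simplify as follows.
\begin{itemize}
\item $\int_{t_0}^{t_N}L(s)\inte s=\lambda(t_N-t_0)\leq\lambda$, so the prefactor is at most $\sqrt2e^{\lambda}$.
\item $\sqrt{\int_0^{t_0}L(s)^2\inte s}=\lambda\sqrt{t_0}$, so $\sqrt dt_0\bigl(2\sqrt{t_0}\vee\lambda\sqrt{t_0}\bigr)=(2\vee\lambda)\sqrt d\,t_0^{3/2}$.
\end{itemize}
All other terms carry over verbatim, which yields \eqref{main2-applied}.

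\emph{Main obstacle.} The only delicate point is the requirement $t_0\vee\delta\leq1/2$ in \cref{thm:main2}, whereas the corollary states only $t_0\leq1/2$. My plan is to read the early-stopping parameter as implicitly restricted to $\delta\leq1/2$, the regime of interest, in which case \cref{thm:main2} applies directly. If $\delta>1/2$ must be covered, I would revisit the place in the proof of \cref{thm:main2} where $\delta\leq1/2$ is used, presumably in controlling the $\log(1/\delta)$ term and the comparison of $\slepian_{1-\delta}\target$ with $\target$. I would then check that the final term $\sqrt{\delta^2\trace(\Sigma)+d\delta}$, now of order at least $\sqrt{\trace(\Sigma)/4+d/2}$, still dominates the extra error. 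I expect this is the only step needing genuine care; everything else is direct substitution.
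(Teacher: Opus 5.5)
Your reduction is the paper's own: there the corollary is obtained by inserting \cref{lem:ub-wlc} into \cref{thm:main2}. Your verification matches that substitution step for step. You observe that $\alpha+(1-\alpha)t\geq\alpha\wedge1$, which yields \eqref{score-lip-one} with the constant (hence non-decreasing) profile $L\equiv\lambda$. You then use $\int_{t_0}^{t_N}L(s)\inte s\leq\lambda$ and $\sqrt{\int_0^{t_0}L(s)^2\inte s}=\lambda\sqrt{t_0}$.

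On the obstacle you flag, your primary reading is the right one, but the fallback cannot work. For $\delta>1/2$ the inequality \eqref{main2-applied} is false, so $\delta\leq1/2$ is a genuinely missing hypothesis; in \cref{thm:main2} it enters only through $t_N^{-1/2}\leq\sqrt2$. For a counterexample, take $\target=N(0,I_d)$, which is $(1,0)$-weakly log-concave so that $\lambda=0$, and use exact scores $\scored_i(y)=-y$. Then $\ddpm_{i+1}=\sqrt{1-\beta_i}\,\ddpm_i+\sqrt{\beta_i}Z_i$. Since $\prod_{i=0}^{N-1}(1-\beta_i)=t_0/t_N$, we get $\ddpm_N\sim N(t_0\mean/\sqrt{t_N},I_d)$, i.e.\ $\wass_2(\ddpm_N,\target)=t_0|\mean|/\sqrt{t_N}$. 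The right-hand side of \eqref{main2-applied} equals $\sqrt2\,t_0|\mean|$ plus terms independent of $\mean$. So when $t_N<1/2$ the inequality fails for large $|\mean|$, and no bookkeeping with the early-stopping term can absorb this.

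There is a separate issue, which your reduction could not detect since it takes \cref{thm:main2} as given. The last term inherited from \cref{lem:es} is too small when $\mu\neq0$. The proof of that lemma bounds $(1-\sqrt{t_N})^2\E[|X_1|^2]$ by $\delta^2\trace(\Sigma)$, whereas $\E[|X_1|^2]=\trace(\Sigma)+|\mu|^2$. For example, take $\target=N(\mu,I_d)$ with exact scores and $\mean=\mu$. One checks inductively that $\ddpm_N\sim N(\sqrt{1-\delta}\,\mu,I_d)$, hence $\wass_2(\ddpm_N,\target)=(1-\sqrt{1-\delta})|\mu|$. This is unbounded in $|\mu|$, while the right-hand side of \eqref{main2-applied} does not depend on $\mu$. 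The fix is to replace $\trace(\Sigma)$ by $\trace(\Sigma)+|\mu|^2$ in the last term of \cref{lem:es}, and hence of \cref{thm:main2}. The remaining error terms involve $\mu$ only through increments or centerings of $v_t$ and through $\mu-\mean$, so they are unaffected. With that correction and $\delta\leq1/2$, your substitution goes through verbatim.
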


The assumption \eqref{score-lip-one} covers the settings considered in \citet{AVD25}. 
Indeed, they assume 
\[
\nabla^2\log \pi_t(x)\preceq\bra{\frac{t\varphi(\sqrt{t^{-1}-1})}{(1-t)^2}-\frac{1}{1-t}}I_d
\quad\text{for all }0<t<1\text{ and }x\in\mathbb R^d, 
\]
where $\varphi:(0,\infty)\to(0,\infty)$ has one of the following forms for some constants $m>0$ and $M,b\geq0$:
\[
\varphi(\sigma)=\frac{\sigma^2}{1+m\sigma^2}+\frac{bM^2\sigma^4}{(1+M\sigma^2)^2}
\quad\text{or}\quad
\varphi(\sigma)=b\wedge\frac{\sigma^2}{(1-M\sigma^2)_+}.
\] 
One can verify that \eqref{score-lip-one} is satisfied with $L(t)\equiv m^{-1}+b(M^2\vee1)$ in the first case and $L(t)\equiv 2(b\vee1)(1+M^2)$ in the second case. 
However, we note that the bounds of \cite{AVD25} exhibit better dependence on $m,M$ and $b$. 
Finally, \citet{St26} assume that $\target$ is a log-H\"older perturbation of a strongly log-concave distribution.\footnote{Note that \citet{St26} refers to this setting as the weakly log-concave case.} 
In this case, \eqref{score-lip-one} is satisfied for some $L$ such that $\int_0^1L(s)\inte s$ is bounded by a dimension-free constant by \cite[Corollary 3]{St26}. 

As announced in \cref{sec:score}, we can replace \ref{ass:score} by its population counterpart if we impose a Lipschitz-type condition on the learned score. 
As an illustration, we give a counterpart of \cref{thm:main2}:
\begin{theorem}\label{thm:main3}
Assume \ref{ass:score-p} and $\max_{i\in\indices}\beta_i\leq3/4$. 
Suppose that there exists a non-decreasing function $L:(0,1)\to[0,\infty)$ such that for all $i\in\indices$, $\scored_i$ is Lipschitz continuous and satisfies
\ben{\label{scored-lip}
-\frac{2t_i}{1-t_i}I_d\preceq I_d+\nabla\scored_i(x)\preceq t_iL(t_i)I_d
\quad\text{a.e. }x\in\mathbb R^d.
}
Suppose also that there exists a constant $\eta>0$ satisfying \eqref{eq:sampling}. 
Further, suppose $t_0\vee\delta\leq1/2$. 
Then,
\ba{
\wass_2(\ddpm_N,\target)
&\leq \sqrt 2e^{\int_{t_0}^{t_N}L(s)\inte s}\left\{t_0\bra{|\mu-\mean|+\sqrt{\trace(\Sigma)+d}}\right.\\
&\left.\quad+\sqrt{d\log(1/\delta)+\trace(\Sigma)\vee d}\,\eta+2\int_{t_0}^{t_N}\frac{\eps_\mathrm{score}(s)}{\sqrt s}\inte s\right\}
+\sqrt{\delta^2\trace(\Sigma)+d\delta}.
}
\end{theorem}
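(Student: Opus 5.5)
Since \ref{ass:score-p} only controls the score error under $\slepian_{t_i}\target$, the approach is a synchronous coupling of the DDPM chain with an exact reference chain. The contraction of the learned-score map then transfers errors from one step to the next. We introduce the time-changed F\"ollmer-type reference sequence $X_i:=$ the exact discretization driven by the true score, $X_{i+1}=\frac{1}{\sqrt{1-\beta_i}}(X_i+\beta_i\nabla\log\pi_{t_i}(X_i))+\sqrt{\beta_i}Z_i$ with $X_0\sim\slepian_{t_0}\target$. Equivalently, after the rescaling $V_i:=\sqrt{t_i}\,\cdot$(scaled variable), we work with the F\"ollmer process $(\sqrt{t}\,\cdot)$-representation of \cref{sec:follmer}, whose marginals at $t_i$ are exactly $\slepian_{t_i}\target$. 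The two chains are coupled with the same $Z_i$, and $\ddpm_0$ is coupled optimally with $X_0$. Writing $\Delta_i:=\ddpm_i-X_i$, the recursion gives
\[
\Delta_{i+1}=\tfrac{1}{\sqrt{1-\beta_i}}\Big(\Delta_i+\beta_i\big(\scored_i(\ddpm_i)-\scored_i(X_i)\big)\Big)+\tfrac{\beta_i}{\sqrt{1-\beta_i}}\big(\scored_i(X_i)-\nabla\log\pi_{t_i}(X_i)\big)+R_i.
\]
Here $R_i$ is the discretization error of the reference chain relative to the true process at $t_{i+1}$. It can be avoided altogether by comparing instead with the true F\"ollmer process and estimating the local error as in the proof of \cref{thm:main2}.

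The key point is that the second term is evaluated at $X_i\sim\slepian_{t_i}\target$, so \ref{ass:score-p} bounds its $L^2$ norm by $\beta_i\eps_\mathrm{score}(t_i)/\sqrt{1-\beta_i}$. After the time change $\beta_i=h_i/t_{i+1}$ and the $\sqrt{t}$ normalization, summing these contributions gives $2\int_{t_0}^{t_N}\eps_\mathrm{score}(s)s^{-1/2}\inte s$ exactly as before. For the first term we use the mean value theorem, which is valid a.e.\ since $\scored_i$ is Lipschitz: $\Delta_i+\beta_i(\scored_i(\ddpm_i)-\scored_i(X_i))=A_i\Delta_i$ with $A_i=\int_0^1(I_d+\beta_i\nabla\scored_i(\cdot))$. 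Then \eqref{scored-lip} gives
\[
(1-\beta_i)I_d-\tfrac{2\beta_it_i}{1-t_i}I_d\preceq A_i\preceq\big(1-\beta_i+\beta_it_iL(t_i)\big)I_d.
\]
Using $\beta_i\le 3/4$ together with the step condition \eqref{eq:sampling}, the lower side stays above $-(1-\beta_i)(1+\text{small})$. Hence $\|A_i\|_{op}/\sqrt{1-\beta_i}\le\sqrt{t_{i+1}/t_i}\,(1+h_iL(t_i)+\dots)$, and the product over $i$ telescopes to $\sqrt{t_N/t_0}\,e^{\int L}$. After the $\sqrt t$ rescaling this produces the factor $e^{\int_{t_0}^{t_N}L}$, with the $\sqrt2$ absorbing $1/\sqrt{1-\delta}$-type constants.

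The remaining pieces are handled as follows. The discretization error of the true process over $[t_i,t_{i+1}]$ is bounded exactly as in the proof of \cref{thm:main2}. There the martingale property of the F\"ollmer drift and the bound \eqref{eq:hess-lb} yield the term $\sqrt{d\log(1/\delta)+\trace(\Sigma)\vee d}\,\eta$, and the early-stopping term $\sqrt{\delta^2\trace(\Sigma)+d\delta}$ comes from $\wass_2(\slepian_{1-\delta}\target,\target)$. The one genuinely new term is the initialization. Previously it used \eqref{score-lip-one} on $[0,t_0]$, which is no longer available. Instead we bound $\wass_2(N(\sqrt{t_0}\mean,I_d),\slepian_{t_0}\target)$ by the trivial coupling $\sqrt{t_0}\xi+\sqrt{1-t_0}Z$ versus $\sqrt{t_0}\mean+Z$. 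In the rescaled variable this is $t_0$ times $\|\xi-\mean\|_2+\sqrt{d}\,O(1)$, which is at most $t_0(|\mu-\mean|+\sqrt{\trace(\Sigma)+d})$.

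The main obstacle is the contraction step. The learned Jacobian is only controlled through \eqref{scored-lip}, whose lower bound $-2t_i/(1-t_i)$ blows up near $t=1$. We must show that $\|A_i\|_{op}$ is still dominated by the upper side, i.e.\ $\beta_i(1+2t_i/(1-t_i))$ stays below roughly $2(1-\beta_i)$. For this we would first try combining \eqref{eq:sampling} (which gives $\beta_i\lesssim\eta\sqrt{1-t_{i+1}}$) with $1-t_i\ge 1-t_{i+1}$, so that $\beta_i t_i/(1-t_i)\lesssim\eta/\sqrt{1-t_{i+1}}$. If that is insufficient, we would adjust using $\beta_i\le 3/4$ together with the explicit relation $1-t_i=1-t_{i+1}+h_i$.
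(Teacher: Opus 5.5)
Your overall plan matches the paper's. You couple the DDPM iterates, via the same Brownian increments, with the true F\"ollmer process, not with the exact-score discrete chain you first introduce, whose marginals are not $\slepian_{t_i}\target$. That way the score error is evaluated on $\slepian_{t_i}\target$, as \ref{ass:score-p} requires. You keep a squared-norm recursion so that the martingale increment $\Delta_i$ is orthogonal to the $\mathcal F_{t_i}$-measurable part, and you close with \cref{lem:recursion}. Your initialization, via an optimal map from $\slepian_{t_0}\target$ (which keeps the initial state $\mathcal F_{t_0}$-measurable) bounded by the trivial coupling, is a valid substitute for the paper's use of \cref{lem:v2}.

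The gap is the step you yourself flag as the main obstacle: it is left open, and the route you propose for it fails. From \eqref{eq:sampling} you get $\beta_it_i/(1-t_i)\lesssim\eta/\sqrt{1-t_{i+1}}$. For the last steps this is of order $\eta/\sqrt\delta$, which is not small when $\delta\ll\eta^2$. The theorem is meant to cover that regime, since it pays only $\sqrt{\log(1/\delta)}$ for small $\delta$. Nor would a lower bound of the form $-(1-\beta_i)(1+\epsilon_i)$ suffice unless $\epsilon_i\leq h_iL(t_i)$. Any other slack accumulates into a factor $\prod_i(1+\epsilon_i)$ that does not appear in the stated bound.

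The missing observation is that no step-size condition is needed at all. Since $1-\beta_i=t_i/t_{i+1}$ and $\beta_i=h_i/t_{i+1}$, the requirement $(1-\beta_i)-2\beta_it_i/(1-t_i)\geq-(1-\beta_i)$ is equivalent to $h_i\leq1-t_i$, i.e.\ to $t_{i+1}\leq1$, which always holds. This is the relation $1-t_i=(1-t_{i+1})+h_i$ from your fallback, but it must be used exactly rather than perturbatively.

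Equivalently, in the F\"ollmer scale the Jacobian of $x\mapsto x+\frac{h_i}{\sqrt{t_i}}\scored_i(x/\sqrt{t_i})$ is $I_d+\frac{h_i}{t_i}(I_d+\nabla\scored_i)$. By \eqref{scored-lip} it lies between $(1-\frac{2h_i}{1-t_i})I_d\succeq-I_d$ and $(1+h_iL(t_i))I_d$; the constant $-2t_i/(1-t_i)$ in \eqref{scored-lip} is calibrated exactly for this. Hence $\|A_i\|_{op}\leq(1-\beta_i)(1+h_iL(t_i))$, and so $\|A_i\|_{op}/\sqrt{1-\beta_i}\leq\sqrt{t_i/t_{i+1}}\,(1+h_iL(t_i))$.

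Note that the ratio is $t_i/t_{i+1}$, not $t_{i+1}/t_i$ as you wrote. With your ratio the telescoped product is $\sqrt{t_N/t_0}$, and the initialization contribution would no longer carry the factor $t_0$. Also, $\beta_i\leq3/4$ plays no role in the contraction. In the paper it is used only to get $t_{i+1}\leq4t_i$ in the Riemann-sum bound for the score term.
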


\begin{proof}
See \cref{pr:main3}.
\end{proof}

The assumption \eqref{scored-lip} is motivated by the corresponding conditions on the population score, namely \eqref{eq:hess-lb} and \eqref{score-lip-one}. 
Note that \cref{thm:main3} does not require any regularity assumption on $\target$ beyond finite variance, although it would be unreasonable to impose \eqref{scored-lip} unless $\target$ satisfies a condition such as \eqref{score-lip-one}. 


Theorems \ref{thm:main1}--\ref{thm:main3} become less useful when $\int_0^1L(s)\inte s=\infty$. 
To the best of the author's knowledge, all existing bounds of order $\sqrt d\eta$ assume conditions implying either \eqref{score-lip-one} or \eqref{scored-lip} with $\int_0^1L(s)\inte s<\infty$. 
For \eqref{score-lip-one}, this condition forces $\target$ to be sub-Gaussian. 
In fact, we have the following stronger result. 

\begin{lemma}\label{prop:lip-trans}
Suppose that there exists a measurable function $L:(0,1)\to[0,\infty)$ satisfying \eqref{score-lip-one} and $\bar L:=\int_0^1L(s)\inte s<\infty$. 
Then, there exists an $e^{\bar L}$-Lipschitz function $T:\mathbb R^d\to\mathbb R^d$ such that $\target$ is the push-forward of $N(0,I_d)$ by $T$. 
In particular, $\int_{\mathbb R^d}e^{a|x|^2}\target(\mathrm{d}x)<\infty$ for any $0<a<2^{-1}e^{-2\bar L}$. 
\end{lemma}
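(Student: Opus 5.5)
The plan is to build $T$ as the time-one map of the deterministic probability-flow ODE for the interpolation $t\mapsto\slepian_t\target$, which runs from $N(0,I_d)$ at $t=0$ to $\target$ at $t=1$. The Lipschitz constant of this flow will be controlled through \eqref{score-lip-one}.

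\emph{Step 1: velocity field.} For $X_t=\sqrt t\xi+\sqrt{1-t}Z$, the natural velocity field is
\[
V_t(y)=\E\bigl[\xi/(2\sqrt t)-Z/(2\sqrt{1-t})\mid X_t=y\bigr].
\]
Tweedie's formulas give
\[
\E[Z\mid X_t=y]=-\sqrt{1-t}\,\nabla\log\pi_t(y),\qquad
\E[\xi\mid X_t=y]=\bigl(y+(1-t)\nabla\log\pi_t(y)\bigr)/\sqrt t .
\]
Substituting these, the field simplifies to
\[
V_t(y)=\frac{y+\nabla\log\pi_t(y)}{2t}=\frac{\nabla\log f_t(y)}{2t}.
\]
By construction, $(\pi_t,V_t)$ solves the continuity equation $\partial_t\pi_t+\diver(\pi_tV_t)=0$ on $(0,1)$. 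This can also be checked directly by differentiating the formula for $\pi_t$.

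\emph{Step 2: the truncated flow.} Fix $0<s<u<1$. By \eqref{score-lip-one} and \eqref{eq:hess-lb}, on $[s,u]$ we have
\[
-\frac{1}{2(1-t)}I_d\preceq\nabla V_t\preceq \frac{L(t)}{2}I_d .
\]
So $V$ is globally Lipschitz in space there, and it is continuous in $t$. Hence the ODE $\dot y=V_t(y)$ has a unique global flow $\Phi_{s,u}$. Standard uniqueness for the continuity equation with Lipschitz fields then gives $(\Phi_{s,u})_\#\slepian_s\target=\slepian_u\target$. Only the upper bound matters for the Lipschitz constant. For two solutions,
\[
\frac{\mathrm d}{\mathrm dt}|y_t-y'_t|^2=2\langle V_t(y_t)-V_t(y'_t),y_t-y'_t\rangle\leq L(t)|y_t-y_t'|^2 .
\]
Therefore $\Phi_{s,u}$ is $e^{\frac12\int_s^uL}\leq e^{\bar L}$-Lipschitz.

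\emph{Step 3: limits $s\downarrow0$, $u\uparrow1$.} This is the main technical obstacle. I would avoid the singularity of $V$ at the endpoints by a compactness argument rather than analysing the ODE there. Set $T_{s,u}:=\Phi_{s,u}$. Then
\[
\wass_2\bigl((T_{s,u})_\#N(0,I_d),\slepian_u\target\bigr)\leq e^{\bar L}\,\wass_2\bigl(N(0,I_d),\slepian_s\target\bigr)\leq e^{\bar L}\bigl(\sqrt s\,\|\xi\|_2+(1-\sqrt{1-s})\sqrt d\bigr)\to0 .
\]
This $W_2$ control also keeps $|T_{s,u}(0)|$ bounded uniformly: by the Lipschitz bound, $|T_{s,u}(0)|\le\|T_{s,u}(Z)\|_2+e^{\bar L}\sqrt d$, and $\|T_{s,u}(Z)\|_2$ is bounded by the second moment of $\slepian_u\target$ plus the $W_2$ error. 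The family is therefore equi-Lipschitz and locally bounded. Arzelà--Ascoli then gives a subsequence $T_{s_n,u_n}\to T$ locally uniformly, with $T$ being $e^{\bar L}$-Lipschitz. Along this subsequence $(T_{s_n,u_n})_\#N(0,I_d)\to T_\#N(0,I_d)$ weakly, by dominated convergence on bounded continuous test functions. Also $\slepian_{u_n}\target\to\target$. Combined with the displayed $W_2$ estimate, this yields $T_\#N(0,I_d)=\target$.

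\emph{Step 4: Gaussian integrability.} Write $\Lambda:=e^{\bar L}$. Then $|T(z)|\leq|T(0)|+\Lambda|z|$, and for any $\epsilon>0$,
\[
|T(z)|^2\leq(1+\epsilon)\Lambda^2|z|^2+C_\epsilon .
\]
Hence
\[
\int e^{a|x|^2}\target(\mathrm dx)=\E e^{a|T(Z)|^2}\leq e^{aC_\epsilon}\,\E e^{a(1+\epsilon)\Lambda^2|Z|^2}<\infty
\]
whenever $a(1+\epsilon)e^{2\bar L}<1/2$. Since $a<2^{-1}e^{-2\bar L}$, such an $\epsilon$ can always be chosen.
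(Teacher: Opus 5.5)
Your proposal is correct and follows essentially the paper's route. The paper time-changes to the Ornstein--Uhlenbeck semigroup via \eqref{eq:ou} and invokes the Lipschitz bound for the heat-flow (Kim--Milman) transport map (Mikulincer--Shenfeld, Lemma 11, or Neeman, Lemma 3.2); that bound rests on exactly the steps you do by hand: the flow of $\dot y=\nabla\log f_t(y)/(2t)$ (the ODE \eqref{eq:ode}), a Gr\"onwall bound on its one-sided Jacobian, and a passage to the endpoints.

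Your Gr\"onwall step in fact gives the sharper constant $e^{\bar L/2}$, which is attained for $\target=N(0,\sigma^2I_d)$ with $\sigma\geq1$ (there $\bar L=\log\sigma^2$ and $T(z)=\sigma z$). This matches the change of variables $\int_0^1L(s)\inte s=2\int_0^\infty e^{-2\tau}L(e^{-2\tau})\inte\tau$. The identity displayed in the paper's proof drops this factor $2$, which only makes the stated constant $e^{\bar L}$ weaker than necessary.
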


\begin{proof}
Noting $\int_0^1L(s)\inte s=\int_0^\infty e^{-2\tau}L(e^{-2\tau})\inte \tau$ and using \eqref{eq:ou}, we infer the desired result from \cite[Lemma 11]{MiSh23} or \cite[Lemma 3.2]{Ne22}.
\end{proof}

Another important consequence of this result is a logarithmic Sobolev inequality for $\target$. 
Recall that a probability distribution $P$ on $\mathbb R^d$ is said to satisfy a \emph{logarithmic Sobolev inequality} $LS(C)$ with constant $C>0$ if for any smooth function $f:\mathbb R^d\to\mathbb R$ with $f\in L^2(P)$,
\[
\int_{\mathbb R^d}f^2\log f^2\inte P-\int_{\mathbb R^d}f^2\inte P\log\bra{\int_{\mathbb R^d}f^2\inte P}\leq2C\int_{\mathbb R^d}|\nabla f|^2\inte P.
\]
We refer to \cite[Chapter 5]{BGL14} for a detailed account of logarithmic Sobolev inequalities. 
Under the assumptions of \cref{prop:lip-trans}, $\target$ satisfies $LS(e^{2\bar L})$ by \cite[Lemma 16]{VeWi23}. 
Moreover, the law of the DDPM sample also satisfies a logarithmic Sobolev inequality if the learned score satisfies the Lipschitz condition \eqref{scored-lip}:
\begin{theorem}\label{lem:LS}
Suppose that there exists a non-decreasing function $L:(0,1)\to[0,\infty)$ such that for all $i\in\indices$, $\scored_i$ is Lipschitz continuous and satisfies \eqref{scored-lip}. 
Then, the law of $\ddpm_N$ satisfies $LS(2e^{2\int_{t_0}^{t_N}L(s)\inte s})$.
\end{theorem}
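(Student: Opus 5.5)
The plan is to propagate a logarithmic Sobolev constant through the recursion \eqref{eq:ddpm} one step at a time. I will use two standard stability facts for logarithmic Sobolev inequalities (see \cite[Chapter 5]{BGL14}).

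\begin{enumerate}[label=(\roman*)]
\item If $P$ satisfies $LS(C)$ and $G$ is $K$-Lipschitz, then the push-forward of $P$ by $G$ satisfies $LS(K^2C)$.
\item If $\xi$ satisfies $LS(C)$ and is independent of $N(0,\beta I_d)$, then $\xi+\sqrt\beta Z$ satisfies $LS(C+\beta)$.
\end{enumerate}

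Write $\ddpm_{i+1}=G_i(\ddpm_i)+\sqrt{\beta_i}Z_i$ with $G_i(x):=(x+\beta_i\scored_i(x))/\sqrt{1-\beta_i}$. Since $Z_i$ is independent of $\ddpm_i$, if $\ddpm_i$ satisfies $LS(C_i)$ then $\ddpm_{i+1}$ satisfies $LS(C_{i+1})$ with $C_{i+1}=K_i^2C_i+\beta_i$, where $K_i$ is the Lipschitz constant of $G_i$. The initial law $N(\sqrt{t_0}\mean,I_d)$ satisfies $LS(1)$, so $C_0=1$.

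Next I bound $K_i$. Using \eqref{eq:beta-time}, $1-\beta_i=t_i/t_{i+1}$ and $\beta_i=h_i/t_{i+1}$. Then
\[
\nabla G_i=\frac{(1-\beta_i)I_d+\beta_i(I_d+\nabla\scored_i)}{\sqrt{1-\beta_i}}.
\]
Since $\scored_i$ is Lipschitz, $G_i$ is Lipschitz and differentiable a.e., so $K_i$ equals the essential supremum of $\|\nabla G_i\|_{op}$. I read \eqref{scored-lip} as requiring $\nabla\scored_i$ to be symmetric, so the operator norm is the largest absolute eigenvalue. The paper's $\preceq$ already builds in symmetry; if one only wanted to control the symmetric part, this step would need a genuinely different argument, and I regard it as the delicate point.

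For the upper eigenvalue, \eqref{scored-lip} gives the bound
\[
\frac{t_i}{t_{i+1}}+\frac{h_i}{t_{i+1}}t_iL(t_i)=\frac{t_i}{t_{i+1}}\bigl(1+h_iL(t_i)\bigr).
\]
For the lower eigenvalue, the bound is
\[
\frac{t_i}{t_{i+1}}\Bigl(1-\frac{2h_i}{1-t_i}\Bigr).
\]
If this is negative, its absolute value is at most $\frac{t_i}{t_{i+1}}$, because $h_i=t_{i+1}-t_i\le 1-t_i$ gives $\frac{2h_i}{1-t_i}-1\le1$. Hence all eigenvalues have absolute value at most $\frac{t_i}{t_{i+1}}(1+h_iL(t_i))$. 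Dividing by $\sqrt{1-\beta_i}$ and using that $L$ is non-decreasing,
\[
K_i^2\le(1-\beta_i)\bigl(1+h_iL(t_i)\bigr)^2\le(1-\beta_i)e^{2h_iL(t_i)}\le(1-\beta_i)e^{2\int_{t_i}^{t_{i+1}}L(s)\inte s}.
\]

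Finally, set $A_i:=e^{2\int_{t_0}^{t_i}L(s)\inte s}\ge1$ and prove $C_i\le2A_i$ by induction. The base case is $C_0=1\le2$. For the inductive step,
\[
C_{i+1}\le(1-\beta_i)\,\frac{A_{i+1}}{A_i}\cdot 2A_i+\beta_i=2(1-\beta_i)A_{i+1}+\beta_i\le2A_{i+1},
\]
using $A_{i+1}\ge1$. At $i=N$ this yields $LS\bigl(2e^{2\int_{t_0}^{t_N}L(s)\inte s}\bigr)$ for the law of $\ddpm_N$, as claimed. Apart from the symmetry issue above, the one place needing care is the negative-eigenvalue case, which is exactly where the constant $2$ in the lower bound of \eqref{scored-lip} and the inequality $h_i\le1-t_i$ are used.
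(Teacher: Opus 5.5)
Your proof is correct and is essentially the paper's argument in the rescaled DDPM coordinates. The paper propagates the constant through the Euler iterates $\euler_{t_i}$ (equal in law to $\sqrt{t_i}\,\ddpm_i$), using the same two stability facts (Lipschitz push-forward and Gaussian convolution) and the same Jacobian sandwich from \eqref{scored-lip} and $h_i\le 1-t_i$; its recursion $c_{i+1}\le(1+h_iL(t_i))^2c_i+h_i$ with $c_0=t_0$ becomes exactly yours under $C_i=c_i/t_i$.

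Your bookkeeping is slightly cleaner than the paper's. Its last step uses $t_N^{-1/2}\le\sqrt2$, i.e.\ $\delta\le 1/2$, which is not among the theorem's hypotheses, whereas your induction never needs it. Running your induction with $C_i\le A_i$ instead of $C_i\le 2A_i$ even gives $LS(e^{2\int_{t_0}^{t_N}L(s)\inte s})$.
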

The proof of this theorem is analogous to \cite[Theorem 8]{VeWi23} and is given in \cref{pr-lem:LS}. 
This result suggests that one can derive an error bound in the 2-Wasserstein distance from a corresponding KL bound. 
Recall that a probability distribution $P$ on $\mathbb R^d$ is said to satisfy a \emph{quadratic transportation cost inequality} $T_2(C)$ with constant $C>0$ if for any probability distribution $Q$ on $\mathbb R^d$,
\[
\wass_2(P,Q)\leq\sqrt{2C\kl(Q\mid P)}.
\]
Combining \cref{lem:LS} with the celebrated Otto--Villani theorem (see \cite[Theorem 9.6.1]{BGL14}), we obtain the following corollary.
\begin{corollary}\label{coro:t2}
Under the assumptions of \cref{lem:LS}, the law of $\ddpm_N$ satisfies $T_2(2e^{2\int_{t_0}^{t_N}L(s)\inte s})$.
\end{corollary}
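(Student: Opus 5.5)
The corollary follows by composing \cref{lem:LS} with the Otto--Villani theorem \cite[Theorem 9.6.1]{BGL14}. The only work is to check that the constants match and that the hypotheses of the Otto--Villani theorem hold for the law of $\ddpm_N$.

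\textbf{Step 1.} By \cref{lem:LS}, the law $P^{\ddpm_N}$ satisfies $LS(C)$ with $C:=2e^{2\int_{t_0}^{t_N}L(s)\inte s}$. The hypotheses of the corollary are exactly those of \cref{lem:LS}, so no further assumptions are needed.

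\textbf{Step 2.} Match normalizations. The paper's convention for $LS(C)$ is
\[
\mathrm{Ent}_P(f^2)\leq 2C\int|\nabla f|^2\inte P.
\]
Up to the name of the constant, this is the normalization used in \cite{BGL14}. In that normalization the Otto--Villani theorem states that $LS(C)$ implies
\[
\wass_2(P,Q)^2\leq 2C\,\kl(Q\mid P)\quad\text{for all }Q,
\]
which is precisely $T_2(C)$ as defined in the paper. I would verify this on the Gaussian case: $N(0,I_d)$ satisfies $LS(1)$ under the paper's convention and also satisfies Talagrand's inequality $\wass_2^2\leq 2\kl$, i.e.\ $T_2(1)$. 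So the constant transfers unchanged, and we get $T_2(C)$ with the same $C$.

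\textbf{Step 3.} Check the technical requirements of the Otto--Villani theorem in \cite{BGL14}. These are that the reference measure be a probability measure on $\mathbb R^d$, possibly with some regularity; the theorem is also known to hold for arbitrary probability measures via the Bobkov--Gentil--Ledoux Hamilton--Jacobi argument. If a smooth positive density were required, it holds here: $\ddpm_N$ equals a Lipschitz image of $\ddpm_{N-1}$ plus an independent nondegenerate Gaussian $\sqrt{\beta_{N-1}}Z_{N-1}$. Hence its law has a smooth, strictly positive density, and all the needed smoothness is available.

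\textbf{Main obstacle.} There is essentially none. The only point of care is Step 2: keeping the factor-of-$2$ conventions consistent between the paper's definitions of $LS(C)$ and $T_2(C)$ and those in \cite{BGL14}. The Gaussian sanity check above confirms they agree.
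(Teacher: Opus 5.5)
Your proposal is correct and matches the paper's argument: the corollary is obtained by applying the Otto--Villani theorem \cite[Theorem 9.6.1]{BGL14} to the inequality $LS(2e^{2\int_{t_0}^{t_N}L(s)\inte s})$ for the law of $\ddpm_N$ given by \cref{lem:LS}. Your check that the paper's $LS(C)$ and $T_2(C)$ use the same normalization as \cite{BGL14}, so the constant carries over unchanged, is the only point that needs verifying; the regularity remarks in your Step 3 are harmless but not needed.
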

Notably, up to a log factor, an error bound of order $\sqrt d\eta$ in the 2-Wasserstein distance can be derived from the recent sharp KL bound of \citet{JZL25} under a one-sided Lipschitz condition on the score function and the variance schedule condition \eqref{eq:sampling2}. 
Specifically, we have the following result.
\begin{proposition}[cf.~\cite{JZL25}, Theorem 1]\label{prop:kl}
Assume that there exists a non-decreasing function $\eps_\mathrm{score}:(0,1)\to[0,\infty)$ satisfying \eqref{score-error-p}. 
Suppose that there exists a constant $L_0\geq1$ such that
\ben{\label{ass:kl}
\nabla^2\log \pi_t(x)\preceq\frac{L_0}{1-t}I_d\quad\text{for all }0<t<1\text{ and }x\in\mathbb R^d.
}
Suppose also that \eqref{eq:sampling2} is satisfied for some constant $0<\eta<1/6$. 
Further, assume $t_0\vee\delta\leq1/2$. 
Then,
\ben{\label{eq:kl-bound}
\kl(\slepian_{t_N}\target\mid P^{\ddpm_N})
\leq \frac{t_0}{2}\bra{|\mu-\mean|^2+\trace(\Sigma)}+2t_0^2d
+CL_0^2d\eta^2\log\frac{1}{t_0\delta}+2\int_{t_0}^{t_N}\frac{\eps_\mathrm{score}(s)^2}{s}\inte s,
}
where $C$ is a universal constant.  
\end{proposition}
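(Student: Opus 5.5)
This is essentially a restatement of Theorem 1 of \cite{JZL25} in our time parametrization, so the plan is to translate rather than redo their analysis. We split the KL divergence into an initialization error and a discretization-plus-score error. The tool is the chain rule for KL divergence along the Markov chain. Let $(\ddpm^*_i)_{i=0}^N$ be the ``ideal'' chain with $\ddpm^*_i\sim\slepian_{t_i}\target$. Under the OU reparametrization, this is the exact reverse process of the forward noising, sampled at the grid $t_0<\dots<t_N$. The data processing inequality and the chain rule then give
\[
\kl(\slepian_{t_N}\target\mid P^{\ddpm_N})\le \kl(\slepian_{t_0}\target\mid N(\sqrt{t_0}\mean,I_d))+\sum_{i\in\indices}\E\,\kl\bigl(P^{\ddpm^*_{i+1}\mid\ddpm^*_i}\mid P^{\ddpm_{i+1}\mid\ddpm_i=\ddpm^*_i}\bigr).
\]

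For the initialization term we use convexity of KL in its first argument. Conditioning on $\xi$ gives
\[
\kl\bigl(N(\sqrt{t_0}\xi,(1-t_0)I_d)\mid N(\sqrt{t_0}\mean,I_d)\bigr)=\tfrac12\bigl(t_0|\xi-\mean|^2-dt_0-d\log(1-t_0)\bigr).
\]
Averaging over $\xi$ and using $-\log(1-t_0)-t_0\le t_0^2$ for $t_0\le1/2$ yields $\frac{t_0}{2}(|\mu-\mean|^2+\trace(\Sigma))+\frac{d}{2}t_0^2$. This is within the claimed $2t_0^2d$ term. A sharper route, if needed, is to condition via the Gaussian convolution structure.

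For the per-step terms we follow \cite{JZL25}. Both conditional laws are compared through the Gaussian kernel of \eqref{eq:ddpm}. The score error contributes, via $(a+b)^2\le2a^2+2b^2$ and the factor $\beta_i^2/(\beta_i(1-\beta_i))$, a term bounded by $\frac{2\beta_i}{1-\beta_i}\|\nabla\log\pi_{t_i}-\scored_i\|_2^2$ under \eqref{score-error-p}. Since $\beta_i/(1-\beta_i)=h_i/t_i$ and $\eps_\mathrm{score}$ is non-decreasing, summing gives at most $2\int_{t_0}^{t_N}\eps_\mathrm{score}(s)^2/s\inte s$. The remaining discretization term is exactly what \cite[Theorem 1]{JZL25} controls. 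Their bound is $C L_0^2 d\sum_i(\text{step})^2$ in their time variable, and this requires translating their step condition and their Lipschitz assumption into \eqref{eq:sampling2} and \eqref{ass:kl}. Under \eqref{eq:sampling2}, the sum over steps of the relative step size squared is at most $\eta\sum_i h_i/\min\{t_i,1-t_{i+1}\}\lesssim \eta\log\frac1{t_0\delta}$, with one further factor $\eta$ coming from the per-step bound. This gives $CL_0^2d\eta^2\log\frac1{t_0\delta}$.

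The main obstacle is the translation step: matching the time reparametrization $t=e^{-2s}$ and the DDPM kernel \eqref{eq:ddpm} with the precise sampler and assumptions of \cite{JZL25}. In particular, we must check that \eqref{ass:kl} implies their Hessian-type condition, and that $\eta<1/6$ keeps their step-size restriction valid. I would first rewrite \eqref{eq:ddpm} in the OU time $s_i=-\frac12\log t_i$, check that it coincides with their exponential-integrator/ancestral step, and then cite their per-step KL estimate directly.
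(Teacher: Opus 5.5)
Your proposal has a genuine gap in the per-step terms, and it comes from the comparison chain you chose. For the exact reverse chain, the kernel $K_i(y,\cdot):=\mcl L(\ddpm^*_{i+1}\mid\ddpm^*_i=y)$ is the posterior for the forward step $\ddpm^*_i=\sqrt{1-\beta_i}\,\ddpm^*_{i+1}+\sqrt{\beta_i}Z$. It is not Gaussian unless $\pi_{t_{i+1}}$ is. By Tweedie's formula its mean is exactly $m^*_i(y)=(y+\beta_i\nabla\log\pi_{t_i}(y))/\sqrt{1-\beta_i}$, which is the DDPM mean with the true score. Hence each chain-rule term splits exactly as
\[
\kl\bigl(K_i(y,\cdot)\mid\mcl L(\ddpm_{i+1}\mid\ddpm_i=y)\bigr)=\frac{\beta_i}{2(1-\beta_i)}\bigl|\scored_i(y)-\nabla\log\pi_{t_i}(y)\bigr|^2+\kl\bigl(K_i(y,\cdot)\mid N(m^*_i(y),\beta_iI_d)\bigr).
\]
The first term is your score error. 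There is no mean-type discretization term at all. The entire discretization error sits in the second, non-Gaussianity term. Your $(a+b)^2\le2a^2+2b^2$ bookkeeping is valid only for two Gaussians with a common covariance, so it silently drops this term, and you give no way to bound it. Citing \cite[Theorem 1]{JZL25} does not fill the hole: their per-step estimates concern a different auxiliary chain, not the exact reverse kernel. So the ``translation step'' is not the real obstacle.

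The missing idea, which the paper takes from \cite{JZL25}, is a comparison chain with the exact marginals \emph{and} Gaussian kernels of covariance $\beta_iI_d$. Let $T_{t,s}$ be the flow of the probability-flow ODE $\frac{\inte}{\inte s}T_{t,s}(y)=\nabla\log f_s(T_{t,s}(y))/(2s)$, which pushes $\slepian_t\target$ to $\slepian_s\target$ (\cref{lem:ode}). Set $\tau_i:=t_{i+1}^2/t_i$, so that $(1-\beta_i)\tau_i=t_{i+1}$, and define $\bar{\ddpm}_0\sim\slepian_{t_0}\target$, $\bar{\ddpm}_{i+1}:=\sqrt{1-\beta_i}\,T_{t_i,\tau_i}(\bar{\ddpm}_i)+\sqrt{\beta_i}Z_i$. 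Then $\bar{\ddpm}_i\sim\slepian_{t_i}\target$, and the per-step KL equals $\frac{1}{2\beta_i}\bigl|\frac{\beta_i}{\sqrt{1-\beta_i}}\scored_i(y)-\sqrt{t_{i+1}}\int_{t_i}^{\tau_i}\frac{\nabla\log\pi_s(T_{t_i,s}(y))}{2s^{3/2}}\inte s\bigr|^2$. The identity $\sqrt{t_{i+1}}\int_{t_i}^{\tau_i}\frac{\inte s}{2s^{3/2}}=\frac{\beta_i}{\sqrt{1-\beta_i}}$ is what makes a split into score error plus discretization error legitimate. This is also where $\eta<1/6$ is used: it gives $\tau_i-t_i\le3h_i$ and $(1-\tau_i)/(1-t_i)>1/2$, so the flow exists up to $\tau_i<1$.

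The remaining work, which your proposal leaves unspecified, is the estimate $\|\nabla\log\pi_s(T_{t_i,s}(\bar{\ddpm}_i))-\nabla\log\pi_{t_i}(\bar{\ddpm}_i)\|_2\lesssim L_0\sqrt d\,h_i/(t_i(1-t_i)^{3/2})$ for $s\in[t_i,\tau_i]$. The paper proves it by differentiating along the flow with the Fokker--Planck equation and Tweedie-type identities. It uses \eqref{ass:kl} together with the automatic lower bound \eqref{eq:hess-lb}, and \cref{lemma:yu} for a conditional third-moment term. Since \eqref{eq:sampling2} gives $h_i/(t_i(1-t_i))\le2\eta$, the sum is $\lesssim L_0^2d\eta^2\sum_ih_i/(t_i(1-t_i))\lesssim L_0^2d\eta^2\log\frac{1}{t_0\delta}$. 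Your initialization bound is fine as it stands.
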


%

In fact, \citet{JZL25} obtained a bound similar to \eqref{eq:kl-bound} under a weaker version of \eqref{ass:kl}. We give a self-contained proof in \cref{pr-prop:kl}. 
Under the assumptions of \cref{thm:main3}, we have by \cref{coro:t2} and \cref{lem:es}
\[
\wass_2(\ddpm_N,\target)
\leq2e^{\int_{t_0}^{t_N}L(s)\inte s}\sqrt{\kl(\slepian_{t_N}\target\mid P^{\ddpm_N})}+\sqrt{\delta^2\trace(\Sigma)+d\delta}.
\]
Hence, if we additionally assume \eqref{eq:sampling2} and \eqref{ass:kl}, we obtain an error bound of order $\sqrt d\eta$ for $\wass_2(\ddpm_N,\target)$ up to a log factor. 
We note that a similar proof strategy also applies under the assumptions of \cref{thm:main2}. 
Indeed, let $(\ddpmy_i)_{i=0}^N$ be defined by \eqref{eq:ddpm} with $\scored_i$ replaced by $\nabla\log\pi_{t_i}$. 
Under the assumptions of \cref{thm:main2}, $\ddpmy_N$ satisfies $T_2(2e^{2\int_{t_0}^{t_N}L(s)\inte s})$ by \cref{coro:t2}. 
Moreover, in this case, we can prove the following bound (see \cref{lem:score-error} and \eqref{eq:p-quad}):
\[
\wass_2(\ddpm_N,\ddpmy_N)\leq2\sqrt2\int_{t_0}^{t_N}\frac{\eps_\mathrm{score}(s)}{\sqrt s}\inte s.
\]
Combining these facts with \cref{lem:es}, we obtain
\ba{
\wass_2(\ddpm_N,\target)
&\leq\wass_2(\ddpm_N,\ddpmy_N)+\wass_2(\ddpmy_N,\slepian_{t_N}\target)+\wass_2(\slepian_{t_N}\target,\target)\\
&\leq2\sqrt2\int_{t_0}^{t_N}\frac{\eps_\mathrm{score}(s)}{\sqrt s}\inte s+2e^{\int_{t_0}^{t_N}L(s)\inte s}\sqrt{\kl(\slepian_{t_N}\target\mid P^{\ddpmy_N})}+\sqrt{\delta^2\trace(\Sigma)+d\delta}.
}
The second term on the last line can be bounded by \cref{prop:kl} if \eqref{eq:sampling2} and \eqref{ass:kl} are additionally imposed. 
Note that $\eqref{ass:kl}$ is typically weaker than \eqref{score-lip-one}.


\begin{rmk}
When $\target$ is supported on a Euclidean ball $B$ of radius $R>0$, one can convert a TV error bound for $\ddpm_N$ into a Wasserstein bound for $\ddpm_N$ projected onto $B$; see \cite[Corollary 6]{chen2023sampling}, \cite[Theorem 2.1]{LLT23} and \cite[Appendix D]{WaWa25}. 
However, as argued in \cite{WaWa25}, it is unclear how to control the projection error in this setting, and thus these results are not directly comparable with Wasserstein bounds for $\ddpm_N$ itself. 
By contrast, our argument based on \cref{coro:t2} applies directly to $\ddpm_N$. 
\end{rmk}

Given these observations, it is natural to ask whether one can obtain a bound of order $\sqrt d\eta$ even when the target distribution does not satisfy a quadratic transportation cost inequality. 
In the next two results, we show that this is possible for log-concave distributions. 
The first result applies when $\mean$ is close to $\mu$.  
\begin{theorem}\label{prop:lc}
Assume \ref{ass:score} and $\max_{i\in\indices}\beta_i\leq3/4$. 
Suppose that $\target$ is log-concave. 
Suppose also that there exists a constant $\eta>0$ such that
\ben{\label{eq:sampling3}
h_i\leq\min\cbra{\eta\sqrt{t_{i+1}},\,\eta\sqrt{1-t_{i+1}}}\text{ for all }i=0,1,\dots,N-1
\quad\text{and}\quad
t_0\geq2^{-8}\eta^2.
}
Further, assume $t_0\vee\delta\leq1/2$. 
Then, there exists a universal constant $C$ such that
\ba{
\wass_2(\ddpm_N,\target)
&\leq C\cbra{|\mu-\mean|+\Lambda \sqrt{dt_0}
+\bra{\Lambda\sqrt{d\log(1/t_0)}+\sqrt{d\log(1/\delta)}}\eta
+\int_{t_0}^{t_N}\frac{\eps_\mathrm{score}(s)}{s^{3/2}}\inte s}\\
&\quad+\sqrt{\delta^2\trace(\Sigma)+d\delta},
}
where $\Lambda:=1\vee\|\Sigma\|_{op}$.
\end{theorem}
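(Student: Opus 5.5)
Throughout, I view the DDPM recursion \eqref{eq:ddpm} as a discretization of the F\"ollmer process on $[t_0,t_N]$, exactly as in the proofs of Theorems \ref{thm:main1}--\ref{thm:main3}. The F\"ollmer drift $v_t(x)=\nabla\log f_t(x)/\sqrt t$ (suitably normalized) is a martingale along the exact process, and its Jacobian is the posterior covariance of $\xi$ given the state at time $t$. Log-concavity of $\target$ replaces the one-sided Hessian bound \eqref{score-lip-one}. The key facts I would use are these:
\begin{itemize}
\item The posterior of $\xi$ given $\sqrt t\xi+\sqrt{1-t}Z=y$ is $(t/(1-t))$-strongly log-concave. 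Its covariance is therefore $\preceq \frac{1-t}{t}I_d$, giving $\nabla^2\log f_t\preceq (1-t)I_d$ roughly, with $L(t)\approx 1/t$ for small $t$. This upper bound is useless near $t_0$, since $\int_{t_0}L=\log(1/t_0)$ would produce a factor $1/t_0$.
\item To avoid that factor, I would not use a Gr\"onwall/contraction argument with $e^{\int L}$. Instead I would compare along the synchronous coupling using the stochastic-localization structure: the expected posterior covariance $\E[\Cov(\xi\mid X_t)]\preceq\Sigma$ is decreasing in $t$. I would also use the fact that for log-concave $\target$ the posterior covariance satisfies the Eldan-type bound $\|\Cov(\xi\mid X_t)\|_{op}\lesssim \min(\Lambda, 1/t)$ in expectation.
\end{itemize}

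The plan has three steps. First, the initialization term. With $X_{t_0}=t_0\xi+\sqrt{t_0(1-t_0)}Z$ in F\"ollmer scaling, I compare $\slepian_{t_0}\target$ with $N(\sqrt{t_0}\mean,I_d)$. I would bound this not in $\wass_2$ propagated by Lipschitz constants, but through the sensitivity of the flow map from time $t_0$ to $t_N$. Since the drift's Jacobian is the posterior covariance, which is $\preceq\Lambda$ times a bounded factor in the averaged sense, the flow from $t_0$ is at worst $O(1/\sqrt{t_0})$-Lipschitz in the worst case. The initialization discrepancy is $O(\sqrt{t_0}|\mu-\mean|+\sqrt{t_0}\cdot \Lambda\sqrt{d t_0})$, which after multiplication yields the terms $|\mu-\mean|+\Lambda\sqrt{dt_0}$. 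This is why the statement has $|\mu-\mean|$ without the $t_0$ prefactor.

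Second, the discretization error. On each step I would run the bias-variance decomposition from the earlier proofs. The bias vanishes by the martingale property, and the variance over $[t_i,t_{i+1}]$ equals $\E|v_{t_{i+1}}-v_{t_i}|^2$. This telescopes via the martingale property to $\E|v_{t_{i+1}}|^2-\E|v_{t_i}|^2$, which is controlled by $h_i$ times $\trace$ of the posterior covariance divided by appropriate powers of $t$ and $1-t$. Using \eqref{eq:sampling3}, $h_i\le\eta\sqrt{t_{i+1}}$ handles small times and gives the $\Lambda\sqrt{d\log(1/t_0)}\,\eta$ term, where the logarithm comes from summing $h_i/t_i$. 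Likewise $h_i\le \eta\sqrt{1-t_{i+1}}$ handles times near $1$ and gives $\sqrt{d\log(1/\delta)}\,\eta$, as in Theorem \ref{thm:main2}. The condition $t_0\ge2^{-8}\eta^2$ guarantees $t_{i+1}\le 2t_i$ roughly, so the consecutive time scales are comparable.

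Third, the score error. I would propagate the per-step error $\beta_i\eps_\mathrm{score}(t_i)$ through the remaining flow, which has Lipschitz constant $\lesssim 1/\sqrt{s}$ relative to time $s$; this produces $\int\eps_\mathrm{score}(s)s^{-3/2}\inte s$. Lemma \ref{lem:es} then handles early stopping and gives the final term $\sqrt{\delta^2\trace(\Sigma)+d\delta}$.

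The main obstacle is the stability step: propagating errors through the discrete DDPM map without an integrable Lipschitz bound. My first attempt would be the synchronous-coupling contraction estimate $\frac{\inte}{\inte t}|X_t-X_t'|\le \|\nabla v_t\|_{op}|X_t-X_t'|$, using $\nabla v_t\preceq$ posterior covariance$/(\dots)$ together with the log-concave bound $\min(\Lambda,1/t)$. This is integrable on $[1/\Lambda,1]$ and gives the factor $\sqrt{t_N/s}$ on $[t_0,1/\Lambda]$, which accounts for the $s^{-3/2}$ and the unweighted $|\mu-\mean|$. The log-concave lower bound on the posterior covariance (it is $\succeq0$) ensures the discrete map does not expand in the opposite direction, and the condition $\beta_i\le3/4$ keeps the discrete steps stable.
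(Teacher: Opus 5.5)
\textbf{There is a genuine gap: the small-time error estimate is missing.} Your skeleton matches the paper's:
\begin{itemize}
\item a synchronous coupling in F\"ollmer time;
\item the martingale property of $v_t$ killing the cross terms;
\item amplification of the error created at time $t_i$ by a factor of order $t_N/t_i$, which comes from the deterministic bound $\nabla^2\log f_t\preceq I_d$ (log-concavity of $\pi_t$);
\item a \cref{thm:main2}-type summation for $t_{i+1}\ge 1/2$, and \cref{lem:es}.
\end{itemize}
That amplification is exactly the $e^{\int L}$ factor with $L(t)=1/t$ that you say you want to avoid. It is harmless, not useless, because the errors injected at time $t_i$ have size $O(t_0)$ and $O(h_i^{3/2})$ with $h_i\le\eta\sqrt{t_{i+1}}$.

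What your proposal never supplies is why those errors are that small. You need $\|\int_0^{t_0}(v_r-\mu)\inte r\|_2\lesssim\Lambda\sqrt d\,t_0^{3/2}$, and $\|\Delta_i\|_2\lesssim\Lambda\sqrt d\,h_i^{3/2}$ whenever $t_{i+1}\le 1/2$. Both amount to $\E[\|\nabla v(s,X_s)\|_F^2]\lesssim\Lambda^2 d$ for $s\le 1/2$. By \eqref{eq:v-deriv} and \cref{lem:frob}, this is the Hilbert--Schmidt bound $\E[\|V_t\|_F^2]\lesssim\Lambda^2 d$ for \emph{all} $t\in(0,1)$. That is \eqref{v-fr-bound}, Guan's bound in the form of Klartag and Lehec, a deep log-concave input.

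Controlling the increments through ``the trace of the posterior covariance'' cannot replace it. The increment $\E[|v_{t_{i+1}}|^2]-\E[|v_{t_i}|^2]=\int_{t_i}^{t_{i+1}}\E[\|\nabla v(s,X_s)\|_F^2]\inte s$ involves $\E[\trace(V_s^2)]$. Trace information plus Brascamp--Lieb only gives $\E[\|V_s\|_F^2]\le\Lambda d/s$. With that, the amplified discretization sum is only bounded by $\Lambda d\eta^2\sum_i h_i/(t_it_{i+1})\le\Lambda d\eta^2/t_0$. This is an error $\sqrt{\Lambda d/t_0}\,\eta$, which is $O(\sqrt{\Lambda d})$ when $t_0\asymp\eta^2$ (e.g.\ the cosine schedule). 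Likewise the amplified initialization error is only $O(|\mu-\mean|+\sqrt{\trace(\Sigma)+d})$, as in the proof of \cref{thm:main3}, rather than $|\mu-\mean|+\Lambda\sqrt{dt_0}$.

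\textbf{The bound you invoke instead is not available.} You rely on $\E[\|\Cov(\xi\mid X_t)\|_{op}]\lesssim\min(\Lambda,1/t)$ for all $t$. In the isotropic case this says the stochastic-localization covariance has $O(1)$ operator norm up to localization time of order one. Through Eldan's reduction, that would essentially settle the KLS conjecture.

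What is known, \eqref{v-op-bound}, holds only for $t\lesssim(\Lambda\log^2(d+1))^{-1}$. You could combine it with $\|V\|_F^2\le d\|V\|_{op}^2$, and with Brascamp--Lieb on the remaining range. That route gives the theorem only up to extra powers of $\log(d+1)$, whereas the statement has a universal constant and allows any $t_0\le1/2$.

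\textbf{Two smaller points.}
\begin{itemize}
\item An in-expectation operator-norm bound could not be inserted into your pathwise contraction inequality anyway. The factor $\sqrt{t_N/s}$ you end up with actually comes from the deterministic bound.
\item The condition $t_0\ge2^{-8}\eta^2$ does not give $t_{i+1}\le 2t_i$; the bound $t_{i+1}\le4t_i$ comes from $\beta_i\le3/4$. The paper uses $t_0\ge2^{-8}\eta^2$ to show $\sum_{j\ge i}h_j/t_j\le128+\log(1/t_i)$.
\end{itemize}
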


\begin{proof}
See \cref{pr-prop:lc}. 
\end{proof}

\begin{rmk}[Variance schedule]
The condition \eqref{eq:sampling3} is still weaker than \eqref{eq:sampling2} as long as $t_0\geq2^{-8}\eta^2$. 
In particular, by \cref{prop:schedule}, it is satisfied with $\eta=O(N^{-1})$ by the cosine schedule if $t_1\leq4t_0$ and $s\geq(2N)^{-1}$. 
\end{rmk}

\begin{rmk}[Score approximation error]
When $\eps_\mathrm{score}(t)=\eps(1-t)^{-1/2}$ for some constant $\eps>0$ as suggested by \cite[Theorem 3.1]{OAS23}, we have
\[
\int_{t_0}^{t_N}\frac{\eps_\mathrm{score}(s)}{s^{3/2}}\inte s\leq\bra{\frac{2\sqrt 2}{\sqrt{t_0}}+4}\eps.
\]
\end{rmk}

\cref{prop:lc} is meaningful only when $|\mu-\mean|$ is small. 
With additional work, we can derive a bound that remains useful even when $|\mu-\mean|$ is not small.

\begin{theorem}\label{thm:lc}
Under the assumptions of \cref{prop:lc}, assume also that $t_0\leq(\Lambda d^2)^{-1}$ and $\sqrt{\Lambda d\log(1/t_0)}\eta\leq1$. 
Then, there exist positive universal constants $C'$ and $c$ such that if $t_0\leq c(\Lambda\log^4(d+1))^{-1}$, then
\ba{
\wass_2(\ddpm_N,\target)
&\leq C'\left(
\sqrt{\Lambda t_0}|\mu-\mean|+\Lambda^{-1/4}t_0^{1/4}|\mu-\mean|^2
+\Lambda^{3/2}\sqrt dt_0\right)\\
&\quad+C'\bra{\Lambda\sqrt{d\log(1/t_0)}+\sqrt{d\log(1/\delta)}}\eta
+4\int_{t_0}^{t_N}\frac{\eps_\mathrm{score}(s)}{s^{3/2}}\inte s
+\sqrt{\delta^2\trace(\Sigma)+d\delta}.
}
\end{theorem}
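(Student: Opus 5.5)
The natural approach is to reduce Theorem~\ref{thm:lc} to Theorem~\ref{prop:lc} by shifting the mean. Running the DDPM with initialization $\mean$ is the same as running it with initialization $\mu$ after a perturbation of the initial law. The loss from this perturbation should be paid only in a weak form, through a stability estimate for the F\"ollmer drift near $t_0$, and not through the crude term $|\mu-\mean|$.

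\emph{Step 1: coupling the two chains.} Let $(\ddpm'_i)$ be the chain \eqref{eq:ddpm} started from $N(\sqrt{t_0}\mu,I_d)$, with the same scores and the same noises $Z_i$. Theorem~\ref{prop:lc} applied to $\ddpm'$ bounds $\wass_2(\ddpm'_N,\target)$ by the last three groups of terms in the claim. For the term $\Lambda\sqrt{dt_0}$ to become $\Lambda^{3/2}\sqrt d t_0$, I will refine the initialization step of that proof: the initialization error $\wass_2(t_0(\xi-\mu)+\sqrt{t_0(1-t_0)}Z,N(0,t_0I_d))$ is $O(\sqrt{d}\,\Lambda t_0^{3/2}/\sqrt{t_0})$ in the rescaled F\"ollmer time. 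This is the $t_0^{3/2}$ phenomenon discussed after Proposition~\ref{prop:lb}, and the condition $t_0\le(\Lambda d^2)^{-1}$ absorbs the lower-order pieces.

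\emph{Step 2: propagating the initial discrepancy.} With the synchronous coupling $\ddpm_0-\ddpm'_0=\sqrt{t_0}(\mean-\mu)$, the difference evolves through the map $x\mapsto(x+\beta_i s_i(x))/\sqrt{1-\beta_i}$. In F\"ollmer coordinates this map is governed by $\nabla^2\log f_t$. For log-concave $\target$ there is no uniform upper bound on $\nabla^2\log f_t$. However, $\nabla^2\log f_t=t\,\mathrm{Cov}(\text{posterior})-t I_d$, and stochastic-localization estimates (Eldan/KLS-type covariance bounds, which explain the $\log^4(d+1)$ threshold) show that the posterior covariance stays $O(\Lambda)$ with high probability along the process for $t\lesssim(\Lambda\log^4 d)^{-1}$. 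Beyond that time, log-concavity gives $\nabla^2\log f_t\preceq t(1-t)^{-1}\cdot O(1)$, with integrable growth.

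\emph{Step 3: linearizing and bounding the error.} The plan is to linearize the difference of drifts. To first order, the drift difference is $t\,\mathrm{Cov}\cdot\sqrt{t_0}(\mean-\mu)$, which after integration yields the term $\sqrt{\Lambda t_0}|\mu-\mean|$. The second-order remainder is controlled by third cumulants of the posterior, which are bounded by $\Lambda^{3/2}$ for log-concave measures. Combined with the localization time scale, this remainder gives the term $\Lambda^{-1/4}t_0^{1/4}|\mu-\mean|^2$. Discretization errors introduced by the coupling are absorbed into the $\eta$ terms using $\sqrt{\Lambda d\log(1/t_0)}\eta\le1$. The score errors appear twice, once for each chain, which explains the constant $4$.

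\emph{Main obstacle.} I expect Step 2 to be the crux. Without a deterministic Lipschitz bound, the synchronous coupling can amplify the perturbation, so I would need high-probability control of $\|\mathrm{Cov}\|_{op}$ along the stochastic-localization path. I would obtain this from Eldan's estimate through a stopping-time argument, and handle the bad event using second-moment bounds.
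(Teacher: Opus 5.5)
There is a genuine gap in the architecture of your reduction.

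\cref{prop:lc} cannot be applied to an auxiliary chain $\ddpm'$ started from $N(\sqrt{t_0}\mu,I_d)$. Assumption \ref{ass:score} controls $\scored_i-\nabla\log\pi_{t_i}$ only in $L^2$ under the law of $\ddpm_i$ itself, so nothing is known about the score error along $\ddpm'_i$. Likewise, a synchronous coupling of two chains that both use $\scored_i$ needs a Lipschitz bound on $x\mapsto x+\beta_i\scored_i(x)$, and \cref{thm:lc} imposes no regularity at all on the learned score. Your remark that this map is ``governed by $\nabla^2\log f_t$'' holds only for the exact score.

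The paper never compares two sampler chains. It compares the true F\"ollmer process $X$ with the exact-score Euler scheme $\eul$ started at $t_0\mean+W_{t_0}$, so the mean mismatch is simply part of the initialization error $X_{t_0}-\eul_{t_0}$. The learned score enters only once, through \cref{lem:score-error} along $\euler$ (whose rescaling $t_i^{-1/2}\euler_{t_i}$ has the law of $\ddpm_i$). There the error is propagated by the exact drift via \eqref{eq:psd} and \eqref{eq:lip-lc}, so the constant $4$ is not a doubling over two chains.

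Even with that repaired, the mechanism behind the improved terms is missing.

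\emph{The loss is amplification, not initialization.} The initialization error $\norm{\int_0^{t_0}(v_r-\mu)\inte r}_2\lesssim\Lambda\sqrt d\,t_0^{3/2}$ is already of sharp order in $d$ and $t_0$. No refinement of it turns $\Lambda\sqrt{dt_0}$ into $\Lambda^{3/2}\sqrt d\,t_0$. What \cref{prop:lc} loses is the amplification factor $t_N/t_0$. This comes from the only uniform bound log-concavity gives, $\nabla^2\log f_t\preceq I_d$, i.e.\ $\nabla v(t,\cdot)\preceq t^{-1}I_d$. With an absolute constant your bound $t(1-t)^{-1}O(1)$ fails: for the standard exponential distribution, $\nabla^2\log f_t(x)\to1$ as $x\to\infty$ for every $t\in(0,1)$.

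\emph{One device gives both improved terms.} Both $\Lambda^{3/2}\sqrt d\,t_0$ and $\sqrt{\Lambda t_0}|\mu-\mean|$ arise this way. On $[t_0,t_k]$ the multiplicative factor is replaced by an additive error. Afterwards one pays $\prod_{j=k}^{N-1}(1+h_j/t_j)=t_N/t_k$. The switch time is $t_k\asymp\sqrt{t_0/\Lambda}$, which depends on $t_0$ and is not the end of the localization window.

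\emph{Localization bounds hold only on the true path.} \cref{V-bound} controls $\|V_t\|_{op}=\|V(t,X_t)\|_{op}$ only at the true point $X_t$. It says nothing at the points your chains visit, nor uniformly in $x$. A stopping-time argument on the chains therefore cannot use it.

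\emph{The missing idea.} Write $R_i$ through $m(t_i,\cdot)$ and Taylor-expand around $X_{t_i}$.
The linear term then involves $\nabla m(t_i,X_{t_i})=\Gamma_{t_i}$, whose operator norm is $O(\Lambda)$ in $L^4$.
The remainder is handled by the uniform bound $t^{-3/2}(1-t)^{-1/2}$ of \cref{lem:v-hessian}, multiplied by $\|X_{t_i}-\eul_{t_i}\|_4^2$. That bound comes from third moments controlled by $\|V\|_{op}^{3/2}$ and then Brascamp--Lieb, not by $\Lambda^{3/2}$.
Controlling $\|X_{t_i}-\eul_{t_i}\|_4^2$ needs the a priori estimate $\|X_{t_i}-\eul_{t_i}\|_4\lesssim t_ib$ of \cref{lem:apriori}, with $b:=|\mu-\mean|+\Lambda\sqrt{dt_0}+\Lambda\sqrt{d\log(1/t_0)}\,\eta$. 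This estimate rests on \cref{lem:lp-o} with $p=4$ and on \cref{bdg}.

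The outcome is the additive error $\Lambda t_kb+\sqrt{t_k}b^2$, which is the source of $\Lambda^{-1/4}t_0^{1/4}|\mu-\mean|^2$. The hypotheses $t_0\le(\Lambda d^2)^{-1}$ and $\sqrt{\Lambda d\log(1/t_0)}\,\eta\le1$ absorb the cross terms of $b^2$. The condition $t_0\le c(\Lambda\log^4(d+1))^{-1}$ is exactly what places $t_k$ inside the window $t\le(2^{15}\Lambda\log^2(d+1))^{-1}$ of \cref{V-bound}.
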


\begin{proof}
See \cref{pr-thm:lc}. 
\end{proof}

\section{Proofs of the main results}\label{sec:proof}

We introduce some additional notation. 
For a function $F:(0,1)\times\mathbb R^d\to\mathbb R^k$ such that the partial derivative $\frac{\partial F}{\partial x_j}(t,x)\in\mathbb R^k$ exists at $(t,x)\in(0,1)\times\mathbb R^d$ for every $j=1,\dots,d$, we write
\[
\nabla F(t,x):=\begin{pmatrix}
\displaystyle\frac{\partial F}{\partial x_1}(t,x) & \dots &\displaystyle\frac{\partial F}{\partial x_d}(t,x)
\end{pmatrix}
\in\mathbb R^k\otimes\mathbb R^d.
\] 
For two real numbers $x$ and $y$, we write $x\lesssim y$ if there exists a universal constant $C>0$ such that $x\leq Cy$. 

\subsection{DDPM sampler as a discretized F\"ollmer process}\label{sec:follmer}

As already mentioned in the introduction, we view the DDPM sampler as a discretization of the F\"ollmer process rather than the conventional reverse OU process. 
Following \cite{ELS20}, we call a continuous stochastic process $X=(X_t)_{t\in[0,1]}$ in $\mathbb R^d$ the \emph{F\"ollmer process} associated with $\target$ if $X_1\sim\target$ and the conditional law of $X$ given $X_1$ is the $d$-dimensional Brownian bridge from 0 to $X_1$ on $[0,1]$. 
It is known that $X$ solves the following SDE:
\ben{\label{eq:sde}
\mathrm{d}X_t=v(t,X_t)\inte t+\mathrm{d}W_t,
}
where 
\ben{\label{eq:v-def}
v(t,x)=\frac{1}{\sqrt t}\nabla\log f_{t}\bra{\frac{x}{\sqrt t}}.
}
In fact, as shown in \cite[Theorem 2.1]{Ko26ddpm}, $X$ is always a weak solution to this SDE without any assumption on $\target$. 
To be precise, we have $\int_0^1|v(t,X_t)|\inte t<\infty$ a.s., and the process
\[
W_t:=X_t-\int_0^tv(s,X_s)\inte s,\quad t\in[0,1],
\]
is a standard $\mathbf F$-Wiener process in $\mathbb R^d$, where $\mathbf F=(\mcl F_t)_{t\in [0,1]}$ is the usual augmentation of the filtration generated by $X$ (see \cite[Definition 2.7.2]{KaSh98} or \cite[p.45]{ReYo99}). 

Let us consider the Euler--Maruyama discretization of the SDE \eqref{eq:sde} over the time grid $t_0<t_1<\cdots<t_{N-1}<t_N$. 
It is given by
\ben{\label{eq:em}
\begin{cases}
\eul_{t_0}:=t_0\mean+W_{t_0},\\
\eul_{t_{i+1}}:=\eul_{t_i}+h_iv(t_i,\eul_{t_i})+W_{t_{i+1}}-W_{t_i}\quad(i=0,1,\dots,N-1).
\end{cases}
}
Here, we use $\mean$ as a proxy for the generally undefined quantity $v(0,0)$. 
Using \eqref{eq:v-def}, we can rewrite the second equation in \eqref{eq:em} as
\[
\eul_{t_{i+1}}=\bra{1+\frac{h_i}{t_i}}\eul_{t_i}+\frac{h_i}{\sqrt{t_i}}\nabla\log\pi_{t_i}\bra{\frac{\eul_{t_i}}{\sqrt{t_i}}}+W_{t_{i+1}}-W_{t_i}.
\]
Replacing the score $\nabla\log\pi_{t_i}$ by its proxy $\scored_i$, we define random vectors $(\euler_{t_i})_{i=0}^N$ recursively by
\be{
\begin{cases}
\euler_{t_0}:=t_0\mean+W_{t_0},\\
\displaystyle\euler_{t_{i+1}}:=\bra{1+\frac{h_i}{t_i}}\euler_{t_i}+\frac{h_i}{\sqrt{t_i}}\scored_{i}\bra{\frac{\euler_{t_i}}{\sqrt{t_i}}}+W_{t_{i+1}}-W_{t_i}\quad(i=0,1,\dots,N-1).
\end{cases}
}
As pointed out in \cite{Ko26ddpm}, the rescaled sequence $(t_i^{-1/2}\euler_{t_i})_{i=0}^N$  satisfies the recursive relation \eqref{eq:ddpm}; hence it has the same law as $(\ddpm_i)_{i=0}^N$.

A main technical advantage of working with the F\"ollmer process is that its drift process
\[
v_t:=v(t,X_t),\quad t\in(0,1),
\]
is a martingale with respect to $\mathbf F$; see \cite[Fact 2.2]{ElLe18} or \cite[Proposition 2.8]{Ko26ddpm}. 
Moreover, we have the following explicit martingale representation formula. 
\begin{lemma}\label{lem:v-rep}
For any $0<t<1$,
\ben{\label{eq:v-rep}
v_t=\mu+\int_0^t\nabla v(s,X_s)\inte W_s.
}
\end{lemma}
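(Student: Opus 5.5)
We prove \eqref{eq:v-rep} by applying It\^o's formula to $v(t,X_t)$ and then letting the lower time endpoint go to $0$. The function $v(t,x)=t^{-1/2}\nabla\log f_t(x/\sqrt t)$ is smooth on $(0,1)\times\mathbb R^d$, because $\pi_t$ is smooth and strictly positive. We will use the explicit Gaussian-integral formula
\[
v(t,x)=\frac{\int y\,e^{x\cdot y-t|y|^2/2}\target(\mathrm dy)}{\int e^{x\cdot y-t|y|^2/2}\target(\mathrm dy)},
\]
which follows from the formula for $\pi_t$ after the change of variables $x\mapsto x/\sqrt t$. From it one checks that $v$ solves the backward equation
\[
\partial_tv+\nabla v\,v+\tfrac12\Delta v=0 .
\]
The cleanest way to see this is to set $g(t,x)=\int e^{x\cdot y-t|y|^2/2}\target(\mathrm dy)$, so that $v=\nabla\log g$ and $\partial_tg+\frac12\Delta g=0$. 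Then $\log g$ satisfies $\partial_t\log g+\frac12\Delta\log g+\frac12|\nabla\log g|^2=0$, and differentiating this in $x$ gives the stated equation for $v$.

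Fix $0<r<t<1$. By It\^o's formula along \eqref{eq:sde}, the drift terms cancel because of the backward equation, and we get
\[
v_t=v_r+\int_r^t\nabla v(s,X_s)\inte W_s .
\]
To pass to $r\downarrow0$ we need two facts. First, the integrand must be square integrable: $\E\int_0^t\|\nabla v(s,X_s)\|_F^2\inte s<\infty$. Second, $v_r\to\mu$ in $L^2$ (or at least in probability). For both we use the stochastic-localization structure. The quantity $\nabla v(s,X_s)$ is the conditional covariance $\Cov(X_1\mid\mcl F_s)$, and $v_s=\E[X_1\mid\mcl F_s]$; the identity $\nabla v(s,x)=\Cov$ of the tilted measure can be read directly off the formula for $g$.

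Given this, the martingale property of $v$ yields
\[
\E|v_t-v_r|^2=\E|v_t|^2-\E|v_r|^2\le \E|X_1|^2<\infty .
\]
By It\^o's isometry this equals $\E\int_r^t\|\nabla v(s,X_s)\|_F^2\inte s$, which is therefore bounded uniformly in $r$. By monotone convergence the stochastic integral over $(0,t]$ is well defined in $L^2$. Moreover, $v_r=\E[X_1\mid\mcl F_r]\to\E[X_1\mid\mcl F_{0+}]$ in $L^2$ by martingale convergence. Since $\mcl F_{0+}$ is trivial (the usual augmentation of the filtration generated by $X$, with $X_0=0$, is right-continuous, so the Blumenthal 0--1 law applies), this limit equals $\E[X_1]=\mu$.

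The main obstacle is the identification $v_r=\E[X_1\mid\mcl F_r]$ together with the triviality of $\mcl F_{0+}$. We will justify the first from the Brownian bridge description: conditionally on $X_1$, the process $X$ is a bridge, so by Bayes the conditional law of $X_1$ given $\mcl F_r$ has density proportional to $e^{X_r\cdot y-r|y|^2/2}$ with respect to $\target$. This gives $\E[X_1\mid\mcl F_r]=v(r,X_r)$. Alternatively, one can cite the martingale property \cite[Proposition 2.8]{Ko26ddpm}, which is already stated in the paper. Letting $r\downarrow0$ in the It\^o identity then gives \eqref{eq:v-rep}.
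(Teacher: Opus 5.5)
Your outline (It\^o's formula on $[r,t]$, cancellation of the $\inte t$-terms, then $r\downarrow0$) is the paper's. However, the facts you use to control the limit $r\downarrow0$ do not hold for the F\"ollmer process. Your explicit formula $v(t,x)=\int y\,e^{x\cdot y-t|y|^2/2}\target(\mathrm dy)/\int e^{x\cdot y-t|y|^2/2}\target(\mathrm dy)$ is the drift of the stochastic localization process $\theta_t=tX_1+B_t$, with $B$ independent of $X_1$. For $X$ the noise is a bridge: given $X_1=y$, one has $X_r\sim N(ry,r(1-r)I_d)$. Bayes therefore produces the tilt $e^{(x\cdot y-r|y|^2/2)/(1-r)}$, i.e.\ $\target_{r,x}$. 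Carrying out your change of variables correctly gives $v=(m-x)/(1-t)$ and $\nabla v=V/(1-t)^2-I_d/(1-t)$, as in \eqref{v-formula} and \eqref{eq:v-deriv}. As a test case, take $\target=\delta_0$: your formula gives $v\equiv0$, so $X$ would be a Brownian motion with $X_1\sim N(0,I_d)$, whereas the true drift is $-x/(1-t)$. Hence $v_r\neq\E[X_1\mid\mcl F_r]$ and $\nabla v(s,X_s)\neq\Cov[X_1\mid\mcl F_s]$. Your bound $\E|v_t-v_r|^2\le\E|X_1|^2$ is also false: for $\target=\delta_0$ the left side equals $d\bigl(\frac{t}{1-t}-\frac{r}{1-r}\bigr)>0=\E|X_1|^2$. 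These are exactly the steps you single out as the main obstacle, and the Bayes computation you offer to justify them is the erroneous step. So both limiting arguments fail as written.

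The repair is short and leads to the paper's argument. The backward equation survives with the correct $g(t,x)=(1-t)^{-d/2}\int e^{(2x\cdot y-t|y|^2-|x|^2)/(2(1-t))}\target(\mathrm dy)$, which still satisfies $\partial_tg+\frac12\Delta g=0$ and $v=\nabla\log g$. The paper instead removes the drift by citing the martingale property of $v_t$ and the fact that a continuous local martingale of finite variation is constant. For integrability you only need $\E|v_t|^2<\infty$ at the fixed $t<1$, which holds since $v_t=(m_t-X_t)/(1-t)$ with $m_t,X_t\in L^2$. Then $\E\int_r^t\|\nabla v(s,X_s)\|_F^2\inte s\le\E|v_t|^2$ uniformly in $r$. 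For $v_r\to\mu$, run your martingale-convergence and $0$--$1$ argument on $m_r=\E[X_1\mid\mcl F_r]$ and use $X_r\to0$; this is \eqref{zero-lim} combined with \eqref{v-formula}. The paper passes to the limit pathwise rather than in $L^2$: $\nabla v(s,X_s)\to\Sigma-I_d$ a.s.\ as $s\downarrow0$, so the integrand is a.s.\ bounded near $0$ and the stochastic integrals converge a.s. Your $L^2$ route, once corrected, works equally well.
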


This result is known in the literature at least when $\target$ has density; see, e.g., \cite[Proposition 11]{ELS20}. 
For completeness, we give a proof below. 
For the proofs of \cref{lem:v-rep} and our main theorems, it is useful to introduce another representation of $v(t,x)$ in terms of conditional moments of $X_1$ given $X_t=x$. 
For $t\in(0,1)$ and $x\in\mathbb R^d$, define a probability distribution $\target_{t,x}$ on $\mathbb R^d$ by
\[
\target_{t,x}(\mathrm{d}y)=\frac{e^{\frac{|y|^2}{2}-\frac{|y-x|^2}{2(1-t)}}}{\int_{\mathbb R^d}e^{\frac{|z|^2}{2}-\frac{|z-x|^2}{2(1-t)}}\target(\mathrm{d}z)}\target(\mathrm{d}y).
\]
It is known that $\target_{t,x}$ is the (regular) conditional law of $X_1$ given $X_t=x$; see \cite[Eq.(21)]{ElLe18} or \cite[Proposition 2.5]{Ko26ddpm} for a proof. 
Also, since the function $y\mapsto|y|^pe^{\frac{|y|^2}{2}-\frac{|y-x|^2}{2(1-t)}}$ is bounded for all $p>0$, $\target_{t,x}$ has all moments. 
We particularly set
\[
m(t,x):=\int_{\mathbb R^d}y\target_{t,x}(\mathrm{d}y)
,\qquad
V(t,x)=\int_{\mathbb R^d}\bra{y-m(t,x)}^{\otimes2}\target_{t,x}(\mathrm{d}y).
\]
A straightforward computation shows
\ben{\label{v-formula}
v(t,x)=\frac{m(t,x)-x}{1-t}\quad\text{for all }0<t<1\text{ and }x\in\mathbb R^d.
}
The following lemma can also be shown by an elementary calculation; see also \cite[Lemma 2.1]{KlPu23}. 
\begin{lemma}\label{sl-deriv}
Let $h:\mathbb R^d\to\mathbb R$ be a measurable function of polynomial growth. 
For $0<t<1$, define a function $Q_t^*h:\mathbb R^d\to\mathbb R$ by  $Q_t^*h(x)=\int_{\mathbb R^d}h(y)\target_{t,x}(\mathrm{d}y)$ for $x\in\mathbb R^d$. 
Then, $Q^*_th$ is differentiable and 
\[
\nabla Q^*_th(x)=\frac{1}{1-t}\int_{\mathbb R^d}h(y)\bra{y-m(t,x)}\target_{t,x}(dy)
\quad\text{for all }x\in\mathbb R^d.
\]
\end{lemma}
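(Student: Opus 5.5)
We differentiate under the integral sign in the explicit formula for $\target_{t,x}$. Write
\[
g(x,y):=\exp\Bigl(\tfrac{|y|^2}{2}-\tfrac{|y-x|^2}{2(1-t)}\Bigr),\qquad Z(x):=\int_{\mathbb R^d} g(x,z)\,\target(\mathrm{d}z),
\]
so that
\[
Q_t^*h(x)=\frac{N(x)}{Z(x)},\qquad N(x):=\int_{\mathbb R^d} h(y)\,g(x,y)\,\target(\mathrm{d}y).
\]
The first step is to justify that $N$ and $Z$ are differentiable in $x$ with derivatives obtained by differentiating inside the integral. Since $\nabla_x g(x,y)=\frac{y-x}{1-t}\,g(x,y)$, it suffices to find, for $x$ in a ball $|x|\le R$, an integrable dominating function for $|h(y)|(1+|y|+|x|)g(x,y)$.

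The key observation is that the exponent
\[
\frac{|y|^2}{2}-\frac{|y-x|^2}{2(1-t)}=-\frac{t}{2(1-t)}|y|^2+\frac{x\cdot y}{1-t}-\frac{|x|^2}{2(1-t)}
\]
is bounded above, uniformly in $|x|\le R$, by $-\frac{t}{2(1-t)}|y|^2+\frac{R|y|}{1-t}$. This bound is a Gaussian-decaying function of $y$. Multiplied by the polynomial growth of $h$ and by $(1+|y|+R)$, it stays bounded in $y$. Hence it is integrable against the probability measure $\target$, which needs no moment assumptions. Dominated convergence, via the mean value theorem applied to difference quotients, then gives
\[
\nabla N(x)=\frac{1}{1-t}\int_{\mathbb R^d} h(y)(y-x)\,g(x,y)\,\target(\mathrm{d}y),\qquad \nabla Z(x)=\frac{1}{1-t}\int_{\mathbb R^d}(y-x)\,g(x,y)\,\target(\mathrm{d}y).
\]
The same domination shows these derivatives are continuous in $x$, so $N$ and $Z$ are $C^1$. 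Moreover $Z>0$.

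Finally we apply the quotient rule:
\[
\nabla Q_t^*h=\frac{\nabla N}{Z}-\frac{N\,\nabla Z}{Z^2}.
\]
Dividing by $Z$ converts the integrals into expectations under $\target_{t,x}$. This gives
\[
\nabla Q_t^*h(x)=\frac{1}{1-t}\Bigl(\int h(y)(y-x)\,\target_{t,x}(\mathrm{d}y)-Q_t^*h(x)\bigl(m(t,x)-x\bigr)\Bigr).
\]
The $x$ terms cancel, because $\int h\,x\,\mathrm{d}\target_{t,x}=Q_t^*h(x)\,x$. What remains is
\[
\nabla Q_t^*h(x)=\frac{1}{1-t}\int h(y)\bigl(y-m(t,x)\bigr)\,\target_{t,x}(\mathrm{d}y),
\]
as claimed.

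The only real obstacle is the domination step. The point to stress there is that $t>0$ gives the strictly negative quadratic coefficient $-\frac{t}{2(1-t)}$ in $y$, so no integrability of $\target$ beyond being a probability measure is needed.
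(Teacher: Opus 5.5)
Your proof is correct and is essentially the argument the paper has in mind. The paper gives no details, calling the lemma ``an elementary calculation'' and pointing to a reference. Your route is that calculation: the strictly negative coefficient $-\frac{t}{2(1-t)}$ of $|y|^2$ in the exponent gives a domination that lets you differentiate the numerator and the normalizer under the integral, the quotient rule then yields the centered formula, and checking continuity of the partial derivatives correctly upgrades existence of partials to differentiability.
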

Applying \cref{sl-deriv} to $h(y)=y$, we obtain the following corollary. 
\begin{corollary}
For all $0<t<1$ and $x\in\mathbb R^d$,
\ben{\label{eq:m-deriv}
\nabla m(t,x)=\frac{V(t,x)}{1-t}
}
and
\ben{\label{eq:v-deriv}
\nabla v(t,x)=\frac{V(t,x)}{(1-t)^2}-\frac{I_d}{1-t}.
}
\end{corollary}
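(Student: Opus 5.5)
The statement to prove is the corollary giving \eqref{eq:m-deriv} and \eqref{eq:v-deriv}. Both follow by direct computation from \cref{sl-deriv} and \eqref{v-formula}.

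For \eqref{eq:m-deriv}, I will apply \cref{sl-deriv} componentwise. Fix $j\in\{1,\dots,d\}$ and take $h(y)=y_j$, which has polynomial growth. By definition, $Q^*_th(x)=m^j(t,x)$, so \cref{sl-deriv} gives
\[
\nabla m^j(t,x)=\frac{1}{1-t}\int_{\mathbb R^d}y_j\bra{y-m(t,x)}\target_{t,x}(\mathrm{d}y).
\]
Since $\int(y-m(t,x))\target_{t,x}(\mathrm{d}y)=0$, I may replace $y_j$ by $y_j-m^j(t,x)$ without changing the integral. The result is the $j$-th row of $V(t,x)$, divided by $1-t$.

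Stacking the rows according to the convention $\nabla F=(\partial F/\partial x_1,\dots,\partial F/\partial x_d)$ gives $\nabla m(t,x)=V(t,x)/(1-t)$. Because $V$ is symmetric, the row/column ordering convention does not matter. The moments needed for these integrals are finite, since $\target_{t,x}$ has all moments, as noted before the lemma.

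For \eqref{eq:v-deriv}, I will differentiate \eqref{v-formula} in $x$ with $t$ fixed. Since $v(t,x)=(m(t,x)-x)/(1-t)$, we get $\nabla v(t,x)=(\nabla m(t,x)-I_d)/(1-t)$. Substituting \eqref{eq:m-deriv} then gives
\[
\nabla v(t,x)=\frac{V(t,x)}{(1-t)^2}-\frac{I_d}{1-t}.
\]

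There is no real obstacle here. The only points needing care are the centering trick in the first step and confirming that $h(y)=y_j$ meets the polynomial-growth hypothesis of \cref{sl-deriv}, which it clearly does.
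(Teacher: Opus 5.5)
Your proposal is correct and follows essentially the same route as the paper, which obtains \eqref{eq:m-deriv} by applying \cref{sl-deriv} to $h(y)=y$ componentwise and then gets \eqref{eq:v-deriv} by differentiating \eqref{v-formula}. Your centering step and your remark that the symmetry of $V$ makes the row/column convention irrelevant are exactly the details the paper leaves implicit.
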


For $0<t<1$, set
\[
m_t:=m(t,X_t)=\E[X_1\mid X_t],\qquad
V_t:=V(t,X_t)=\Cov[X_1\mid X_t],
\]
where the equalities follow from the fact that $\target_{t,x}$ is the conditional law of $X_1$ given $X_t=x$. 
Since $X$ is a Markov process by \cite[Proposition 2.2]{Ko26ddpm}, we have $m_t=\E[X_1\mid\mcl F_t]$ and $V_t=\Cov[X_1\mid \mcl F_t]$. 
Therefore, by \cite[Corollary 2.7.8]{KaSh98} and \cite[Corollary II.2.4]{ReYo99},
\ben{\label{zero-lim}
m_t\to\mu\quad\text{and}\quad V_t\to\Sigma\quad\text{a.s. }~(t\downarrow0).
}

We are now ready to prove \cref{lem:v-rep}.
\begin{proof}[\bf\upshape Proof of \cref{lem:v-rep}]
Fix $\eps\in(0,t)$ arbitrarily. By It\^o's formula, for any $r\in[\eps,t]$,
\ba{
v_r=v_\eps+\int_\eps^{r}\frac{\partial v}{\partial s}(s,X_s)\inte s
+\sum_{i=1}^d\int_\eps^{r}\frac{\partial v}{\partial x_i}(s,X_s)\inte X^i_s
+\frac{1}{2}\sum_{i=1}^d\int_\eps^{r}\frac{\partial^2v}{\partial x_i^2}(s,X_s)\inte s. 
}
Since $(v_r)_{r\in[\eps,t]}$ is a martingale, Proposition IV.1.2 in \cite{ReYo99} yields
\ben{\label{eq:v-ito}
v_t=v_\eps+\sum_{i=1}^d\int_\eps^{t}\frac{\partial v}{\partial x_i}(s,X_s) \inte W^i_s
=v_\eps+\int_\eps^t\nabla v(s,X_s)\inte W_s.
}
Since $\nabla v(s,X_s)\to\Sigma-I_d$ a.s. as $s\downarrow0$ by \eqref{eq:v-deriv} and \eqref{zero-lim}, we have $\sup_{0<s\leq t}|\nabla v(s,X_s)|<\infty$ a.s. 
Therefore, the It\^o integral $\int_0^t\nabla v(s,X_s)\inte W_s$ is well-defined and $\int_\eps^t\nabla v(s,X_s)\inte W_s\to\int_0^t\nabla v(s,X_s)\inte W_s$ a.s. as $\eps\downarrow0$ by continuity. 
Moreover, $v_\eps\to\mu$ a.s. as $\eps\downarrow0$ by \eqref{v-formula} and \eqref{zero-lim}. 
Hence, we obtain the desired result by letting $\eps\downarrow0$ in \eqref{eq:v-ito}. 
\end{proof} 

We conclude this subsection with a useful formula for the second moment of $v_t$.
\begin{lemma}\label{lem:v2}
For every $0<t<1$,
\be{
\E[|v_t|^2]=\frac{d-\E[\trace(\Gamma_t)]}{1-t}
+\trace(\Sigma)-d,
}
where
\be{
\Gamma_t:=\frac{\Cov[X_1\mid X_t]}{1-t}.
}
\end{lemma}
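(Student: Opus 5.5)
Since $v_t$ is a martingale started at $\mu$, the plan is to use the Itô isometry applied to the representation \eqref{eq:v-rep} from \cref{lem:v-rep}, and then identify the resulting time integral by differentiating $\E[\trace(\Gamma_t)]$. I will proceed in four steps.

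\emph{Step 1 (isometry).} By \cref{lem:v-rep} and the Itô isometry (justified since the integrand is locally bounded near $0$ and one can localize; the final identity shows integrability),
\[
\E[|v_t|^2]=|\mu|^2+\int_0^t\E\bigl[\|\nabla v(s,X_s)\|_F^2\bigr]\inte s.
\]
By \eqref{eq:v-deriv}, $\nabla v(s,X_s)=(1-s)^{-1}(\Gamma_s-I_d)$, so
\[
\|\nabla v(s,X_s)\|_F^2=(1-s)^{-2}\bigl(\trace(\Gamma_s^2)-2\trace(\Gamma_s)+d\bigr).
\]

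\emph{Step 2 (dynamics of $\trace(\Gamma_t)$).} The key is an identity for $\frac{\inte}{\inte t}\E[\trace V_t]$. Write $V_t=\E[X_1^{\otimes2}\mid\mcl F_t]-m_t^{\otimes2}$. The first term is a martingale. By \eqref{eq:m-deriv} and the same Itô argument as in \cref{lem:v-rep}, $\inte m_t=\frac{V_t}{1-t}\inte W_t$ (the $m$-martingale has this representation). Hence
\[
\frac{\inte}{\inte t}\E[\trace V_t]=-\E\bigl[\|V_t\|_F^2\bigr]/(1-t)^2=-\E[\trace(\Gamma_t^2)].
\]
This gives
\[
\frac{\inte}{\inte t}\E[\trace\Gamma_t]=\frac{\E[\trace\Gamma_t]}{1-t}-\frac{\E[\trace\Gamma_t^2]}{1-t}.
\]

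\emph{Step 3 (substitution).} Substitute $\E[\trace\Gamma_s^2]=\E[\trace\Gamma_s]-(1-s)\frac{\inte}{\inte s}\E[\trace\Gamma_s]$ into Step 1, and write $g(s)=\E[\trace\Gamma_s]$. The integrand becomes
\[
(1-s)^{-2}\bigl(d-g(s)\bigr)-(1-s)^{-1}g'(s)=\frac{\inte}{\inte s}\Bigl[\frac{d-g(s)}{1-s}\Bigr].
\]
Integrating from $0$ to $t$ and using $g(0+)=\trace\Sigma$ from \eqref{zero-lim} (with uniform integrability, since $V_t$ is bounded in $L^1$ by $\E|X_1|^2$) gives
\[
\E|v_t|^2=|\mu|^2+\frac{d-g(t)}{1-t}-(d-\trace\Sigma).
\]
Finally, note that since $\mu$ enters only through $\E[v_t]$, the term $|\mu|^2$ should be checked against the stated formula. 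The stated identity lacks $|\mu|^2$, which suggests either centering ($\mu=0$ convention) or that I must recheck. I would verify with the Gaussian case $\target=N(\mu,I_d)$: there $v_t=\mu$ and $\Gamma_t=I_d$, so the stated right side is $0$ while the left side is $|\mu|^2$. So I would include $|\mu|^2$, or interpret the statement as being for $v_t-\mu$ (equivalently, the stated identity holds when $\mu=0$).

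\emph{Main obstacle.} The main difficulty is justifying Step 2's differentiation and the boundary limits at $s\downarrow0$, which need integrability of $\|\Gamma_s\|_F^2$ near $0$. I would handle this by working on $[\eps,t]$, where everything is bounded in $L^1$ via conditional moments of $X_1$, and letting $\eps\downarrow0$ with dominated convergence using $\trace V_s\le\E[|X_1|^2\mid\mcl F_s]$.
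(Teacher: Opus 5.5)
Your derivation is correct, and so is your objection: as written, the identity fails whenever $\mu\neq0$. Your check with $\target=N(\mu,I_d)$ is a genuine counterexample. There $m(t,x)=x+(1-t)\mu$, so $v_t\equiv\mu$ and $\Gamma_t=I_d$; the stated right-hand side vanishes while $\E[|v_t|^2]=|\mu|^2$. The correct statement is
\[
\E[|v_t|^2]=\frac{d-\E[\trace(\Gamma_t)]}{1-t}+\E[|X_1|^2]-d,\qquad \E[|X_1|^2]=\trace(\Sigma)+|\mu|^2.
\]
Equivalently, the stated right-hand side equals $\E[|v_t-\mu|^2]=\trace(\Cov[v_t])$.

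The paper's later uses of the lemma are unaffected. \Cref{delta-sum-bound} only needs the increments $\E[|v_{t_{i+1}}|^2]-\E[|v_{t_i}|^2]$, which coincide with those of $\E[|v_t-\mu|^2]$, so the telescoping can be run with the latter and gives the same bound. In the proof of \cref{thm:main3}, the corrected identity gives $\trace(\Cov[v_r])\leq\frac{d}{1-r}+\trace(\Sigma)-d$ directly; the intermediate bound on $\E[|v_r|^2]$ written there is what can fail.

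The paper gives no argument of its own, only citations. Your route (It\^o isometry for \eqref{eq:v-rep} plus the drift $-\trace(\Gamma_t^2)$ of $\trace V_t$) is sound. The points you flag can be closed without the circular "the final identity shows integrability":

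\begin{itemize}
\item $v_t=(m_t-X_t)/(1-t)\in L^2$ because $X_1\in L^2$. Hence $(v_r)_{r\in[0,t]}$ with $v_0:=\mu$ is an $L^2$-bounded martingale, and the isometry holds exactly.
\item Since $m$ is $L^2$-bounded, $|m_t|^2-\int_0^t\trace(\Gamma_s^2)\inte s$ is a true martingale. This gives $\E[\trace V_t]=\trace(\Sigma)-\int_0^t\E[\trace(\Gamma_s^2)]\inte s$.
\item That single formula yields integrability near $0$, absolute continuity of $g$, and $g(0+)=\trace(\Sigma)$ all at once.
\item $L^1$-boundedness alone does not give uniform integrability. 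Your domination $\trace V_t\leq\E[|X_1|^2\mid\mcl F_t]$ does.
\end{itemize}

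A shorter route needs no stochastic calculus. Use $X_t=tX_1+Z_t$ with $Z_t\sim N(0,t(1-t)I_d)$ independent of $X_1$ (as in the proof of \cref{lem:es}). Then $\E[m_t\cdot X_t]=\E[X_1\cdot X_t]=t\E[|X_1|^2]$ and $\E[|m_t|^2]=\E[|X_1|^2]-(1-t)\E[\trace(\Gamma_t)]$, so
\begin{align*}
(1-t)^2\E[|v_t|^2]&=\E[|m_t|^2]-2\E[m_t\cdot X_t]+\E[|X_t|^2]\\
&=(1-t)^2\E[|X_1|^2]-(1-t)\E[\trace(\Gamma_t)]+t(1-t)d.
\end{align*}
This is exactly the corrected identity, and it makes clear why $\E[|X_1|^2]$ rather than $\trace(\Sigma)$ appears. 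What your route gives instead is an interpretation: the identity is the integrated form of
\[
\frac{\inte}{\inte t}\E[\trace(\Gamma_t)]=\frac{\E[\trace(\Gamma_t)]-\E[\trace(\Gamma_t^2)]}{1-t},
\]
tied to the representation \eqref{eq:v-rep} on which the paper's discretization estimates are built.
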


\begin{proof}
See \cite[Lemma 11]{EMZ20} or \cite[Proposition 2.6]{Ko26ddpm}.
\end{proof}

\subsection{Abstract error bounds}

This subsection develops abstract error bounds used to prove our main results. 
We handle the following error sources separately: (i) initialization and discretization, (ii) score approximation, and (iii) early stopping. 
For a later application, we give an error bound for (i) in the $L^p$-norm with general $p\geq2$. 
The proof is based on two lemmas. The first one is an elementary recursive inequality taken from \cite{AVD25}.
\begin{lemma}[\cite{AVD25}, Lemma 11]\label{lem:recursion}
Let $(A_i)_{i=0}^{N-1},(B_i)_{i=0}^{N-1}$ and $(C_i)_{i=0}^{N-1}$ be three sequences of real numbers such that $B_i\geq0$ and $C_i\geq0$ for every $i\in\indices$. 
If $x_i\in[0,\infty]$ $(i=0,1,\dots,N)$ satisfies
\[
x_{i+1}^2\leq(e^{A_i}x_i+B_i)^2+C_i^2\quad\text{for }i=0,1,\dots,N-1,
\]
then, for every $k=0,1,\dots,N-1$,
\[
x_{k+1}\leq e^{\bar A_{k}}x_{0}+\sum_{i=0}^{k}e^{\bar A_{k}-\bar A_i}B_i+\sqrt{\sum_{i=0}^{k}e^{2(\bar A_{k}-\bar A_i)}C_i^2},
\]
where $\bar A_i:=A_0+\cdots+A_i$.
\end{lemma}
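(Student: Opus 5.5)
Since this is Lemma 11 of \cite{AVD25}, I would prove it directly by induction on $k$, after first reducing to a cleaner recursion. Set $y_i:=e^{-\bar A_{i-1}}x_i$ with the convention $\bar A_{-1}:=0$. Multiplying the hypothesis by $e^{-2\bar A_i}$ gives
\[
y_{i+1}^2\leq (y_i+b_i)^2+c_i^2,\qquad b_i:=e^{-\bar A_i}B_i\ge0,\quad c_i:=e^{-\bar A_i}C_i\ge0 .
\]
In these variables the claim reads $y_{k+1}\le y_0+\sum_{i\le k}b_i+\sqrt{\sum_{i\le k}c_i^2}$, and multiplying back by $e^{\bar A_k}$ recovers the stated bound. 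If $x_0=\infty$ the claim is trivial, and if some $x_i=\infty$ with $x_0<\infty$ the induction itself will show this cannot occur. So we may assume all quantities are finite.

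The induction uses the stronger statement
\[
y_{k+1}\le S_k+\sqrt{Q_k},\qquad S_k:=y_0+\sum_{i\le k}b_i,\quad Q_k:=\sum_{i\le k}c_i^2 .
\]
The base case $k=0$ follows from $\sqrt{u^2+v^2}\le u+v$ for $u,v\ge0$. For the inductive step, suppose $y_{k}\le S_{k-1}+\sqrt{Q_{k-1}}$. Since $b_k\ge0$ and the right side is monotone in $y_k\ge0$, we get
\[
y_{k+1}^2\le (S_k+\sqrt{Q_{k-1}})^2+c_k^2 .
\]
It then suffices to show $(S+\sqrt{Q})^2+c^2\le (S+\sqrt{Q+c^2})^2$ for $S\ge0$. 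Expanding, this reduces to $2S\sqrt Q\le 2S\sqrt{Q+c^2}$, which holds.

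There is no real obstacle here. The only point needing care is the choice of induction hypothesis: it must keep the square root over the accumulated $c$'s rather than adding them linearly, and it must use that $S_k\ge0$, which holds because $y_0\ge0$ and $b_i\ge0$.
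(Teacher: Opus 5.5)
Your argument is correct and complete. The paper gives no proof of its own for this lemma; it only imports Lemma~11 of \cite{AVD25}, so there is no route to compare against. Your proof is a valid self-contained one. It rescales by $y_i=e^{-\bar A_{i-1}}x_i$, which turns the hypothesis into $y_{i+1}^2\le(y_i+b_i)^2+c_i^2$. It then inducts, using that $y\mapsto(y+b_k)^2+c_k^2$ is monotone on $[0,\infty)$ and that $(S+\sqrt{Q})^2+c^2\le(S+\sqrt{Q+c^2})^2$ for $S\ge0$. Your handling of the case $x_0=\infty$, and of finiteness carrying forward otherwise, is also correct.

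One small remark: what you call the ``stronger statement'' is exactly the claim rewritten in the $y_i$ variables, and that is all the induction needs.
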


The second lemma is an inequality for squared $L^p$-norms. 
\begin{lemma}\label{lem:lp-o}
Let $X$ and $Y$ be two random vectors in $\mathbb R^d$ such that $\E[|Y|]<\infty$. 
If $\E[Y\mid X]=0$, then for any $p\geq2$,
\ben{\label{eq:lp-o}
\|X+Y\|_p^2\leq\|X\|_p^2+(p-1)\|Y\|_p^2.
}
\end{lemma}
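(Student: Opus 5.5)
The plan is to prove \eqref{eq:lp-o} by a second-order Taylor argument along the segment from $X$ to $X+Y$. This is the standard proof of $2$-uniform smoothness of $L^p$ with constant $p-1$. The hypothesis $\E[Y\mid X]=0$ is used exactly once, to kill the first-order term.

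\textbf{Preliminary reductions.} If $\|Y\|_p=\infty$ or $\|X\|_p=\infty$, the right-hand side is infinite and there is nothing to prove. So we assume $X,Y\in L^p$. Hence $X+sY\in L^p$ for all $s\in[0,1]$, and by H\"older's inequality $|X|^{p-1}|Y|$ and $|X+sY|^{p-2}|Y|^2$ are integrable. These bounds justify differentiation under the expectation by dominated convergence. To avoid trouble where $X+sY=0$ (when $p<4$ the function $|z|^p$ is not $C^2$ at the origin), we regularize. Fix $\epsilon>0$, set $\rho_\epsilon(z):=(|z|^2+\epsilon)^{p/2}$, and define
\[
g(s):=\E[\rho_\epsilon(X+sY)],\qquad \varphi(s):=g(s)^{2/p},\qquad s\in[0,1].
\]
Then $g>0$ and $g$ is $C^2$. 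At the end we let $\epsilon\downarrow0$, using $\rho_\epsilon(z)\to|z|^p$ together with dominated convergence.

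\textbf{First derivative.} With $Z_s:=X+sY$ we have $g'(s)=p\,\E[(|Z_s|^2+\epsilon)^{p/2-1}Z_s\cdot Y]$. At $s=0$, conditioning on $X$ gives
\[
g'(0)=p\,\E\bigl[(|X|^2+\epsilon)^{p/2-1}X\cdot\E[Y\mid X]\bigr]=0 .
\]
The conditioning is legitimate because the weight is integrable against $|Y|$. Hence $\varphi'(0)=\tfrac{2}{p}g(0)^{2/p-1}g'(0)=0$.

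\textbf{Second derivative.} A direct computation gives
\[
\varphi''=\tfrac2p\bigl(\tfrac2p-1\bigr)g^{2/p-2}(g')^2+\tfrac2p g^{2/p-1}g''.
\]
The first term is $\le0$ because $p\ge2$. Next,
\[
g''(s)=p\,\E\bigl[(|Z_s|^2+\epsilon)^{p/2-1}|Y|^2+(p-2)(|Z_s|^2+\epsilon)^{p/2-2}(Z_s\cdot Y)^2\bigr]\le p(p-1)\,\E\bigl[(|Z_s|^2+\epsilon)^{p/2-1}|Y|^2\bigr],
\]
using $(Z_s\cdot Y)^2\le(|Z_s|^2+\epsilon)|Y|^2$. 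H\"older's inequality with exponents $p/(p-2)$ and $p/2$ bounds the last expectation by $g(s)^{1-2/p}\|Y\|_p^2$. Therefore $\varphi''(s)\le2(p-1)\|Y\|_p^2$ for all $s\in[0,1]$.

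\textbf{Conclusion and main obstacle.} Taylor's formula then gives $\varphi(1)\le\varphi(0)+(p-1)\|Y\|_p^2$, that is,
\[
\bigl(\E[\rho_\epsilon(X+Y)]\bigr)^{2/p}\le\bigl(\E[\rho_\epsilon(X)]\bigr)^{2/p}+(p-1)\|Y\|_p^2 .
\]
Letting $\epsilon\downarrow0$ yields \eqref{eq:lp-o}. The only delicate point is the analytic justification: interchanging derivatives and expectations, and handling the non-smoothness of $|z|^p$ at $0$. The regularization by $\epsilon$ and the H\"older-based domination above are designed to handle exactly this. The algebraic core, namely $\varphi'(0)=0$ together with $\varphi''\le2(p-1)\|Y\|_p^2$, is routine.
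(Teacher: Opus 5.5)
Your proof is correct, and it takes a different route from the paper's.

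\textbf{The paper's route.} The paper never differentiates. It imports the two-point $2$-smoothness inequality $\|X+Y\|_p^2+\|X-Y\|_p^2\le 2\|X\|_p^2+2(p-1)\|Y\|_p^2$ from van Neerven--Veraar and Ball--Carlen--Lieb. It then converts this into the one-sided conditional statement by a discrete argument. Set $Z=Y/n$ and $D_k=\|X+kZ\|_p^2-\|X+(k-1)Z\|_p^2-2k(p-1)\|Z\|_p^2$. The two-point inequality gives $D_k\le D_{k-1}$. Jensen's inequality with $\E[Z\mid X]=0$, in the form $\|X\|_p=\|\E[X-Z\mid X]\|_p\le\|X-Z\|_p$, gives $D_1\le 0$. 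Telescoping yields the bound with an extra factor $1+1/n$, which disappears as $n\to\infty$.

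\textbf{Comparison.} Your argument is the continuous version of this scheme. The bound $\varphi''\le 2(p-1)\|Y\|_p^2$ plays the role of $D_k\le D_{k-1}$, and $\varphi'(0)=0$ replaces $D_1\le 0$. Your second-derivative bound holds for every $s$ without using the hypothesis. Applied on $[-1,1]$, it reproves the two-point inequality itself, so your proof needs no citation. The price is analytic overhead and reliance on the explicit pointwise Hessian of $z\mapsto(|z|^2+\epsilon)^{p/2}$. The overhead is the regularization plus an $s$-uniform domination to differentiate under $\E$. Pointwise-in-$s$ integrability is not quite enough there; use, e.g., $p(|X|+|Y|+\sqrt\epsilon)^{p-1}|Y|$ and $p(p-1)(|X|+|Y|+\sqrt\epsilon)^{p-2}|Y|^2$. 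The paper's route needs no differentiability. It transfers verbatim to any normed setting satisfying the two-point inequality, which is why it parallels the Ricard--Xu noncommutative result.

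\textbf{One small correction.} The map $z\mapsto|z|^p$ is in fact $C^2$ on $\mathbb R^d$ for every $p\ge2$. Its Hessian $p|z|^{p-2}\bigl(I_d+(p-2)|z|^{-2}zz^{\top}\bigr)$ tends to $0$ at the origin when $p>2$. The real role of your $\epsilon$-regularization is to keep $g>0$, so that $g^{2/p}$ is differentiable. This does not affect the validity of your argument.
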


This inequality is a special case of \cite[Theorem 1]{RiXu16}, where the result is stated in terms of non-commutative probability. 
To avoid invoking terminology from non-commutative probability, we give a direct proof via the 2-smoothness of the $L^p$-space in \cref{sec:lp-o}. 
We note that only the case $p=2$ is necessary except in the proof of \cref{thm:lc}, and it is easy to see that \eqref{eq:lp-o} holds with equality when $p=2$. 

\begin{lemma}[Initialization and discretization errors]\label{lem:euler}
Suppose that there exists a function $L:(0,1)\to[0,\infty)$ satisfying \eqref{score-lip-one}. 
Then, for any $p\geq2$ and $k\in\{1,\dots,N\}$, 
\be{
\|X_{t_k}-\eul_{t_k}\|_p\leq e^{\lambda_{0,k}}\|X_{t_0}-\eul_{t_0}\|_p
+\sqrt{(p-1)\sum_{i=0}^{k-1}e^{2\lambda_{i+1,k}}\|\Delta_i\|_p^2},
}
where 
\[
\lambda_{i,k}:=\sum_{j=i}^{k-1}h_jL(t_j),\qquad
\Delta_i:=\int_{t_i}^{t_{i+1}}(v_r-v_{t_i})\inte r.
\] 
\end{lemma}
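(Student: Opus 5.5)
We compare the exact Föllmer process with its Euler scheme on the grid. Write the true process in integrated form,
\[
X_{t_{i+1}}=X_{t_i}+\int_{t_i}^{t_{i+1}}v_r\inte r+W_{t_{i+1}}-W_{t_i}=X_{t_i}+h_iv(t_i,X_{t_i})+\Delta_i+W_{t_{i+1}}-W_{t_i},
\]
and subtract the recursion \eqref{eq:em}. Since both are driven by the same Wiener increments, the noise cancels. Setting $e_i:=X_{t_i}-\eul_{t_i}$, we get
\[
e_{i+1}=e_i+h_i\bra{v(t_i,X_{t_i})-v(t_i,\eul_{t_i})}+\Delta_i=:G_i+\Delta_i .
\]

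The first step is a deterministic Lipschitz bound on the map $x\mapsto x+h_iv(t_i,x)$. By \eqref{eq:v-def}, $\nabla v(t,x)=t^{-1}\nabla^2\log f_t(x/\sqrt t)$. Hence \eqref{score-lip-one} gives $\nabla v\preceq L(t)I_d$, and \eqref{eq:v-deriv} gives $\nabla v\succeq -(1-t)^{-1}I_d$. Therefore the symmetric matrix $I_d+h_i\nabla v(t_i,x)$ has eigenvalues in $[1-h_i/(1-t_i),\,1+h_iL(t_i)]$. Since $h_i\le 1-t_i$, i.e.\ $t_{i+1}\le 1$, the lower end is $\ge0$. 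By the mean value theorem along the segment (with a symmetric Jacobian, so operator norm equals spectral radius),
\[
|G_i|\le (1+h_iL(t_i))|e_i|\le e^{h_iL(t_i)}|e_i| .
\]
This holds pointwise, so $\|G_i\|_p\le e^{h_iL(t_i)}\|e_i\|_p$.

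The second step is orthogonality. $G_i$ is $\mcl F_{t_i}$-measurable (the Euler scheme is adapted to the Brownian filtration generated by $W$, which sits inside $\mathbf F$), and $\E[\Delta_i\mid\mcl F_{t_i}]=\int_{t_i}^{t_{i+1}}(\E[v_r\mid\mcl F_{t_i}]-v_{t_i})\inte r=0$ by the martingale property of $v$. This needs integrability of $v_r$, which follows from \cref{lem:v2}, together with Fubini for conditional expectations. Conditioning on $G_i$ then gives $\E[\Delta_i\mid G_i]=0$, so \cref{lem:lp-o} yields
\[
\|e_{i+1}\|_p^2\le e^{2h_iL(t_i)}\|e_i\|_p^2+(p-1)\|\Delta_i\|_p^2 .
\]

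Finally, we apply \cref{lem:recursion} with $x_i=\|e_i\|_p$, $A_i=h_iL(t_i)$, $B_i=0$ and $C_i=\sqrt{p-1}\|\Delta_i\|_p$. Then $\bar A_{k-1}=\lambda_{0,k}$ and $\bar A_{k-1}-\bar A_i=\lambda_{i+1,k}$, which is exactly the claim. If $\|e_0\|_p$ or $\|\Delta_i\|_p$ is infinite, the bound is trivial.

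The main obstacle is the positivity of the lower eigenvalue bound, which lets us avoid a two-sided Lipschitz constant; it rests on $h_i\le1-t_i$, which holds automatically. A secondary point is justifying the conditional-expectation interchange for $\Delta_i$. Smoothness of $v$ in $x$ for fixed $t<1$ is clear from the integral formula for $\pi_t$.
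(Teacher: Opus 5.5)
Your proposal is correct and follows the paper's proof essentially step by step: the same decomposition of $X_{t_{i+1}}-\eul_{t_{i+1}}$ into an $\mcl F_{t_i}$-measurable part plus $\Delta_i$, the same spectral bound on $I_d+h_i\nabla v(t_i,x)$ from \eqref{eq:v-deriv}, \eqref{score-lip-one} and $h_i\le 1-t_i$, martingale orthogonality fed into \cref{lem:lp-o}, and \cref{lem:recursion} with $B_i=0$. The only cosmetic difference is that you use $1+h_iL(t_i)\le e^{h_iL(t_i)}$ before the recursion and take $A_i=h_iL(t_i)$. The paper instead takes $A_i=\log(1+h_iL(t_i))$ and applies $\log x\le x-1$ afterwards.
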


\begin{proof}
For every $i\in\indices$, 
\besn{\label{eq:basic-decomp}
X_{t_{i+1}}-\eul_{t_{i+1}}
&=(X_{t_{i}}-\eul_{t_{i}})+\int_{t_i}^{t_{i+1}}\bra{v(r,X_r)-v(t_i,\eul_{t_i})}\inte r\\
&=(X_{t_{i}}-\eul_{t_{i}})+h_i\bra{v(t_i,X_{t_i})-v(t_i,\eul_{t_i})}+\Delta_i\\
&=:R_i+\Delta_i.
}
Since $(v_r)_{r\in(0,1)}$ is an $\mathbf F$-martingale, we have $\E[\Delta_i\mid\mcl F_{t_i}]=0$. 
Moreover, $R_i$ is $\mcl F_{t_i}$-measurable. 
Hence, we have by \cref{lem:lp-o}
\ben{\label{basic-est}
\|X_{t_{i+1}}-\eul_{t_{i+1}}\|_p^2
\leq\|R_i\|_p^2+(p-1)\|\Delta_i\|_p^2.
}
By \eqref{eq:v-deriv} and $h_i\leq1-t_i$, for any $x\in\mathbb R^d$,
\ben{\label{eq:psd}
I_d+h_i\nabla v(t_i,x)=\bra{1-\frac{h_i}{1-t_i}}I_d+h_i\frac{V(t_i,x)}{(1-t_i)^2}\succeq 0.
}
Combining this with \eqref{score-lip-one} gives
\ben{\label{eq:lip-ex}
\|I_d+h_i\nabla v(t_i,x)\|_{op}\leq1+h_iL(t_i).
}
Hence,
\[
|R_i|\leq\bra{1+h_iL(t_i)}|X_{t_i}-\eul_{t_i}|.
\]
Thus, with $x_i:=\|X_{t_i}-\eul_{t_i}\|_p$, we have
\ben{\label{main-rec}
x_{i+1}^2\leq \bra{1+h_iL(t_i)}^2x_i^2+(p-1)\|\Delta_i\|_p^2.
}
Therefore, applying \cref{lem:recursion} with $A_i:=\log(1+h_iL(t_i)),B_i:=0$ and $C_i:=\sqrt{p-1}\|\Delta_i\|_p$, we obtain
\ben{\label{rec-result}
x_k\leq e^{\bar A_{k-1}}x_{0}+\sqrt{(p-1)\sum_{i=0}^{k-1}e^{2(\bar A_{k-1}-\bar A_i)}\|\Delta_i\|_p^2},
}
where $\bar A_i:=A_0+\cdots+A_i$. 
Since $\log x\leq x-1$ for any $x>0$, we have $\bar A_{k-1}\leq\lambda_{0,k}$ and $\bar A_{k-1}-\bar A_i\leq\lambda_{i+1,k}$ for every $i=0,1,\dots,k-1$. Hence, the desired result follows from \eqref{rec-result}.   
\end{proof}

Next we develop error bounds for score approximation and early stopping in the $L^2$-norm.  
\begin{lemma}[Score approximation error]\label{lem:score-error}
Suppose that there exists a function $L:(0,1)\to[0,\infty)$ satisfying \eqref{score-lip-one}. 
Then, under \ref{ass:score},
\[
\norm{\eul_{t_{N}}-\euler_{t_{N}}}_2\leq \sum_{i=0}^{N-1}e^{\lambda_{i+1,N}}\eps_\mathrm{score}(t_i)\frac{h_i}{\sqrt{t_i}},
\]
where $\lambda_{i,N}:=\sum_{j=i}^{N-1}h_jL(t_j)$. 
\end{lemma}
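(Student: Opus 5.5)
We compare the two recursions driven by the same Brownian increments. Both $\eul$ and $\euler$ start from $t_0\mean+W_{t_0}$, so the initial difference vanishes. By \eqref{v-formula} and \eqref{eq:v-def}, the step of $\eul$ can be written as $\eul_{t_{i+1}}=\eul_{t_i}+h_iv(t_i,\eul_{t_i})+(W_{t_{i+1}}-W_{t_i})$. The step of $\euler$ is the same map with $\nabla\log\pi_{t_i}$ replaced by $\scored_i$. Subtracting, the noise cancels and
\[
\eul_{t_{i+1}}-\euler_{t_{i+1}}=\bigl(\eul_{t_i}-\euler_{t_i}\bigr)+h_i\bigl(v(t_i,\eul_{t_i})-v(t_i,\euler_{t_i})\bigr)+\frac{h_i}{\sqrt{t_i}}\Bigl(\nabla\log\pi_{t_i}-\scored_i\Bigr)\Bigl(\frac{\euler_{t_i}}{\sqrt{t_i}}\Bigr).
\]
This uses $v(t,x)=x/t+t^{-1/2}\nabla\log\pi_t(x/\sqrt t)$. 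That identity follows from $f_t=\pi_t/\phi_d$, so $\nabla\log f_t(y)=\nabla\log\pi_t(y)+y$, together with \eqref{eq:v-def}.

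For the first two terms we use the mean value theorem along the segment between $\euler_{t_i}$ and $\eul_{t_i}$. The difference equals $\int_0^1(I_d+h_i\nabla v(t_i,\cdot))\,d\theta$ applied to $\eul_{t_i}-\euler_{t_i}$. By \eqref{eq:psd} and \eqref{score-lip-one}, exactly as in \eqref{eq:lip-ex}, its operator norm is at most $1+h_iL(t_i)$; the estimate holds pointwise, so integrating over the segment preserves it. Hence
\[
|\eul_{t_{i+1}}-\euler_{t_{i+1}}|\le(1+h_iL(t_i))|\eul_{t_i}-\euler_{t_i}|+\frac{h_i}{\sqrt{t_i}}\Bigl|\bigl(\nabla\log\pi_{t_i}-\scored_i\bigr)\bigl(t_i^{-1/2}\euler_{t_i}\bigr)\Bigr|.
\]

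Next we take $L^2$ norms and use Minkowski's inequality. Since $t_i^{-1/2}\euler_{t_i}$ has the same law as $\ddpm_i$, assumption \ref{ass:score} bounds the $L^2$ norm of the last term by $\eps_\mathrm{score}(t_i)h_i/\sqrt{t_i}$. With $x_i:=\|\eul_{t_i}-\euler_{t_i}\|_2$ this gives $x_{i+1}\le(1+h_iL(t_i))x_i+B_i$, where $x_0=0$ and $B_i:=\eps_\mathrm{score}(t_i)h_i/\sqrt{t_i}$.

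We then apply \cref{lem:recursion} with $A_i=\log(1+h_iL(t_i))$ and $C_i=0$; alternatively, unrolling the linear recursion directly gives the same result. Finally $\log(1+x)\le x$ gives $\bar A_{N-1}-\bar A_i\le\lambda_{i+1,N}$, which yields the stated sum. The only delicate point is the mean-value step. It requires $v(t_i,\cdot)$ to be $C^1$, which \eqref{eq:v-deriv} provides. It also requires the lower bound $I_d+h_i\nabla v\succeq0$, which needs $h_i\le1-t_i$; this holds because $t_{i+1}\le1$.
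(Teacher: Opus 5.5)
Your proposal is correct and follows essentially the same route as the paper. Both use the same two-term decomposition: the difference term is contracted via \eqref{eq:psd}, \eqref{score-lip-one} and \eqref{eq:lip-ex}, the score-error term is bounded via \ref{ass:score} and $t_i^{-1/2}\euler_{t_i}\overset{d}{=}\ddpm_i$, and the conclusion comes from \cref{lem:recursion} with $C_i=0$ and $x_0=0$. You additionally make explicit the mean-value step the paper leaves implicit, and your exponent bound $\bar A_{N-1}-\bar A_i\le\lambda_{i+1,N}$ is the right one (the paper's text writes $\lambda_{i,N}$ at that point, a harmless slip).
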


\begin{proof}
For every $i\in\indices$, we have
\ba{
&\eul_{t_{i+1}}-\euler_{t_{i+1}}\\
&=\cbra{(\eul_{t_i}-\euler_{t_i})+h_i\bra{v(t_i,\eul_{t_i})-v(t_i,\euler_{t_i})}}+\frac{h_i}{\sqrt{t_i}}\bra{\nabla\log\pi_{t_i}\bra{\frac{\euler_{t_i}}{\sqrt{t_i}}}-\scored_i\bra{\frac{\euler_{t_i}}{\sqrt{t_i}}}}\\
&=:\mathbb{I}_i+\mathbb{II}_i.
}
By \eqref{eq:lip-ex},
\[
|\mathbb I_i|\leq\bra{1+h_iL(t_i)}|\eul_{t_i}-\euler_{t_i}|.
\]
Also, by $\euler_{t_i}/\sqrt{t_i}\overset{d}{=}\ddpm_i$ and \ref{ass:score},
\ba{
\|\mathbb{II}_i\|_2&
=\frac{h_i}{\sqrt{t_i}}\|\nabla\log\pi_{t_i}(\ddpm_i)-\scored_i(\ddpm_i)\|_2
\leq\frac{h_i}{\sqrt{t_i}}\eps_\text{score}(t_i).
}
Hence, with $x_i:=\|\eul_{t_i}-\euler_{t_i}\|_2$, we have
\ba{
x_{i+1}\leq\bra{1+h_iL(t_i)}x_i+\frac{h_i}{\sqrt{t_i}}\eps_\text{score}(t_i).
}
Therefore, applying \cref{lem:recursion} with $A_i:=\log(1+h_iL(t_i)),B_i:=\eps_\text{score}(t_i)h_i/\sqrt{t_i}$ and $C_i:=0$, we obtain
\[
x_N\leq e^{\bar A_{N-1}}x_{0}+\sum_{i=0}^{N-1}e^{\bar A_{N-1}-\bar A_i}\eps_\text{score}(t_i)\frac{h_i}{\sqrt{t_i}},
\]
where $\bar A_i:=A_0+\cdots+A_i$. 
Since $\log x\leq x-1$ for any $x>0$, we have $\bar A_{N-1}-\bar A_i\leq\lambda_{i,N}$ for every $i$. Also, $x_0=0$ by construction. 
Consequently, we obtain the desired result. 
\end{proof}

\begin{lemma}[Early stopping error]\label{lem:es}
$\wass_2(\slepian_{t_N}\target,\,\target)=\wass_2(t_N^{-\frac{1}{2}}X_{t_N},\,\target)\leq\sqrt{\trace(\Sigma)\delta^2+d\delta}.$
\end{lemma}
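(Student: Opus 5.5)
The identity $\wass_2(\slepian_{t_N}\target,\target)=\wass_2(t_N^{-1/2}X_{t_N},\target)$ comes from the Föllmer process itself. Since the conditional law of $X$ given $X_1$ is a Brownian bridge from $0$ to $X_1$, we have
\[
X_t=tX_1+\sqrt{t(1-t)}\,Z,\qquad X_1\sim\target,\ Z\sim N(0,I_d),\ Z\text{ independent of }X_1.
\]
Hence
\[
t^{-1/2}X_t=\sqrt t\,X_1+\sqrt{1-t}\,Z\sim\slepian_t\target,
\]
which gives the equality at $t=t_N$.

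For the inequality, I will use the explicit coupling furnished by this same representation. With $t=t_N=1-\delta$, set $\xi:=X_1\sim\target$ and $\zeta:=\sqrt t\,\xi+\sqrt{1-t}\,Z\sim\slepian_t\target$. Then
\[
\wass_2(\slepian_t\target,\target)^2\leq\E|\zeta-\xi|^2=(1-\sqrt t)^2\E|\xi|^2+(1-t)d,
\]
because the cross term vanishes: $Z$ is independent of $\xi$ and has mean zero.

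This naive estimate involves $\E|\xi|^2=\trace(\Sigma)+|\mu|^2$, whereas the statement has only $\trace(\Sigma)$. The fix is to exploit translation invariance. Centering first (so that $\mu$ drops out) would change the law of $\zeta$, so instead I modify the coupling. Take
\[
\xi':=\xi+(\sqrt t-1)\mu\cdot\text{const}?
\]
More cleanly: $\slepian_t\target$ is the law of $\sqrt t(\xi-\mu)+\sqrt t\mu+\sqrt{1-t}Z$. The $\wass_2$ distance to $\target$ then splits into a mean-shift part $(1-\sqrt t)\mu$ and a centered part. Because the distance squared equals the squared mean difference plus the $\wass_2$ distance between the centered laws, the mean shift contributes $(1-\sqrt t)^2|\mu|^2$, which does not vanish. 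So this decomposition alone does not remove $|\mu|^2$.

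Checking the claimed bound against the one-point mass $\target=\delta_\mu$ settles the matter: it reads $\wass_2\le\sqrt{d\delta}$, yet the true distance squared is $(1-\sqrt t)^2|\mu|^2+\delta d$. So the bound as written appears to need $\mu=0$, or else $\trace(\Sigma)$ should be read as $\E|\xi|^2$. The intended argument is presumably just the coupling above with $(1-\sqrt t)^2\le(1-t)^2=\delta^2$, which holds since $\sqrt t\ge t$. That yields
\[
\sqrt{\delta^2\E|\xi|^2+d\delta},
\]
which is the stated bound when $\mu=0$. I would present exactly this argument and flag the possible omission of $|\mu|^2$; the main obstacle is this centering subtlety, not any analytic difficulty.
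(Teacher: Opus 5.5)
Your argument is the paper's own proof: it uses the same coupling of $t_N^{-1/2}X_{t_N}=\sqrt{t_N}X_1+t_N^{-1/2}Z_{t_N}$ with $X_1$, obtains $(1-\sqrt{t_N})^2\E[|X_1|^2]+d\delta$, and then uses $(1-\sqrt{t_N})^2\le\delta^2$. Your objection is also correct: the paper's last step bounds $\E[|X_1|^2]=\trace(\Sigma)+|\mu|^2$ by $\trace(\Sigma)$, which is only valid when $\mu=0$, so your point-mass example shows the lemma as stated fails for $\mu\neq0$, $\delta>0$, and what you can actually prove is $\sqrt{(\trace(\Sigma)+|\mu|^2)\delta^2+d\delta}$ (in a write-up, drop the abandoned $\xi'$ line).
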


\begin{proof}
Set $Z_t:=X_t-tX_1$ for every $t\in[0,1]$. 
By \cite[Proposition 2.1]{Ko26ddpm}, $Z_t\sim N(0,t(1-t)I_d)$ and $Z_t$ is independent of $X_1$. 
Since $X_1\sim\target$, we obtain
\ba{
\wass_2(\slepian_{t_N}\target,\,\target)^2
&=\wass_2(t_N^{-\frac{1}{2}}X_{t_N},\,\target)^2
\leq\|t_N^{-\frac{1}{2}}X_{t_N}-X_1\|_2^2\\
&=(1-\sqrt{t_N})^2\E[|X_1|^2]+t_N^{-1}\E[|Z_{t_N}|^2]
\leq\trace(\Sigma)\delta^2+d\delta.
}
Taking square roots completes the proof.
\end{proof}

Collecting the three error sources, we obtain the following bound. 
\begin{lemma}\label{lem:master}
Suppose that there exists a function $L:(0,1)\to[0,\infty)$ satisfying \eqref{score-lip-one}.  
Then
\ban{
\wass_2(\ddpm_N,\,\target)
&\leq \frac{1}{\sqrt{t_N}}\bra{e^{\lambda_{0,N}}\|X_{t_0}-\eul_{t_0}\|_2
+\sqrt{\sum_{i=0}^{N-1}e^{2\lambda_{i+1,N}}\E[|\Delta_i|^2]}
+\sum_{i=0}^{N-1}e^{\lambda_{i+1,N}}\eps_\mathrm{score}(t_i)\frac{h_i}{\sqrt{t_i}}
}\notag\\
&\quad+\sqrt{\trace(\Sigma)\delta^2+d\delta},\label{eq:master0}
}
where $\lambda_{i,N}:=\sum_{j=i}^{N-1}h_jL(t_j)$. 
Moreover, if $L$ is non-decreasing and $\max_{i\in\indices}\beta_i\leq3/4$, then
\ban{\label{eq:master}
\wass_2(\ddpm_N,\,\target)
&\leq \frac{e^{\int_{t_0}^{t_N}L(s)\inte s}}{\sqrt{t_N}}\bra{\|X_{t_0}-\eul_{t_0}\|_2+\sqrt{\sum_{i=0}^{N-1}\E[|\Delta_i|^2]}+2\int_{t_0}^{t_N}\frac{\eps_\mathrm{score}(s)}{\sqrt s}\inte s}\notag\\
&\quad+\sqrt{\trace(\Sigma)\delta^2+d\delta}.
}
\end{lemma}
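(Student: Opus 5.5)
We prove the first bound \eqref{eq:master0} by a triangle inequality and then reduce \eqref{eq:master} to it through elementary comparisons of sums with integrals. Since $(t_i^{-1/2}\euler_{t_i})_{i=0}^N$ has the same law as $(\ddpm_i)_{i=0}^N$, we have $\wass_2(\ddpm_N,\target)=\wass_2(t_N^{-1/2}\euler_{t_N},\target)$. We bound this by
\[
\wass_2\bra{t_N^{-1/2}\euler_{t_N},\,t_N^{-1/2}X_{t_N}}+\wass_2\bra{t_N^{-1/2}X_{t_N},\,\target}.
\]
The second term is at most $\sqrt{\trace(\Sigma)\delta^2+d\delta}$ by \cref{lem:es}. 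For the first term, all processes live on one probability space driven by the same $W$, so we use the synchronous coupling. This gives the bound
\[
t_N^{-1/2}\bra{\|X_{t_N}-\eul_{t_N}\|_2+\|\eul_{t_N}-\euler_{t_N}\|_2}.
\]
We apply \cref{lem:euler} with $p=2$ and $k=N$ to the first summand and \cref{lem:score-error} to the second, which yields \eqref{eq:master0} directly.

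For \eqref{eq:master}, we use that $L$ is non-decreasing, so $h_jL(t_j)\le\int_{t_j}^{t_{j+1}}L(s)\inte s$. Hence $\lambda_{i,N}\le\int_{t_0}^{t_N}L(s)\inte s$ for every $i$. This lets us pull the factor $e^{\int_{t_0}^{t_N}L}$ out of the first two terms; in the square-root term we apply it inside the sum. For the score term we need
\[
\sum_i \eps_\mathrm{score}(t_i)\frac{h_i}{\sqrt{t_i}}\le 2\int_{t_0}^{t_N}\frac{\eps_\mathrm{score}(s)}{\sqrt s}\inte s.
\]
Because $\eps_\mathrm{score}$ is non-decreasing, $\eps_\mathrm{score}(t_i)\le\eps_\mathrm{score}(s)$ for $s\in[t_i,t_{i+1}]$. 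It then remains to compare $1/\sqrt{t_i}$ with $1/\sqrt{s}$ on that interval. Here $\beta_i\le 3/4$ enters, since \eqref{eq:beta-time} gives $t_{i+1}=t_i/(1-\beta_i)\le 4t_i$. Thus for $s\le t_{i+1}$ we have $\sqrt s\le 2\sqrt{t_i}$, i.e.\ $1/\sqrt{t_i}\le 2/\sqrt s$, and integrating over $[t_i,t_{i+1}]$ bounds $h_i/\sqrt{t_i}$ by $2\int_{t_i}^{t_{i+1}}s^{-1/2}\inte s$. Summing over $i$ gives the required estimate.

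The only point needing care is the coupling justification: the Wasserstein distance must be bounded by the $L^2$ distance of the specific versions on the common probability space. This holds because every object is built from the same Brownian motion $W$. The comparison between $t_i$ and $s$ via $\beta_i\le3/4$ is the one place where the extra hypothesis enters, and it is routine.
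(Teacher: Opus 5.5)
Your proposal is correct and follows the paper's proof essentially verbatim: the triangle inequality with the synchronous coupling on the common probability space, then Lemmas \ref{lem:euler} (with $p=2$, $k=N$), \ref{lem:score-error} and \ref{lem:es} for \eqref{eq:master0}, and for \eqref{eq:master} the bound $\lambda_{i,N}\le\int_{t_0}^{t_N}L(s)\inte s$ together with $t_{i+1}\le 4t_i$ (from $\beta_i\le 3/4$) and the monotonicity of $\eps_\mathrm{score}$. The only cosmetic difference is that you compare $1/\sqrt{t_i}$ directly with $2/\sqrt{s}$ on $[t_i,t_{i+1}]$, whereas the paper first passes through $2/\sqrt{t_{i+1}}$.
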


\begin{proof}
Recall that $\ddpm_N$ has the same law as $t_N^{-\frac{1}{2}}\euler_{t_N}$. 
Hence,
\ban{
\wass_2(\ddpm_N,\,\target)
&\leq t_N^{-\frac{1}{2}}\|\euler_{t_N}-X_{t_N}\|_2+\wass_2(t_N^{-\frac{1}{2}}X_{t_N},\,\target)
\label{wass-es-est}\\
&\leq t_N^{-\frac{1}{2}}\bra{\|\euler_{t_N}-\eul_{t_N}\|_2+\|\eul_{t_N}-X_{t_N}\|_2}+\wass_2(t_N^{-\frac{1}{2}}X_{t_N},\,\target).\notag
}
Combining this with Lemmas \ref{lem:euler}--\ref{lem:es} gives \eqref{eq:master0}. 

Next, assume that $L$ is non-decreasing and $\max_{i\in\indices}\beta_i\leq3/4$. 
Observe that the latter implies $t_{i+1}\leq4t_i$ for every $i$. 
Hence, 
\besn{\label{eq:p-quad}
\lambda_{i,N}&\leq\int_{t_0}^{t_N}L(s)\inte s\quad\text{for all }i\in\indices,\\
\sum_{i=0}^{N-1}\eps_\mathrm{score}(t_i)\frac{h_i}{\sqrt{t_i}}&\leq2\sum_{i=0}^{N-1}h_i\frac{\eps_\mathrm{score}(t_{i})}{\sqrt{t_{i+1}}}
\leq2\int_{t_0}^{t_N}\frac{\eps_\mathrm{score}(s)}{\sqrt s}\inte s.
}
Inserting these bounds into \eqref{eq:master0} gives \eqref{eq:master}. 
\end{proof}

\subsection{Proof of Theorem \ref{thm:main1}}\label{pr:main1}

Under the assumptions of the theorem, \eqref{eq:master} holds by \cref{lem:master}. 
For all $i\in\indices$,
\ba{
\E[|\Delta_i|^2]
&\leq h_i\int_{t_i}^{t_{i+1}}\E[|v_r-v_{t_i}|^2]\inte r
=h_i\int_{t_i}^{t_{i+1}}\int_{t_i}^r\E[\|\nabla v(s,X_s)\|_F^2]\inte s\inte r
\leq dh_i^2\int_{t_i}^{t_{i+1}}L(s)^2\inte s,
}
where the first bound follows by Jensen's inequality, the second by \cref{lem:v-rep} and It\^o's isometry, and the third by \eqref{score-lip}. 
Hence
\ba{
\sum_{i=0}^{N-1}\E[|\Delta_i|^2]
\leq d\eta^2\int_{t_0}^{t_N}L(s)^2\inte s.
}
Similarly, 
\ben{\label{eq:initialize}
\|X_{t_0}-\eul_{t_0}\|_2
\leq t_0|\mu-\mean|+\norm{\int_0^{t_0}v_r\inte r-t_0\mu}_2
\leq t_0|\mu-\mean|+\sqrt dt_0\sqrt{\int_0^{t_0}L(s)^2\inte s}.
}
Inserting these bounds into \eqref{eq:master} gives the desired result. 
\qed

\subsection{Proof of Theorem \ref{thm:main2}}\label{pr:main2}

Under \eqref{eq:sampling}, we have the following general bound for the discretization error.
\begin{lemma}\label{delta-sum-bound}
Assume \eqref{eq:sampling} for some $\eta>0$. Then, 
\ben{\label{eq:delta-sum}
\sum_{i=0}^{N-1}\|\Delta_i\|_2^2\leq d\eta^2\log(1/\delta)+(\trace(\Sigma)\vee d)\eta^2.
}
\end{lemma}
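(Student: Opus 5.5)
We will bound each $\|\Delta_i\|_2^2$ via the martingale structure and then sum using the step-size condition \eqref{eq:sampling}. By Jensen's inequality, the martingale property of $(v_t)$, and \cref{lem:v-rep} with It\^o's isometry,
\[
\|\Delta_i\|_2^2\leq h_i\int_{t_i}^{t_{i+1}}\E[|v_r-v_{t_i}|^2]\inte r
= h_i\int_{t_i}^{t_{i+1}}\bra{\E[|v_r|^2]-\E[|v_{t_i}|^2]}\inte r
\leq h_i^2\bra{\E[|v_{t_{i+1}}|^2]-\E[|v_{t_i}|^2]},
\]
where the middle equality uses orthogonality of martingale increments. The last step uses that $r\mapsto\E[|v_r|^2]$ is non-decreasing, which holds because $v$ is a martingale. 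Thus everything reduces to controlling increments of $g(t):=\E[|v_t|^2]$, weighted by $h_i^2\leq\eta^2(1-t_{i+1})$.

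Next we insert the formula from \cref{lem:v2}: $g(t)=\frac{d-\E[\trace\Gamma_t]}{1-t}+\trace(\Sigma)-d$. Write $\phi(t):=d-\E[\trace\Gamma_t]$. From $\nabla v=\Gamma/(1-t)-I_d/(1-t)$ and \eqref{eq:hess-lb}-type positivity ($V\succeq0$), together with monotonicity of $g$, we get $\phi(t)\le d$ when $\phi\ge0$. We then bound
\[
\sum_i h_i^2\bigl(g(t_{i+1})-g(t_i)\bigr)\leq\eta^2\sum_i(1-t_{i+1})\bigl(g(t_{i+1})-g(t_i)\bigr).
\]
This is a Riemann--Stieltjes-type sum $\int(1-t)\,\mathrm{d}g(t)$. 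We handle it by Abel summation: it equals
\[
(1-t_N)g(t_N)-(1-t_1)g(t_0)+\sum_{i\ge1}h_i\,g(t_i),
\]
up to boundary rearrangement. Using $(1-t)g(t)\le \phi(t)+(1-t)(\trace\Sigma-d)\le d\vee\trace(\Sigma)$-type bounds and $g(t)\le \frac{d}{1-t}+\trace(\Sigma)$, the remainder sum is at most $\sum_i h_i\bigl(\frac{d}{1-t_i}+\trace\Sigma\bigr)$. This approximates $d\int_{t_0}^{1-\delta}\frac{\inte s}{1-s}+\trace\Sigma\le d\log(1/\delta)+\trace\Sigma$, giving \eqref{eq:delta-sum} after collecting terms. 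The negative term $-(1-t_1)g(t_0)$ helps, since $g(t_0)\ge\trace\Sigma-d+\ldots$ may cancel part of the boundary.

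The main obstacle is the Riemann-sum comparison for $\sum h_i/(1-t_i)$ and the sign bookkeeping when $g$ may be below $\trace\Sigma-d$. To avoid $1/(1-t_i)$ versus $1/(1-t_{i+1})$ issues, we plan to apply the summation by parts with weights $1-t_{i+1}$, so that the sum produces $\sum_i h_i g(t_{i+1})$ instead. Because $g$ is increasing, $h_ig(t_{i+1})\le\int_{t_{i+1}}^{t_{i+2}}$-shifted terms fail, so we instead bound $h_i\frac{d}{1-t_{i+1}}$ directly. Then $\frac{h_i}{1-t_{i+1}}\le\log\frac{1-t_i}{1-t_{i+1}}\cdot\frac{1-t_i}{1-t_{i+1}}\cdots$, and this is where \eqref{eq:sampling} (giving $h_i\le\eta\sqrt{1-t_{i+1}}$, so the ratio $(1-t_i)/(1-t_{i+1})$ is controlled) must be used carefully. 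If this route is too lossy, the fallback is to use $\phi$ directly: $\sum_i(1-t_{i+1})(g(t_{i+1})-g(t_i))\le\sum_i\bigl(\phi(t_{i+1})-\phi(t_i)\bigr)+\sum_i\phi(t_i)\,\frac{h_i}{1-t_i}$. The first sum telescopes to at most $d$, and the second is at most $d\log\frac{1-t_0}{\delta}$ via $\frac{h}{1-t}\le-\log(1-\frac{h}{1-t})$, with the $\trace\Sigma$ contribution absorbed as above.
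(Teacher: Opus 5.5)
Your route is the paper's, and the Abel-summed expression you display already closes the proof. The problems are in the bookkeeping afterwards, in a detour that would fail, and in one false claim in your fallback.

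\textbf{The main route.} As in the paper, Jensen and the martingale property give $\|\Delta_i\|_2^2\le h_i^2(g(t_{i+1})-g(t_i))$ with $g(t)=\E[|v_t|^2]$. Then \eqref{eq:sampling} enters only through $h_i^2\le\eta^2(1-t_{i+1})$, and Abel summation gives $(1-t_N)g(t_N)-(1-t_1)g(t_0)+\sum_{i=1}^{N-1}h_ig(t_i)$. To finish from here:
\begin{enumerate}
\item This expression is unchanged if $g$ is shifted by a constant. So work with $\tilde g(t):=\E[|v_t-\mu|^2]=g(t)-|\mu|^2$, which is what the right-hand side of \cref{lem:v2} actually equals. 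For $\target$ the point mass at $a$ one computes $g(t)=|a|^2+\frac{td}{1-t}$, so for $\mu\neq0$ the printed formula is off by $|\mu|^2$. Your bound $g(t)\le\frac{d}{1-t}+\trace(\Sigma)$ then fails when $|\mu|^2>d$. The paper's proof needs the same reading.
\item Simply discard $-(1-t_1)\tilde g(t_0)\le0$. It cannot cancel anything in general, since $\tilde g(t_0)\to0$ as $t_0\to0$.
\item Keep the $-d$: use $\tilde g(t)\le\frac{d}{1-t}+\trace(\Sigma)-d$ in both the boundary term and the sum, and combine the constant parts before bounding. They total $d+(\trace(\Sigma)-d)(1-t_1)=t_1d+(1-t_1)\trace(\Sigma)\le\trace(\Sigma)\vee d$. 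If instead you bound the sum with $\frac{d}{1-t}+\trace(\Sigma)$ and the boundary term separately by $d\vee\trace(\Sigma)$, as you propose, you pick up an extra $\trace(\Sigma)\eta^2$. Then \eqref{eq:delta-sum} with its stated constant does not follow.
\item The Riemann-sum comparison is no obstacle. The sum evaluates at the left endpoint of $[t_i,t_{i+1}]$ and $s\mapsto(1-s)^{-1}$ is increasing, so $\frac{h_i}{1-t_i}\le\int_{t_i}^{t_{i+1}}\frac{\inte s}{1-s}$. Hence $\sum_{i=1}^{N-1}\frac{h_i}{1-t_i}\le\log\frac{1-t_1}{\delta}\le\log(1/\delta)$.
\end{enumerate}

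\textbf{The announced detour.} You should drop it. Summation by parts with weights $1-t_{i+1}$ never produces $\sum_ih_ig(t_{i+1})$. A right-endpoint bound through $\frac{h_i}{1-t_{i+1}}$ would not give \eqref{eq:delta-sum} either. Indeed, \eqref{eq:sampling} only yields $\frac{1-t_i}{1-t_{i+1}}\le1+\frac{\eta}{\sqrt{1-t_{i+1}}}$. For $i=N-1$, $h_{N-1}/\delta$ may be as large as $\eta/\sqrt\delta$, which is not $O(\log(1/\delta))$ as $\delta\to0$ with $\eta$ fixed.

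\textbf{The fallback.} This is a valid alternative, but it needs one fix.
\begin{itemize}
\item The identity $(1-t_{i+1})(g(t_{i+1})-g(t_i))=\phi(t_{i+1})-\phi(t_i)+\phi(t_i)\frac{h_i}{1-t_i}$ holds exactly. The additive constant cancels, so the $|\mu|^2$ issue disappears. Also $\phi\le d$ bounds the second sum by $d\log\frac{1-t_0}{\delta}$.
\item However, the telescoped sum equals $\E[\trace\Gamma_{t_0}]-\E[\trace\Gamma_{t_N}]$, which is not at most $d$ in general. For $\target=N(0,\sigma^2I_d)$ one has $\trace\Gamma_t=\frac{d\sigma^2}{1+(\sigma^2-1)t}$, so this difference is close to $d(\sigma^2-1)$ when $t_0$ and $\delta$ are small.
\item The missing observation is that $\tilde g(t_0)\ge0$ together with \cref{lem:v2} gives $\E[\trace\Gamma_{t_0}]\le t_0d+(1-t_0)\trace(\Sigma)\le\trace(\Sigma)\vee d$. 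With it, the fallback yields $d\eta^2\log\frac{1-t_0}{\delta}+(\trace(\Sigma)\vee d)\eta^2$, which implies \eqref{eq:delta-sum}.
\end{itemize}
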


\begin{proof}
By Jensen's inequality and the martingale property of $(v_r)_{r\in(0,1)}$, for any $i\in\indices$,
\ben{\label{delta-jensen}
\|\Delta_i\|_2^2\leq h_i\int_{t_i}^{t_{i+1}}\E[|v_r-v_{t_i}|^2]\inte r
\leq h_i^2\bra{\E[|v_{t_{i+1}}|^2]-\E[|v_{t_{i}}|^2]}.
} 
Hence,
\ba{
\sum_{i=0}^{N-1}\|\Delta_i\|_2^2
&\leq\sum_{i=0}^{N-1}h_i^2\bra{\E[|v_{t_{i+1}}|^2]-\E[|v_{t_{i}}|^2]}\\
&\leq\eta^2\sum_{i=0}^{N-1}(1-t_{i+1})\bra{\E[|v_{t_{i+1}}|^2]-\E[|v_{t_{i}}|^2]}\\
&=\eta^2\sum_{i=1}^{N}(1-t_{i})\E[|v_{t_{i}}|^2]
-\eta^2\sum_{i=0}^{N-1}(1-t_{i+1})\E[|v_{t_{i}}|^2]\\
&\leq\eta^2\sum_{i=1}^{N-1}h_i\E[|v_{t_{i}}|^2]
+\eta^2(1-t_{N})\E[|v_{t_{N}}|^2].
}
By \cref{lem:v2},
\ba{
\sum_{i=1}^{N-1}h_i\E[|v_{t_{i}}|^2]
\leq d\sum_{i=1}^{N-1}\frac{h_i}{1-t_i}+(\trace(\Sigma)-d)(t_N-t_0)
\leq d\log(1/\delta)+(\trace(\Sigma)-d)(t_N-t_0)
}
and
\ba{
(1-t_N)\E[|v_{t_{N}}|^2]
\leq d+(\trace(\Sigma)-d)(1-t_N)
}
Hence,
\ba{
\sum_{i=0}^{N-1}\|\Delta_i\|_2^2
\leq d\eta^2\log(1/\delta)+\{d+(\trace(\Sigma)-d)(1-t_0)\}\eta^2
\leq d\eta^2\log(1/\delta)+(\trace(\Sigma)\vee d)\eta^2.
}
This completes the proof.
\end{proof}

To control the initialization error, we need the following elementary matrix result, which is an immediate consequence of \cite[Corollary 7.7.4(c)]{HoJo13}. 
\begin{lemma}\label{lem:frob}
Let $A,B$ and $C$ be three $d\times d$ symmetric matrices such that $A\preceq B\preceq C$. Then $\|B\|_F\leq\|A\|_F\vee\|C\|_F$.
\end{lemma}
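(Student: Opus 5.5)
The plan is to reduce everything to eigenvalues via the monotonicity principle cited from \cite[Corollary 7.7.4(c)]{HoJo13}. For a symmetric matrix $M$, write $\lambda_1(M)\geq\cdots\geq\lambda_d(M)$ for its eigenvalues in non-increasing order. The cited result says that $A\preceq B$ implies $\lambda_k(A)\leq\lambda_k(B)$ for every $k$. Applying it twice gives
\[
\lambda_k(A)\leq\lambda_k(B)\leq\lambda_k(C),\qquad k=1,\dots,d.
\]
Hence, for every $k$,
\[
|\lambda_k(B)|\leq\max\{|\lambda_k(A)|,|\lambda_k(C)|\}.
\]
Indeed, if $\lambda_k(B)\geq0$ then $|\lambda_k(B)|\leq\lambda_k(C)$, and if $\lambda_k(B)<0$ then $|\lambda_k(B)|\leq-\lambda_k(A)$.

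Since the Frobenius norm of a symmetric matrix is the $\ell^2$-norm of its spectrum, squaring and summing over $k$ yields
\[
\|B\|_F^2=\sum_{k=1}^d\lambda_k(B)^2\leq\sum_{k=1}^d\max\{\lambda_k(A)^2,\lambda_k(C)^2\}.
\]
The remaining step is to bound the right-hand side by $\max\{\|A\|_F^2,\|C\|_F^2\}$. I expect this to be the main obstacle. It works immediately whenever the same matrix dominates for every index $k$, that is, whenever $|\lambda_k(A)|\geq|\lambda_k(C)|$ for all $k$ or $|\lambda_k(A)|\leq|\lambda_k(C)|$ for all $k$.

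This uniformity is automatic in the situation in which the lemma is used, namely two-sided bounds of the form $-aI_d\preceq B\preceq cI_d$ coming from \eqref{eq:hess-lb} and \eqref{score-lip-one} (or \eqref{scored-lip}). There $|\lambda_k(B)|\leq a\vee c$ for every $k$, so
\[
\|B\|_F\leq\sqrt d\,(a\vee c)=\|A\|_F\vee\|C\|_F.
\]
It also holds if $0\preceq A$, since then $|\lambda_k(B)|\leq\lambda_k(C)$ for all $k$, and symmetrically if $C\preceq0$.

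The literal general statement, however, cannot be reached by this route. The example $A=\mathrm{diag}(0,-1)\preceq B=\mathrm{diag}(1,-1)\preceq C=\mathrm{diag}(1,0)$ has $\|B\|_F^2=2>1=\|A\|_F^2\vee\|C\|_F^2$. So in writing the proof I would state the eigenvalue comparison above, deduce in general only the weaker bound $\|B\|_F^2\leq\|A\|_F^2+\|C\|_F^2$, and record the $\vee$ form explicitly under the scalar-matrix hypothesis. That hypothesis is all the subsequent proof of \cref{thm:main2} needs when controlling $\|\nabla v(s,X_s)\|_F$ on $(0,t_0]$.
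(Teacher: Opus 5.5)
Your diagnosis is correct: the lemma is false as stated, so this is an error in the paper rather than a gap in your argument. Your counterexample checks out: $B-A=\mathrm{diag}(1,0)\succeq0$, $C-B=\mathrm{diag}(0,1)\succeq0$, and $\|B\|_F=\sqrt2>1=\|A\|_F\vee\|C\|_F$. It also shows that your general bound $\|B\|_F^2\le\|A\|_F^2+\|C\|_F^2$ is sharp.

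The paper's only justification is the appeal to \cite[Corollary 7.7.4(c)]{HoJo13}, i.e.\ eigenvalue monotonicity under $\preceq$. That gives your sandwich $\lambda_k(A)\le\lambda_k(B)\le\lambda_k(C)$ and nothing more. The claimed ``immediate'' passage to the $\vee$ bound is exactly the step you flagged. The index-wise bound $|\lambda_k(B)|\le|\lambda_k(A)|\vee|\lambda_k(C)|$ gives $\|B\|_F\le\|A\|_F\vee\|C\|_F$ only when a single endpoint dominates at every index. In general it gives only the sum. The valid replacements are your sum bound, plus the $\vee$ form when $A$ and $C$ are scalar multiples of $I_d$, when $A\succeq0$, or when $C\preceq0$.

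One correction to your account of the downstream uses. You are right that the proof of \cref{thm:main2} needs only the scalar case, $-\frac{I_d}{1-s}\preceq\nabla v(s,X_s)\preceq L(s)I_d$. However, the lemma is also invoked in \cref{v-bound-lc-l2} and \cref{v-bound-lc} with $A=-\frac{I_d}{1-s}$ and $C=\frac{\Gamma_s}{1-s}$. There $C$ is not scalar and $A$ is not positive semidefinite, so none of your special cases applies. The $\vee$ form genuinely fails in that setting: for instance, $d=9$ and $\Gamma_s=\mathrm{diag}(3,0,\dots,0)$ give $\|\Gamma_s-I_d\|_F^2=12>9=\max\{d,\|\Gamma_s\|_F^2\}$, and such a $\Gamma_s$ arises approximately for small $s$ when the target is Gaussian on a line.

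Those two lemmas should instead use the sum form, $\|\nabla v(s,X_s)\|_F^2\le(d+\|\Gamma_s\|_F^2)/(1-s)^2$. Alternatively, write $\nabla v(s,X_s)=(\Gamma_s-I_d)/(1-s)$ via \eqref{eq:v-deriv} and apply the triangle inequality. Either way the loss is a constant factor, absorbed by $\lesssim$. So the paper's downstream results survive, and only the statement of \cref{lem:frob} needs correcting.
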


\begin{proof}[\bf\upshape Proof of \cref{thm:main2}]
Under the assumptions of the theorem, \eqref{eq:master} holds by \cref{lem:master}. 
By Jensen's inequality, \cref{lem:v-rep} and It\^o's isometry,
\ben{\label{init-bound}
\norm{\int_0^{t_0}v_r\inte r-t_0\mu}_2^2
\leq t_0\int_0^{t_0}\E[|v_r-v_0|^2]\inte r
\leq t_0^2\int_0^{t_0}\E[\|\nabla v(s,X_s)\|_F^2]\inte s.
}
Since \eqref{score-lip-one} and \eqref{eq:v-deriv} yield
\ba{
-\frac{I_d}{1-s}\preceq\nabla v(s,X_s)\preceq L(s)I_d,
}
we obtain by \cref{lem:frob}
\ba{
\norm{\int_0^{t_0}v_r\inte r-t_0\mu}_2^2
\leq t_0^2\int_{0}^{t_{0}}d\max\cbra{L(s)^2,\,\frac{1}{(1-s)^2}}\inte s
\leq dt_0^2\bra{4t_0\vee\int_{0}^{t_{0}}L(s)^2\inte s},
}
where we used $t_0\leq1/2$ to deduce the last inequality. 
Hence
\ba{
\|X_{t_0}-\eul_{t_0}\|_2
\leq t_0|\mu-\mean|+\sqrt dt_0\bra{2\sqrt{t_0}\vee\sqrt{\int_{0}^{t_{0}}L(s)^2\inte s}}.
}
Inserting this bound, \eqref{eq:delta-sum} and $t_N^{-1/2}\leq\sqrt2$ into \eqref{eq:master} gives the desired result.
\end{proof}

\subsection{Proof of Theorem \ref{thm:main3}}\label{pr:main3}

The proof is a minor variant of the preceding arguments. 
By \eqref{wass-es-est}, $t_N^{-1/2}\leq\sqrt2$ and \cref{lem:es},
\ben{\label{wass-es-est-3}
\wass_2(\ddpm_N,\,\target)
\leq \sqrt2\|\euler_{t_N}-X_{t_N}\|_2+\sqrt{\trace(\Sigma)\delta^2+d\delta}.
}
We bound the first term on the right-hand side.
For every $i\in\indices$, 
\ba{
X_{t_{i+1}}-\euler_{t_{i+1}}
&=\bra{1+\frac{h_i}{t_i}}(X_{t_{i}}-\euler_{t_{i}})
+\frac{h_i}{\sqrt{t_i}}\bra{\nabla\log\pi_{t_i}\bra{\frac{X_{t_i}}{\sqrt{t_i}}}-\scored_i\bra{\frac{\euler_{t_i}}{\sqrt{t_i}}}}
+\Delta_i\\
&=\mathbb{I}_i+\mathbb{II}_i+\Delta_i,
}
where
\ba{
\mathbb{I}_i&:=\bra{1+\frac{h_i}{t_i}}(X_{t_{i}}-\euler_{t_{i}})
+\frac{h_i}{\sqrt{t_i}}\bra{\scored_i\bra{\frac{X_{t_i}}{\sqrt{t_i}}}-\scored_i\bra{\frac{\euler_{t_i}}{\sqrt{t_i}}}},\\
\mathbb{II}_i&:=\frac{h_i}{\sqrt{t_i}}\bra{\nabla\log\pi_{t_i}\bra{\frac{X_{t_i}}{\sqrt{t_i}}}-\scored_i\bra{\frac{X_{t_i}}{\sqrt{t_i}}}}.
}
Since $\E[\Delta_i\mid\mcl F_{t_i}]=0$ and $\mathbb{I}_i+\mathbb{II}_i$ is $\mcl F_{t_i}$-measurable, we have 
\be{
\|X_{t_{i+1}}-\euler_{t_{i+1}}\|_p^2
=\|\mathbb{I}_i+\mathbb{II}_i\|_2^2+\|\Delta_i\|_2^2.
}
We can rewrite $\mathbb{I}_i$ as
\ba{
\mathbb{I}_i=\int_0^1\cbra{I_d+\frac{h_i}{t_i}\bra{I_d+\nabla\scored_i\bra{\frac{(1-\theta)\euler_{t_i}+\theta X_{t_i}}{\sqrt{t_i}}}}}(X_{t_i}-\euler_{t_i})\inte\theta.
}
By \eqref{scored-lip} and $h_i\leq1-t_i$, we have for almost all $y\in\mathbb R^d$ with respect to the Lebesgue measure
\ba{
(1+h_iL(t_i))I_d\succeq I_d+\frac{h_i}{t_i}\bra{I_d+\nabla\scored_i(y)}
\succeq 
\bra{1-\frac{2h_i}{1-t_i}}I_d\succeq -I_d.
}
Therefore, $|\mathbb{I}_i|\leq(1+h_iL(t_i))|X_{t_i}-\euler_{t_i}|$. 
Also, $\|\mathbb{II}_i\|_2\leq\frac{h_i}{\sqrt{t_i}}\eps_{\mathrm{score}}(t_i)$ by \eqref{score-error-p}. 
Hence, with $x_i:=\|X_{t_i}-\euler_{t_i}\|_2$, we have
\be{
x_{i+1}^2\leq \bra{\bra{1+h_iL(t_i)}x_i+\frac{h_i}{\sqrt{t_i}}\eps_{\mathrm{score}}(t_i)}^2+\|\Delta_i\|_2^2.
}
Therefore, applying \cref{lem:recursion} with $A_i:=\log(1+h_iL(t_i)),B_i:=\frac{h_i}{\sqrt{t_i}}\eps_{\mathrm{score}}(t_i)$ and $C_i:=\|\Delta_i\|_2$, we obtain
\be{
x_N\leq e^{\bar A_{N-1}}x_{0}
+\sum_{i=0}^{N-1}e^{\bar A_{N-1}-\bar A_i}\eps_\text{score}(t_i)\frac{h_i}{\sqrt{t_i}}
+\sqrt{\sum_{i=0}^{N-1}e^{2(\bar A_{N-1}-\bar A_i)}\|\Delta_i\|_2^2},
}
where $\bar A_i:=A_0+\cdots+A_i$. 
Noting $\log x\leq x-1$ for any $x>0$, we obtain by \eqref{eq:p-quad} and \cref{delta-sum-bound}
\ben{\label{rec-applied-3}
x_N\leq e^{\int_{t_0}^{t_N}L(s)\inte s}\bra{x_0+2\int_{t_0}^{t_N}\frac{\eps_\mathrm{score}(s)}{\sqrt s}\inte s
+\sqrt{d\log(1/\delta)+(\trace(\Sigma)\vee d)}\,\eta}.
}
For any $r\in(0,t_0]$, we have by \cref{lem:v2} and $t_0\leq1/2$,
\ba{
\|v_r-\mu\|_2^2=\trace(\Cov[v_r])\leq\E[|v_r|^2]\leq\frac{d}{1-r}+\trace(\Sigma)-d\leq\trace(\Sigma)+d.
}
Therefore, $x_0\leq t_0(|\mu-\mean|+\sqrt{\trace(\Sigma)+d})$ by \eqref{eq:initialize} and \eqref{init-bound}. Inserting this into \eqref{rec-applied-3} gives the desired bound for the first term on the right-hand side of \eqref{wass-es-est-3}. \qed

\subsection{Proof of Theorem \ref{lem:LS}}\label{pr-lem:LS}

We need to prove that the law of $t_N^{-1/2}\euler_{t_N}$ satisfies $LS(2e^{2\int_{t_0}^{t_N}L(s)\inte s})$. 
First, since $\euler_{t_0}\sim N(t_0\mean,t_0I_d)$, the law of $\euler_{t_0}$ satisfies $LS(t_0)$. 
Next, suppose that the law of $\euler_{t_i}$ satisfies $LS(c_i)$ for some $i\in\indices$ and $c_i>0$. 
Since the map $x\mapsto x+\frac{h_i}{\sqrt{t_i}}\scored_i(x/\sqrt{t_i})$ is $(1+h_iL(t_i))$-Lipschitz continuous by the proof of \cref{thm:main3}, the law of $\euler_{t_{i}}+\frac{h_i}{\sqrt{t_i}}\scored_i(\euler_{t_i}/\sqrt{t_i})$ satisfies $LS((1+h_iL(t_i))^2c_i)$ by \cite[Lemma 16]{VeWi23}. 
Moreover, since $\euler_{t_{i+1}}=\euler_{t_{i}}+\frac{h_i}{\sqrt{t_i}}\scored_i(\euler_{t_i}/\sqrt{t_i})+W_{t_{i+1}}-W_{t_i}$, Lemma 17 in \cite{VeWi23} implies that the law of $\euler_{t_{i+1}}$ satisfies $LS(c_{i+1})$ for a constant $c_{i+1}>0$ such that 
\ben{\label{rec-LS}
c_{i+1}\leq(1+h_iL(t_i))^2c_i+h_i.
}
Therefore, by induction, the law of $\euler_{t_i}$ satisfies $LS(c_i)$ for all $i=0,1,\dots,N$, where $(c_i)_{i=0}^N$ satisfy \eqref{rec-LS} and $c_0=t_0$. 
By \cref{lem:recursion} and the inequality $\sum_{i=0}^{N-1}\log(1+h_iL(t_i))\leq\sum_{i=0}^{N-1}h_iL(t_i)\leq\int_{t_0}^{t_N}L(s)\inte s$,
\[
c_N\leq e^{2\int_{t_0}^{t_N}L(s)\inte s}t_0+\sum_{i=0}^{N-1}e^{2\int_{t_0}^{t_N}L(s)\inte s}h_i\leq e^{2\int_{t_0}^{t_N}L(s)\inte s}.
\]
Noting $t_N^{-1/2}\leq\sqrt 2$, this yields the desired result.\qed

\subsection{Proof of Theorem \ref{prop:lc}}\label{pr-prop:lc}


%
%

To control the initialization and discretization errors, we use the following lemma. 
\begin{lemma}\label{V-bound}
If $\target$ is log-concave, then
\ben{\label{v-op-bound}
\norm{\|V_t\|_{op}}_p\lesssim p\|\Sigma\|_{op}
}
for all $0<t\leq (2^{15}\Lambda\log^2(d+1))^{-1}$ and $p\geq2$. Moreover,
\ben{\label{v-fr-bound}
\E[\|V_t\|_{F}^2]\lesssim \Lambda^2d
}
for all $0<t<1$.
\end{lemma}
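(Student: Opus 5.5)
The plan is to interpret $V_t=\Cov[X_1\mid X_t]$ as the covariance of a tilted measure and to use log-concavity. By the definition of $\target_{t,x}$, its density with respect to $\target$ is proportional to $\exp\bigl(\frac{\langle y,x\rangle}{1-t}-\frac{t|y|^2}{2(1-t)}\bigr)$. So $\target_{t,x}$ is $\target$ multiplied by a Gaussian factor with precision $\frac{t}{1-t}I_d$, plus a linear tilt. This is exactly the stochastic localization measure at time $s=t/(1-t)$. If $\target$ is log-concave, then $\target_{t,x}$ is $\frac{t}{1-t}$-strongly log-concave. The Brascamp--Lieb inequality then gives $V(t,x)\preceq\frac{1-t}{t}I_d$ deterministically. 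This bound is too weak for small $t$, so the two claims need different arguments.

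For \eqref{v-fr-bound} I would split by $t$.
\begin{itemize}[label={}]
\item \emph{Large $t$} (say $t\geq(2\Lambda)^{-1}$): Brascamp--Lieb gives $\|V_t\|_{op}\leq(1-t)/t\lesssim\Lambda$, so $\|V_t\|_F^2\leq d\|V_t\|_{op}^2\lesssim\Lambda^2d$.
\item \emph{Small $t$}: I would use the decomposition $\E[\|V_t\|_F^2]\leq 2\E[\|V_t-\E V_t\|_F^2]+2\|\E V_t\|_F^2$. Since $\E V_t\preceq\Sigma$ by the law of total variance, $\|\E V_t\|_F^2\leq d\|\Sigma\|_{op}^2$. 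The fluctuation part is controlled through the Itô dynamics of $V_t$ in stochastic localization, namely $\mathrm{d}V$ equals a third-moment tensor term $\mathrm{d}W$ minus $V^2\,\mathrm{d}s$ (after time change).
\end{itemize}

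A simpler route for \eqref{v-fr-bound}, which I would actually try first, is to note that $\trace(V_t^2)$ is controlled by $\E\|V_t\|_{op}\trace V_t$. Combining $\E\trace V_t\leq\trace\Sigma\leq d\Lambda$ with the operator bound then gives the claim directly in the small-$t$ regime.

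For \eqref{v-op-bound} I would invoke the known operator-norm control of stochastic localization for isotropic-type log-concave measures, as in the KLS-type analyses of Chen and of Klartag--Lehec. These show that for $s\lesssim(\|\Sigma\|_{op}\log^2 d)^{-1}$ the covariance $A_s$ of the localized measure satisfies $\|A_s\|_{op}\leq 2\|\Sigma\|_{op}$ with very high probability. The tail is of the form $\exp(-c/(s\|\Sigma\|_{op}\log^2(d+1)))$ relative to threshold multiples, and the argument runs a Bernstein/martingale bound on $\log\trace(A_s^{q})$ with $q\approx\log d$. Rescaling $\Sigma$ by $\Lambda$ and matching $s=t/(1-t)\asymp t$ fixes the constant $2^{15}$.

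To get $L^p$ moments of order $p\|\Sigma\|_{op}$, I would derive a subexponential tail $\mathbb P(\|V_t\|_{op}>u\|\Sigma\|_{op})\leq Ce^{-cu}$ for $u\geq2$. I would try to obtain it from the same potential-function argument, since the drift of $\trace(A^q)$ is nonpositive up to controlled terms. The fallback is the crude Brascamp--Lieb bound on the bad event, which is not good enough on its own.

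The main obstacle is producing this exponential-type tail, rather than a mere high-probability statement, uniformly over $t$ in the stated range. I expect to handle it by a stopping-time argument for $\Phi_s=\trace(A_s^q)$, showing that $\log\Phi_s$ is a supermartingale with quadratic variation $O(q^2\|A\|_{op}\,\mathrm{d}s)$ up to the stopping time. A Freedman inequality then yields the exponential tail, and integrating this tail gives $\|\,\|V_t\|_{op}\|_p\lesssim p\|\Sigma\|_{op}$.
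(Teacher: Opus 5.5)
\textbf{The gap: \eqref{v-fr-bound} in the intermediate window.} Your large-$t$ case is exactly the paper's. For $t\le t_*:=(2^{15}\Lambda\log^2(d+1))^{-1}$ the Frobenius bound is immediate from \eqref{v-op-bound} with $p=2$, because $\|V_t\|_F^2\le d\|V_t\|_{op}^2$. So the entire difficulty is the window $t_*<t<(2\Lambda)^{-1}$, and your ``simpler route'' does not reach it. The operator-norm control you want to pair with $\trace(V_t^2)\le\|V_t\|_{op}\trace V_t$ is only available for $t\le t_*$; in any case $\E[\|V_t\|_{op}\trace V_t]$ does not factor. In the window, the only pointwise information is Brascamp--Lieb, $\|V_t\|_{op}\le 1/t\lesssim\Lambda\log^2(d+1)$, which gives only $\E[\|V_t\|_F^2]\lesssim\Lambda^2 d\log^4(d+1)$. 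Removing these logarithms is Guan's bound, and it does not follow from anything in your proposal.

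Your Route A (It\^o calculus for $V_t$) starts in the right place but omits the crux. The It\^o correction in $\mathrm d\,\trace(V_s^2)$ is the squared third-moment tensor $\sum_i\|H_{i,s}\|_F^2$. Its natural estimates, via the $\frac{s}{1-s}$-strong log-concavity of $\target_{s,X_s}$, give a drift of order $s^{-1}\E[\trace(V_s^2)]$ (plus further terms). Gronwall from $t_*$ to $t\asymp\Lambda^{-1}$ then loses a factor $(t/t_*)^C$, again a power of $\log(d+1)$, and starting at time $0$ instead fails because $\int_0^t s^{-1}\,\mathrm ds=\infty$.

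The paper adapts Klartag--Lehec's proof of Guan's bound. It works with $\E[\trace f(V_s/\Lambda)]$, where $f$ is increasing, equals $x^2$ above a threshold $r\in[2,3]$, and is exponential with rate $\asymp t_k^{-1/4}$ below it. It proves $\frac{\mathrm d}{\mathrm ds}\E[\trace f(V_s/\Lambda)]\lesssim s^{-1}\E[\trace f(V_s/\Lambda)]$ on each block $[t_{k+1},t_k]$, with $t_k=2^{-8k}t$, and chains these blocks all the way down to time $0$. Three facts close the argument:
\begin{itemize}
\item the accumulated factor $2^{Ck}\le t_k^{-C}$ is killed, because eigenvalues of $V_{t_k}/\Lambda$ above $2$ have super-polynomially small probability as $t_k\downarrow0$ (by the tail estimate behind \eqref{v-op-bound});
\item the errors from switching between consecutive thresholds sum to $\lesssim d$;
\item eigenvalues below $3$ contribute at most $9d$.
\end{itemize}

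\textbf{On \eqref{v-op-bound}.} Your source here (Klartag--Lehec's operator-norm control, transported to general $\Sigma$ and to time $s=t/(1-t)$) is the paper's. However, you misremember the tail, and that is why you discard the correct last step. The estimate is $P(\|V_r\|_{op}\ge2\|\Sigma\|_{op}\text{ for some }r\le t)\le\exp(-1/(C\|\Sigma\|_{op}t))$, with no $\log(d+1)$ in the exponent. The $\log d$ enters only the drift term, hence only the admissible range of $t$; Freedman's inequality compares a deviation of order $\|\Sigma\|_{op}$ with a quadratic variation of order $\|\Sigma\|_{op}^3t$. With this tail, the Brascamp--Lieb bound on the bad event is enough:
\[
\E[\|V_t\|_{op}^p]\le(2\|\Sigma\|_{op})^p+t^{-p}e^{-1/(C\|\Sigma\|_{op}t)}\le(2\|\Sigma\|_{op})^p+p!\,(C\|\Sigma\|_{op})^p,
\]
using $x^pe^{-x}\le p!$. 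So the ``main obstacle'' you identify is not one, and no stopping-time analysis of $\log\trace(A_s^q)$ at all levels is needed. A subexponential tail at every level $u\ge2$ already follows from the single-level bound, since $\{\|V_t\|_{op}>u\|\Sigma\|_{op}\}$ is empty once $u\|\Sigma\|_{op}>1/t$. With the tail as you wrote it, the same computation would lose a factor $\log^2(d+1)$.
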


When $\Sigma=I_d$, \eqref{v-op-bound} and \eqref{v-fr-bound} follow from \cite[Theorem 7.10]{KlLe25ams} and \cite[Lemma 2.1]{Gu24}, respectively, via a time change (see \cite[Lemma 3.6]{FaKo24} for the former). 
By slightly modifying their proofs, we can easily extend these results to the general covariance case. We outline the proof in \cref{sec:V-bound}.

\begin{lemma}\label{v-bound-lc-l2}
If $\target$ is log-concave, then
\[
\|v_t-v_s\|_2\lesssim \Lambda\sqrt d\frac{\sqrt{t-s}}{(1-t)^2}
\]
for any $0\leq s\leq t<1$, where we set $v_0:=\mu$.
\end{lemma}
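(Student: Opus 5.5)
By \cref{lem:v-rep}, for $0<s\leq t<1$ we have $v_t-v_s=\int_s^t\nabla v(r,X_r)\inte W_r$. The same identity holds for $s=0$ with $v_0=\mu$, since \cref{lem:v-rep} gives $v_t-\mu=\int_0^t\nabla v(r,X_r)\inte W_r$. Itô's isometry then yields
\[
\|v_t-v_s\|_2^2=\int_s^t\E\bigl[\|\nabla v(r,X_r)\|_F^2\bigr]\inte r .
\]
The plan is to bound the integrand uniformly in $r\in[s,t]$ by a constant times $\Lambda^2 d/(1-t)^4$. Integrating over an interval of length $t-s$ and taking square roots then gives the claim.

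To bound the integrand, use \eqref{eq:v-deriv}: $\nabla v(r,X_r)=V_r/(1-r)^2-I_d/(1-r)$. The triangle inequality in $L^2$ for the Frobenius norm gives
\[
\E\bigl[\|\nabla v(r,X_r)\|_F^2\bigr]^{1/2}\leq\frac{\E[\|V_r\|_F^2]^{1/2}}{(1-r)^2}+\frac{\sqrt d}{1-r}.
\]
By \eqref{v-fr-bound} of \cref{V-bound}, valid for all $0<r<1$, the first term is $\lesssim \Lambda\sqrt d/(1-r)^2$. The second term is at most $\sqrt d/(1-r)^2\leq\Lambda\sqrt d/(1-r)^2$, using $1-r\leq1$ and $\Lambda\geq1$. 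Hence $\E[\|\nabla v(r,X_r)\|_F^2]\lesssim\Lambda^2d/(1-r)^4\leq\Lambda^2 d/(1-t)^4$ for $r\leq t$. Integrating over $[s,t]$ gives $\|v_t-v_s\|_2^2\lesssim\Lambda^2 d\,(t-s)/(1-t)^4$, which is the statement.

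The only point needing care is the case $s=0$. There the stochastic integral starts at $0$, and one must check that Itô's isometry applies on $[0,t]$. It does, because the bound just derived gives $\int_0^t\E[\|\nabla v(r,X_r)\|_F^2]\inte r<\infty$, so the integrand is square-integrable, the integral is an $L^2$ martingale, and the isometry holds. Apart from this, the argument is a direct combination of \cref{lem:v-rep}, \eqref{eq:v-deriv} and \cref{V-bound}. The real difficulty, the Frobenius moment bound \eqref{v-fr-bound} for log-concave targets, has already been isolated in \cref{V-bound}.
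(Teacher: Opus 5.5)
Your proof is correct and follows the paper's argument: \cref{lem:v-rep} with It\^o's isometry, then a bound on $\E[\|\nabla v(r,X_r)\|_F^2]$ for each $r$ via \eqref{eq:v-deriv} and \eqref{v-fr-bound}, then integration over $[s,t]$. The only difference is cosmetic: you bound $\|\nabla v(r,X_r)\|_F$ by the triangle inequality applied to $V_r/(1-r)^2-I_d/(1-r)$, whereas the paper uses the sandwich $-I_d/(1-r)\preceq\nabla v(r,X_r)\preceq\Gamma_r/(1-r)$ together with \cref{lem:frob}, and both give the same bound up to constants.
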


\begin{proof}
By \cref{lem:v-rep} and It\^o's isometry,
\ba{
\|v_t-v_s\|_2^2=\int_s^t\E[\|\nabla v(r,X_r)\|_F^2]\inte r.
}
Also, by \eqref{eq:v-deriv}, we have for any $0<s<1$
\ben{\label{eq:v-deriv-lc}
-\frac{I_d}{1-s}\preceq\nabla v(s,X_s)\preceq \frac{\Gamma_s}{1-s}.
}
Thus, by \cref{lem:frob} and \eqref{v-fr-bound},
\ba{
\|v_t-v_s\|_2^2\leq\int_s^t\frac{\max\{\E[\|\Gamma_r\|_F^2],\,d\}}{(1-r)^2}\inte r
\lesssim\Lambda^2d\int_s^t\frac{\inte r}{(1-r)^4}
\leq\Lambda^2d\frac{t-s}{(1-t)^4}.
}
This completes the proof. 
\end{proof}

We also need the following technical lemma.
\begin{lemma}\label{lem:lambda-bound-lc}
Assume $\max_{i\in\indices}\beta_i\leq3/4$ and \eqref{eq:sampling3} for some $\eta>0$. 
Then, for any $i\in\indices$,
\[
\sum_{j=i}^{N-1}\frac{h_j}{t_j}\leq128+\log\frac{1}{t_i}.
\]
\end{lemma}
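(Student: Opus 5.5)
We need to bound $\sum_{j=i}^{N-1} h_j/t_j$ and compare it with $\int_{t_i}^{t_N} ds/s \le \log(1/t_i)$, since $t_N\le 1$. The strategy is to replace each summand by an integral over $[t_j,t_{j+1}]$ and control the replacement error using the step-size condition \eqref{eq:sampling3}.

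For each $j$, write
\[
\frac{h_j}{t_j}=\int_{t_j}^{t_{j+1}}\frac{\inte s}{s}+\int_{t_j}^{t_{j+1}}\bra{\frac1{t_j}-\frac1s}\inte s .
\]
The second integral is at most $h_j\bra{\frac1{t_j}-\frac1{t_{j+1}}}=\frac{h_j^2}{t_jt_{j+1}}$. Two ingredients control this:
\begin{itemize}
\item $\max_j\beta_j\le3/4$ gives $t_{j+1}\le4t_j$;
\item \eqref{eq:sampling3} gives $h_j\le\eta\sqrt{t_{j+1}}$.
\end{itemize}
Together these yield
\[
\frac{h_j^2}{t_jt_{j+1}}\le \frac{\eta^2}{t_j}\le\frac{4\eta^2}{t_{j+1}}.
\]
Summing over $j$ then gives
\[
\sum_{j=i}^{N-1}\frac{h_j}{t_j}\le \log\frac1{t_i}+\sum_{j=i}^{N-1}\frac{h_j^2}{t_jt_{j+1}}.
\]
It remains to bound the error sum by a constant.

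This is the main obstacle: naively $\sum_j\eta^2/t_j$ is not obviously bounded. The trick is to keep one factor of $h_j$. We have
\[
\frac{h_j^2}{t_jt_{j+1}}\le \frac{4h_j^2}{t_{j+1}^2}\le \frac{4\eta\, h_j}{t_{j+1}^{3/2}}\le 4\eta\int_{t_j}^{t_{j+1}}\frac{4^{3/2}\,\inte s}{s^{3/2}},
\]
where the last step uses $s\le t_{j+1}$ on the interval and, for the direction we need, $s\ge t_j\ge t_{j+1}/4$. Summing over $j$, the error is at most
\[
32\eta\int_{t_0}^\infty s^{-3/2}\inte s=\frac{64\eta}{\sqrt{t_0}}.
\]
The hypothesis $t_0\ge2^{-8}\eta^2$ gives $\eta/\sqrt{t_0}\le16$, so the error is at most a constant. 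My rough constants give $64\cdot16=1024$ rather than $128$.

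To reach $128$, I would tighten the constants:
\begin{itemize}
\item use $\frac1{t_j}-\frac1s=\frac{s-t_j}{st_j}$ and integrate exactly, which yields $h_j^2/(2t_j^2)$-type terms with a better factor;
\item use $h_j\le\eta\sqrt{t_{j+1}}$ directly, without the cruder conversion to $t_j$.
\end{itemize}
The exact constant is bookkeeping. The essential structure is the log term plus $O(\eta/\sqrt{t_0})=O(1)$.
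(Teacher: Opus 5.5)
Your route is the paper's route: split $h_j/t_j$ into a $\int_{t_j}^{t_{j+1}}\mathrm{d}s/s$ part plus an error of at most $h_j^2/(t_jt_{j+1})$, bound the error by $4h_j^2/t_{j+1}^2\le4\eta h_j/t_{j+1}^{3/2}$ using $t_{j+1}\le4t_j$ and $h_j\le\eta\sqrt{t_{j+1}}$, compare with $\int s^{-3/2}\,\mathrm{d}s$, and finish with $t_0\ge2^{-8}\eta^2$. As written, though, you prove the bound with $1024$ in place of $128$. So the stated inequality is not yet established. The tightenings you list (exact integration of $1/t_j-1/s$, avoiding conversions to $t_j$) are beside the point.

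The loss is a single spurious factor $4^{3/2}=8$ in your comparison step. On $[t_j,t_{j+1}]$ you have $s\le t_{j+1}$, hence $t_{j+1}^{-3/2}\le s^{-3/2}$, and
\[
\frac{h_j}{t_{j+1}^{3/2}}\le\int_{t_j}^{t_{j+1}}\frac{\mathrm{d}s}{s^{3/2}}
\]
holds with constant $1$. The bound $s\ge t_{j+1}/4$ points the wrong way and is not needed. Summing then bounds the error by
\[
4\eta\int_{t_i}^{t_N}s^{-3/2}\,\mathrm{d}s\le\frac{8\eta}{\sqrt{t_0}}\le8\cdot16=128,
\]
which is exactly the paper's constant. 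The later proofs only use $e^{\lambda_{i,N}}\lesssim1/t_i$, so the constant is immaterial there, but the lemma as stated needs this correction.
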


\begin{proof}
Recall that $\max_{i\in\indices}\beta_i\leq3/4$ implies $t_{i+1}\leq4t_i$ for every $i\in\indices$. 
Hence,
\ba{
\sum_{j=i}^{N-1}h_j\bra{\frac{1}{t_j}-\frac{1}{t_{j+1}}}
=\sum_{j=i}^{N-1}\frac{h_j^2}{t_jt_{j+1}}
\leq4\sum_{j=i}^{N-1}\frac{h_j^2}{t_{j+1}^2}
\leq4\eta\sum_{j=i}^{N-1}\frac{h_j}{t_{j+1}^{3/2}}
\leq4\eta\int_{t_i}^{t_N}\frac{\inte s}{s^{3/2}}
\leq\frac{8\eta}{\sqrt{t_0}}
\leq128,
}
where the second and the last inequalities follow from \eqref{eq:sampling3}. 
Consequently,
\ba{
\sum_{j=i}^{N-1}\frac{h_j}{t_j}
\leq128+\sum_{j=i}^{N-1}\frac{h_j}{t_{j+1}}
\leq128+\int_{t_i}^{t_N}\frac{\inte s}{s}
\leq128+\log\frac{1}{t_i}.
}
This completes the proof. 
\end{proof}

\begin{proof}[\bf\upshape Proof of \cref{prop:lc}]
By \cite[Proposition 3.5]{SaWe14}, $\pi_t$ is log-concave for all $t\in[0,1)$. Hence,
\ben{\label{eq:lip-lc}
\nabla^2\log f_{t}(x)\preceq I_d\quad\text{for all }0\leq t<1\text{ and }x\in\mathbb R^d.
}
Therefore, \cref{lem:master} implies that \eqref{eq:master0} holds with $\lambda_{i,N}:=\sum_{j=i}^{N-1}h_j/t_j$. 
By \cref{lem:lambda-bound-lc}, $\lambda_{i,N}\leq8+\log(1/t_i)$. 
Hence,
\ban{
&\sqrt{t_N}\bra{\wass_2(\ddpm_N,\,\target)-\sqrt{\trace(\Sigma)\delta^2+d\delta}}
\notag\\
&\lesssim\frac{1}{t_0}\|X_{t_0}-\eul_{t_0}\|_2
+\sqrt{\sum_{i=0}^{N-1}t_{i+1}^{-2}\|\Delta_i\|_2^2}
+\sum_{i=0}^{N-1}\eps_\text{score}(t_i)\frac{h_i}{t_{i+1}\sqrt{t_i}}
\notag\\
&=:\mathbb{I}+\mathbb{II}+\mathbb{III}.\label{lc-applied}
}
First we bound $\mathbb I$. 
By the integral Minkowski inequality and \cref{v-bound-lc-l2},
\be{
\norm{\int_0^{t_0}(v_r-\mu)\inte r}_2
\lesssim t_0\Lambda\sqrt d\sqrt{\frac{t_0}{1-t_0}}
\lesssim\Lambda\sqrt d t_0^{3/2},
}
where the last bound follows from $t_0\leq1/2$. 
Hence,
\ben{\label{I-bound}
\mathbb{I}\lesssim |\mu-\mean|+\Lambda \sqrt{dt_0}.
}
Next we bound $\mathbb{II}$. 
With $k:=\max\{i:t_i\leq1/2\}$, we have
\ba{
\mathbb{II}^2
=\sum_{i=0}^{k-1}\frac{\|\Delta_i\|_2^2}{t_{i+1}^{2}}
+\sum_{i=k}^{N-1}\frac{\|\Delta_i\|_2^2}{t_{i+1}^{2}}
=:\mathbb{II}_1+\mathbb{II}_2.
}
By the integral Minkowski inequality and \cref{v-bound-lc-l2},
\ben{
\|\Delta_i\|_2\lesssim h_i\Lambda\sqrt d\frac{\sqrt{h_i}}{(1-t_{i+1})^2}
\lesssim\Lambda\sqrt d h_i^{3/2}
}
for every $i=0,1,\dots,k-1$, where we used $t_k\leq1/2$ for the last inequality. 
Hence,
\ba{
\mathbb{II}_1
&\lesssim \Lambda^2d\sum_{i=0}^{k-1}\frac{h_i^3}{t_{i+1}^2}
\leq\Lambda^2d\eta^2\sum_{i=0}^{k-1}\frac{h_i}{t_{i+1}}
\leq\Lambda^2d\eta^2\log(1/t_0),
}
where the second inequality follows from \eqref{eq:sampling3}. 
Meanwhile, by the definition of $k$ and \cref{delta-sum-bound}
\ba{
\mathbb{II}_2\leq4\sum_{i=k}^{N-1}\|\Delta_i\|_2^2
\leq 4d\eta^2\log(1/\delta)+4\eta^2(\trace(\Sigma)\vee d)
\leq4d\eta^2\log(1/\delta)+4\Lambda d\eta^2.
}
Consequently,
\ben{\label{II-bound}
\mathbb{II}\lesssim \bra{\Lambda\sqrt{d\log(1/t_0)}+\sqrt{d\log(1/\delta)}}\eta.
}
Finally, 
\ben{\label{III-bound}
\mathbb{III}\leq2\sum_{i=0}^{N-1}\eps_\text{score}(t_i)\frac{h_i}{t_{i+1}^{3/2}}
\leq2\int_{t_0}^{t_N}\frac{\eps_\text{score}(s)}{s^{3/2}}\inte s.
}
Combining \eqref{lc-applied}, \eqref{I-bound}, \eqref{II-bound} and \eqref{III-bound} with $t_N^{-1/2}\leq\sqrt 2$ gives the desired result.
\end{proof}

\subsection{Proof of Theorem \ref{thm:lc}}\label{pr-thm:lc}

To control the initialization error more precisely, we first derive a preliminary bound for $\|X_{t_k}-\eul_{t_k}\|_4$ when $t_k$ is small. 
For this purpose, we need a Burkholder--Davis--Gundy type inequality for continuous local martingales in $\mathbb R^d$ with a dimension-free constant. 
Such a result can easily be derived from a univariate counterpart via \citet{KaSz91}'s trick (see \cite[Lemma 2.2]{SeSo03}). We use the following version:
\begin{lemma}\label{bdg}
Let $M=(M_t)_{t\in[0,1]}$ be a continuous local martingale in $\mathbb R^d$ such that $M_0=0$ a.s. 
Then, for any $t\in[0,1]$ and $p\geq2$,
\[
\|M_t\|_p\leq4\sqrt p\norm{\trace(\langle M,M\rangle_t)}_{p/2}^{1/2},
\]
where $\langle M,M\rangle_t:=(\langle M_j,M_k\rangle_t)_{1\leq j,k\leq d}$ is the predictable quadratic covariation matrix of $M$ on $[0,t]$. 
\end{lemma}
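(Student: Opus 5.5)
Following the hint in the text, I will reduce to the one-dimensional Burkholder--Davis--Gundy inequality via the Kahane--Szarek trick of randomizing over directions. Fix $t$ and $p\geq2$. Let $g\sim N(0,I_d)$ be independent of $M$. For each fixed $x\in\mathbb R^d$, $\langle g,M_t\rangle$ given $M_t=x$ is $N(0,|x|^2)$. Hence $\E_g[|\langle g,x\rangle|^p]=\gamma_p^p|x|^p$, where $\gamma_p:=\|g_1\|_p$. Applying Fubini gives
\[
\|M_t\|_p=\gamma_p^{-1}\bigl(\E\,\E_g[|\langle g,M_t\rangle|^p]\bigr)^{1/p}.
\]

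Conditionally on $g$, the process $N^g:=\langle g,M\rangle$ is a real continuous local martingale with $N^g_0=0$. (Independence of $g$ means we may enlarge the filtration by $\sigma(g)$ at time $0$.) Its quadratic variation is $\langle N^g\rangle_t=g^\top\langle M,M\rangle_tg$. I then invoke a sharp one-dimensional BDG inequality with constant of order $\sqrt p$:
\[
\|N_t\|_p\leq C\sqrt p\,\|\langle N\rangle_t\|_{p/2}^{1/2},
\]
for instance the Barlow--Yor / Davis bound, where one may take $C=2$. This gives
\[
\E[|N^g_t|^p\mid g]\leq (C\sqrt p)^p\,\E[(g^\top\langle M,M\rangle_tg)^{p/2}\mid g].
\]

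Next I integrate over $g$ and swap the order of integration. For a fixed positive semidefinite matrix $A$, I need to compare $\E_g[(g^\top Ag)^{p/2}]$ with $(\trace A)^{p/2}$. Diagonalize $A$ with eigenvalues $a_j\geq0$, so $g^\top Ag=\sum_j a_jg_j^2$. Minkowski's inequality in $L^{p/2}$ then yields
\[
\|g^\top Ag\|_{p/2}\leq\sum_j a_j\|g_j^2\|_{p/2}=\gamma_p^2\trace A.
\]
So $\E_g[(g^\top Ag)^{p/2}]\leq\gamma_p^p(\trace A)^{p/2}$. Combining the three steps, the factor $\gamma_p$ cancels:
\[
\|M_t\|_p\leq\gamma_p^{-1}\,C\sqrt p\,\gamma_p\,\|\trace\langle M,M\rangle_t\|_{p/2}^{1/2}=C\sqrt p\,\|\trace\langle M,M\rangle_t\|_{p/2}^{1/2}.
\]
This gives the claim with $4\sqrt p$, since $C\leq4$.

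The main obstacle is pinning down a citable univariate BDG constant of order $\sqrt p$ that is at most $4$, valid for continuous local martingales and all $p\geq2$. I would cite the classical bound $\|N^*_t\|_p\leq C\sqrt p\,\|\langle N\rangle_t^{1/2}\|_p$ (Barlow--Yor; Carlen--Kr\'ee give an optimal constant asymptotically $2\sqrt p$). If needed, I would first prove the bound for bounded martingales by localization, and then pass to the limit with Fatou's lemma and monotone convergence of $\langle M,M\rangle$.
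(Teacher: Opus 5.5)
Your argument is correct, and it takes a different route from the paper.

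\textbf{The paper's route.} The paper invokes \cite[Theorem 3.1]{KaSz91}. This is Kallenberg--Sztencel's dimension-reduction theorem, not a Kahane--Szarek randomization over directions. It replaces $M$ by a continuous local martingale $N$ in $\mathbb R^2$ with $|M_t|=|N_t|$ and $\trace\langle M,M\rangle_t=\trace\langle N,N\rangle_t$. It then applies the one-dimensional Carlen--Kr\'ee bound $\|N^j_t\|_p\le 2\sqrt p\,\|\langle N^j,N^j\rangle_t^{1/2}\|_p$ to each of the two coordinates and combines them by the triangle inequality. Those two coordinates cost a factor $2$, which is where the constant $4\sqrt p$ comes from.

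\textbf{What your route buys.} Your Gaussian-projection argument replaces that deep representation theorem, which also requires extending the probability space, by two elementary facts:
\begin{itemize}
\item the exact identity $\E_g|\langle g,x\rangle|^p=\gamma_p^p|x|^p$;
\item the Minkowski bound $\E_g[(g^\top Ag)^{p/2}]\le\gamma_p^p(\trace A)^{p/2}$ in $L^{p/2}$.
\end{itemize}
Because $\gamma_p$ cancels exactly, the univariate constant transfers with no loss. With Carlen--Kr\'ee you therefore get $2\sqrt p$, a factor $2$ better than the stated lemma.

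\textbf{Two simplifications.}
\begin{itemize}
\item Your ``main obstacle'' is not really one. You never need a maximal (Barlow--Yor) inequality: the fixed-time bound with constant $2\sqrt p$ is exactly what Carlen--Kr\'ee's Theorem A provides, and it is what the paper itself cites.
\item You can avoid enlarging the filtration. Apply the one-dimensional inequality to $\langle a,M\rangle$ for each deterministic $a\in\mathbb R^d$, whose quadratic variation is $a^\top\langle M,M\rangle_t a$, and then integrate in $a$ against $\phi_d$ by Tonelli. If you do enlarge by $\sigma(g)$ at time $0$, you need $g$ independent of the whole filtration, not merely of $M$; this holds on a product extension.
\end{itemize}
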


\begin{proof}
By \cite[Theorem 3.1]{KaSz91}, there exists a continuous local martingale $N=(N_t)_{t\in[0,1]}$ in $\mathbb R^2$ such that $|M_t|=|N_t|$ and $\trace(\langle M,M\rangle_t)=\trace(\langle N,N\rangle_t)$ a.s.~for all $t\in[0,1]$. Hence
\ba{
\|M_t\|_p&=\|N_t\|_p\leq\|N^1_{t}\|_p+\|N^2_{t}\|_p
\leq2\sqrt p\bra{\|\langle N^1,N^1\rangle_t^{1/2}\|_p+\|\langle N^2,N^2\rangle_t^{1/2}\|_p}\\
&\leq4\sqrt p\norm{\sqrt{\trace(\langle N,N\rangle_t)}}_p
=4\sqrt p\norm{\trace(\langle M,M\rangle_t)}_{p/2}^{1/2},
}
where the second inequality follows by \cite[Theorem A]{CaKr91}. 
\end{proof}

\begin{lemma}\label{v-bound-lc}
If $\target$ is log-concave, then
\[
\|v_t-v_s\|_p\lesssim p^{3/2}\Lambda\sqrt{d(t-s)}
\]
for any $0\leq s\leq t\leq(2^{15}\Lambda\log^2(d+1))^{-1}$ and $p\geq2$, where we set $v_0:=\mu$.
\end{lemma}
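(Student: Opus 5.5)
We use the martingale representation of \cref{lem:v-rep} and estimate the stochastic integral with the dimension-free Burkholder--Davis--Gundy inequality of \cref{bdg}. Since $v_0:=\mu$, \cref{lem:v-rep} gives, for $0\le s\le t$ in the admissible range,
\[
v_t-v_s=\int_s^t\nabla v(r,X_r)\inte W_r .
\]
This is a continuous martingale in $r\in[s,t]$ with quadratic covariation trace $\int_s^t\|\nabla v(r,X_r)\|_F^2\inte r$. Hence \cref{bdg} yields
\[
\|v_t-v_s\|_p\le 4\sqrt p\,\norm{\int_s^t\|\nabla v(r,X_r)\|_F^2\inte r}_{p/2}^{1/2}.
\]
If needed, we apply \cref{bdg} to the shifted martingale $r\mapsto\int_s^{s+r}\cdots$ on $[0,t-s]$, or simply to the difference of the martingale at times $t$ and $s$.

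Next we control the Frobenius norm pointwise through the operator norm, $\|A\|_F\le\sqrt d\|A\|_{op}$. By \eqref{eq:v-deriv-lc}, i.e.\ $-\frac{I_d}{1-r}\preceq\nabla v(r,X_r)\preceq\frac{V_r}{(1-r)^2}$, we get
\[
\|\nabla v(r,X_r)\|_{op}\le\max\Big\{\frac{1}{1-r},\frac{\|V_r\|_{op}}{(1-r)^2}\Big\}.
\]
In the range $r\le(2^{15}\Lambda\log^2(d+1))^{-1}\le1/2$ this is $\lesssim 1+\|V_r\|_{op}$. Thus
\[
\|\nabla v(r,X_r)\|_F^2\lesssim d\,(1+\|V_r\|_{op}^2).
\]
By Minkowski's integral inequality in $L^{p/2}$ (valid since $p/2\ge1$),
\[
\norm{\int_s^t\|\nabla v\|_F^2\inte r}_{p/2}\lesssim d\int_s^t\bigl(1+\norm{\|V_r\|_{op}}_p^2\bigr)\inte r .
\]
Now \eqref{v-op-bound} gives $\norm{\|V_r\|_{op}}_p\lesssim p\|\Sigma\|_{op}\le p\Lambda$ for every $r$ in the range, including $r$ arbitrarily small. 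So the integral is $\lesssim d\,p^2\Lambda^2(t-s)$. Taking square roots and multiplying by $4\sqrt p$ gives $\|v_t-v_s\|_p\lesssim p^{3/2}\Lambda\sqrt{d(t-s)}$, as claimed.

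The main obstacle is the use of \cref{V-bound}: its $L^p$ bound on $\|V_t\|_{op}$, which is the deep input coming from stochastic localization, is what forces the small-time restriction. The remaining points are technical. When $s=0$, the integrand near $r=0$ must be handled, but \cref{lem:v-rep} already establishes the integral from $0$, and the bound on $\|V_r\|_{op}$ is uniform in small $r$. The use of the crude bound $\|\cdot\|_F^2\le d\|\cdot\|_{op}^2$ costs nothing relative to the stated rate.
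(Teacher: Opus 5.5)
Your proposal is correct and follows the paper's proof essentially step for step: the representation from \cref{lem:v-rep}, the dimension-free BDG inequality of \cref{bdg}, the two-sided bound \eqref{eq:v-deriv-lc}, Minkowski's integral inequality in $L^{p/2}$, and finally \eqref{v-op-bound}. The only cosmetic difference is the order of the norm reduction. You bound $\|\nabla v(r,X_r)\|_{op}$ and then use $\|A\|_F\le\sqrt d\|A\|_{op}$. The paper instead applies \cref{lem:frob} to get $\|\nabla v(r,X_r)\|_F\le\max\{\|\Gamma_r\|_F,\sqrt d\}/(1-r)$ and then implicitly uses $\|\Gamma_r\|_F\le\sqrt d\|\Gamma_r\|_{op}$, which amounts to the same estimate.
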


\begin{proof}
By Lemmas \ref{lem:v-rep} and \ref{bdg},
\ba{
\|v_t-v_s\|_p\leq4\sqrt{p}\norm{\int_s^t\|\nabla v(r,X_r)\|_F^2\inte r}_{p/2}^{1/2}.
}
Thus, by \eqref{eq:v-deriv-lc} and \cref{lem:frob},
\ba{
\|v_t-v_s\|_p\leq4\sqrt{p}\norm{\int_s^t\frac{\max\{\|\Gamma_r\|_F^2,\,d\}}{(1-r)^2}\inte r}_{p/2}^{1/2}.
}
Hence, the integral Minkowski inequality and \eqref{v-op-bound} yield
\ba{
\|v_t-v_s\|_p\leq4\sqrt{p}\bra{\int_s^t\frac{\max\{\norm{\|\Gamma_r\|_F}_{p}^2,\,d\}}{(1-r)^2}\inte r}^{1/2}
\lesssim p^{3/2}\sqrt d\Lambda\sqrt{\int_s^t\frac{\inte r}{(1-r)^4}}.
}
Since $t\leq1/2$, this gives the desired result. 
\end{proof}

\begin{lemma}\label{lem:apriori}
Under the assumptions of \cref{thm:lc}, for any $k\in\{0,1,\dots,N-1\}$ with $t_k\leq(2^{15}\Lambda\log^2(d+1))^{-1}$,
\[
\|X_{t_k}-\eul_{t_k}\|_4\lesssim t_k\bra{|\mu-\mean|+\sqrt d\Lambda\sqrt{t_0}
+\Lambda\sqrt{d\log(1/t_0)}\,\eta}.
\]
\end{lemma}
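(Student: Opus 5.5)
We apply \cref{lem:euler} with $p=4$, stopping at index $k$, in the log-concave setting. By \eqref{eq:lip-lc}, \eqref{score-lip-one} holds with $L(t)=1/t$. Hence $\lambda_{i,k}=\sum_{j=i}^{k-1}h_j/t_j$, and the lemma gives
\[
\|X_{t_k}-\eul_{t_k}\|_4\leq e^{\lambda_{0,k}}\|X_{t_0}-\eul_{t_0}\|_4+\sqrt{3\sum_{i=0}^{k-1}e^{2\lambda_{i+1,k}}\|\Delta_i\|_4^2}.
\]
The first task is to bound $\lambda_{i,k}$ sharply. The computation in the proof of \cref{lem:lambda-bound-lc}, run only up to index $k$ instead of $N$, gives
\[
\sum_{j=i}^{k-1}\frac{h_j}{t_j}\leq 128+\int_{t_i}^{t_k}\frac{\inte s}{s}=128+\log\frac{t_k}{t_i}.
\]
This uses the telescoping estimate $\sum_j h_j^2/(t_jt_{j+1})\leq 8\eta/\sqrt{t_0}\leq128$, which rests on $t_0\geq2^{-8}\eta^2$. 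Consequently $e^{\lambda_{i,k}}\lesssim t_k/t_i$, and this ratio is exactly what produces the overall factor $t_k$ in the claim.

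Next comes the initialization term. Since $\eul_{t_0}=t_0\mean+W_{t_0}$ and $X_{t_0}=\int_0^{t_0}v_r\inte r+W_{t_0}$, we have $X_{t_0}-\eul_{t_0}=t_0(\mu-\mean)+\int_0^{t_0}(v_r-\mu)\inte r$. The integral Minkowski inequality together with \cref{v-bound-lc} (with $s=0$ and $p=4$) bounds the $L^4$-norm of the second part by $\lesssim\Lambda\sqrt d\,t_0^{3/2}$. This application is valid because $t_0\leq t_k$ lies below the threshold $(2^{15}\Lambda\log^2(d+1))^{-1}$. Multiplying by $e^{\lambda_{0,k}}\lesssim t_k/t_0$ yields $t_k(|\mu-\mean|+\Lambda\sqrt{dt_0})$.

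For the discretization term, the same Minkowski argument with \cref{v-bound-lc} gives $\|\Delta_i\|_4\lesssim\Lambda\sqrt d\,h_i^{3/2}$ for all $i<k$, since $t_{i+1}\leq t_k$ stays in the admissible range. Combining this with $e^{2\lambda_{i+1,k}}\lesssim t_k^2/t_{i+1}^2$, the sum is bounded by
\[
\Lambda^2 d\,t_k^2\sum_{i<k}\frac{h_i^3}{t_{i+1}^2}\leq\Lambda^2 d\,t_k^2\eta^2\sum_{i<k}\frac{h_i}{t_{i+1}}\leq\Lambda^2 d\,t_k^2\eta^2\log(1/t_0),
\]
where the middle step uses $h_i\leq\eta\sqrt{t_{i+1}}$ from \eqref{eq:sampling3}, exactly as in the bound for $\mathbb{II}_1$ in the proof of \cref{prop:lc}. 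Taking square roots gives the $t_k\Lambda\sqrt{d\log(1/t_0)}\,\eta$ term, and adding the two contributions proves the lemma.

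The main obstacle is the sharp exponential factor: the crude bound $\lambda_{i,N}\leq 128+\log(1/t_i)$ from \cref{lem:lambda-bound-lc} is not enough. We must keep the upper endpoint $t_k$ so that $e^{\lambda_{i,k}}$ scales like $t_k/t_i$, and the resulting losses of $t_i^{-1}$ are then absorbed by the $t_0^{3/2}$ and $h_i^{3/2}$ gains coming from the $L^4$ control of \cref{v-bound-lc}.
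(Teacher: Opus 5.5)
Your proposal is correct and follows the paper's proof step for step. You use \cref{lem:euler} with $p=4$ and $L(t)=1/t$, bound the exponential factors by $e^{\lambda_{i,k}}\lesssim t_k/t_i$, and use the $L^4$ increment bound of \cref{v-bound-lc} for both the initialization term and the $\Delta_i$ terms. Your explicit observation that the sum in \cref{lem:lambda-bound-lc} must be truncated at index $k$, giving $128+\log(t_k/t_i)$, makes precise a step the paper leaves implicit when it simply cites that lemma.
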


\begin{proof}
Since \eqref{eq:lip-lc} holds under the assumptions of \cref{thm:lc}, we have by Lemmas \ref{lem:euler} and \ref{lem:lambda-bound-lc} 
\ben{\label{master-applied-lc4}
\|X_{t_k}-\eul_{t_k}\|_4\lesssim 
\frac{t_k}{t_0}\|X_{t_0}-\eul_{t_0}\|_4
+\sqrt{\sum_{i=0}^{k-1}\frac{t_k^2}{t_{i+1}^2}\|\Delta_i\|_4^2}.
}
By the integral Minkowski inequality and \cref{v-bound-lc},
\ben{\label{v0-lc4}
\norm{\int_0^{t_0}(v_r-\mu)\inte r}_4
\lesssim\Lambda\sqrt d t_0^{3/2}
}
and
\ben{\label{delta-lc4}
\|\Delta_i\|_4\lesssim\Lambda\sqrt d h_i^{3/2}
}
for every $i=0,1,\dots,k-1$.
\eqref{v0-lc4} gives
\ba{
\|X_{t_0}-\eul_{t_0}\|_4\lesssim t_0|\mu-\mean|+\Lambda\sqrt d t_0^{3/2}.
}
Moreover, by \eqref{delta-lc4} and \eqref{eq:sampling3},
\ba{
\sum_{i=0}^{k-1}\frac{t_k^2}{t_{i+1}^2}\|\Delta_i\|_4^2
\lesssim \Lambda^2dt_k^2\eta^2\sum_{i=0}^{k-1}\frac{h_i}{t_{i+1}}
\leq \Lambda^2dt_k^2\eta^2\log(1/t_0).
}
Inserting these bounds into \eqref{master-applied-lc4} gives the desired result.
\end{proof}




Next we bound $\|X_{t_{N}}-\eul_{t_{N}}\|_2$. 
We need the following auxiliary result. 
\begin{lemma}\label{lem:v-hessian}
Suppose that $\target$ is log-concave. 
Then, for all $0<t<1$ and $x,a\in\mathbb R^d$,
\ben{\label{eq:v-hessian}
\sqrt{\sum_{j=1}^d\langle\nabla^2m_j(t,x),\,a^{\otimes2}\rangle^2}\lesssim \frac{|a|^2}{t^{3/2}\sqrt{1-t}}.
}
\end{lemma}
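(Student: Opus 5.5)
The plan is to compute the Hessian of $m_j(t,\cdot)$ explicitly as a third conditional central moment, and then bound that moment using the strong log-concavity of $\target_{t,x}$. Throughout, $t$ and $x$ are fixed, $Y\sim\target_{t,x}$, and $m=m(t,x)$.

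\textbf{Step 1: the Hessian of $m_j$.} By \eqref{eq:m-deriv}, $\partial_k m_j(t,x)=V_{jk}(t,x)/(1-t)$. Write $V_{jk}=Q_t^*(y_jy_k)-m_jm_k$. Differentiate both terms with \cref{sl-deriv}, applied with $h(y)=y_jy_k$ (polynomial growth) and with $h(y)=y_j$. The $m$-terms cancel, which is the standard cumulant identity, and we get
\[
\partial_l V_{jk}(t,x)=\frac{1}{1-t}\E\big[(Y_j-m_j)(Y_k-m_k)(Y_l-m_l)\big].
\]
Hence
\[
\langle\nabla^2 m_j(t,x),a^{\otimes2}\rangle=\frac{1}{(1-t)^2}\E\big[(Y_j-m_j)\langle Y-m,a\rangle^2\big].
\]

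\textbf{Step 2: reduce to moment bounds.} The left-hand side of \eqref{eq:v-hessian} is the Euclidean norm of the vector $\E[(Y-m)\langle Y-m,a\rangle^2]$, divided by $(1-t)^2$. Write this norm as a supremum over unit vectors $b$ and apply Cauchy--Schwarz:
\[
\sqrt{\sum_{j=1}^d\langle\nabla^2m_j(t,x),\,a^{\otimes2}\rangle^2}
\le\frac{1}{(1-t)^2}\sup_{|b|=1}\|\langle Y-m,b\rangle\|_2\,\|\langle Y-m,a\rangle\|_4^2 .
\]

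\textbf{Step 3: strong log-concavity of $\target_{t,x}$.} Since $\target$ is log-concave, the density of $\target_{t,x}$ equals a log-concave function times $\exp\big(\tfrac{|y|^2}{2}-\tfrac{|y-x|^2}{2(1-t)}\big)$. The Hessian of minus the logarithm of this Gaussian factor is $\big(\tfrac1{1-t}-1\big)I_d=\tfrac{t}{1-t}I_d$. So $\target_{t,x}$ is $\tfrac{t}{1-t}$-strongly log-concave; this also covers $\target$ without a density, since $\target$ is then supported on an affine subspace where the same argument applies.

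Two standard facts now finish the argument.
\begin{enumerate}[label=(\roman*)]
\item Brascamp--Lieb (or the Poincaré inequality for strongly log-concave measures) gives $\Var\langle Y,b\rangle\le\tfrac{1-t}{t}|b|^2$.
\item Borell's reverse Hölder inequality for linear functionals of log-concave vectors gives $\|\langle Y-m,a\rangle\|_4\lesssim\|\langle Y-m,a\rangle\|_2\le\sqrt{(1-t)/t}\,|a|$.
\end{enumerate}
Plugging these into Step 2 yields the bound
\[
\frac{1}{(1-t)^2}\Big(\frac{1-t}{t}\Big)^{3/2}|a|^2=\frac{|a|^2}{t^{3/2}\sqrt{1-t}} .
\]

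\textbf{Main obstacle.} The only delicate point is Step 1: one must justify differentiating under the integral twice, which is a second application of \cref{sl-deriv} to the function $y\mapsto y_jy_k$. This is legitimate because $\target_{t,x}$ has all moments, uniformly in $x$ on compact sets. The rest of the argument is routine.
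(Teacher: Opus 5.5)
Your proof is correct and follows the paper's argument. Both write $\nabla^2 m_j(t,x)$ as $(1-t)^{-2}$ times the third central moment of $\target_{t,x}$, use that $\target_{t,x}$ is $\frac{t}{1-t}$-strongly log-concave, and finish with Brascamp--Lieb; the only difference is that the paper cites Lemma 7.6 of Klartag--Lehec for $\langle\E[(Y\cdot u)Y^{\otimes2}],a^{\otimes2}\rangle\lesssim\|V(t,x)\|_{op}^{3/2}|a|^2$, whereas you prove this bound directly via Cauchy--Schwarz and Borell's reverse H\"older inequality (the standard proof of such a bound).
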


\begin{proof}
Observe that the left-hand side of \eqref{eq:v-hessian} is equal to
\[
\sup_{u\in\mathbb R^d:|u|=1}\sum_{j=1}^d\langle\nabla^2m_j(t,x),\,a^{\otimes2}\rangle u_j.
\]
Therefore, it suffices to show
\[
\sum_{j=1}^d\langle\nabla^2m_j(t,x),\,a^{\otimes2}\rangle u_j\leq C\frac{|a|^2}{t^{3/2}\sqrt{1-t}}
\]
for all unit vector $u$ in $\mathbb R^d$. 
Using \eqref{eq:m-deriv} and \cref{sl-deriv}, we can verify 
\ba{
\nabla^2 m_j(t,x)
&=\frac{1}{(1-t)^2}\int_{\mathbb R^d}\bra{y_j-m_j(t,x)}\bra{y-m(t,x)}^{\otimes2}\target_{t,x}(\mathrm{d}y)
=\frac{\E[Y_jY^{\otimes2}]}{(1-t)^2},
}
where $Y:=Y^*-\E[Y^*]$ with $Y^*\sim\target_{t,x}$. 
Observe that $\target_{t,x}$ is $\frac{t}{1-t}$-strongly log-concave by construction. 
Hence,
\ba{
\sum_{j=1}^d\langle \nabla^2m_j(t,x),\,a^{\otimes2}\rangle u_j
&=\frac{\langle\E[(Y\cdot u)Y^{\otimes2}],\,a^{\otimes2}\rangle}{(1-t)^2}
\lesssim\frac{\|V(t,x)\|_{op}^{3/2}|a|^2}{(1-t)^2}
\leq\frac{|a|^2}{t^{3/2}\sqrt{1-t}},
}
where the first inequality is by \cite[Lemma 7.6]{KlLe25ams} and the second by the Brascamp--Lieb inequality (see e.g.~\cite[Theorem 4.4]{KlLe25ams}). 
\end{proof}

\begin{lemma}\label{lem:lc-main}
Under the assumptions of \cref{thm:lc},
\ba{
\|X_{t_{N}}-\eul_{t_{N}}\|_2
&\lesssim\sqrt{\Lambda t_0}|\mu-\mean|
+\Lambda^{-1/4}t_0^{1/4}|\mu-\mean|^2\\
&\quad+\Lambda^{3/2}\sqrt dt_0
+\bra{\Lambda\sqrt{d\log(1/t_0)}+\sqrt{d\log(1/\delta)}}\eta.
}
\end{lemma}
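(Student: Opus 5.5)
We split the time grid at $k:=\max\{i:t_i\leq \tau\}$ with $\tau:=c(\Lambda\log^4(d+1))^{-1}$ (small enough that \cref{lem:apriori} applies). On $[t_k,t_N]$ we use the $L^2$ recursion of \cref{lem:euler} with $p=2$ and \eqref{eq:lip-lc}. This gives, exactly as in the proof of \cref{prop:lc},
\[
\|X_{t_N}-\eul_{t_N}\|_2\lesssim \frac{1}{t_k}\|X_{t_k}-\eul_{t_k}\|_2+\Bigl(\Lambda\sqrt{d\log(1/t_0)}+\sqrt{d\log(1/\delta)}\Bigr)\eta .
\]
The factor $1/t_k$ comes from \cref{lem:lambda-bound-lc}, and the second term is the bound $\mathbb{II}$ of \eqref{II-bound}. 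Plugging in the crude bound from \cref{lem:apriori} would only reproduce \cref{prop:lc}, i.e.\ an error of order $|\mu-\mean|$. The point is therefore to avoid the worst-case Lipschitz amplification $t_k/t_0$ of the initial error $t_0(\mu-\mean)$ on $[t_0,t_k]$.

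\textbf{Linearisation on $[t_0,t_k]$.} Write $e_i:=X_{t_i}-\eul_{t_i}$. By \eqref{basic-est} we have $e_{i+1}=e_i+h_i(v(t_i,X_{t_i})-v(t_i,\eul_{t_i}))+\Delta_i$. Taylor-expand $v(t_i,\cdot)$ at $X_{t_i}$ using \eqref{v-formula} and \eqref{eq:m-deriv}:
\[
e_{i+1}=\Bigl(I_d-\frac{h_i}{1-t_i}I_d+\frac{h_iV_{t_i}}{(1-t_i)^2}\Bigr)e_i+h_i\rho_i+\Delta_i,\qquad |\rho_i|\lesssim \frac{|e_i|^2}{t_i^{3/2}},
\]
where the remainder bound is \cref{lem:v-hessian} (with $t_i\leq1/2$). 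The linear part has operator norm at most $1+h_i\|V_{t_i}\|_{op}\cdot O(1)$, and $\|V_{t_i}\|_{op}\lesssim\Lambda$ in $L^p$ by \eqref{v-op-bound}. Unlike the crude bound $1+h_i/t_i$, this gives a growth factor $\exp(O(\Lambda t_k))=O(1)$ over $[t_0,t_k]$, once the randomness of $V$ is controlled.

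\textbf{Bounding the three contributions in $L^2$.}
\begin{itemize}
\item The linear propagation of $e_0$ (of size $t_0|\mu-\mean|+\Lambda\sqrt d\,t_0^{3/2}$) stays $O(\cdot)$ up to time $t_k$. After division by $t_k$ it contributes at most $(t_0/t_k)|\mu-\mean|$ plus terms of order $\Lambda^{3/2}\sqrt d\,t_0$.
\item The martingale noise $\Delta_i$ is handled as in \cref{lem:apriori}.
\item The quadratic remainder is bounded in $L^2$ by $\sum_i h_i\|e_i\|_4^2/t_i^{3/2}$. Inserting \cref{lem:apriori} gives $\sum_i h_i t_i^{1/2}(|\mu-\mean|+\dots)^2\lesssim t_k^{3/2}|\mu-\mean|^2+\cdots$; after the factor $1/t_k$ this is $t_k^{1/2}|\mu-\mean|^2$.
\end{itemize}

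\textbf{Choice of splitting time.} We take $t_k\asymp\sqrt{t_0/\Lambda}$, which is admissible since $t_0\leq c(\Lambda\log^4(d+1))^{-1}$ and $t_0\leq(\Lambda d^2)^{-1}$. This balances the terms: $(t_0/t_k)|\mu-\mean|\asymp\sqrt{\Lambda t_0}\,|\mu-\mean|$ and $t_k^{1/2}|\mu-\mean|^2\asymp\Lambda^{-1/4}t_0^{1/4}|\mu-\mean|^2$. The conditions $\sqrt{\Lambda d\log(1/t_0)}\,\eta\leq1$ and $t_0\leq(\Lambda d^2)^{-1}$ are used to absorb the cross terms (for example $\eta^2$-terms multiplied by $t_k^{1/2}$) into the stated ones.

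\textbf{Main obstacle.} The hardest step is controlling the random matrix product $\prod_i\bigl(I+h_i(V_{t_i}-(1-t_i)I)/(1-t_i)^2\bigr)$ in $L^2$. The matrices $V_{t_i}$ are random and only have moment bounds, via \eqref{v-op-bound}. The first approach is to bound the operator norm of each factor by $\exp(h_i\|V_{t_i}\|_{op})$, and then use Hölder together with the exponential moment implied by $\|\|V_t\|_{op}\|_p\lesssim p\Lambda$ and $\sum_i h_i\leq t_k\lesssim\Lambda^{-1}$. This should show that $\E\exp\bigl(2\sum_i h_i\|V_{t_i}\|_{op}\bigr)=O(1)$.
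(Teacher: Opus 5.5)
Your outline is sound, but at the key step it takes a different and heavier route than the paper.

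\textbf{The paper's route.} The paper never forms a random matrix product. On $[t_0,t_k]$ it writes $R_i=(1-\frac{h_i}{1-t_i})e_i+\frac{h_i}{1-t_i}(m(t_i,X_{t_i})-m(t_i,\eul_{t_i}))$, discards the contraction, and bounds the whole $m$-difference \emph{additively} in $L^2$. By Taylor, \cref{lem:v-hessian} and Cauchy--Schwarz,
\[
\|m(t_i,X_{t_i})-m(t_i,\eul_{t_i})\|_2\lesssim\bigl\|\|\Gamma_{t_i}\|_{op}\bigr\|_4\|e_i\|_4+t_i^{-3/2}\|e_i\|_4^2\lesssim\Lambda t_ib+\sqrt{t_i}\,b^2 .
\]
Here \cref{lem:apriori} is used for the \emph{linear} term as well, not only for the quadratic one. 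The $L^2$ recursion on $[t_0,t_k]$ then has growth factor exactly $1$ and forcing $B_i=Ch_i(\Lambda t_ib+\sqrt{t_i}b^2)$. A single application of \cref{lem:recursion}, with $A_i=0$ for $i<k$ and $A_i=\log(1+h_i/t_i)$ for $i\geq k$, gives $t_k^{-1}\|e_0\|_2+\Lambda t_kb+\sqrt{t_k}b^2$ plus the $\Delta$-term. With $t_k\asymp\sqrt{t_0/\Lambda}$, the linear term's cost $\Lambda t_kb$ is already of the target order. So your ``main obstacle'' never arises: no splitting of the error, no exponential moments, and only the $L^4$ a priori bound.

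\textbf{Your route.} It also works, but the steps you left implicit need care.
\begin{enumerate}
\item[(i)] The three contributions must be a genuine superposition: $e=D+E$ with $D_{i+1}=M_iD_i+\Delta_i$ and $E_{i+1}=M_iE_i+h_i\rho_i$. This is possible because $M_i$ depends only on $X_{t_i}$. The noise part $D$ must be run stepwise with the deterministic bound $0\preceq M_i\preceq(1+h_i/t_i)I_d$, which follows from \eqref{eq:psd} and \eqref{eq:lip-lc}. If you unroll $D_k$ through the random products $M_{k-1}\cdots M_{i+1}$, the summands lose the orthogonality of martingale increments. Summing $\|\Delta_i\|_2$ instead only gives an $\eta^{1/2}$-type rate.
\item[(ii)] Since the product is random, you need H\"older. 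For the initial term you need $\|e_0\|_4$, which \eqref{v0-lc4} provides. For the remainder you need either $L^8$ analogues of \cref{lem:apriori}, or just the crude amplification $t_k/t_{i+1}$; the latter already gives $\sum_{i<k}(t_k/t_{i+1})h_it_i^{1/2}b^2\lesssim t_k^{3/2}b^2$. Your sentence bounding the remainder ``in $L^2$ by $\sum_ih_i\|e_i\|_4^2/t_i^{3/2}$'' silently assumes an $O(1)$ deterministic propagation factor.
\item[(iii)] Your exponential-moment claim is true. The bound $\bigl\|\|V_t\|_{op}\bigr\|_p\lesssim p\Lambda$ from \eqref{v-op-bound} gives $\E e^{\lambda\|V_t\|_{op}}=O(1)$ for $\lambda\Lambda$ below a universal constant. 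Jensen with weights $h_i/\sum_{j<k}h_j$ transfers this to the sum, because $\Lambda t_k\leq\sqrt{\Lambda t_0}\lesssim\sqrt c$.
\end{enumerate}
Your two definitions of $k$ also conflict. The one needed is $k=\max\{i:t_i\leq\sqrt{t_0/\Lambda}\}$, and the step $t_0/t_k\lesssim\sqrt{\Lambda t_0}$ uses $t_{k+1}\leq4t_k$.

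What your route buys is exact propagation of $e_0$ through the linearisation. Since the quadratic remainder forces the a priori bound anyway, this gives no improvement in the final rate over the paper's simpler additive treatment.
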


\begin{proof}
For every $i\in\indices$, consider the decomposition \eqref{eq:basic-decomp}. 
Since $R_i$ is $\mcl F_{t_i}$-measurable and $\E[\Delta_i\mid\mcl F_{t_i}]=0$, we have
\ben{\label{eq:pyt}
\|X_{t_{i+1}}-\eul_{t_{i+1}}\|_2^2
=\|R_i\|_2^2+\|\Delta_i\|_2^2.
} 
\eqref{eq:psd} and \eqref{eq:lip-lc} give
\ben{\label{eq:lip-lc-applied}
\|R_i\|_2\leq\bra{1+\frac{h_i}{t_i}}\|X_{t_{i}}-\eul_{t_{i}}\|_2.
}
To obtain a better bound when $t_i$ is small, consider the case $t_i\leq(2^{15}\Lambda\log^2(d+1))^{-1}$ and rewrite $R_i$ as
\ba{
R_i=\bra{1-\frac{h_i}{1-t_i}}(X_{t_{i}}-\eul_{t_{i}})+h_i\frac{m(t_i,X_{t_i})-m(t_i,\eul_{t_i})}{1-t_i}.
}
Since $h_i\leq1-t_i$ and $t_i\leq1/2$,
\ben{\label{r-est-m}
\|R_i\|_2\leq\|X_{t_{i}}-\eul_{t_{i}}\|_2+2h_i\|m(t_i,X_{t_i})-m(t_i,\eul_{t_i})\|_2.
}
For $j=1,\dots,d$, denote by $m_j(t,x)$ the $j$-th component of $m(t,x)$. 
By Taylor's theorem, 
\ba{
m_j(t_i,X_{t_i})-m_j(t_i,\eul_{t_i})
&=\langle\nabla m_j(t_i,X_{t_i}),\,X_{t_i}-\eul_{t_i}\rangle\\
&\quad+\int_0^1(1-\theta)\langle\nabla^2 m_j\bra{t_i,X_{t_i}+\theta(X_{t_i}-\eul_{t_i})},\,(X_{t_i}-\eul_{t_i})^{\otimes2}\rangle\inte\theta.
}
Combining this with \cref{lem:v-hessian} gives
\ba{
\abs{m(t_i,X_{t_i})-m(t_i,\eul_{t_i})}
\lesssim\|\nabla m(t_i,X_{t_i})\|_{op}|X_{t_i}-\eul_{t_i}|+t_i^{-3/2}|X_{t_i}-\eul_{t_i}|^2.
}
Noting that $\nabla m(t_i,X_{t_i})=\Gamma_{t_i}$ by \eqref{eq:m-deriv}, we obtain
\ba{
\|m(t_i,X_{t_i})-m(t_i,\eul_{t_i})\|_2
\lesssim\norm{\|\Gamma_{t_i}\|_{op}}_4\|X_{t_i}-\eul_{t_i}\|_4+t_i^{-3/2}\|X_{t_i}-\eul_{t_i}\|_4^2.
}
Hence, \eqref{v-op-bound} and \cref{lem:apriori} yield
\ba{
\|m(t_i,X_{t_i})-m(t_i,\eul_{t_i})\|_2
&\lesssim \Lambda t_ib
+\sqrt{t_i}b^2,
}
where $b:=|\mu-\mean|+\sqrt d\Lambda\sqrt{t_0}+\Lambda\sqrt{d\log(1/t_0)}\,\eta$. 
Inserting this bound into \eqref{r-est-m} gives
\ben{\label{eq:r-bound-lc}
\|R_i\|_2\leq\|X_{t_{i}}-\eul_{t_{i}}\|_2+Ch_i(\Lambda t_ib+\sqrt{t_i}b^2),
}
where $C>0$ is a universal constant. 

Now, we fix $k\in\{0,1,\dots,N-1\}$ with $t_k\leq(2^{15}\Lambda\log^2(d+1))^{-1}$, which is specified later (see \eqref{somewhere}). 
Then, combining \eqref{eq:lip-lc-applied} and \eqref{eq:r-bound-lc} with \eqref{eq:pyt} gives  
\ba{
\|X_{t_{i+1}}-\eul_{t_{i+1}}\|_2^2
\leq\bra{e^{A_i}\|X_{t_{i}}-\eul_{t_{i}}\|_2+B_i}^2+\|\Delta_i\|_2^2
\quad\text{for all }i\in\indices,
}
where
\ba{
A_i:=\begin{cases}
0 & \text{if }i< k,\\
\log(1+h_i/t_i) & \text{if }i\geq k,
\end{cases}
\qquad
B_i:=\begin{cases}
Ch_i(\Lambda t_ib+\sqrt{t_i}b^2) & \text{if }i< k,\\
0 & \text{if }i\geq k.
\end{cases}
}
Hence, \cref{lem:recursion} gives
\ba{
\|X_{t_{N}}-\eul_{t_{N}}\|_2
\leq e^{\bar A_{N-1}}\|X_{t_{0}}-\eul_{t_{0}}\|_2
+\sum_{i=0}^{N-1}e^{\bar A_{N-1}-\bar A_i}B_i
+\sqrt{\sum_{i=0}^{N-1}e^{2(\bar A_{N-1}-\bar A_i)}\|\Delta_i\|_2^2},
}
where $\bar A_i:=\sum_{j=0}^iA_j$.
Using \cref{lem:lambda-bound-lc}, we conclude
\ba{
\|X_{t_{N}}-\eul_{t_{N}}\|_2
\lesssim t_k^{-1}\|X_{t_{0}}-\eul_{t_{0}}\|_2
+\Lambda t_kb+\sqrt{t_k}b^2
+\sqrt{\sum_{i=0}^{N-1}\frac{\|\Delta_i\|_2^2}{t_{i+1}^2}}.
}
Inserting \eqref{I-bound} and \eqref{II-bound} into the right-hand side, we obtain
\ben{\label{rec-applied-lc}
\|X_{t_{N}}-\eul_{t_{N}}\|_2
\lesssim
\frac{t_0}{t_k}\bra{|\mu-\mean|+\Lambda \sqrt{dt_0}}
+\Lambda t_kb+\sqrt{t_k}b^2
+\bra{\Lambda\sqrt{d\log(1/t_0)}+\sqrt{d\log(1/\delta)}}\eta.
}
Now, let
\ben{\label{somewhere}
k:=\max\{i\in\{0,1,\dots,N\}:t_i\leq\sqrt{t_0/\Lambda}\}.
}
Taking $c=2^{-30}$ in the assumptions of \cref{thm:lc}, we have $t_k\leq\sqrt{t_0/\Lambda}\leq(2^{15}\Lambda\log^2(d+1))^{-1}$. 
Hence,
\ben{\label{sw-applied1}
\frac{t_0}{t_k}\bra{|\mu-\mean|+\Lambda \sqrt{dt_0}}
\lesssim\sqrt{\Lambda t_0}|\mu-\mean|+\Lambda^{3/2}\sqrt dt_0
}
and
\ba{
\Lambda t_kb
&\leq\sqrt{\Lambda t_0}|\mu-\mean|+\Lambda^{3/2}\sqrt{d}t_0+\Lambda^{3/2}\sqrt{dt_0\log(1/t_0)}\,\eta,\\
\sqrt{t_k}b^2
&\leq3\bra{\Lambda^{-1/4}t_0^{1/4}|\mu-\mean|^2+\Lambda^{7/4}dt_0^{5/4}+\Lambda^{7/4}dt_0^{1/4}\log(1/t_0)\eta^2}.
}
Since $t_0\leq(\Lambda d^2)^{-1}$ and $\sqrt{\Lambda d\log(1/t_0)}\eta\leq1$ by assumption, we conclude
\ben{\label{sw-applied2}
\Lambda t_kb+\sqrt{t_k}b^2
\lesssim \sqrt{\Lambda t_0}|\mu-\mean|
+\Lambda^{-1/4}t_0^{1/4}|\mu-\mean|^2
+\Lambda^{3/2}\sqrt{d}t_0
+\Lambda\sqrt{d\log(1/t_0)}\,\eta.
}
Inserting \eqref{sw-applied1} and \eqref{sw-applied2} into \eqref{rec-applied-lc} completes the proof. 
\end{proof}

\begin{proof}[\bf\upshape Proof of \cref{thm:lc}]
The desired result follows from \cref{lem:lc-main}, \eqref{III-bound} and \cref{lem:es}.
\end{proof}

\appendix

\section{Appendix: Additional proofs}\label{sec:appendix}

\subsection{Proof of Proposition \ref{prop:lb}}\label{pr-prop:lb}

Since $\target=N(\mu,\sigma^2I_d)$, we have
\ba{
\nabla\log\pi_t(y)=-\frac{y-\sqrt t\mu}{t\sigma^2+1-t}
\quad\text{for all }0<t<1\text{ and }y\in\mathbb R^d. 
}
Hence, for all $i=0,1,\dots,N-1$,
\ba{
\sqrt{t_{i+1}}\ddpm_{i+1}
&=\frac{t_{i+1}}{\sqrt{t_i}}\bra{\ddpm_i-\beta_i\frac{\ddpm_i-\sqrt{t_i}\mu}{t_i\sigma^2+1-t_i}}+\sqrt{h_i}Z_i
=\frac{g(t_{i+1})}{g(t_i)}\sqrt{t_i}\ddpm_i
+\frac{h_i}{g(t_i)}\mu+\sqrt{h_i}Z_i,
}
where $g(t)=(\sigma^2-1)t+1$. 
Therefore, noting that $\sqrt t_0\ddpm_0\sim N(t_0\mean,t_0I_d)$, we can show by induction that $\sqrt{t_i}\ddpm_i\sim N(\mu_i,\sigma_i^2I_d)$ for $i=0,1,\dots,N$, where $\mu_i\in\mathbb R^d$ and $\sigma_i^2>0$ satisfy the following recursion relations:
\ben{\label{eq:lb-rec}
\mu_{i+1}=\frac{g(t_{i+1})}{g(t_i)}\mu_i+\frac{h_i}{g(t_i)}\mu,
\qquad
\sigma_{i+1}^2=\bra{\frac{g(t_{i+1})}{g(t_i)}}^2\sigma_i^2+h_i.
}
Since $t_N=1$, Proposition 7 in \cite{GiSh84} gives
\[
\wass_2(\ddpm_N,\target)=\sqrt{|\mu_N-\mu|^2+d|\sigma_N-\sigma|^2}
\geq\min\{|\mu_N-\mu|,\,\sqrt d|\sigma_N-\sigma|\}.
\]
Therefore, it remains to prove
\ben{\label{eq:lb-aim}
|\mu_N-\mu|\geq\frac{\sigma^2}{\sigma^2\vee1}t_0|\mean-\mu|,\qquad
|\sigma_{N}-\sigma|\geq\frac{\sigma^4|\sigma^2-1|}{2(\sigma^5\vee1)}\bra{t_0^2+\frac{1-t_0^2}{2(\sigma^2\vee1)}\eta}.
}

First we consider $|\mu_N-\mu|$. 
By \eqref{eq:lb-rec}, for every $i=0,1,\dots,N-1$, 
\ba{
\frac{\mu_{i+1}}{g(t_{i+1})}=\frac{\mu_i}{g(t_i)}+\frac{h_i}{g(t_i)g(t_{i+1})}\mu.
}
Also, a simple computation shows
\ben{\label{eq:g-inv-diff}
\frac{1}{g(t_i)}-\frac{1}{g(t_{i+1})}=(\sigma^2-1)\frac{h_i}{g(t_i)g(t_{i+1})}.
}
Therefore,
\be{
\frac{\mu_{N}}{g(t_{N})}
=\frac{\mu_0}{g(t_0)}+\frac{1}{\sigma^2-1}\bra{\frac{1}{g(t_0)}-\frac{1}{g(t_{N})}}\mu
=\frac{\mu_0}{g(t_0)}+\frac{t_N-t_0}{g(t_0)g(t_N)}\mu.
}
Since $t_N=1$ and $\mu_0=t_0\mean$, we obtain
\[
\mu_N-\mu=\frac{\sigma^2t_0}{g(t_0)}\mean+\bra{\frac{1-t_0}{g(t_0)}-1}\mu
=\frac{\sigma^2t_0}{g(t_0)}(\mean-\mu).
\]
Since $g(t_0)\leq\sigma^2\vee1$, this gives the first inequality in \eqref{eq:lb-aim}.

Next we consider $|\sigma_N-\sigma|$. 
By \eqref{eq:lb-rec}, for every $i=0,1,\dots,N-1$, 
\ben{\label{eq:normal-var}
\frac{\sigma_{i+1}^2}{g(t_{i+1})^2}=\frac{\sigma_{i}^2}{g(t_{i})^2}+\frac{h_i}{g(t_{i+1})^2}.
}
Also, using \eqref{eq:g-inv-diff}, we obtain
\ba{
\frac{h_i}{g(t_{i+1})^2}=\frac{1}{\sigma^2-1}\bra{\frac{1}{g(t_i)}-\frac{1}{g(t_{i+1})}}-(\sigma^2-1)\frac{h_i^2}{g(t_i)g(t_{i+1})^2}.
}
Therefore,
\be{
\frac{\sigma_{N}^2}{g(t_N)^2}=\frac{t_0}{g(t_{0})^2}+\frac{t_N-t_0}{g(t_0)g(t_N)}-(\sigma^2-1)\sum_{i=0}^{N-1}\frac{h_i^2}{g(t_i)g(t_{i+1})^2}.
}
Since $t_N=1$ and
\ba{
\sigma^4\bra{\frac{t_0}{g(t_{0})^2}+\frac{1-t_0}{\sigma^2g(t_0)}}-\sigma^2=-\frac{\sigma^4(\sigma^2-1)t_0^2}{g(t_0)^2},
}
we obtain
\ba{
|\sigma_{N}^2-\sigma^2|=\sigma^4|\sigma^2-1|\bra{\frac{t_0^2}{g(t_0)^2}+\sum_{i=0}^{N-1}\frac{h_i^2}{g(t_i)g(t_{i+1})^2}}.
}
Since $h_i=t_{i+1}\beta_i\geq t_{i+1}\eta$ for all $i=0,1,\dots,N-1$ and $g(t_i)\leq\sigma^2\vee1$ for all $i=0,1,\dots,N$,
\ba{
\sum_{i=0}^{N-1}\frac{h_i^2}{g(t_i)g(t_{i+1})^2}
\geq\frac{\eta}{\sigma^6\vee1}\sum_{i=0}^{N-1}t_{i+1}h_i
\geq\frac{1-t_0^2}{2(\sigma^6\vee1)}\eta.
}
Consequently,
\ba{
|\sigma_{N}^2-\sigma^2|\geq\frac{\sigma^4|\sigma^2-1|}{\sigma^4\vee1}\bra{t_0^2+\frac{1-t_0^2}{2(\sigma^2\vee1)}\eta}.
}
In addition, by \eqref{eq:normal-var},
\ba{
\sigma_N^2=\sigma^2\bra{\frac{t_0}{g(t_0)^2}+\sum_{i=0}^{N-1}\frac{h_i}{g(t_{i+1})^2}}
\leq\frac{\sigma^2}{\sigma^2\wedge1}=\sigma^2\vee1.
}
Since $|\sigma_N-\sigma|=|\sigma_{N}^2-\sigma^2|/(\sigma_N+\sigma)$, we obtain the second inequality in \eqref{eq:lb-aim}.\qed

\subsection{Proof of Proposition \ref{prop:schedule}}\label{pr-prop:schedule}

(a) First, by Jordan's inequality, $t_0\geq 4^{-1}t_1\geq4^{-1}(\frac{2}{\pi}\eta_N)^2\geq16^{-1}\eta_N^2$. 
Next, since $\max_{i\in\indices}\beta_i\leq3/4$ is equivalent to $\max_{i\in\indices}t_{i+1}/t_i\leq4$, it remains to prove $t_{i+1}\leq4t_i$ for $i\in\indices$. 
If $i=0$, this holds by assumption. 
If $i>0$, we have 
$
\sqrt{t_{i+1}}=\sin((i+1)\eta_N)
=\sin(i\eta_N)\cos(\eta_N)+\cos(i\eta_N)\sin(\eta_N)
\leq2\sqrt{t_i}.
$
This completes the proof.

(b) Since 
$
\sin^2(t)
=\frac{1}{2}(1-\cos(2t))
$
for any $t\in\mathbb R$, we have for any $i\in\indices$
\besn{\label{eq:sum-to-prod}
h_i
&\leq-\frac{1}{2}\bra{\cos(2(i+1)\eta_N)-\cos(2i\eta_N)}
=\sin((2i+1)\eta_N)\sin(\eta_N)\\
&=2\sin((i+1/2)\eta_N)\cos((i+1/2)\eta_N)\sin(\eta_N).
}
Since
$
\sin((i+1/2)\eta_N)
\leq\sin((i+1)\eta_N)=\sqrt{t_{i+1}},
$
we conclude 
$
h_i\leq2\sqrt{t_{i+1}}\sin(\eta_N)
\leq2\eta_N\sqrt{t_{i+1}}.
$

(c) Since $s\geq(2N)^{-1}$, we have
\[
\sqrt{1-t_N}=\cos\bra{\frac{\pi}{2(1+s)}}=\sin\bra{\frac{\pi s}{2(1+s)}}\geq\sin(\eta_N/2).
\]
Therefore,
\ba{
\cos((i+1/2)\eta_N)
&=\cos((i+1)\eta_N)\cos(\eta_N/2)+\sin((i+1)\eta_N)\sin(\eta_N/2)\\
&\leq\sqrt{1-t_{i+1}}+\sqrt{1-t_N}\leq2\sqrt{1-t_{i+1}}.
} 
Combining this with \eqref{eq:sum-to-prod} gives
$
h_i\leq4\sqrt{1-t_{i+1}}\sin(\eta_N)\leq4\eta_N\sqrt{1-t_{i+1}}.
$
\qed

\subsection{Proof of Proposition \ref{prop:kl}}\label{pr-prop:kl}

The proof is a simplified version of that of \cite[Theorem 1]{JZL25}. 
The key idea is to construct a denoising procedure that generates the true target distribution via a ``noisy'' probability flow ordinary differential equation (ODE). 
Specifically, we use the construction based on the following ODE.
\begin{lemma}\label{lem:ode}
Let $0<t<t'<1$. Under the assumptions of \cref{prop:kl}, the following ODE admits a unique solution for any $y\in\mathbb R^d$:
\ben{\label{eq:ode}
\begin{cases}
\displaystyle\frac{\inte}{\inte s}T_{t,s}(y)=\frac{\nabla\log f_{s}(T_{t,s}(y))}{2s},\quad s\in(t,t'),\\
T_{t,t}(y)=y.
\end{cases}
}
Moreover, for any $s\in[t,t')$, the push-forward of $\slepian_t\target$ by $T_{t,s}$ coincides with $\slepian_s\target$.
\end{lemma}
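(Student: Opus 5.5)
The plan is to treat the two assertions separately: well-posedness follows from a uniform Lipschitz bound on the vector field, and the push-forward property follows from a continuity equation for $(\slepian_s\target)_s$ together with uniqueness of its solutions. Throughout, fix $0<t<t'<1$ and write $b_s(x):=\nabla\log f_s(x)/(2s)$ for $s\in[t,t']$.

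\emph{Step 1: uniform Lipschitz bound and existence/uniqueness.} Since $f_s=\pi_s/\phi_d$, we have $\nabla^2\log f_s=\nabla^2\log\pi_s+I_d$. The upper bound comes from \eqref{ass:kl}, and the lower bound is the universal estimate \eqref{eq:hess-lb}. Together they give
\[
-\frac{s}{1-s}I_d\preceq\nabla^2\log f_s(x)\preceq\bra{\frac{L_0}{1-s}+1}I_d,\qquad x\in\mathbb R^d .
\]
On $[t,t']$, where $s\ge t>0$ and $1-s\ge1-t'>0$, this yields
\[
\|\nabla b_s\|_{op}\le K:=\frac{1}{2t}\bra{\frac{L_0}{1-t'}+1}.
\]
Since $\pi_s$ is smooth and strictly positive, and jointly smooth in $(s,x)$ by differentiating the Gaussian convolution formula under the integral sign, $b$ is continuous in $(s,x)$. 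It is also globally $K$-Lipschitz in $x$ and has linear growth, $|b_s(x)|\le |b_s(0)|+K|x|$, with $\sup_{s\in[t,t']}|b_s(0)|<\infty$ by continuity. The Cauchy--Lipschitz theorem then gives a unique global solution of \eqref{eq:ode} on $[t,t')$ for every $y$. Moreover, each $T_{t,s}$ is a bi-Lipschitz homeomorphism, and $(s,y)\mapsto T_{t,s}(y)$ is continuous.

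\emph{Step 2: continuity equation for $\pi_s$.} I will show that
\[
\partial_s\pi_s+\diver(\pi_s b_s)=0\quad\text{on }(0,1)\times\mathbb R^d .
\]
The cleanest route is the time change $s=e^{-2\tau}$. Then $\slepian_{e^{-2\tau}}\target$ is the law at time $\tau$ of the OU process $\mathrm dY=-Y\,\mathrm d\tau+\sqrt2\,\mathrm dB$ started from $\target$. Its density satisfies the Fokker--Planck equation
\[
\partial_\tau p=\diver(xp+\nabla p)=\diver\bra{p\nabla\log(p/\phi_d)}.
\]
Since $\mathrm ds/\mathrm d\tau=-2s$, this becomes
\[
\partial_s\pi_s=-\frac{1}{2s}\diver(\pi_s\nabla\log f_s),
\]
which is the claimed equation. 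To avoid invoking OU theory, I would instead verify the identity directly from the explicit formula for $\pi_s$ by differentiating the Gaussian kernel under the integral. This is a routine calculation, legitimate because $\target$ has a finite second moment.

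\emph{Step 3: identification via uniqueness.} Set $\rho_s:=(T_{t,s})_\#\slepian_t\target$. By the standard characteristic argument, $(\rho_s)_{s\in[t,t')}$ is a weakly continuous solution of the same continuity equation with velocity $b_s$: test against $\varphi\in C_c^\infty$ and differentiate $\int\varphi(T_{t,s}(y))\,\slepian_t\target(\mathrm dy)$. Both families share the initial datum $\slepian_t\target$. Both also satisfy the integrability condition
\[
\int_t^{s}\int|b_r|\,\mathrm d\mu_r\,\mathrm dr<\infty ,
\]
which holds for $\mu_r=\rho_r$ and for $\mu_r=\slepian_r\target$ by the linear growth of $b$ and uniformly bounded second moments. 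The uniqueness theorem for continuity equations with locally Lipschitz velocity fields (e.g.\ Ambrosio--Gigli--Savar\'e, Prop.~8.1.7--8.1.8) then gives $\rho_s=\slepian_s\target$ for all $s\in[t,t')$.

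The main obstacle is Step 2 together with the integrability needed in Step 3: justifying differentiation in $s$ under the integral and the required moment bounds. I expect both to follow from Gaussian tail estimates and $\int|x|^2\,\target(\mathrm dx)<\infty$. The uniform Lipschitz bound in Step 1 is what makes everything else standard.
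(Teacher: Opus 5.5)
Your proposal is correct and follows the paper's proof essentially step for step. The two-sided Hessian bound from \eqref{eq:hess-lb} and \eqref{ass:kl} gives a uniformly Lipschitz, linearly growing field on $[t,t']$ and hence a unique global flow; the push-forward family is then characterized as the unique solution of the continuity equation; and $(\pi_s)$ is shown to solve that equation via the OU Fokker--Planck identity \eqref{eq:fp} combined with $\nabla\log f_s(x)=x+\nabla\log\pi_s(x)$.

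The differences are cosmetic. You cite Ambrosio--Gigli--Savar\'e rather than the paper's reference, and you check explicitly the integrability condition $\int\int|b_r|\,\mathrm{d}\mu_r\,\mathrm{d}r<\infty$ that the uniqueness theorem needs. Your Lipschitz constant $\frac{1}{2s}(\frac{L_0}{1-s}+1)$ also keeps the $+I_d$ coming from $\nabla^2\log f_s=\nabla^2\log\pi_s+I_d$, which the paper's stated bound omits; this is harmless either way.
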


\begin{proof}
By \eqref{eq:hess-lb} and \eqref{ass:kl}, $s^{-1}\|\nabla^2\log f_s(y)\|_{op}\leq L_0\{s(1-s)\}^{-1}$ for all $0<s<1$ and $y\in\mathbb R^d$. 
In particular, $\nabla\log f_s$ is Lipschitz continuous and has linear growth uniformly in $s\in[t,t']$; hence, by Theorem 16.2 in \cite{ABS24}, the ODE \eqref{eq:ode} has a unique solution for all $y\in\mathbb R^d$. 
Let $\nu_s$ be the push-forward  of $\slepian_t\target$ by $T_{t,s}$. 
Then, by Propositions 16.4 and 16.6 in \cite{ABS24}, $(\nu_s)_{s\in[t,t')}$ is the unique solution to the following equation in the distributional sense:
\be{
\begin{cases}
\nu_t=\slepian_t\target,\\
\frac{\inte}{\inte s}\nu_s+\diver\bra{\frac{\nabla\log f_{s}}{2s}\nu_s}=0.
\end{cases}
}
Therefore, it remains to show that $(\slepian_s\target)_{s\in[t,t')}$ solves the above equation. 
Since $\slepian_s\target$ has density $\pi_s$ for every $s$, it suffices to prove
\ben{\label{eq:ce}
\frac{\inte}{\inte s}\pi_s+\diver\bra{\frac{\nabla\log f_{s}}{2s}\pi_s}=0.
}
Using the fact that $(\pi_{e^{-2r}})_{r>0}$ is a solution to the Fokker--Planck equation associated with an OU process, we obtain
\ben{\label{eq:fp}
\frac{\inte}{\inte s}\pi_s(x)=-\frac{1}{2s}\bra{\diver(x\pi_s(x))+\Delta\pi_s(x)},
}
where $\Delta$ denotes the Laplacian. 
Since $\nabla\log f_s(x)=x+\pi_s(x)^{-1}\nabla\pi_s(x)$, we obtain \eqref{eq:ce}.
\end{proof}

We construct the denoising procedure that generates $\slepian_{t_N}\target$ as follows. 
For every $i=0,1,\dots,N-1$, set $\tau_i:=t_{i+1}(1-\beta_i)^{-1}=t_{i+1}^2/t_i$. 
Observe that
\ben{\label{tau-t}
\tau_i-t_i=\frac{t_{i+1}^2-t_i^2}{t_i}=\bra{\frac{t_{i+1}}{t_i}+1}h_i\leq3h_i
}
and
\ba{
1-\frac{1-\tau_i}{1-t_{i}}
=\frac{\tau_i-t_{i}}{1-t_{i}}
\leq\frac{3h_i}{1-t_{i}}
\leq3\eta,
}
where we used \eqref{eq:sampling2}. 
Since $\eta<1/6$, we have
\ben{\label{tau-well}
\frac{1-\tau_i}{1-t_{i}}>\frac{1}{2}.
}
In particular, $\tau_i<1$, and thus the map $T_{t_i,s}$ in \cref{lem:ode} is well-defined for $s\in[t_i,\tau_i]$. 
Using these maps, we define random vectors $\ddpmt_i$ $(i=0,1,\dots,N)$ by the following recurrence formula: $\ddpmt_0\sim\slepian_{t_0}\target$ and
\ba{
\ddpmt_{i+1}
&=\sqrt{1-\beta_i}T_{t_i,\tau_{i}}(\ddpmt_i)+\sqrt{\beta_i}Z_i.
}
Noting $(1-\beta_i)\tau_{i}=t_{i+1}$, we have $\ddpmt_{i+1}\sim\slepian_{t_{i+1}}\target$ by \cref{lem:ode}. 
In particular, $\kl(\slepian_{t_N}\target\mid P^{\ddpm_N})=\kl(P^{\ddpmt_N}\mid P^{\ddpm_N})$. 
Hence, by \cite[Theorem 2.16(b) and (c)]{PoWu25}, 
\be{
\kl(\slepian_{t_N}\target\mid P^{\ddpm_N})
\leq\kl(P^{\ddpmt_0}\mid P^{\ddpm_0})+\sum_{i=0}^{N-1}\int_{\mathbb R^d}\kl(\mcl L(\ddpmt_{i+1}\mid \ddpmt_i=y)\mid\mcl L(\ddpm_{i+1}\mid \ddpm_i=y))P^{\ddpmt_i}(\mathrm{d}y),
}
where $\mcl L(\xi_2\mid\xi_1=y)$ denotes the conditional law of $\xi_2$ given $\xi_1=y$ for two random vectors $\xi_1$ and $\xi_2$. 
For $i\in\indices$, integration by parts gives
\ba{
\frac{T_{t_i,\tau_i}(\ddpmt_i)}{\sqrt{\tau_i}}-\frac{\ddpmt_i}{\sqrt{t_i}}
&=\int_{t_i}^{\tau_i}\frac{1}{\sqrt s}\frac{\inte}{\inte s}T_{t_i,s}(\ddpmt_i)\inte s-\int_{t_i}^{\tau_i}\frac{T_{t_i,s}(\ddpmt_i)}{2s^{3/2}}\inte s
=\int_{t_i}^{\tau_i}\frac{\nabla\log \pi_{s}(T_{t_i,s}(\ddpmt_i))}{2s^{3/2}}\inte s.
}
Therefore,
\ba{
\ddpmt_{i+1}
&=\sqrt{(1-\beta_i)\tau_i}\bra{\frac{\ddpmt_i}{\sqrt{t_i}}+\int_{t_i}^{\tau_i}\frac{\nabla\log \pi_{s}(T_{t_i,s}(\ddpmt_i))}{2s^{3/2}}\inte s}+\sqrt{\beta_i}Z_i\\
&=\frac{\ddpmt_i}{\sqrt{1-\beta_i}}+\sqrt{t_{i+1}}\int_{t_i}^{\tau_i}\frac{\nabla\log \pi_{s}(T_{t_i,s}(\ddpmt_i))}{2s^{3/2}}\inte s+\sqrt{\beta_i}Z_i.
}
Since $\ddpmt_i$ is independent of $Z_i$, for any $y\in\mathbb R^d$,
\[
\mcl L(\ddpmt_{i+1}\mid \ddpmt_i=y)=N\bra{\frac{y}{\sqrt{1-\beta_i}}+\sqrt{t_{i+1}}\int_{t_i}^{\tau_i}\frac{\nabla\log \pi_{s}(T_{t_i,s}(y))}{2s^{3/2}}\inte s,\,\beta_iI_d}.
\]
Meanwhile, by definition,
\[
\mcl L(\ddpm_{i+1}\mid \ddpm_i=y)=N\bra{\frac{y}{\sqrt{1-\beta_i}}+\frac{\beta_i}{\sqrt{1-\beta_i}}\scored_{i}(y),\,\beta_iI_d}.
\]
Hence, Eq.(2.8) in \cite{PoWu25} gives
\ba{
&\kl(\mcl L(\ddpmt_{i+1}\mid \ddpmt_i=y)\mid\mcl L(\ddpm_{i+1}\mid \ddpm_i=y))\\
&=\frac{1}{2\beta_i}\abs{\frac{\beta_i}{\sqrt{1-\beta_i}} \scored_{i}(y)-\sqrt{t_{i+1}}\int_{t_i}^{\tau_i}\frac{\nabla\log \pi_{s}(T_{t_i,s}(y))}{2s^{3/2}}\inte s}^2.
}
Consequently, 
\besn{\label{eq:kl-chain}
&\kl(\slepian_{t_N}\target\mid P^{\ddpm_N})\\
&\leq\kl(P^{\ddpmt_0}\mid P^{\ddpm_0})+
\sum_{i=0}^{N-1}\frac{1}{2\beta_i}\norm{\frac{\beta_i}{\sqrt{1-\beta_i}}\scored_{i}(\ddpmt_i)-\sqrt{t_{i+1}}\int_{t_i}^{\tau_i}\frac{\nabla\log \pi_{s}(T_{t_i,s}(\ddpmt_i))}{2s^{3/2}}\inte s}_2^2.
}
The first term on the right-hand side is bounded as follows.
\begin{lemma}\label{lem:kl-init}
We have
\[
\kl(P^{\ddpmt_0}\mid P^{\ddpm_0})\leq \frac{t_0}{2}\bra{|\mu-\mean|^2+\trace(\Sigma)+d\frac{t_0}{2(1-t_0)^2}}.
\]
\end{lemma}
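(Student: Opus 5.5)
We need to bound $\kl(\slepian_{t_0}\target\mid N(\sqrt{t_0}\mean,I_d))$, since $\ddpmt_0\sim\slepian_{t_0}\target$ and $\ddpm_0\sim N(\sqrt{t_0}\mean,I_d)$. The plan is to express this divergence through the density $f_{t_0}$ of $\slepian_{t_0}\target$ relative to $N(0,I_d)$, so that the F\"ollmer-process quantities from \cref{sec:follmer} become available. Write $\gamma$ for $N(0,I_d)$ and $\gamma_a$ for $N(a,I_d)$ with $a:=\sqrt{t_0}\mean$. Then
\[
\kl(\slepian_{t_0}\target\mid\gamma_a)=\kl(\slepian_{t_0}\target\mid\gamma)+\int\log\frac{\inte\gamma}{\inte\gamma_a}\inte\slepian_{t_0}\target
=\kl(\slepian_{t_0}\target\mid\gamma)-\sqrt{t_0}\,\mean\cdot\mu+\frac{t_0}{2}|\mean|^2,
\]
because the mean of $\slepian_{t_0}\target$ is $\sqrt{t_0}\mu$. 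The problem is therefore reduced to bounding $\kl(\slepian_{t_0}\target\mid\gamma)$.

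For that term I would use the F\"ollmer process. The law of $X$ on path space has density $\frac{\inte\target}{\inte\gamma}(X_1)$ with respect to Wiener measure, and $X_t/\sqrt t\sim\slepian_t\target$. By Girsanov (or by the entropy formula for F\"ollmer drifts, e.g.\ \cite{ELS20,Ko26ddpm}), the law of $X_{t_0}$ relative to $N(0,t_0I_d)$ satisfies
\[
\kl(\slepian_{t_0}\target\mid\gamma)=\kl(P^{X_{t_0}}\mid N(0,t_0I_d))=\frac12\int_0^{t_0}\E[|v_r|^2]\inte r.
\]
To justify this without extra assumptions, note that $P^{X_{t_0}}$ has density $f_{t_0}(x/\sqrt{t_0})$ with respect to $N(0,t_0I_d)$, and $\log f_{t}(X_t/\sqrt t)$ is the martingale-minus-drift process whose expectation equals the energy. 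Alternatively, one can differentiate $t\mapsto\kl(\slepian_t\target\mid\gamma)$ directly using the heat flow.

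Next I would insert \cref{lem:v2}:
\[
\E[|v_r|^2]=\frac{d-\E[\trace(\Gamma_r)]}{1-r}+\trace(\Sigma)-d\le\frac{d}{1-r}-d+\trace(\Sigma)+|\mu|^2-|\mu|^2 .
\]
Here $\E|v_r|^2$ includes the mean, so the formula must be read with $\trace(\Sigma)$ replaced by the second moment $\E|X_1|^2=\trace(\Sigma)+|\mu|^2$. I would check this against $v_r\to\mu$; the natural version is $\E|v_r|^2\le\frac{rd}{1-r}+\E|X_1|^2$, after dropping $-\E\trace(\Gamma_r)\le0$. Integrating, $\frac{d}{1-r}-d=\frac{dr}{1-r}\le\frac{dr}{(1-t_0)^2}$, so
\[
\frac12\int_0^{t_0}\E|v_r|^2\inte r\le\frac{t_0}{2}\bigl(|\mu|^2+\trace(\Sigma)\bigr)+\frac{d t_0^2}{4(1-t_0)^2}.
\]
Adding the shift terms $-t_0\mean\cdot\mu+\frac{t_0}{2}|\mean|^2$ turns $\frac{t_0}{2}|\mu|^2$ into $\frac{t_0}{2}|\mu-\mean|^2$, with a $\sqrt{t_0}$ versus $t_0$ discrepancy to be tracked. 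This gives the claim.

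The main obstacle is the scaling mismatch in that last step. The mean cross term computed above has $\sqrt{t_0}$, whereas completing the square needs $t_0$. I therefore expect to redo the reduction directly at the level of $X_{t_0}$: compare $P^{X_{t_0}}$ with the law of $t_0\mean+W_{t_0}$ (that is, $\eul_{t_0}$), which rescales to $N(\sqrt{t_0}\mean,I_d)$. Then apply Girsanov with drift $v_r-\mean$, giving $\frac12\int_0^{t_0}\E|v_r-\mean|^2\inte r$. Expanding, and using $\E v_r=\mu$ from the martingale property, yields $|\mu-\mean|^2+\E|v_r-\mu|^2$. Together with $\E|v_r-\mu|^2\le\trace(\Sigma)+\frac{dr}{1-r}$ from \cref{lem:v2}, this gives exactly the stated bound. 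The data-processing inequality handles the passage from path space to time $t_0$.
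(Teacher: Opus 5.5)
Your final argument is correct, but it takes a different route from the paper.

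\textbf{The paper's route.} The paper never leaves the time-$t_0$ marginals. It writes $\ddpmt_0=\sqrt{t_0}\xi+\sqrt{1-t_0}Z$ and $\ddpm_0=\sqrt{t_0}\mean+Z$. It then uses convexity of KL (conditioning on $\xi$) to bound the divergence by $\E[\kl(N(\sqrt{t_0}\xi,(1-t_0)I_d)\mid N(\sqrt{t_0}\mean,I_d))]=\frac12\E[t_0|\xi-\mean|^2-d\log(1-t_0)-dt_0]$. Finally it bounds $-\log(1-t_0)-t_0\le t_0^2/(2(1-t_0)^2)$.

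\textbf{Your route.} You lift to path space and use Girsanov plus data processing, which gives $\frac{t_0}{2}|\mu-\mean|^2+\frac12\int_0^{t_0}\E|v_r-\mu|^2\inte r$. You then apply the second-moment formula.

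\textbf{How the two compare.} Once you discard $-\frac12\int_0^{t_0}\E[\trace(\Gamma_r)]/(1-r)\inte r$, you land on exactly the paper's intermediate expression, since $\int_0^{t_0}\frac{r}{1-r}\inte r=-\log(1-t_0)-t_0$. This is no coincidence. Conditioning on $\xi=X_1$ replaces $v_r$ by the bridge drift $(X_1-X_r)/(1-r)$, and its excess energy over $v_r$ is exactly $\E|X_1-m_r|^2/(1-r)^2=\E[\trace(\Gamma_r)]/(1-r)$.

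The paper's argument is more elementary: it needs no stochastic calculus, no finite-energy justification, and does not rely on \cref{lem:v2}. Yours fits the F\"ollmer viewpoint. Before the $\Gamma_r$ term is dropped it is sharper; for instance it is exact for $\target=N(\mu,I_d)$, where $v\equiv\mu$ and the stated bound is off by about $t_0d/2$.

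\textbf{Rigor of your steps.}
\begin{itemize}
\item For the upper bound you only need the inequality $\kl\le\frac12\E\int_0^{t_0}|v_r-\mean|^2\inte r$. This holds for adapted finite-energy drifts, for example via Lehec's entropy inequality after subtracting the linear path $t\mean$, or by direct Girsanov. Finite energy on $[0,t_0]$ follows from the moment bound.
\item You are right that \cref{lem:v2} as printed omits $|\mu|^2$; checking $\target=\delta_\mu$ confirms this. The correct identity has $\E|X_1|^2$ in place of $\trace(\Sigma)$. The bound $\E|v_r-\mu|^2\le\trace(\Sigma)+dr/(1-r)$ that you use holds under either reading.
\end{itemize}

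\textbf{The ``scaling mismatch'' is not real.} It comes from an arithmetic slip. With $a=\sqrt{t_0}\mean$ and $\E[\ddpmt_0]=\sqrt{t_0}\mu$, the cross term is $-a\cdot\sqrt{t_0}\mu=-t_0\mean\cdot\mu$, not $-\sqrt{t_0}\mean\cdot\mu$. So your first reduction, via $\kl(\slepian_{t_0}\target\mid N(0,I_d))=\frac12\int_0^{t_0}\E|v_r|^2\inte r$ with $\E|v_r|^2\le\frac{dr}{1-r}+\trace(\Sigma)+|\mu|^2$, already closes. It is the same computation as your path-space version.
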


\begin{proof}
We may write $\ddpmt_0=\sqrt{t_0}\xi+\sqrt{1-t_0}Z$ and $\ddpm_0=\sqrt{t_0}\mean+Z$, where $\xi\sim\target$ and $Z\sim N(0,I_d)$ are independent. 
Then, conditional on $\xi$, we have $\ddpmt_0\sim N(\sqrt{t_0}\xi,(1-t_0)I_d)$ and $\ddpm_0\sim N(\sqrt{t_0}\mean,I_d)$. 
Hence, by Theorem 2.15 and Eq.(2.8) in \cite{PoWu25},
\ba{
\kl(P^{\ddpmt_0}\mid P^{\ddpm_0})
&\leq\frac{1}{2}\E\sbra{t_0|\xi-\mean|^2-d\log(1-t_0)-dt_0}\\
&=\frac{1}{2}\E\sbra{t_0|\xi-\mean|^2+dt_0^2\int_0^1\frac{1-\theta}{(1-t_0\theta)^2}\inte\theta}
\leq\frac{t_0}{2}\bra{|\mu-\mean|^2+\trace(\Sigma)+d\frac{t_0}{2(1-t_0)^2}}.
}
This completes the proof.
%
\end{proof}

It remains to bound the second term on the right-hand side of \eqref{eq:kl-chain}. 
We begin by an auxiliary lemma.
\begin{lemma}\label{lemma:yu}
Let $Y$ be a centered random vectors in $\mathbb R^d$ such that $\E[|Y|^2]<\infty$. Also, let $U$ be a square-integrable random variable. Then, $|\E[UY]|^2\leq\Var[U]\|\Cov[Y]\|_{op}.$
\end{lemma}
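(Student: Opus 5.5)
Since $Y$ is centered, we have $\E[UY]=\E[(U-\E[U])Y]$, so we may replace $U$ by its centered version $\tilde U:=U-\E[U]$ without changing the left-hand side. The idea is to test $\E[\tilde UY]$ against an arbitrary unit vector and then apply Cauchy--Schwarz in $L^2$.

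If $\E[UY]=0$ there is nothing to prove. Otherwise set $u:=\E[UY]/|\E[UY]|$, a unit vector in $\mathbb R^d$. Then
\[
|\E[UY]|=u\cdot\E[\tilde UY]=\E[\tilde U\,(u\cdot Y)]\leq\sqrt{\E[\tilde U^2]}\sqrt{\E[(u\cdot Y)^2]}.
\]
The first factor is $\sqrt{\Var[U]}$. For the second, because $Y$ is centered we have $\E[(u\cdot Y)^2]=u^\top\Cov[Y]u\leq\|\Cov[Y]\|_{op}$. Squaring gives the claim.

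Integrability is not an issue: $\tilde UY$ is integrable because both factors are square-integrable. There is no real obstacle here. The only point needing care is the centering of $U$, which is exactly what makes the variance, rather than the second moment, appear on the right-hand side.
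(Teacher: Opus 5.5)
Your proof is correct and follows essentially the same argument as the paper. Both center $U$ using $\E[Y]=0$, test $\E[UY]$ against a unit vector, and apply the Schwarz inequality with $\E[(u\cdot Y)^2]=u^\top\Cov[Y]u\leq\|\Cov[Y]\|_{op}$; the only cosmetic difference is that you fix the maximizing direction $u=\E[UY]/|\E[UY]|$, whereas the paper takes the supremum over all unit vectors.
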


\begin{proof}
Since $|\E[UY]|=\sup_{a\in\mathbb R^d:|a|=1}a\cdot\E[UY]$, it suffices to prove $|a\cdot\E[UY]|^2\leq\Var[U]\|\Cov[Y]\|_{op}$ for every unit vector $a\in\mathbb R^d$. 
Since $\E[Y]=0$, we have $a\cdot\E[UY]=a\cdot\E[(U-\E[U])Y]$. 
Hence, by the Schwarz inequality,
$
|a\cdot\E[UY]|^2\leq\Var[U]\E[(a\cdot Y)^2]=\Var[U]\langle\Cov[Y],\,a^{\otimes2}\rangle
\leq\Var[U]\|\Cov[Y]\|_{op}.
$
\end{proof}

The following lemma bounds the discretization error of the second term on the right-hand side of \eqref{eq:kl-chain}.
\begin{lemma}\label{lem:kl-main-0}
Under the assumptions of \cref{prop:kl}, for any $i=0,1,\dots,N-1$ and $s\in[t_i,\tau_i]$,
\ben{\label{eq:kl-main-0}
\|\nabla\log \pi_{s}(T_{t_i,s}(\ddpmt_i))-\nabla\log \pi_{t_i}(\ddpmt_i)\|_2
\leq\frac{36\sqrt2L_0\sqrt d}{t_i(1-t_i)^{3/2}}h_i.
}
\end{lemma}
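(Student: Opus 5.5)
Write $Y^s:=T_{t_i,s}(\ddpmt_i)$, so $Y^{t_i}=\ddpmt_i$ and, by \cref{lem:ode}, $Y^s\sim\slepian_s\target$ for every $s\in[t_i,\tau_i]$. The plan is to differentiate $u(s):=\nabla\log\pi_s(Y^s)$ along the flow, bound $\|\tfrac{\inte}{\inte s}u(s)\|_2$ uniformly on $[t_i,\tau_i]$ by something of order $L_0\sqrt d/(t_i(1-t_i)^{3/2})$, and integrate over an interval of length $\tau_i-t_i\leq3h_i$ (by \eqref{tau-t}).

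First I will compute the derivative. The chain rule and \eqref{eq:ode}, together with $\nabla\log\pi_s(y)=\nabla\log f_s(y)-y$, give
\[
\frac{\inte}{\inte s}u(s)=(\partial_s\nabla\log\pi_s)(Y^s)+\nabla^2\log\pi_s(Y^s)\,\frac{\nabla\log f_s(Y^s)}{2s}.
\]
For the first term I will use \eqref{eq:fp}. Differentiating $\log\pi_s$ in $s$ gives
\[
\partial_s\log\pi_s=-\tfrac{1}{2s}\bigl(d+x\cdot\nabla\log\pi_s+\Delta\log\pi_s+|\nabla\log\pi_s|^2\bigr),
\]
and taking the gradient expresses $\partial_s\nabla\log\pi_s$ through $\nabla\log\pi_s$, $\nabla^2\log\pi_s$ and $\nabla\Delta\log\pi_s$. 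The $x$-terms should then combine with the transport term. The cleanest route is to note that the full derivative of the score along the probability flow equals $-\tfrac{1}{2s}\bigl(\nabla\Delta\log\pi_s+\dots\bigr)$ plus terms linear in the score with coefficients $\nabla^2\log\pi_s/s$.

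Rather than fight third derivatives pointwise, I will take $L^2$ norms and use the stationarity $Y^s\sim\slepian_s\target$. The term $\nabla^2\log\pi_s\,\nabla\log\pi_s$ is handled by the operator bound $\|\nabla^2\log\pi_s\|_{op}\lesssim L_0/(1-s)$, which follows from \eqref{ass:kl} and \eqref{eq:hess-lb}, combined with $\E|\nabla\log\pi_s(Y^s)|^2\leq d/(1-s)$. The same operator bound applies with $\nabla\log f_s$ in place of $\nabla\log\pi_s$, since $\nabla\log f_s(Y^s)$ has second moment controlled via \cref{lem:v2}, namely $\E|\nabla\log f_s|^2=s\,\E|v_s|^2$, after pulling the appropriate factor of $s$; this is where the $t_i^{-1}$ comes from. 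For the third-derivative term $\nabla\Delta\log\pi_s$, I will integrate by parts against $\pi_s$:
\[
\E|\nabla\Delta\log\pi_s|^2\text{-type quantities}\;\longrightarrow\;\E\|\nabla^2\log\pi_s\|_F^2\cdot(\text{operator bound}),
\]
using the identity $\E[\nabla\Delta\log\pi\cdot w]=-\E[\Delta\log\pi\,(\diver w+w\cdot\nabla\log\pi)]$. Alternatively, and more safely, I can avoid third derivatives altogether by writing the score as the conditional expectation $\nabla\log\pi_s(y)=(\sqrt s\,m-y)/(1-s)$ from \eqref{v-formula}. Its time derivative along the flow then becomes a conditional covariance expression, bounded by $\trace V$ and $\|V\|_{op}/(1-s)\lesssim L_0$, with $\E\trace V_s\lesssim d(1-s)$ from \cref{lem:v2}.

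Finally I will collect the pieces. Each term should be $\lesssim L_0\sqrt d/(s(1-s)^{3/2})$ in $L^2$, and for $s\in[t_i,\tau_i]$ we have $s\geq t_i$ and $1-s\geq(1-t_i)/2$ by \eqref{tau-well}. Minkowski's integral inequality then gives $\|u(\tau)-u(t_i)\|_2\leq\int_{t_i}^{\tau}\|\dot u\|_2\leq3h_i\cdot 2^{3/2}\cdot C L_0\sqrt d/(t_i(1-t_i)^{3/2})$, matching the constant $36\sqrt2$ provided $C\leq6$. The main obstacle is the derivative computation: handling the $\partial_s$ term without third-derivative bounds, for which I will commit to the conditional-moment (Föllmer/stochastic localization) representation.
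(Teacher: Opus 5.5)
Your outer skeleton is the paper's: differentiate $u(s)$ along the flow, get a pointwise $L^2$ bound of order $L_0\sqrt d/(s(1-s)^{3/2})$, integrate over an interval of length at most $3h_i$ by \eqref{tau-t}, and use \eqref{tau-well}. But that pointwise bound is the whole content of the lemma, you do not establish it, and the route you commit to cannot. Once the $y$-dependent terms cancel (the term $-\frac{1}{2s}\nabla^2\log\pi_s(y)\,y$ produced by $\partial_s\nabla\log\pi_s$ through \eqref{eq:fp} cancels the $y$-part of the transport term), the derivative is \eqref{eq:kl-main-integrand}, i.e.
\[
\frac{1}{2s}\bigl(\nabla^2\log\pi_s-I_d\bigr)\nabla\log\pi_s-\frac{1}{2s}\nabla\Bigl(\frac{\Delta\pi_s}{\pi_s}\Bigr).
\]
By \eqref{eq:pi-laplace}, the last piece contains the third-order conditional moment $\E[|Z|^2(Z-\E[Z\mid\xi_s])\mid\xi_s]/(1-s)^{3/2}$. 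In your conditional-mean parametrization this is $\partial_\lambda m=-\frac12\Cov_{Q}(x,|x|^2)$, which appears because the posterior's quadratic tilt $\lambda=s/(1-s)$ moves with $s$. It is not a second-order quantity, and nothing in the hypotheses of \cref{prop:kl} bounds it by $\trace V$ and $\|V\|_{op}$. So your claim that the derivative is ``a conditional covariance expression, bounded by $\trace V$ and $\|V\|_{op}/(1-s)$'' fails exactly at the term that matters. The integration-by-parts alternative does not help: moving one derivative in $\E|\nabla\Delta\log\pi_s|^2$ produces $\Delta^2\log\pi_s$, a fourth-order term, not $\E\|\nabla^2\log\pi_s\|_F^2$ times an operator bound.

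The missing idea is to write this term through the Gaussian noise $Z$ in $\xi_s=\sqrt s\xi+\sqrt{1-s}Z$ and apply \cref{lemma:yu} conditionally:
\[
\bigl|\E[|Z|^2(Z-\E[Z\mid\xi_s])\mid\xi_s]\bigr|^2\le\Var[|Z|^2\mid\xi_s]\,\|\Cov[Z\mid\xi_s]\|_{op}.
\]
Here $\|\Cov[Z\mid\xi_s]\|_{op}\le L_0+1\le 2L_0$ by \eqref{eq:tweedie} and \eqref{ass:kl}. Then the law of total variance gives $\E[\Var[|Z|^2\mid\xi_s]]\le\Var[|Z|^2]=2d$, because $Z$ is unconditionally standard Gaussian. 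This yields a contribution of order $\sqrt{L_0d}/(s(1-s)^{3/2})$ using only second moments of $\target$. Combined with $\|\nabla^2\log\pi_s\|_{op}\le L_0/(1-s)$ and $\|\E[Z\mid\xi_s]\|_2\le\sqrt d$ for the remaining terms, it gives the pointwise bound $6L_0\sqrt d/(s(1-s)^{3/2})$.

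Separately, do not bound the transport term $\nabla^2\log\pi_s\,\nabla\log f_s/(2s)$ on its own via $\E|\nabla\log f_s(Y^s)|^2=s\E|v_s|^2$. That brings in second-moment quantities of $\target$ such as $\trace(\Sigma)$, which are absent from \eqref{eq:kl-main-0}. Only the cancelled form above can give a bound of the stated shape.
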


\begin{proof}
For any $0<s<1$ and $y\in\mathbb R^d$, set $F(s,y)=\nabla\log \pi_{s}(y)$. 
By the fundamental theorem of calculus,
\ben{\label{kl-ftc}
\nabla\log \pi_{\tau}(T_{t_i,\tau}(\ddpmt_i))-\nabla\log \pi_{t_i}(\ddpmt_i)
=\int_{t_i}^\tau\bra{\frac{\partial F}{\partial s}(s,T_{t_i,s}(\ddpmt_i))+\nabla F(s,T_{t_i,s}(\ddpmt_i))\frac{\inte}{\inte s}T_{t_i,s}(\ddpmt_i)}\inte s.
}
For any $s\in[t_i,\tau]$ and $y\in\mathbb R^d$, we have
\ba{
\frac{\partial F}{\partial s}(s,y)
=\frac{1}{\pi_s(y)}\frac{\inte}{\inte s}\nabla\pi_s(y)-\frac{\nabla\pi_s(y)}{\pi_s(y)^2}\frac{\inte}{\inte s}\pi_s(y)
=\frac{1}{\pi_s(y)}\nabla\bra{\frac{\inte}{\inte s}\pi_s}(y)-\frac{\nabla\pi_s(y)}{\pi_s(y)^2}\frac{\inte}{\inte s}\pi_s(y).
}
Inserting \eqref{eq:fp} into this identity, we obtain
\ba{
\frac{\partial F}{\partial s}(s,y)
&=-\frac{1}{2s}\bra{d\frac{\nabla\pi_s(y)}{\pi_s(y)}+\frac{\nabla\pi_s(y)}{\pi_s(y)}+\frac{\nabla^2\pi_s(y)}{\pi_s(y)}y+\frac{\nabla(\Delta\pi_s)(y)}{\pi_s(y)}}\\
&\quad+\frac{\nabla\pi_s(y)}{2s\pi_s(y)}\bra{d+y\cdot\frac{\nabla\pi_s(y)}{\pi_s(y)}+\frac{\Delta\pi_s(y)}{\pi_s(y)}}\\
&=-\frac{1}{2s}\bra{\frac{\nabla\pi_s(y)}{\pi_s(y)}+\frac{\nabla^2\pi_s(y)}{\pi_s(y)}y+\frac{(\nabla\Delta\pi_s)(y)}{\pi_s(y)}}
+\frac{\nabla\pi_s(y)^{\otimes2}}{2s\pi_s(y)^2}y+\frac{\nabla\pi_s(y)}{2s\pi_s(y)}\frac{\Delta\pi_s(y)}{\pi_s(y)}.
}
Meanwhile, by \eqref{eq:ode},
\ba{
\nabla F(s,y)\frac{\inte}{\inte s}T_{t_i,s}(y)
&=\nabla^2\log\pi_s(y)\frac{\nabla\log f_s(y)}{2s}\\
&=\bra{\frac{\nabla^2\pi_s(y)}{\pi_s(y)}-\frac{\nabla\pi_s(y)^{\otimes2}}{\pi_s(y)^2}}\frac{y}{2s}
+\nabla^2\log\pi_s(y)\frac{\nabla\pi_s(y)}{2s\pi_s(y)}.
}
Consequently,
\besn{\label{eq:kl-main-integrand}
&\frac{\partial F}{\partial s}(s,y)+\nabla F(s,y)\frac{\inte}{\inte s}T_{t_i,s}(y)\\
&=\frac{1}{2s}\bra{\nabla^2\log\pi_s(y)-I_d}\frac{\nabla\pi_s(y)}{\pi_s(y)}
+\frac{1}{2s}\bra{\frac{\nabla\pi_s(y)}{\pi_s(y)}\frac{\Delta\pi_s(y)}{\pi_s(y)}-\frac{\nabla(\Delta\pi_s)(y)}{\pi_s(y)}}.
}
To obtain a more tractable expression, we define a probability distribution $Q_{s,y}$ on $\mathbb R^d$ by
\[
Q_{s,y}(\mathrm{d}x)=\frac{1}{\pi_s(y)(1-s)^{d/2}}\phi_d\bra{\frac{y-\sqrt sx}{\sqrt{1-s}}}\target(\mathrm{d}x).
\]
A straightforward computation shows
\ba{
\frac{\nabla\pi_s(y)}{\pi_s(y)}&=-\frac{1}{1-s}\int_{\mathbb R^d}(y-\sqrt sx)Q_{s,y}(\mathrm{d}x),\\
\frac{\nabla^2\pi_s(y)}{\pi_s(y)}&=-\frac{I_d}{1-s}+\frac{1}{(1-s)^2}\int_{\mathbb R^d}(y-\sqrt sx)^{\otimes2}Q_{s,y}(\mathrm{d}x),\\
\frac{\Delta\pi_s(y)}{\pi_s(y)}&=-\frac{d}{1-s}+\frac{1}{(1-s)^2}\int_{\mathbb R^d}|y-\sqrt sx|^2Q_{s,y}(\mathrm{d}x),\\
\frac{\nabla(\Delta\pi_s)(y)}{\pi_s(y)}&=-\frac{d+2}{1-s}\frac{\nabla\pi_s(y)}{\pi_s(y)}-\frac{1}{(1-s)^3}\int_{\mathbb R^d}|y-\sqrt sx|^2(y-\sqrt sx)Q_{s,y}(\mathrm{d}x).
}
Now, let $\xi\sim\target$ and $Z\sim N(0,I_d)$ be independent, and set $\xi_s:=\sqrt s\xi+\sqrt{1-s}Z$. 
Then, we can easily verify that $Q_{s,y}$ is the conditional law of $\xi$ given $\xi_s=y$. 
Hence,
\ben{\label{eq:tweedie}
\frac{\nabla\pi_s(\xi_s)}{\pi_s(\xi_s)}=-\frac{\E[Z\mid\xi_s]}{\sqrt{1-s}},\qquad
\nabla^2\log\pi_s(\xi_s)=-\frac{I_d}{1-s}+\frac{\Cov[Z\mid\xi_s]}{1-s}
}
and
\ben{\label{eq:pi-laplace}
\frac{\nabla\pi_s(\xi_s)}{\pi_s(\xi_s)}\frac{\Delta\pi_s(\xi_s)}{\pi_s(\xi_s)}-\frac{\nabla(\Delta\pi_s)(\xi_s)}{\pi_s(\xi_s)}=\frac{2\E[Z\mid\xi_s]}{(1-s)^{3/2}}
-\frac{\E[|Z|^2(Z-\E[Z\mid\xi_s])\mid\xi_s]}{(1-s)^{3/2}}.
}
By the second identity in \eqref{eq:tweedie}, we have $\nabla^2\log\pi_s(\xi_s)\succeq-(1-s)^{-1}I_d$. Combining this with \eqref{ass:kl} gives $\|\nabla^2\log\pi_s(\xi_s)\|_{op}\leq L_0(1-s)^{-1}$. 
Then, noting that $T_{t_i,s}(\ddpmt_i)\overset{d}{=}\xi_{s}$, we obtain by \eqref{eq:kl-main-integrand}, \eqref{eq:tweedie} and \eqref{eq:pi-laplace}
\ban{
&\norm{\frac{\partial F}{\partial s}(s,T_{t_i,s}(\ddpmt_i))+\nabla F(s,T_{t_i,s}(\ddpmt_i))\frac{\inte}{\inte s}T_{t_i,s}(\ddpmt_i)}_2\notag\\
&\leq\frac{1}{2s}\bra{\frac{L_0}{1-s}+1}\frac{\|\E[Z\mid\xi_s]\|_2}{\sqrt{1-s}}
+\frac{1}{2s}\bra{\frac{2\|\E[Z\mid\xi_s]\|_2}{(1-s)^{3/2}}
+\frac{\|\E[|Z|^2(Z-\E[Z\mid\xi_s])\mid\xi_s]\|_2}{(1-s)^{3/2}}}\notag\\
&\leq\frac{2L_0\sqrt d}{s(1-s)^{3/2}}+\frac{\|\E[|Z|^2(Z-\E[Z\mid\xi_s])\mid\xi_s]\|_2}{s(1-s)^{3/2}}.\label{eq:kl-main-est}
}
Applying \cref{lemma:yu} conditional on $\xi_s$ gives
\ba{
\abs{\E[|Z|^2(Z-\E[Z\mid\xi_s])\mid\xi_s]}^2
\leq\Var[|Z|^2\mid\xi_s]\|\Cov[Z\mid\xi_s]\|_{op}.
}
Since $\|\Cov[Z\mid\xi_s]\|_{op}\leq L_0+1\leq2L_0$ by \eqref{ass:kl} and the second identity in \eqref{eq:tweedie}, we obtain
\ba{
\norm{\E[|Z|^2(Z-\E[Z\mid\xi_s])\mid\xi_s]}_2^2
\leq2L_0\E[\Var[|Z|^2\mid\xi_s]]
\leq2L_0\Var[|Z|^2]
=4L_0d.
}
Inserting this into \eqref{eq:kl-main-est} gives
\ba{
\norm{\frac{\partial F}{\partial s}(s,T_{t_i,s}(\ddpmt_i))+\nabla F(s,T_{t_i,s}(\ddpmt_i))\frac{\inte}{\inte s}T_{t_i,s}(\ddpmt_i)}_2
\leq\frac{2L_0\sqrt d}{s(1-s)^{3/2}}+\frac{4\sqrt{L_0d}}{s(1-s)^{3/2}}
\leq\frac{6L_0\sqrt d}{s(1-s)^{3/2}}.
}
Therefore, by \eqref{kl-ftc} and the integral Minkowski inequality, we conclude
\ba{
\|\nabla\log \pi_{s}(T_{t_i,s}(\ddpmt_i))-\nabla\log \pi_{t_i}(\ddpmt_i)\|_2
&\leq\int_{t_i}^{\tau_i}\frac{6L_0\sqrt d}{s(1-s)^{3/2}}\inte s
\leq\frac{18L_0\sqrt d}{t_i(1-\tau_i)^{3/2}}h_i,
}
where we used \eqref{tau-t} for the last inequality. 
Applying \eqref{tau-well} to the last bound gives the desired result. 
\end{proof}

\begin{lemma}\label{lem:kl-main}
Under the assumptions of \cref{prop:kl},
\[
\sum_{i=0}^{N-1}\frac{1}{2\beta_i}\norm{\frac{\beta_i}{\sqrt{1-\beta_i}}\nabla\log \pi_{t_i}(\ddpmt_i)-\sqrt{t_{i+1}}\int_{t_i}^{\tau_i}\frac{\nabla\log \pi_{s}(T_{t_i,s}(\ddpmt_i))}{2s^{3/2}}\inte s}_2^2
\lesssim L_0^2d\eta^2\log\frac{1}{t_0\delta}.
\]
\end{lemma}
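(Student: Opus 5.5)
The idea is to rewrite the $i$-th summand as a pure discretization error: the deterministic coefficient in front of $\nabla\log\pi_{t_i}(\ddpmt_i)$ should match the integral of $\sqrt{t_{i+1}}/(2s^{3/2})$ over $[t_i,\tau_i]$, up to higher-order terms. Lemma~\ref{lem:kl-main-0} then controls the remainder.

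First compute, using $\tau_i=t_{i+1}^2/t_i$,
\[
\sqrt{t_{i+1}}\int_{t_i}^{\tau_i}\frac{\inte s}{2s^{3/2}}=\sqrt{t_{i+1}}\bra{\frac{1}{\sqrt{t_i}}-\frac{1}{\sqrt{\tau_i}}}=\frac{\sqrt{t_{i+1}}}{\sqrt{t_i}}-\frac{\sqrt{t_i}}{\sqrt{t_{i+1}}}=\frac{h_i}{\sqrt{t_it_{i+1}}}.
\]
On the other hand, $\beta_i/\sqrt{1-\beta_i}=(h_i/t_{i+1})\sqrt{t_{i+1}/t_i}=h_i/\sqrt{t_it_{i+1}}$. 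So the two coefficients agree exactly. The summand therefore equals
\[
\frac{1}{2\beta_i}\norm{\sqrt{t_{i+1}}\int_{t_i}^{\tau_i}\frac{\nabla\log\pi_s(T_{t_i,s}(\ddpmt_i))-\nabla\log\pi_{t_i}(\ddpmt_i)}{2s^{3/2}}\inte s}_2^2 .
\]

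Next apply the integral Minkowski inequality together with Lemma~\ref{lem:kl-main-0}. Since $\tau_i\le t_{i+1}^2/t_i$ and $\beta_i\le 3/4$ (which follows from $h_i\le\eta t_i$ with $\eta<1/6$, giving $t_{i+1}\le 2t_i$), the norm inside is at most
\[
\frac{h_i}{\sqrt{t_it_{i+1}}}\cdot\frac{C L_0\sqrt d\,h_i}{t_i(1-t_i)^{3/2}} .
\]
Using $\beta_i=h_i/t_{i+1}$, the $i$-th term is then
\[
\lesssim \frac{t_{i+1}}{h_i}\cdot\frac{h_i^4L_0^2d}{t_it_{i+1}t_i^2(1-t_i)^3}\lesssim L_0^2d\,\frac{h_i^3}{t_i^3(1-t_i)^3}.
\]

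Finally use \eqref{eq:sampling2}, which gives $h_i^2\le \eta^2 t_i(1-t_{i+1})\cdot\min\{t_i,1-t_{i+1}\}/\max$-type factors. More simply, $h_i/t_i\le\eta$ and $h_i/(1-t_{i+1})\le\eta$, so $h_i^2\le\eta^2 t_i(1-t_{i+1})$ and $1-t_i\ge 1-t_{i+1}$. This bounds the $i$-th term by
\[
L_0^2d\,\eta^2\,\frac{h_i}{t_i(1-t_i)}\cdot\frac{1-t_{i+1}}{(1-t_i)}\cdot\frac{1}{1-t_i}\cdot(1-t_i)\lesssim L_0^2d\,\eta^2\,\frac{h_i}{t_i(1-t_i)}.
\]
Here I spend one factor $h_i/t_i\le\eta$ and one factor $h_i/(1-t_i)\le\eta$, keeping $h_i/(t_i(1-t_i))$.

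Summing, $\sum_i h_i/(t_i(1-t_i))\lesssim\int_{t_0}^{t_N}\frac{\inte s}{s(1-s)}$. This uses comparability $t_{i+1}\le 2t_i$ and $1-t_i\le 2(1-t_{i+1})$, the latter following from $h_i\le\eta(1-t_{i+1})$. The integral equals $\log\frac{t_N(1-t_0)}{t_0(1-t_N)}\le\log\frac{1}{t_0\delta}$, which gives the claim.

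The only delicate point is the exact cancellation of the coefficients in the first step. Without it, a first-order mismatch term would survive and not be $O(\eta^2)$. The rest is bookkeeping of the comparability constants.
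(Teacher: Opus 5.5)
Your route is the paper's: the exact identity $\beta_i/\sqrt{1-\beta_i}=h_i/\sqrt{t_it_{i+1}}=\sqrt{t_{i+1}}\int_{t_i}^{\tau_i}\frac{\inte s}{2s^{3/2}}$, then the integral Minkowski inequality with \cref{lem:kl-main-0}, then a reduction to $\sum_i h_i/(t_i(1-t_i))$ compared with $\int_{t_0}^{t_N}\frac{\inte s}{s(1-s)}$. Your first two steps and the final summation are correct.

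The gap is where you use \eqref{eq:sampling2} to pass from $L_0^2d\,h_i^3/(t_i^3(1-t_i)^3)$ to $L_0^2d\,\eta^2h_i/(t_i(1-t_i))$. Spending one factor $h_i/t_i\le\eta$ and one factor $h_i/(1-t_i)\le\eta$ removes only one power of $t_i$ and one power of $1-t_i$ from the denominator. What is actually left is
\[
\frac{h_i^3}{t_i^3(1-t_i)^3}\le\eta^2\frac{h_i}{t_i^2(1-t_i)^2},
\]
not $\eta^2h_i/(t_i(1-t_i))$, so your displayed chain drops a factor $1/(t_i(1-t_i))$. Equivalently, the product bound $h_i^2\le\eta^2t_i(1-t_{i+1})$ only gives $h_i\lesssim\eta\sqrt{t_i(1-t_i)}$, whereas you need $h_i\lesssim\eta\,t_i(1-t_i)$. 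This loss matters: summing the leftover gives roughly $\eta^2\int_{t_0}^{t_N}\frac{\inte s}{s^2(1-s)^2}\approx\eta^2(t_0^{-1}+\delta^{-1})$ instead of the logarithm. So the cancellation of coefficients is not the only delicate point.

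The missing observation is that the minimum in \eqref{eq:sampling2} gives the stronger bound $h_i\le2\eta\,t_i(1-t_i)$, because $\max\{t_i,1-t_i\}\ge1/2$. If $t_i\le1/2$, then $h_i\le\eta t_i\le2\eta t_i(1-t_i)$. If $t_i>1/2$, then $h_i\le\eta(1-t_{i+1})\le\eta(1-t_i)\le2\eta t_i(1-t_i)$. Spending $h_i/(t_i(1-t_i))\le2\eta$ twice yields $h_i^3/(t_i^3(1-t_i)^3)\le4\eta^2h_i/(t_i(1-t_i))$, and the rest of your argument then goes through. This is the inequality the paper uses when it bounds $\frac{h_i}{t_i(1-t_i)^{3/2}}\le\frac{2\eta}{\sqrt{1-t_i}}$ inside the norm.
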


\begin{proof}
For $i=0,1,\dots,N-1$, observe that
\ba{
\sqrt{t_{i+1}}\int_{t_i}^{\tau_{i}}\frac{\inte s}{2s^{3/2}}=\sqrt{t_{i+1}}\bra{\frac{1}{\sqrt{t_i}}-\frac{1}{\sqrt{\tau_i}}}
=\frac{1}{\sqrt{1-\beta_i}}-\sqrt{1-\beta_i}=\frac{\beta_i}{\sqrt{1-\beta_i}}.
}
Combining this identity with \cref{lem:kl-main-0} and the integral Minkowski inequality gives
\ba{
&\norm{\frac{\beta_i}{\sqrt{1-\beta_i}}\nabla\log \pi_{t_i}(\ddpmt_i)-\sqrt{t_{i+1}}\int_{t_i}^{\tau_i}\frac{\nabla\log \pi_{s}(T_{t_i,s}(\ddpmt_i))}{2s^{3/2}}\inte s}_2\\
&=\norm{\sqrt{t_{i+1}}\int_{t_i}^{\tau_i}\frac{\nabla\log \pi_{t_i}(\ddpmt_i)-\nabla\log \pi_{s}(T_{t_i,s}(\ddpmt_i))}{2s^{3/2}}\inte s}_2\\
&\lesssim\sqrt{t_{i+1}}\int_{t_i}^{\tau_{i}}\frac{\inte s}{2s^{3/2}}\frac{L_0\sqrt d}{t_i(1-t_i)^{3/2}}h_i
=\frac{\beta_i}{\sqrt{1-\beta_i}}\frac{L_0\sqrt d}{t_i(1-t_i)^{3/2}}h_i
\leq\frac{\beta_i}{\sqrt{1-\beta_i}}\frac{L_0\sqrt d}{\sqrt{1-t_i}}2\eta,
}
where the last inequality follows from \eqref{eq:sampling2}. 
Consequently,
\ba{
&\sum_{i=0}^{N-1}\frac{1}{2\beta_i}\norm{\frac{\beta_i}{\sqrt{1-\beta_i}}\nabla\log \pi_{t_i}(\ddpmt_i)-\sqrt{t_{i+1}}\int_{t_i}^{\tau_i}\frac{\nabla\log \pi_{s}(T_{t_i,s}(\ddpmt_i))}{2s^{3/2}}\inte s}_2^2\\
&\lesssim L_0^2d\eta^2\sum_{i=0}^{N-1}\frac{\beta_i}{(1-\beta_i)(1-t_i)}
=L_0^2d\eta^2\sum_{i=0}^{N-1}\frac{h_i}{t_i(1-t_i)}
\lesssim L_0^2d\eta^2\log\frac{1}{t_0\delta}.
}
This completes the proof.
\end{proof}

\begin{proof}[\bf\upshape Proof of \cref{prop:kl}]
By \eqref{score-error-p} and \eqref{eq:sampling2},
\ba{
\sum_{i=0}^{N-1}\frac{1}{2\beta_i}\norm{\frac{\beta_i}{\sqrt{1-\beta_i}}\{\scored_i(\ddpmt_i)-\nabla\log \pi_{t_i}(\ddpmt_i)\}}_2^2
&\leq\sum_{i=0}^{N-1}\frac{\beta_i}{2(1-\beta_i)}\eps_\mathrm{score}(t_i)^2
=\sum_{i=0}^{N-1}\frac{h_i}{2t_i}\eps_\mathrm{score}(t_i)^2\\
&\leq\sum_{i=0}^{N-1}\frac{h_i}{t_{i+1}}\eps_\mathrm{score}(t_i)^2
\leq\int_{t_0}^{t_N}\frac{\eps_\mathrm{score}(s)^2}{s}\inte s.
}
Combining this bound with \eqref{eq:kl-chain}, \cref{lem:kl-init} and \cref{lem:kl-main} gives the desired result.
\end{proof}

\subsection{Proof of Lemma \ref{lem:lp-o}}\label{sec:lp-o}

As pointed out in \cite{RiXu16}, an inequality of the form \eqref{eq:lp-o} is a consequence of the 2-smoothness of the underlying Banach space. 
In our setting, the latter corresponds to the following result.
\begin{lemma}\label{2-smooth}
Let $X$ and $Y$ be two random vectors in $\mathbb R^d$. 
Then, for any $p\geq2$,
\ba{
\|X+Y\|_p^2+\|X-Y\|_p^2\leq2\|X\|_p^2+2(p-1)\|Y\|_p^2.
}
\end{lemma}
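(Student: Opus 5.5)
We reduce \cref{2-smooth} to a pointwise (deterministic) inequality, integrate, and then pass from $L^{p/2}$-norms to the squared $L^p$-norms via the reverse Minkowski structure. Write $q:=p/2\geq1$ and set $a:=X(\omega)$, $b:=Y(\omega)$. The classical two-point (Bonami--Beckner/hypercontractive, or Ball--Carlen--Lieb) inequality for vectors in a Hilbert space states
\[
\bra{\frac{|a+b|^p+|a-b|^p}{2}}^{2/p}\leq |a|^2+(p-1)|b|^2 .
\]
It is the optimal 2-uniform smoothness of $L^p$ with constant $p-1$. For Hilbert-space valued $a,b$ it reduces to the scalar two-point inequality by writing $|a\pm b|^2=|a|^2+|b|^2\pm2a\cdot b$. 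Thus we must show that $\bra{\frac{(\alpha+\gamma)^{q}+(\alpha-\gamma)^{q}}{2}}^{1/q}\leq \alpha+(p-2)|b|^2$ with $\alpha=|a|^2+|b|^2$ and $|\gamma|=2|a\cdot b|\leq 2|a||b|$. This follows from the scalar case applied to $|a|,|b|$, by monotonicity in $|\gamma|$ and convexity of $x\mapsto x^q$. I would take this pointwise two-point inequality as the first step, citing it as Gross's two-point inequality or proving it by the standard power-series expansion in the scalar case.

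Next we raise both sides to the power $q$, so that pointwise
\[
\frac{|a+b|^p+|a-b|^p}{2}\leq\bra{|a|^2+(p-1)|b|^2}^{q}.
\]
Taking expectations gives
\[
\frac{\|X+Y\|_p^p+\|X-Y\|_p^p}{2}\leq \norm{|X|^2+(p-1)|Y|^2}_{q}^{q}.
\]
Minkowski's inequality in $L^q$ (valid since $q\geq1$) yields
\[
\norm{|X|^2+(p-1)|Y|^2}_q\leq\|X\|_p^2+(p-1)\|Y\|_p^2 .
\]
Finally, the power-mean (Jensen) inequality gives
\[
\frac{u^{2}+w^{2}}{2}\leq\bra{\frac{u^p+w^p}{2}}^{2/p}
\]
for $u,w\geq0$ and $p\geq2$. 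Applying this with $u=\|X+Y\|_p$ and $w=\|X-Y\|_p$ and combining the displays yields
\[
\frac{\|X+Y\|_p^2+\|X-Y\|_p^2}{2}\leq\|X\|_p^2+(p-1)\|Y\|_p^2,
\]
which is the claim.

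The main obstacle is the sharp pointwise two-point inequality with constant exactly $p-1$; everything else is Minkowski and Jensen. If a self-contained proof is wanted, I would prove the scalar case $\bra{\frac{|x+y|^p+|x-y|^p}{2}}^{2/p}\leq x^2+(p-1)y^2$ by homogeneity (set $x=1$, $|y|\leq1$, the other case following by symmetry), expanding $(1\pm y)^p$ in binomial series and comparing coefficients with $(1+(p-1)y^2)^{p/2}$, as in Bonami's argument. I would then lift it to vectors via the reduction above.
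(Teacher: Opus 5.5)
Your argument is correct, and it supplies a proof where the paper has none: the paper only cites van Neerven--Veraar (Proposition 2.2, 2-smoothness of Hilbert-space-valued $L^p$) and Ball--Carlen--Lieb (Theorem 1). What you do is the classical proof of the commutative case of Ball--Carlen--Lieb. You integrate the scalar two-point (Bonami--Gross) inequality and apply Minkowski in $L^{p/2}$, which gives the stronger estimate
\[
\Bigl(\tfrac{1}{2}\|X+Y\|_p^p+\tfrac{1}{2}\|X-Y\|_p^p\Bigr)^{2/p}\leq\|X\|_p^2+(p-1)\|Y\|_p^2 .
\]
The power-mean inequality then weakens this to the stated form. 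Your reduction of the $\mathbb R^d$-valued pointwise inequality to the scalar one is valid. The map $\gamma\mapsto(\alpha+\gamma)^{q}+(\alpha-\gamma)^{q}$ is even and convex on $[-\alpha,\alpha]$, and $2|a||b|\leq\alpha$. This step replaces what the paper takes from the vector-valued statement of van Neerven--Veraar. Your ``symmetry'' step for $|y|>|x|$ also works: the left side is symmetric in $(x,y)$, and $y^2+(p-1)x^2\leq x^2+(p-1)y^2$ whenever $|x|\leq|y|$ and $p\geq2$. The citation buys brevity. Your route buys transparency, since everything reduces to a single scalar inequality.

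The one thing to fix is your fallback proof of that scalar inequality. Comparing coefficients of $\frac{(1+y)^p+(1-y)^p}{2}=\sum_k\binom{p}{2k}y^{2k}$ with those of $(1+(p-1)y^2)^{p/2}$ works only when $p$ is an even integer. In that case all coefficients are nonnegative and their ratio is $\prod_{j=1}^k\frac{p-2j+1}{(2j-1)(p-1)}\leq1$.

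For general real $p$, the generalized binomial coefficients change sign and the termwise comparison fails. For $p=3$, the coefficient of $y^6$ is $\binom{3}{6}=0$ on the left but $\binom{3/2}{3}2^3=-\tfrac12$ on the right, even though the inequality $1+3y^2\leq(1+2y^2)^{3/2}$ itself holds. In addition, the binomial series for $(1+(p-1)y^2)^{p/2}$ converges only for $(p-1)y^2\leq1$, not on all of $|y|\leq1$.

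So for non-even $p$ you should cite the two-point inequality (Bonami, Gross, Beckner) rather than rely on coefficient comparison. Alternatively, prove it by another route, e.g.\ Gross's derivation from the two-point logarithmic Sobolev inequality. With that, your proof is complete.
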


\begin{proof}
This inequality follows as a special case of \cite[Proposition 2.2]{vNV22}; see also \cite[Theorem 1]{BCL94}.
\end{proof}

\begin{proof}[\bf\upshape Proof of Lemma \ref{lem:lp-o}]
We follow the proof of \cite[Lemma A.1]{HNWTW22}. 
Without loss of generality, we may assume $\|X\|_p+\|Y\|_p<\infty$. 
Fix $n\in\mathbb N$ arbitrarily, and set $Z:=n^{-1}Y$. 
Also, let $D_k:=\|X+kZ\|_p^2-\|X+(k-1)Z\|_p^2-2k(p-1)\|Z\|_p^2$ for every integer $k$. 
Since \cref{2-smooth} gives
\[
\|X+kZ\|_p^2+\|X+(k-2)Z\|_p^2\leq2\|X+(k-1)Z\|_p^2+2(p-1)\|Z\|_p^2,
\]
we have $D_k\leq D_{k-1}$. 
Meanwhile, since $\E[Z\mid X]=n^{-1}\E[Y\mid X]=0$, 
\ba{
\|X+Z\|_p^2+\|X\|_p^2
&=\|X+Z\|_p^2+\|\E[X-Z\mid X]\|_p^2\\
&\leq\|X+Z\|_p^2+\|X-Z\|_p^2
\leq2\|X\|_p^2+2(p-1)\|Z\|_p^2,
}
where the first inequality is by Jensen's inequality and the second by \cref{2-smooth}. Rearranging the terms gives $D_1\leq0$. 
This yields $D_k\leq0$ for all $k\geq1$. 
Using this inequality, we obtain
\ba{
\|X+Y\|_p^2-\|X\|_p^2
&=\sum_{k=1}^n\bra{D_k+2k(p-1)\|Z\|_p^2}\\
&\leq n(n+1)(p-1)\|Z\|_p^2=\bra{1+\frac{1}{n}}(p-1)\|Y\|_p^2.
}
Letting $n\to\infty$ gives the desired result. 
\end{proof}

\subsection{Proof of Lemma \ref{V-bound}}\label{sec:V-bound}

We first prove \eqref{v-op-bound} following the argument in \cite[Section 7]{KlLe25ams}. 
The next lemma is a variant of \cite[Lemma 7.3]{KlLe25ams} and follows from it via a time change (see also \cite[Section 4]{EMZ20}). We give a direct proof.
\begin{lemma}\label{lem:v-sde}
For any $0<t<1$,
\ben{\label{eq:mv-rep}
m_t=\mu+\int_0^t\Gamma_s\inte W_s,\qquad
V_t=\Sigma+\sum_{i=1}^d\int_0^tH_{i,s}\inte W^i_{s}-\int_0^t\Gamma_s^2\inte s,
}
where
\[
H_{i,s}=\frac{1}{1-s}\int_{\mathbb R^d}\bra{y_i-m^i_{s}}\bra{y-m_s}^{\otimes2}\target_{s,X_s}(\mathrm{d}y).
\]
\end{lemma}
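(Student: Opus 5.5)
The plan is to apply It\^o's formula to the smooth functions $m(t,x)$ and $V(t,x)$ along the F\"ollmer process on $[\eps,t]$, and then let $\eps\downarrow0$, in the same way as in the proof of \cref{lem:v-rep}.

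\emph{Representation of $m_t$.} By \eqref{v-formula} we have $m(t,x)=x+(1-t)v(t,x)$. Itô's product rule applied to $m_t=X_t+(1-t)v_t$ gives
\[
\inte m_t=\inte X_t-v_t\inte t+(1-t)\inte v_t .
\]
Since $\inte X_t=v_t\inte t+\inte W_t$, the drift terms cancel. Substituting $\inte v_t=\nabla v(t,X_t)\inte W_t$ from \eqref{eq:v-ito} yields
\[
\inte m_t=\bigl(I_d+(1-t)\nabla v(t,X_t)\bigr)\inte W_t .
\]
By \eqref{eq:v-deriv} the integrand equals $V_t/(1-t)=\Gamma_t$. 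Integrating from $\eps$ to $t$ and letting $\eps\downarrow0$ works as in \cref{lem:v-rep}: $m_\eps\to\mu$ by \eqref{zero-lim}, and $\Gamma_s\to\Sigma$ a.s., so the integrand is bounded near $0$ and the stochastic integral converges.

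\emph{Representation of $V_t$.} I will write $V(t,x)=Q_t^*(y^{\otimes2})(x)-m(t,x)^{\otimes2}$ and handle the second-moment term $M_2(t,x):=Q^*_t(y^{\otimes2})(x)=\E[X_1^{\otimes2}\mid X_t=x]$. Because $X$ is Markov, $M_2(t,X_t)=\E[X_1^{\otimes2}\mid\mcl F_t]$ is a martingale. Applying It\^o's formula to it, the bounded-variation part must therefore vanish, exactly as in the argument giving \eqref{eq:v-ito}. This leaves
\[
\inte M_2(t,X_t)=\sum_i\partial_{x_i}M_2(t,X_t)\inte W^i_t .
\]
By \cref{sl-deriv},
\[
\partial_{x_i}M_2=\frac{1}{1-t}\int y^{\otimes2}(y_i-m_i)\,\target_{t,x}(\mathrm{d}y).
\]
Next, It\^o's formula for $m_t^{\otimes2}$ gives
\[
\inte(m^{\otimes2})=\inte m\otimes m+m\otimes\inte m+\Gamma_t^2\inte t,
\]
because the quadratic covariation of $\int\Gamma\inte W$ is $\Gamma\Gamma^\top\inte t=\Gamma^2\inte t$, using that $\Gamma$ is symmetric. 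Subtracting, the martingale integrand is
\[
\frac{1}{1-t}\int\Bigl(y^{\otimes2}-m\otimes y-y\otimes m\Bigr)(y_i-m_i)\,\inte\target_{t,x}.
\]
Centering shows this equals $H_{i,t}$: adding the term $m^{\otimes2}\int(y_i-m_i)=0$ turns the bracket into $(y-m)^{\otimes2}$. Hence
\[
\inte V_t=\sum_iH_{i,t}\inte W^i_t-\Gamma_t^2\inte t .
\]

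\emph{Limit $\eps\downarrow0$.} Integrating over $[\eps,t]$ and using $V_\eps\to\Sigma$ from \eqref{zero-lim} reduces the claim to convergence of $\int_\eps^t$ for both integrals. For this I need a.s.\ local boundedness of $H_{i,s}$ near $s=0$. I would argue that $\target_{s,X_s}$ converges to $\target$ as $s\downarrow0$, with uniformly bounded third moments, provided $\target$ has a finite third moment. In general I expect to have to truncate, or to note that $\target_{s,x}$ has Gaussian tails with $\frac{|y|^2}{2}-\frac{|y-x|^2}{2(1-s)}$ only weakly tilting. This small-$s$ limit is the main obstacle: without extra moments on $\target$ one may instead prove convergence of $\int_\eps^t H\inte W$ in probability via the already-established a.s.\ convergence of the other terms in the identity. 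Indeed, since $V_t$, $V_\eps$ and $\int_\eps^t\Gamma^2\inte s$ all converge, the stochastic integral on $[\eps,t]$ converges too, and this defines $\int_0^t$ as an improper limit. Everything else is routine calculus with \cref{sl-deriv}.
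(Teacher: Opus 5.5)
Your argument is correct and is essentially the paper's proof. The paper likewise applies It\^o's formula to the martingale $F(t,X_t)=\E[X_1^{\otimes2}\mid X_t]$, computes $\partial_{x_i}F$ via \cref{sl-deriv}, expands $m_t^{\otimes2}$ by the It\^o product rule, and recenters to get $H_{i,s}$. It handles $\eps\downarrow0$ as in your third-moment route: a.s.\ martingale convergence of $\partial_{x_i}F(s,X_s)$, which tacitly needs $\E[|X_1|^3]<\infty$ and holds automatically in the log-concave setting where the lemma is used. The only difference is that the paper simply cites an earlier result for the $m_t$ identity, which you derive from \cref{lem:v-rep} via $m_t=X_t+(1-t)v_t$.

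Your improper-limit fallback, as stated, does not yet give the genuine It\^o integral on $[0,t]$ that the later It\^o/Freedman arguments need. It can be upgraded: $V_r-\Sigma+\int_0^r\Gamma_s^2\inte s$ (set to $0$ at $r=0$) is a continuous local martingale on $[0,t]$, by localizing at exit times and using triviality of $\mcl F_0$. Its finite quadratic variation then forces $\int_0^t\|H_{i,s}\|_F^2\inte s<\infty$ a.s., with no third-moment assumption.
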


\begin{proof}
The first identity in \eqref{eq:mv-rep} follows from \cite[Proposition 2.7]{Ko26ddpm}. 
To prove the second identity, define a function $F:(0,1)\times\mathbb R^d\to\mathbb R^{d}\otimes\mathbb R^d$ by
\[
F(t,x)=\int_{\mathbb R^d}y^{\otimes2}\target_{t,x}(\mathrm{d}y),\quad 0<t<1,~x\in\mathbb R^d.
\]
By \cref{sl-deriv}, for any $0<s<1$ and $i=1,\dots,d$,
\[
\frac{\partial F}{\partial x_i}(s,x)
=\frac{1}{1-s}\int_{\mathbb R^d}y^{\otimes2}(y_i-m_i(s,x))\target_{s,x}(\mathrm{d}y),
\]
where $m_i(s,x)$ denotes the $i$-th component of $m(s,x)$. 
Hence, using the Markov property of $X$,
\[
\frac{\partial F}{\partial x_i}(s,X_s)
=\frac{\E[X_1^{\otimes2}(X_1^i-\E[X^i_1\mid \mcl F_s])\mid X_s]}{1-s}
\to\E[X_1^{\otimes2}(X_1^i-\E[X^i_1])]\quad\text{a.s. }~(s\downarrow0),
\]
where the last convergence follows from \cite[Corollary 2.7.8]{KaSh98} and \cite[Corollary II.2.4]{ReYo99}. 
Also, $F(s,X_s)=\E[X_1^{\otimes2}\mid X_s]=\E[X_1^{\otimes2}\mid \mcl F_s]$. 
Therefore, a similar argument to the proof of \cref{lem:v-rep} yields
\ba{
F(t,X_t)=\E[X_1^{\otimes2}]+
\sum_{i=1}^d\int_0^{t}\frac{\partial F}{\partial x_i}(s,X_s)dW^i_s.
}
Meanwhile, by the first identity in \eqref{eq:mv-rep} and integration by parts give
\ba{
m_t^{\otimes2}=\mu^{\otimes2}+\sum_{i=1}^d\int_0^tm_s\otimes\Gamma^{\cdot i}_s\inte W^i_s
+\sum_{i=1}^d\int_0^t\Gamma^{\cdot i}_s\otimes m_s\inte W^i_s
+\int_0^t\Gamma_s^2\inte s,
}
where $\Gamma^{\cdot i}_s$ denotes the $i$-th column vector of $\Gamma_s$. 
Since 
\ba{
&\frac{\partial F}{\partial x_i}(s,X_s)-m_s\otimes\Gamma^{\cdot i}_s-\Gamma^{\cdot i}_s\otimes m_s\\
&=\frac{1}{1-s}\int_{\mathbb R^d}\cbra{y^{\otimes2}(y_i-m^i_s)-m_s\otimes(y-m_s)(y_i-m^i_s)-(y-m_s)\otimes m_s(y_i-m^i_s)}\target_{s,X_s}(\mathrm{d}y)\\
&=H_{i,s},
}
we obtain the second identity in \eqref{eq:mv-rep} because $V_t=F(t,X_t)-m_t^{\otimes2}$.
\end{proof}

The following lemma is an extension of \cite[Theorem 7.10]{KlLe25ams} to the general covariance matrix case.
\begin{lemma}\label{Gamma-tail}
Suppose that $\target$ is log-concave and $\|\Sigma\|_{op}>0$. 
Then, there exists a universal constant $C>0$ such that for any $0<t\leq\min\{10^{-1},\,(2^{15}\|\Sigma\|_{op}\log^2(d+1))^{-1}\}$, 
\ba{
P(\|V_r\|_{op}\geq2\|\Sigma\|_{op}\text{ for some }r\in[0,t])
\leq \exp\bra{-\frac{1}{C\|\Sigma\|_{op}t}}.
}
\end{lemma}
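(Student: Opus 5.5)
The plan is to follow the proof of \cite[Theorem 7.10]{KlLe25ams}. I will work with the matrix SDE for $V_t$ from \cref{lem:v-sde}, and I will control the largest eigenvalue through a smooth surrogate. Set $\sigma:=\|\Sigma\|_{op}$ and consider the exponential trace potential
\[
\Phi_t:=\trace\bra{e^{\beta V_t}}
\qquad\text{with }\beta:=\frac{c\log(d+1)}{\sigma}
\]
for a suitable universal $c$. This choice ensures $\Phi_0\leq d e^{\beta\sigma}$, while $\|V_r\|_{op}\geq2\sigma$ forces $\Phi_r\geq e^{2\beta\sigma}$. These two thresholds are separated by the factor $e^{\beta\sigma}\geq (d+1)^{c}\gg d$.

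First, I apply It\^o's formula to $\Phi_t$ using the decomposition $dV_t=\sum_iH_{i,t}dW^i_t-\Gamma_t^2dt$. The drift term $-\beta\trace(e^{\beta V}\Gamma^2)$ is nonpositive and can be discarded. The second-order term is bounded as in \cite[Lemma 7.3--7.6]{KlLe25ams}. The key input is that $\target_{t,x}$ is $\frac{t}{1-t}$-more log-concave than a log-concave measure. Because $\target$ is log-concave, the conditional law $\target_{t,X_t}$ is log-concave, so the third-moment tensor bound \cite[Lemma 7.6]{KlLe25ams} gives
\[
\sum_i \|H_{i,t}\|^2\lesssim \frac{\|V_t\|_{op}^{3}}{(1-t)^2}
\]
in the appropriate trace sense. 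The $(1-t)^{-1}$ factors are harmless because $t\leq1/10$. From this I obtain
\[
d\Phi_t\leq dM_t+C\beta^2\|V_t\|_{op}\,(\ldots)\,\Phi_t\,dt
\]
with $\inte\langle M\rangle_t\lesssim\beta^2\|V_t\|_{op}^{3}\Phi_t^2\inte t$.

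Next, I stop at $\tau:=\inf\{r:\|V_r\|_{op}\geq2\sigma\}$. Before $\tau$, all the coefficients are bounded by $\|V\|_{op}\leq2\sigma$. It is more convenient to work with $\log\Phi_t$, whose drift is at most $C\beta^2\sigma^{2}$ times a factor coming from the trace Hessian, and whose quadratic variation rate is at most $C\beta^2\sigma^3$. The drift over $[0,t]$ is then at most $C\beta^2\sigma^3 t\cdot(\text{stuff})$, which is $o(1)$ under the assumption $t\leq(2^{15}\sigma\log^2(d+1))^{-1}$. This is exactly where the $\log^2(d+1)$ factor and the $\sigma$-scaling enter, after a time change $t\mapsto t\sigma$ compared with the isotropic case. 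The event in question requires $\log\Phi$ to increase by at least $\beta\sigma-\log d\gtrsim\log(d+1)$. A Bernstein/Freedman exponential martingale inequality bounds its probability by
\[
\exp\bra{-\frac{(\beta\sigma)^2}{C\beta^2\sigma^3t}}=\exp\bra{-\frac{1}{C\sigma t}}.
\]

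The main obstacle is the precise It\^o bound on $\trace(e^{\beta V})$. One must show that the second-order term, which involves $\sum_i\trace(e^{\beta V}H_iDH_i)$-type expressions coming from the divided-difference Hessian of the matrix exponential, is controlled by $\beta^2\|V\|_{op}^{3}\Phi$ rather than by something carrying an extra factor of $d$. For this I would import verbatim the estimate of \cite[Lemma 7.7--7.9]{KlLe25ams}, checking only that the covariance normalization $\Sigma$ replaces $I_d$ through the scale $\sigma$. The general-covariance extension then amounts to replacing the thresholds $1,2$ by $\sigma,2\sigma$ and rescaling $t$ by $\sigma$.
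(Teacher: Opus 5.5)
Your architecture matches the paper's: an exponential trace potential driven by \cref{lem:v-sde}, stopping at the first time $\|V_r\|_{op}\geq2\|\Sigma\|_{op}$, and Freedman's inequality with quadratic variation $\lesssim\beta^2\|\Sigma\|_{op}^3t$ for $\log\Phi$. Your martingale bound is also fine, since it only needs $\|\E[(Y\cdot\theta)Y^{\otimes2}]\|_{op}\lesssim\|V\|_{op}^{3/2}$, which is \cite[Lemma 7.6]{KlLe25ams}. The gap is the drift. The It\^o correction for $\Phi_t=\trace(e^{\beta V_t})$ is
\[
\frac{\beta}{2(1-t)^2}\sum_{i,j=1}^d|\xi_{ij}|^2\,\frac{e^{\beta\lambda_i}-e^{\beta\lambda_j}}{\lambda_i-\lambda_j},
\qquad
\xi_{ij}:=\E[(Y\cdot u_i)(Y\cdot u_j)Y],
\]
with $Y$ centered under $\target_{t,X_t}$ and $(\lambda_i,u_i)$ the spectral data of $V_t$. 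Controlling it therefore needs the Frobenius norms $\sum_j|\xi_{ij}|^2=\|\E[(Y\cdot u_i)Y^{\otimes2}]\|_F^2$. Lemma 7.6 only controls operator norms, and summing it over $i$ gives exactly the factor $d$ you say must be avoided.

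Nor can your target bound $\beta^2\|V_t\|_{op}^3\Phi_t$, which is time-homogeneous, come from log-concavity alone. As $t\downarrow0$ one has $\target_{t,X_t}\to\target$ and $V_t\to\Sigma$. So for $\Sigma=\sigma I_d$ your bound would require $\|\E[Y^{\otimes3}]\|_F^2\lesssim d\sigma^3$ for every centered log-concave $Y$ with covariance $\sigma I_d$. That is a thin-shell-type estimate, not something you can import. (Also, \cite[Lemma 7.9]{KlLe25ams} is Freedman's inequality, not a trace-Hessian estimate.)

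The missing idea is to use quantitatively that $\target_{t,X_t}$ is $\frac{t}{1-t}$-strongly log-concave. You mention this, but then only use plain log-concavity. A variance bound of improved-Lichnerowicz type for this measure, applied to the centered quadratic forms $\langle aY,Y\rangle$, gives
\[
\|\E[(Y\cdot u)Y^{\otimes2}]\|_F^2\lesssim\|V_t\|_{op}^{5/2}\sqrt{(1-t)/t}.
\]
This is what produces the paper's inequality
\[
\|V_t\|_{op}\leq\|\Sigma\|_{op}+\frac{\log d}{\beta}+Z_t+2\beta\int_0^t\frac{\|V_s\|_{op}^{5/2}}{\sqrt s\,(1-s)^{3/2}}\inte s .
\]
The drift here has an integrable singularity $s^{-1/2}$. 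Before the stopping time it is $\lesssim\beta\|\Sigma\|_{op}^{5/2}\sqrt t$. Requiring this to be a small multiple of $\|\Sigma\|_{op}$, with $\beta\asymp\log d/\|\Sigma\|_{op}$, is exactly what forces $t\lesssim(\|\Sigma\|_{op}\log^2(d+1))^{-1}$.

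By contrast, take your drift of order $\beta^2\sigma^3t$ for $\log\Phi$. Dominating it by the gap $\beta\sigma-\log d\asymp\log(d+1)$ would only need $t\lesssim(\sigma\log(d+1))^{-1}$. So your account of where the $\log^2(d+1)$ enters does not add up, and the unspecified ``(stuff)'' is precisely where the content of the lemma lies.
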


\begin{proof}
Observe that $\target_{t,X_t}$ is $\frac{t}{1-t}$-strongly log-concave for any $0<t<1$. 
Then, by a similar argument to the proof of \cite[Eq.(7.2)]{KlLe25ams} with using \cref{lem:v-sde} instead of \cite[Lemma 7.3]{KlLe25ams}, we obtain for any $\beta>0$
\[
\|V_t\|_{op}\leq \|\Sigma\|_{op}+\frac{\log d}{\beta}+Z_t+2\beta\int_0^t\frac{\|V_s\|^{5/2}_{op}}{\sqrt s(1-s)^{3/2}}\inte s,
\]
where $Z=(Z_t)_{t\in[0,1)}$ is a continuous local martingale (depending on $\beta$) such that $Z_0=0$ and 
\ben{\label{z-pqv}
\langle Z,Z\rangle_r\leq C_1\int_0^r\frac{\|V_s\|_{op}^3}{(1-s)^2}\inte s
}
for all $0\leq r<1$, where $C_1$ is a universal constant. 
Letting $\beta=2\|\Sigma\|_{op}^{-1}\log d$, we obtain
\[
\|V_t\|_{op}\leq \frac{3}{2}\|\Sigma\|_{op}+Z_t+\frac{4\log d}{\|\Sigma\|_{op}}\int_0^t\frac{\|V_s\|^{5/2}_{op}}{\sqrt s(1-s)^{3/2}}\inte s.
\]
Now, fix $0<t\leq\min\{10^{-1},\,(2^{15}\|\Sigma\|_{op}\log^2(d+1))^{-1}\}$. 
Assume $\|V_r\|_{op}\geq2\|\Sigma\|_{op}$ for some $r\in[0,t]$, and let $r$ be the smallest such time. 
Then, $\|V_s\|_{op}\leq2\|\Sigma\|_{op}$ for all $s\in[0,r]$. 
Since $(1-s)^{-3/2}\leq\sqrt 2$ for any $s\leq10^{-1}$,
\ba{
2\|\Sigma\|_{op}
&\leq\|V_r\|_{op}
\leq \frac{3}{2}\|\Sigma\|_{op}+Z_r+\frac{4\log d}{\|\Sigma\|_{op}}\cdot2^{5/2}\|\Sigma\|_{op}^{5/2}\int_0^r\frac{1}{\sqrt{s}(1-s)^{3/2}}\inte s\\
&\leq \frac{3}{2}\|\Sigma\|_{op}+Z_r+(2^6\|\Sigma\|_{op}^{3/2}\log d)\sqrt r.
}
Since $t\leq(2^{15}\|\Sigma\|_{op}\log^2(d+1))^{-1}$, we obtain $Z_r\geq(\frac{1}{2}-\frac{1}{2\sqrt 2})\|\Sigma\|_{op}\geq\frac{1}{8}\|\Sigma\|_{op}$.  Also, \eqref{z-pqv} yields $\langle Z,Z\rangle_r\leq16C_1\|\Sigma\|_{op}^3r$. Hence
\ba{
&P(\|V_r\|_{op}\geq2\|\Sigma\|_{op}\text{ for some }r\in[0,t])\\
&\leq P(Z_r\geq \|\Sigma\|_{op}/8\text{ and }\langle Z,Z\rangle_r\leq16C_1\|\Sigma\|_{op}^3r\text{ for some }r\in[0,t])
\leq \exp\bra{-\frac{1}{512C_1\|\Sigma\|_{op}t}},
}
where the last inequality is by Freedman's inequality \cite[Lemma 7.9]{KlLe25ams}.
\end{proof}

\eqref{v-op-bound} immediately follows from \cref{Gamma-tail}:
\begin{proof}[\bf\upshape Proof of \eqref{v-op-bound}]
First, if $\|\Sigma\|_{op}=0$, then $\target$ is the unit mass at a point; hence, $\target_{t,x}=\target$ for all $x\in\mathbb R^d$. This implies $V_t=\Sigma=0$, so \eqref{v-op-bound} holds. 

Next, assume $\|\Sigma\|_{op}>0$. 
Since $\target_{t,X_t}$ is $\frac{t}{1-t}$-strongly log-concave, we have 
\ben{\label{v-bl-bound}
\|V_t\|_{op}\leq\frac{1-t}{t}
} 
by the Brascamp--Lieb inequality. 
Combining this fact with \cref{Gamma-tail} gives
\ba{
\E[\|V_t\|_{op}^p]\leq(2\|\Sigma\|_{op})^p+\bra{\frac{1-t}{t}}^p\exp\bra{-\frac{1}{C\|\Sigma\|_{op}t}}
\leq(2\|\Sigma\|_{op})^p+p!\bra{C\|\Sigma\|_{op}}^p,
}
where $C>0$ is a universal constant. This shows \eqref{v-op-bound}. 
\end{proof}

Next, we prove \eqref{v-fr-bound} following the argument in \cite[Section 5]{KlLe26}. 
We need to introduce additional notation. 
For $0<t<1$ and $x\in\mathbb R^d$, we define a probability distribution $\tilde{P}^*_{t,x}$ on $\mathbb R^d$ by $\tilde{P}^*_{t,x}(B)=\target_{t,x}(\sqrt{\Lambda}B)$ for every Borel set $B\subset\mathbb R^d$. 
Note that $\tilde{P}^*_{t,x}$ is the law of $\Lambda^{-1/2}\xi$ with $\xi\sim\target_{t,x}$. 
Then, we set $\tilde{P}^*_t:=\tilde{P}^*_{t,X_t}$. 
We write $\tilde V_t$ for the covariance matrix of $\tilde{P}^*_t$, i.e., $\tilde V_t:=\Lambda^{-1}V_t$. 
We denote by $\lambda_1(t)\geq\cdots\geq\lambda_d(t)$ the eigenvalues of $\tilde V_t$. 
Further, let $u_1(t),\dots,u_d(t)\in\mathbb R^d$ be an orthonormal basis of eigenvectors corresponding to $\lambda_1(t),\dots,\lambda_d(t)$. 
For $i,j,k\in\{1,\dots,d\}$, define
\[
\xi_{ij}(t)=\int_{\mathbb R^d}\langle x-m_{t},\, u_i(t)\rangle\langle x-m_{t},\, u_j(t)\rangle(x-m_t)\tilde{P}^*_{t}(\mathrm{d}x)\in\mathbb R^d.
\]
Finally, for a function $f:\mathbb R\to\mathbb R$ and a $d\times d$ symmetric matrix $A$ with spectral decomposition $A=\sum_{i=1}^d\lambda_iu_i\otimes u_i$, we set $f(A)=\sum_{i=1}^df(\lambda_i)u_i\otimes u_i$.


The following lemma is a variant of \cite[Lemma 5.3]{KlLe26}.
\begin{lemma}\label{kl26-lem5.3}
Let $f:\mathbb R\to\mathbb R$ be a $C^2$ function of polynomial growth. Then, for any $0<t<1$,
\ba{
\frac{\inte}{\inte t}\E[\trace f(\tilde V_t)]
=\frac{1}{(1-t)^2}\bra{\frac{\Lambda}{2}\sum_{i,j=1}^d\E\sbra{|\xi_{ij}(t)|^2\frac{f'(\lambda_i(t))-f'(\lambda_j(t))}{\lambda_i(t)-\lambda_j(t)}}
-\frac{1}{\Lambda}\sum_{i=1}^d\E[\lambda_i(t)^2f'(\lambda_i(t))]},
}
where we interpret the quotient as $f''(\lambda_i(t))$ when $\lambda_i(t)=\lambda_j(t)$. 
\end{lemma}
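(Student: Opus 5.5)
The plan is to apply Itô's formula to the matrix-valued semimartingale $\tilde V_t=\Lambda^{-1}V_t$, using the SDE for $V_t$ from \cref{lem:v-sde}, and then take expectations. This follows the proof of \cite[Lemma 5.3]{KlLe26}; the only differences are the time change and the rescaling by $\Lambda$.

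\emph{Step 1: the semimartingale decomposition.} By \cref{lem:v-sde} and $\Gamma_s=V_s/(1-s)$,
\[
\mathrm d\tilde V_t=\Lambda^{-1}\sum_{k=1}^dH_{k,t}\inte W^k_t-\Lambda^{-1}\frac{V_t^2}{(1-t)^2}\inte t .
\]
Next I rewrite $H_{k,t}$ through the rescaled law $\tilde P^*_t$. Substituting $y=\sqrt\Lambda x$ in the integral defining $H_{k,t}$ expresses each third-moment tensor in terms of the centered third moments of $\tilde P^*_t$. In the eigenbasis $(u_i(t))$, the component of the martingale increment of $\langle u_i,\tilde V_tu_j\rangle$ along $\mathrm dW_t$ is then a power of $\Lambda$ times $\xi_{ij}(t)/(1-t)$.

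\emph{Step 2: second-order perturbation of the spectral trace.} For a $C^2$ function $f$, the Fréchet derivatives of $A\mapsto\trace f(A)$ at a symmetric $A$ with eigenpairs $(\lambda_i,u_i)$ are as follows. The first derivative is $D\trace f(A)[B]=\sum_if'(\lambda_i)\langle u_i,Bu_i\rangle$. The second derivative is
\[
D^2\trace f(A)[B,B]=\sum_{i,j}\frac{f'(\lambda_i)-f'(\lambda_j)}{\lambda_i-\lambda_j}\langle u_i,Bu_j\rangle^2 ,
\]
where the divided difference is read as $f''$ on the diagonal (the Daleckii--Krein formula). Itô's formula then splits $\mathrm d\,\trace f(\tilde V_t)$ into three parts: a local martingale; a drift $\sum_if'(\lambda_i)\langle u_i,\mathrm{drift}\,u_i\rangle$; and one half of the second derivative evaluated on the quadratic covariation.

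For the drift, $V_t^2=\Lambda^2\tilde V_t^2$ has eigenvalues $\Lambda^2\lambda_i^2$ in the basis $(u_i)$. This produces $\sum_i\lambda_i^2f'(\lambda_i)/(1-t)^2$ with a power of $\Lambda$ in front. For the Itô correction, $\sum_k\langle u_i,H_{k,t}u_j\rangle^2$ equals $|\xi_{ij}(t)|^2$ times the corresponding power of $\Lambda$ divided by $(1-t)^2$. I will collect the powers of $\Lambda$ produced by the substitution $y=\sqrt\Lambda x$, by the prefactor $\Lambda^{-1}$, and by the definition of $\xi_{ij}$, and match them to the coefficients $\Lambda/2$ and $1/\Lambda$ in the statement. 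This constant tracking is where I would be most careful.

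\emph{Step 3: taking expectations and justifying differentiation.} Log-concavity is not needed for this identity; only moment bounds are. By \eqref{v-bl-bound}, or simply because $\target_{t,x}$ has all moments, $\|\tilde V_s\|_{op}$ is bounded on $[t,t']\subset(0,1)$ by a deterministic constant, namely $(1-s)/(\Lambda s)$ in the log-concave case. The third moments $\xi_{ij}$ are controlled by moments of $\target_{s,X_s}$, which are integrable uniformly on compact time intervals. Since $f$ has polynomial growth, the stochastic integral is a true martingale on $[t,t']$, so
\[
\E[\trace f(\tilde V_{t'})]-\E[\trace f(\tilde V_t)]=\int_t^{t'}(\text{drift})\inte s .
\]
Continuity of the integrand in $s$ then gives the derivative.

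The main obstacle is the regularity of the Itô expansion when eigenvalues coincide. Because $\trace f(A)$ is $C^2$ in $A$ whenever $f\in C^2$, this is handled by the divided-difference formula above, which is continuous across coalescing eigenvalues. I therefore expect it to be routine, so the real care goes into the bookkeeping of the $\Lambda$-factors in Step 2.
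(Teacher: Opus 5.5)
Your route is the paper's: It\^o's formula for $\tilde V_t$ driven by \cref{lem:v-sde}, the spectral derivatives of $A\mapsto\trace f(A)$ (which the paper leaves to \cite[Lemma 5.3]{KlLe26}), and then expectations. Your integrability argument differs slightly from the paper's and is fine. On a compact interval $[t_1,t_2]\subset(0,1)$, the Brascamp--Lieb bound \eqref{v-bl-bound} bounds $\tilde V_s$ deterministically. The third-moment bound for the $\frac{s}{1-s}$-strongly log-concave law $\target_{s,X_s}$ does the same for $H_{k,s}$. So you avoid the uniform-in-time moment bounds near $0$ that the paper needs because it integrates from $0$. Your aside that ``$\target_{t,x}$ has all moments'' would suffice without log-concavity is wrong, however. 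Those moments are not uniform in $x$, and for a general $\target$ with finite variance, $V(t,x)$ can be unbounded in $x$.

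The step that would fail is the one you deferred, and it fails because the printed statement is off, not because of your method. Your own Step 1 gives the drift $-\Lambda^{-1}V_t^2/(1-t)^2=-\Lambda\tilde V_t^2/(1-t)^2$. Hence the coefficient of $\sum_i\E[\lambda_i(t)^2f'(\lambda_i(t))]$ is $\Lambda$, not $1/\Lambda$, and no bookkeeping will change that. A concrete check: take $d=1$, $\target=N(0,4)$ (so $\Lambda=4$) and $f(x)=x$. Then $\tilde V_t=(1-t)/(1+3t)$ and $\frac{\inte}{\inte t}\tilde V_t=-\frac{4}{(1+3t)^2}$, whereas the printed right-hand side equals $-\frac{1}{4(1+3t)^2}$. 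The paper's own It\^o expansion agrees with you: its term $-\frac{1}{\Lambda}\trace(\nabla F(\tilde V_s)\Gamma_s^2)$ with $\Gamma_s=\Lambda\tilde V_s/(1-s)$ also produces $\Lambda$. The correct identity is therefore $\frac{\Lambda}{(1-t)^2}\bigl(\frac12\sum_{i,j}\E[\cdots]-\sum_i\E[\lambda_i(t)^2f'(\lambda_i(t))]\bigr)$. The typo is harmless downstream, since $f'\geq0$ in \cref{kl26-lem5.4} and this term is simply dropped. Still, you should state and prove the corrected identity rather than claim agreement with the printed one. Similarly, the coefficient $\Lambda/2$ comes out only if $\xi_{ij}(t)$ is centered at the mean $\Lambda^{-1/2}m_t$ of $\tilde P^*_t$, not at $m_t$ as written. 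You used this correction implicitly; make it explicit.
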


\begin{proof}
For any real-valued $C^2$ function $F$ on the space of $d\times d$ symmetric matrices, \cref{lem:v-sde} and It\^o's formula gives
\ba{
F(\tilde V_t)&=F(\Lambda^{-1}\Sigma)+\frac{1}{\Lambda}\sum_{i=1}^d\int_0^t\trace\bra{\nabla F(\tilde V_s)H_{i,s}}\inte W^i_{s}\\
&\quad-\frac{1}{\Lambda}\int_0^t\trace\bra{\nabla F(\tilde V_s)\Gamma_s^2}\inte s
+\frac{1}{2\Lambda^2}\sum_{i=1}^d\int_0^t\langle \nabla^2F(\tilde V_s),\, H_{i,s}^{\otimes2}\rangle\inte s,
}
where 
\[
\nabla F(A)=\bra{\frac{\partial F}{\partial A_{ij}}(A)}_{1\leq i,j\leq d}\in(\mathbb R^{d})^{\otimes2},\quad
\nabla^2 F(A)=\bra{\frac{\partial^2 F}{\partial A_{ij}\partial A_{kl}}(A)}_{1\leq i,j,k,l\leq d}\in(\mathbb R^{d})^{\otimes4}.
\]
By \cite[Lemma 7.6]{KlLe25ams},
\[
\|H_{i,s}\|_{op}\lesssim\frac{\|V_s\|_{op}^{3/2}}{1-s}.
\]
Also, note that
\[
\norm{\|V_r\|_{op}}_p\lesssim \Lambda\bra{p+\log^2(d+1)}
\]
for any $p\geq2$ by \eqref{v-op-bound} and \eqref{v-bl-bound}. 
Therefore, if there exist constants $C,q>0$ such that
\[
\abs{\frac{\partial F}{\partial A_{ij}}(A)}\leq C(1+\|A\|_{op})^q,\qquad
\abs{\frac{\partial^2 F}{\partial A_{ij}\partial A_{kl}}(A)}\leq C(1+\|A\|_{op})^q
\]
for all $i,j,k,l\in\{1,\dots,d\}$ and any $d\times d$ symmetric matrix $A$, then 
\[
\sum_{i=1}^d\int_0^t\E[|\trace(\nabla F(\tilde V_s)H_{i,s})|^2]\inte s
+\int_0^t\E[|\trace\bra{F(\tilde V_s)\Gamma_s^2}|]\inte s
+\sum_{i=1}^d\int_0^t\E[|\langle \nabla^2F(\tilde V_s),\, H_{i,s}^{\otimes2}\rangle|]\inte s
<\infty.
\]
Hence, 
\[
\frac{\inte}{\inte t}\E[F(\tilde V_t)]=\E\sbra{\frac{1}{2\Lambda^2}\sum_{j=1}^d\langle \nabla^2 F(\tilde V_t),\, H_{j,t}^{\otimes2}\rangle-\frac{1}{\Lambda}\langle \nabla F(\tilde V_t),\, \Gamma_t^2\rangle}.
\]
In particular, we can apply this identity to $F(A)=\trace f(A)$ because $f$ is a $C^2$ function of polynomial growth. 
The remaining proof is the same as that of \cite[Lemma 5.3]{KlLe26}.  
\end{proof}

The next lemma is a variant of \cite[Lemma 5.4]{KlLe26}.
\begin{lemma}\label{kl26-lem5.4}
Let $f:\mathbb R\to[0,\infty)$ be a $C^2$ function satisfying the following conditions for some constants $D>1$ and $r\in[2,3]$:
\ben{
\begin{cases}
\text{$f$ is increasing},\\
f(x)=x^2\text{ for all }x\geq r,\\
f''(x)\leq D^2f(x)\text{ for all }x\geq0.
\end{cases}
}
Then for any $0<t<1$,
\[
\frac{\inte}{\inte t}\E[\trace f(\tilde V_t)]
\lesssim\frac{1}{(1-t)^2}\bra{\frac{1-t}{t}+D^2\sqrt{\Lambda\frac{1-t}{t}}}\E[\trace f(\tilde V_t)].
\]
\end{lemma}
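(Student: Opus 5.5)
Starting from \cref{kl26-lem5.3}, I will bound the two terms on its right-hand side separately. The second term is $-\Lambda^{-1}\sum_i\E[\lambda_i^2f'(\lambda_i)]$. Since $f$ is increasing, this term is nonpositive and can be dropped. The whole difficulty is therefore the first term,
\[
\frac{\Lambda}{2}\sum_{i,j}\E\Big[|\xi_{ij}|^2\frac{f'(\lambda_i)-f'(\lambda_j)}{\lambda_i-\lambda_j}\Big].
\]
I will follow the proof of \cite[Lemma 5.4]{KlLe26} closely. The only new ingredient is that the measure $\tilde P^*_t$ is $\Lambda t/(1-t)$-strongly log-concave, since it is the rescaling by $\Lambda^{-1/2}$ of the $t/(1-t)$-strongly log-concave measure $\target_{t,X_t}$. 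Brascamp--Lieb then gives $\lambda_i\le \frac{1-t}{\Lambda t}$, which plays the role of the time-dependent upper bound on the covariance in \cite{KlLe26}.

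The key estimates on the third moments $\xi_{ij}$, taken from \cite{KlLe26} and \cite[Lemma 7.6]{KlLe25ams}, are of two kinds.
\begin{enumerate}[label=(\roman*)]
\item For each fixed $i$, $\sum_j|\xi_{ij}|^2\lesssim \lambda_i\,\|\tilde V_t\|_{op}^2$, or the refined version $\sum_{j}|\xi_{ij}|^2\lesssim \lambda_i\|\tilde V_t\|_{op}\cdot(\text{spectral gap of }\tilde P^*_t)^{-1}$. This follows from the Poincaré inequality applied to $x\mapsto\langle x-m,u_i\rangle(x-m)$.
\item A bound of the form $\sum_{i,j}|\xi_{ij}|^2\lesssim\|\tilde V_t\|_{op}\trace(\tilde V_t^2)$.
\end{enumerate}
In the scaled coordinates, the Poincaré constant of $\tilde P^*_t$ is at most $\frac{1-t}{\Lambda t}$.

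I will then split the pairs $(i,j)$ according to whether $\lambda_i,\lambda_j$ lie below or above $r$.

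\emph{Both eigenvalues at least $r$.} Here $f(x)=x^2$, so the quotient equals $2$. Using (i) with the Poincaré constant, the contribution is
\[
\lesssim\Lambda\sum_i\lambda_i\,\frac{1-t}{\Lambda t}\,\lambda_i=\frac{1-t}{t}\sum_i\lambda_i^2\le\frac{1-t}{t}\trace f(\tilde V_t).
\]
This is the first term in the claimed bound.

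\emph{At least one eigenvalue below $r$.} Here I use the mean-value form of the quotient, which equals $f''(\zeta)$ for some $\zeta$ between $\lambda_i$ and $\lambda_j$. The hypothesis $f''\le D^2f$ and monotonicity of $f$ give $f''(\zeta)\le D^2\max\{f(\lambda_i),f(\lambda_j)\}$. I then bound $\Lambda\sum_j|\xi_{ij}|^2$, which carries the $\Lambda$ in front, by the mixed estimate $\lambda_i\cdot\|\tilde V\|_{op}^{1/2}\cdot(\text{Poincaré})^{1/2}$. Combining $\|\tilde V\|_{op}\lesssim \frac{1-t}{\Lambda t}$ with the bound on $\lambda_i$ that holds in this regime, either $\lambda_i\le r\le3$ or the other eigenvalue is controlled, produces the factor $D^2\sqrt{\Lambda\frac{1-t}{t}}\,f(\lambda_i)$. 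Summing over $i$ gives $D^2\sqrt{\Lambda(1-t)/t}\,\E\trace f(\tilde V_t)$. The prefactor $(1-t)^{-2}$ is inherited directly from \cref{kl26-lem5.3}.

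The main obstacle is this mixed-regime bookkeeping: obtaining exactly $\sqrt{\Lambda(1-t)/t}$ rather than $\Lambda(1-t)/t$. To get it, I will interpolate the two third-moment bounds by Cauchy--Schwarz, namely $|\xi_{ij}|^2\le(\text{Poincaré bound})^{1/2}(\text{operator-norm bound})^{1/2}$. I also have to handle the normalization $\tilde V=\Lambda^{-1}V$ carefully, so that $\Lambda\ge1$ enters only under the square root. If the interpolation falls short, my fallback is to repeat the dyadic eigenvalue decomposition argument of \cite{KlLe26} verbatim, with their time parameter replaced by $\Lambda t/(1-t)$.
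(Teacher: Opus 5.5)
Your reduction matches the paper's. Since $f'\ge0$, the second term of \cref{kl26-lem5.3} is nonpositive. With $\kappa:=\Lambda t/(1-t)$ and $Q_{ij}$ the divided difference of $f'$, it then suffices to show
\[
\sum_{i,j=1}^d|\xi_{ij}(t)|^2Q_{ij}\lesssim\bra{\kappa^{-1}+D^2\kappa^{-1/2}}\sum_{i=1}^df(\lambda_i(t)).
\]
The paper simply reruns \cite[Lemma 5.4]{KlLe26} for the $\kappa$-uniformly log-concave measure $\tilde P^*_t$, which is your fallback, and your $\Lambda$-bookkeeping is right. The argument you actually sketch, however, does not prove this bound. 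Write $|\xi_{ij}|^2=\sum_kT_{ijk}^2$, with $T_{ijk}$ the symmetric centered third moments of $\tilde P^*_t$ in the eigenbasis.

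(a) The form of (i) with $\|\tilde V_t\|_{op}^2$ is unjustified. What Poincar\'e actually gives is $\sum_j|\xi_{ij}|^2\lesssim\lambda_i\|\tilde V_t\|_{op}\kappa^{-1}$. This comes from Cauchy--Schwarz in the $u_i$-coordinate plus Poincar\'e for quadratic forms $\langle My,y\rangle$ with $\|M\|_F\le1$; applying Poincar\'e to $x\mapsto\langle x-m,u_i\rangle(x-m)$ instead gives a dimension-dependent gradient. So your both-large bound $\kappa^{-1}\sum_i\lambda_i^2$ needs one more step: charge each triple $(i,j,k)$ to its index $\ell$ of largest eigenvalue, so that only the compression of $\tilde V_t$ to eigenvalues $\le\lambda_\ell$ enters.

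(b) In the mixed regime, the bound $Q_{ij}\le D^2\max\{f(\lambda_i),f(\lambda_j)\}$ fails when $\lambda_i\gg r>\lambda_j$. It costs $D^2\lambda_i^2$, so you would need $\sum_{j:\lambda_j<r}|\xi_{ij}|^2\lesssim\kappa^{-1/2}$ uniformly in $\lambda_i$. Yet even the terms $T_{iij}^2$ are only $O(\lambda_i^2)$. For $\lambda_i\ge\max\{r,2\lambda_j\}$ you must instead use $Q_{ij}\le2\lambda_i/(\lambda_i-\lambda_j)\le4$, which holds because $f'\ge0$, and put these pairs in the $\kappa^{-1}$ term via (a). Keep $Q_{ij}\le D^2f(\max\{\lambda_i,\lambda_j\})$ only for pairs with both eigenvalues below $2r\le6$.

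The genuine missing idea is the source of $\kappa^{-1/2}$. Cauchy--Schwarz interpolation between the Poincar\'e bound and entrywise bounds (e.g.\ $|\xi_{ij}|^2\lesssim\lambda_i\lambda_j\|\tilde V_t\|_{op}$ from reverse H\"older) gives no dimension-free gain, because entrywise bounds only sum to traces. Your mixed estimate, fed the global bound $\|\tilde V_t\|_{op}\lesssim\kappa^{-1}$, returns $\lambda_i\kappa^{-1}$, which is nothing beyond Poincar\'e.

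What you need is a better Poincar\'e constant on the low-eigenvalue block. The marginal of $\tilde P^*_t$ on $\mathrm{span}\{u_j:\lambda_j\le c\}$ is still $\kappa$-uniformly log-concave, with covariance of norm $\le c$. Klartag's improved Lichnerowicz inequality, $C_P(\nu)\le\sqrt{\|\mathrm{Cov}(\nu)\|_{op}/\kappa}$ (Bochner's formula applied to a first eigenfunction), then gives $C_P\le\sqrt{c/\kappa}$ there. Two consequences follow:
\begin{itemize}
\item With $c=\lambda_\ell<2r$: $\sum_{\lambda_j,\lambda_k\le\lambda_\ell}T_{\ell jk}^2\lesssim\lambda_\ell^{5/2}\kappa^{-1/2}$.
\item With $c=2r$ and $\lambda_k\ge2r$: $\sum_{\lambda_i,\lambda_j<2r}T_{ijk}^2\lesssim\lambda_k\kappa^{-1/2}\lesssim\kappa^{-1/2}f(\lambda_k)$.
\end{itemize}
Combined with $Q_{ij}\lesssim D^2f(\max\{\lambda_i,\lambda_j\})$ on the small pairs, this gives exactly the $D^2\kappa^{-1/2}$ term.

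Without such an input your scheme stops at $D^2\kappa^{-1}$. That is too weak for the proof of \eqref{v-fr-bound}, where $D^2\asymp\Lambda^{-1/2}s^{-1/2}$ and the differential inequality must have rate $O(1/s)$.
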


\begin{proof}
Note that $f'$ is non-negative by assumption. 
Therefore, according to \cref{kl26-lem5.3}, it suffices to prove
\ba{
\sum_{i,j=1}^d|\xi_{ij}(t)|^2\frac{f'(\lambda_i(t))-f'(\lambda_j(t))}{\lambda_i(t)-\lambda_j(t)}
\lesssim\bra{\frac{1-t}{\Lambda t}+D^2\sqrt{\frac{1-t}{\Lambda t}}}\sum_{i=1}^df(\lambda_i(t)).
}
Noting that $\tilde{P}^*_t$ is $\frac{\Lambda t}{1-t}$-strongly log-concave, we can prove this inequality by a similar argument to the proof of \cite[Lemma 5.4]{KlLe26}. 
\end{proof}

Now we are ready to prove \eqref{v-fr-bound}. 
\begin{proof}[\bf\upshape Proof of \eqref{v-fr-bound}]
We follow the argument in the proof of \cite[Proposition 5.1]{KlLe26}. 
First, thanks to \eqref{v-bl-bound}, it suffices to consider the case $t<c\Lambda^{-1}$, where $c>0$ is a universal constant. 

Assume $t\leq2^{-8}\Lambda^{-1}$, and set $t_k:=2^{-8k} t$ and $D_k:=t_k^{-1/4}$ for every integer $k\geq0$. 
Note that $D_k>1$. 
In addition, define a sequence $(r_k)_{k=0}^\infty$ recursively by $r_0:=3$ and $r_{k+1}:=r_k-t_k^{1/8}$. 
By the proof of \cite[Proposition 5.1]{KlLe26}, we have $r_k\in[2,3]$ for all $k\geq0$. 
Hence, by the proof of \cite[Lemma 5.5]{KlLe26}, there exists an increasing $C^2$ function $f_k:\mathbb R\to(0,\infty)$ satisfying 
\[
f_k(x)=\begin{cases}
e^{D_k(x-r_k)} & \text{if }x\leq r_k-D_k^{-1},\\
x^2 & \text{if }x\geq r_k,
\end{cases}
\]
and $f''_k(x)\leq(12D_k)^2f_k(x)$ for all $x\geq0$. 
Further, for every $k\geq0$, observe that $D_k^4=t_k^{-1}\geq2^8\Lambda$. Hence, $D:=12\Lambda^{-1/4}D_k>1$ and for any $s\in[t_{k+1},t_{k}]$,
\[
D^2\sqrt{\frac{\Lambda}{s}}\leq144\frac{D_k^2}{\sqrt s}\leq\frac{144}{s}.
\] 
Therefore, applying \cref{kl26-lem5.4} to the function $f_k$ and $D=12\Lambda^{-1/4}D_k$ gives
\[
\frac{\inte}{\inte s}\E[\trace f(\tilde V_s)]
\lesssim\frac{1}{s}\E[\trace f(\tilde V_s)]
\quad\text{for any }s\in[t_{k+1},t_k].
\]
Then, similarly to the proof of Eq.(84) in \cite{KlLe26}, we deduce
\[
F_0\leq2^{Ck}F_k+d\sum_{i=0}^{k-1}2^{C(i+1)}\exp(-t_i^{-1/8})
\]
for any integer $k\geq0$, where $C>0$ is a universal constant and $F_k:=\E[\trace g_k(\tilde V_{t_k})]$ with $g_k(x)=x^21_{\{x\geq r_k\}}$. 
By the proof of \cite[Proposition 5.1]{KlLe26}, $t_i^{-1/8}\geq2(2^i-1)+t^{-1/8}$ for every $i$. 
Hence, noting that $2^{Ck}\leq t_k^{-C}$ and $\sum_{i=0}^{k-1}2^{C(i+1)}e^{-2(2^i-1)}\lesssim1$, we conclude 
\ben{\label{eq:F0-rec}
F_0\lesssim t_k^{-C}F_k+d\exp\bra{-t^{-1/8}}.
}
Now we prove $t_k^{-C}F_k\to0$ as $k\to\infty$. Since $r_k\geq2$,
\ba{
t_k^{-C}F_k
\leq t_k^{-C}\E\sbra{\sum_{i=1}^d\lambda_i(t_k)^21_{\{\lambda_i(t_k)\geq2\}}}
\leq \frac{d(1-t_k)}{\Lambda t_k^{C+1}}P\bra{\|\tilde V_{t_k}\|_{op}\geq2},
} 
where the last inequality follows from \eqref{v-bl-bound}. Since $P(\|\tilde V_{s}\|_{op}\geq2)\leq P(\|V_{s}\|_{op}\geq2\|\Sigma\|_{op})=s^q$ as $s\downarrow0$ for any $q>0$ by \cref{Gamma-tail}, we conclude $t_k^{-C}F_k\to0$ as $k\to\infty$. 
Therefore, letting $k\to\infty$ in \eqref{eq:F0-rec}, we obtain $F_0\lesssim d\exp(-t^{-1/8})\leq d$. 
Consequently,
\ba{
\E[\|\tilde V_t\|_F^2]=\sum_{i=1}^d\E[\lambda_i(t)^2]
\leq9d+\sum_{i=1}^d\E[g_0(\lambda_i(t))]
=9d+F_0
\lesssim d.
}
Hence $\E[\|V_t\|_F^2]=\Lambda^2\E[\|\tilde V_t\|_F^2]\lesssim\Lambda^2d$.
\end{proof}

\paragraph{Acknowledgments}

I thank Francesco Iafrate for teaching me the concept of weak convexity. 
I also thank Shogo Nakakita for letting me know \cite[Theorem 8]{VeWi23}. 
This work was partly supported by JST CREST Grant Number JPMJCR2115 and JSPS KAKENHI Grant Numbers JP24K14848, JP26K02870. 

{\small
\renewcommand*{\baselinestretch}{1}\selectfont

}

\end{document}